\newcommand{\labitem}[2]{%
\def\@itemlabel{\textbf{#1.}}
\item
\def\@currentlabel{#1}\label{#2}}
\newcommand{\labitemc}[2]{%
\def\@itemlabel{\textbf{#1}}
\item
\def\@currentlabel{#1}\label{#2}}
\makeatletter\@addtoreset{equation}{section}\makeatother
\renewcommand{\theequation}{\thesection.\arabic{equation}}
\newcommand{\Var}{\mbox{Var}}
\def \R{\mathbb{R}}
\def \E{\mathbb{E}}
\def \N{\mathbb{N}}
\def \Cov{\mbox{Cov}}
\newcommand{\var}{ \mbox{\sl Var} \ }
\newcommand{\pto}{\stackrel{P}{\rightarrow}}
\newcommand{\asto}{\stackrel{a.s.}{\rightarrow}}
\newcommand{\weakto}{\rightsquigarrow}
\newcommand{\Ac}{\mathcal{A}}
\newcommand{\Wc}{\mathcal{W}}
\newcommand{\Gc}{\mathcal{G}}
\newcommand{\Dc}{\mathcal{D}}
\newcommand{\Cc}{\mathcal{C}}
\newcommand{\Kc}{\mathcal{K}}
\newcommand{\Fc}{\mathcal{F}}
\newcommand{\Zc}{\mathcal{Z}}
\newcommand{\Lc}{\mathcal{L}}
\newcommand{\Nc}{\mathcal{N}}
\newcommand{\Tc}{\mathcal{T}}
\newcommand{\Qc}{\mathcal{Q}}
\newcommand{\Rc}{\mathcal{R}}
\newcommand{\Sc}{\mathcal{S}}
\newcommand{\Pc}{\mathcal{P}}
\newcommand{\Uc}{\mathcal{U}}
\newcommand{\Vc}{\mathcal{V}}
\newcommand{\bb}{\mathbf{b}}
\newcommand{\bx}{\mathbf{x}}
\newcommand{\by}{\mathbf{y}}
\newcommand{\bv}{\mathbf{v}}
\newcommand{\bu}{\mathbf{u}}
\newcommand{\bw}{\mathbf{w}}
\newcommand{\br}{\mathbf{r}}
\newcommand{\BB}{\mathbf{B}}
\newcommand{\DD}{\mathbf{D}}
\newcommand{\ZZ}{\mathbf{Z}}
\newcommand{\VV}{\mathbf{V}}
\newcommand{\WW}{\mathbf{W}}
\newcommand{\UU}{\mathbf{U}}
\newcommand{\MM}{\mathbf{M}}
\newcommand{\PP}{\mathbf{P}}
\newcommand{\Id}{\mathbf{I}}
\newcommand{\zsig}{\boldsymbol\sigma}
\newcommand{\tP}{\tilde\Psi}
\newcommand{\tPh}{\tilde\Phi}
\newcommand{\tPs}{\tilde\Psi^{*}}
\newcommand{\bV}{\bm{V}}
\newcommand{\bX}{\bm{X}}
\newcommand{\bean}{\begin{eqnarray*}}
\newcommand{\eean}{\end{eqnarray*}}
\newcommand{\bea}{\begin{eqnarray}}
\newcommand{\eea}{\end{eqnarray}}
\newcommand{\be}{\begin{eqnarray}}
\newcommand{\ee}{\end{eqnarray}}
\newcommand{\beq}{\begin{equation}}
\newcommand{\eeq}{\end{equation}}
\renewcommand{\hat}{\widehat}
\renewcommand{\tilde}{\widetilde}
\newcommand{\IF}{\boldsymbol{1}} 
\newcommand{\dom}{\mbox{dom}\,}
\newcommand{\itg}{\lfloor t/\gamma \rfloor}
\DeclareMathOperator{\vect}{vec}
\newtheorem{theo}{Theorem}[section]
\newtheorem{lemma}[theo]{Lemma}
\newtheorem{cor}[theo]{Corollary}
\theoremstyle{definition}
\newtheorem{rem}[theo]{Remark}
\newtheorem{defin}[theo]{Definition}
\newtheorem{example}[theo]{Example}
\renewenvironment{proof}[1][\proofname]{{\noindent\bfseries #1.}}{\qed} %\ensuremath{\square}
\DeclareMathOperator{\sgn}{sgn}
\DeclareMathOperator{\diag}{diag}
\begin{document}

\begin{frontmatter}

\title{A Generalization of Regularized Dual Averaging and its Dynamics\protect\thanksref{T1}}
\runtitle{gRDA and its dynamics}
\thankstext{T1}{Shih--Kang Chao wants to thank the following for invaluable discussions and comments: Iosif Pinelis at Michigan Technological University, Samy Tindel at Purdue University, Jia--Yuan Dai at National Taiwan University and Jos\'e E. Figueroa--L\'opez at Washington University in St. Louis. Guang Cheng is a visiting member of Institute for Advanced Study, Princeton in the Fall of 2019; he would like to thank the IAS for its hospitality}

\begin{aug} 
  \author{\fnms{Shih--Kang}  \snm{Chao}\corref{}\thanksref{t2}\ead[label=e1]{chaosh@missouri.edu}}
    \and
  \author{\fnms{Guang} \snm{Cheng}\thanksref{t3}\ead[label=e2]{chengg@purdue.edu}}

  \thankstext{t2}{Partially supported by the Research Council grant of University of Missouri--Columbia.}
  \thankstext{t3}{Partially supported by NSF DMS-1712907, DMS-1811812, DMS-1821183, and Office of Naval Research (ONR N00014-18-2759).}

  \runauthor{S.--K. Chao and G. Cheng}

  \affiliation{University of Missouri--Columbia and Purdue University}

  \address{Department of Statistics\\
  University of Missouri--Columbia\\
  Middlebush Hall 146\\
  Columbia, MO 65201\\
  U.S.A.\\ 
          \printead{e1}}

  \address{Department of Statistics\\
  Purdue University\\
  250 N. University Street\\
  West Lafayette, IN 47907\\
  U.S.A.\\
          \printead{e2}}

\end{aug}

\begin{abstract}
Excessive computational cost for learning large data and streaming data can be alleviated by using stochastic algorithms, such as stochastic gradient descent and its variants. Recent advances improve stochastic algorithms on convergence speed, adaptivity and structural awareness. However, distributional aspects of these new algorithms are poorly understood, especially for structured parameters. To develop statistical inference in this case, we propose a class of {\em generalized} regularized dual averaging (gRDA) algorithms with constant step size, which improves RDA \citep{X10,FB17}. Weak convergence of gRDA trajectories are studied, and as a consequence, for the first time in the literature, the asymptotic distributions for online $\ell_1$ penalized problems become available. These general
results apply to both convex and non-convex differentiable loss functions, and in particular, recover the existing regret bound for convex losses \citep{NJLS09}. As important applications, statistical inferential theory on online sparse linear regression and online sparse principal component analysis are developed, and are supported by extensive numerical analysis. Interestingly, when gRDA is properly tuned, support recovery and central limiting distribution (with mean zero) hold simultaneously in the online setting, which is in contrast with the biased central limiting distribution of batch Lasso \citep{KF00}. Technical devices, including weak convergence of stochastic mirror descent, are developed as by-products with independent interest. Preliminary empirical analysis of modern image data shows that learning very sparse deep neural networks by gRDA does not necessarily sacrifice testing accuracy.% \Cheng{mention that our theory for SMD is also new, d-SMD can be applied to deep neural networks to promote sparsity in network training and select optimal step sizes by combining with a dynamic programming approach.}
\end{abstract}

\begin{keyword}[class=MSC]
\kwd[Primary ]{60F05}
	\kwd{62E20}
	\kwd{68Q25}
	\kwd{68Q87}
	\kwd{90C15}
	\kwd{90C26}
\end{keyword}

\begin{keyword}
\kwd{Streaming data}
\kwd{Stochastic algorithms}
\kwd{Sparse deep learning}
\kwd{Non-convex optimization}
\kwd{Weak convergence}
\end{keyword}

\end{frontmatter}

\section{Introduction}\label{sec:intro}

Excessively large data and streaming data sets create unique challenges for modern statistics. These data arise from fast-growing applications including image recognition, social media, e--commerce, environmental surveillance and numerous others. Analyzing such data requires new computational methods in order to overcome storage and processing constraints.

To meet the computational challenges, stochastic optimization methods, such as stochastic gradient descent (SGD, \cite{RM51}), become widely used. In particular, SGD concerns statistical estimation of $\bw^*$, which is a solution of 
\begin{align}
	G(\bw)=0.\label{eq:sa}
\end{align}
A common choice of $G(\bw)$ is $\E_Z[\nabla f(\bw;Z)]$, where $f:\R^d\times\Zc\to\R$ is some loss function and $\nabla f(\bw;Z)$ is its gradient or subgradient w.r.t. $\bw$. For non-convex $f$, the $\bw^*$ satisfying \eqref{eq:sa} may not be unique, though. In reality, the form of $G$ is often unknown such that one may resort to its stochastic version $G(\bw;Z)$ satisfying $\E_Z[G(\bw;Z)]=G(\bw)$. For example, given i.i.d. observations $\{Z_1,...,Z_N\}\ni Z_n$,
$G(\bw;Z_n)$ can be chosen as $\nabla f(\bw;Z_n)$. SGD approximates $\bw^*$ as follows:
\begin{align}\tag{\texttt{SGD}}
	\begin{split}\label{eq:sgd}
	\bw_{n+1}^\gamma &= \bw_n^\gamma - \gamma \nabla f(\bw_n^\gamma;Z_{n+1}),
	\end{split}
\end{align}
where $\gamma>0$ is a small constant step size, and is chosen and fixed from the initialization. Setting a different value of $\gamma$ would change the entire process, not only a single $\bw_n^\gamma$. Also, it is known that SGD produces statistically unbiased estimate for $\bw^*$ \citep{BMP90,KY03}. From now on, if no confusion occurs, $\bw_n^\gamma$ may be shortened as $\bw_n$. 

%\eqref{eq:sgd} is known to provide unbiased estimates for $\bw^*$ and its distributional properties are very well studied \citep{BMP90,KY03}.
 
A well-known drawback of SGD is, however, that it fails to adapt to the intrinsic structure of $\bw^*$ such as sparsity. It is also known that including the penalty such as the $\ell_1$ norm as part of the loss function $f$ in SGD fails to penalize $\bw_n$ effectively \citep{DS09}. This motivates the regularized dual averaging (RDA) algorithm \citep{X10,FB17}, which has found wide applications in online learning and reinforcement learning \citep{ML12,LML12,MLT14,Y18}. Specifically, it includes a convex penalty function $\Pc(\bw)$ in the following way: for a constant $c_0>0$,
\begin{align}\tag{\texttt{RDA}}
		\bw_{n+1}=\arg\,\min_{\bw\in\R^d}\bigg\{\gamma\bw^\top \underbrace{\sum_{i=0}^n\nabla f(\bw_i;Z_{i+1})}_{(*)} + c_0 n\gamma \Pc(\bw) + F(\bw)\bigg\}.\label{eq:rda}
\end{align}
In the above, $F$ is a deterministic and strongly convex regularizer that stabilizes the iterates in the spirit of the ``follow-the-regularized-leader'' in Section 2.3 of \cite{S12}, which functions differently from the penalty $\Pc(\bw)$. The \eqref{eq:rda} essentially performs two iterative steps: accumulating the gradients as in ($\ast$), and then performing regularization with $F(\bw)$ and penalization with $c_0 n\gamma \Pc(\bw)$. If we set $\Pc(\bw)=0$ and $F(\bw)=\frac{1}{2}\|\bw\|_2^2$, \eqref{eq:rda} reduces to \eqref{eq:sgd}. Therefore, \eqref{eq:sgd} may be viewed as an un-penalized version of \eqref{eq:rda}. If we further change $\Pc(\bw)$ to $\|\bw\|_1$, sparse solutions can be produced \citep{X10, LW12}. 

Our first contribution is to prove that the penalization of \eqref{eq:rda} may be so aggressive that the estimates are biased in some important cases, in contrast with \eqref{eq:sgd}. A look at \eqref{eq:rda} reveals that the diverging rate of the factor $n\gamma$ (noting that $\gamma$ is fixed %\footnote{Alternatively, we can set $\gamma\to 0$ as $n\to\infty$ so that $n\gamma$ diverges slower than $n$. This corresponds to the decreasing step size paradigm discussed in the section of Related works below.}
) is the reason for overly aggressive penalization. This observation motivates us to design a class of new algorithms, named as generalized RDA (gRDA), that can adjust the level of penalization with time through a tuning function $g(n,\gamma)$ as follows
\begin{align}\tag{\texttt{gRDA}}
	\bw_{n+1}=\arg\,\min_{\bw\in\R^d}\bigg\{\bw^\top \Big(-\bw_0+\gamma\sum_{i=0}^n\nabla f(\bw_i;Z_{i+1})\Big) + g(n,\gamma) \Pc(\bw) + F(\bw)\bigg\},\label{eq:grda_gen}
\end{align}
where $g(n,\gamma)\geq 0$ is a deterministic non-negative function of $n,\gamma$. It is clear that $g(n,\gamma)=n\gamma$ and $\bw_0=0$ reduce \eqref{eq:grda_gen} to \eqref{eq:rda}. The initializer $\bw_0$ in gRDA is usually selected randomly with normal or uniform distribution centered at 0. The general framework of (\ref{eq:grda_gen}) also covers \texttt{AdaGrad} of \cite{DHS11} that sets $F$ as Mahalanobis norm adapted to past gradients, and \texttt{FTPRL} of \cite{MS10} that sets $F$ as a squared $\ell_2$ norm centered at the last iterate.

Figure \ref{fig:cifar10} compares the performance of multiple popular algorithms (see Chap. 8 of \cite{GBC16}) with \eqref{eq:grda_gen} on training a simple 6-layer convolutional neural network (CNN) with around 1.25 million coefficients, using the modern CIFAR--10 image classification data set\footnote{Data and task description: \url{https://www.cs.toronto.edu/~kriz/cifar.html}; a description of the CNN used here is in the documentation of \texttt{Keras}: \url{https://keras.io/examples/cifar10_cnn/}.}. For the illustration purpose, we set $F(\bw)=\frac{1}{2}\|\bw\|_2^2$, $\Pc(\bw)=\|\bw\|_1$ and \begin{align}
	g(n,\gamma)=c \gamma^{1/2}(n\gamma-t_0)_+^\mu, \label{eq:tunel1}
\end{align}
where $c=\gamma=0.005$, and $t_0\geq 0$ is the time mean dynamics (see \eqref{eq:convode} below) reaches a neighborhood of $\bw^*$. In this empirical demonstration, we set $t_0=0$, which works well. With $\mu=1$, the gRDA behaves similarly to \eqref{eq:rda} as epoch (or $n$) gets larger, and it creates high sparsity in the CNN but sacrifices much testing accuracy than the other algorithms. Among all the algorithms, \eqref{eq:grda_gen} with $\mu=0.7$ provides the best trade-off between accuracy and sparsity. In parctice, the sparse neural networks adjust the over-parametrization to a proper level so that requirements on storage and memory can be reduced; see e.g. \cite{dnnspa18, HMD16, LWK18}. Note that our aim here is to illustrate the difference between algorithms, without paying effort in maximizing the accuracy of image recognition.
	\begin{figure}[!h]  
		%\hspace{-0.7cm}
 \includegraphics[scale=0.3]{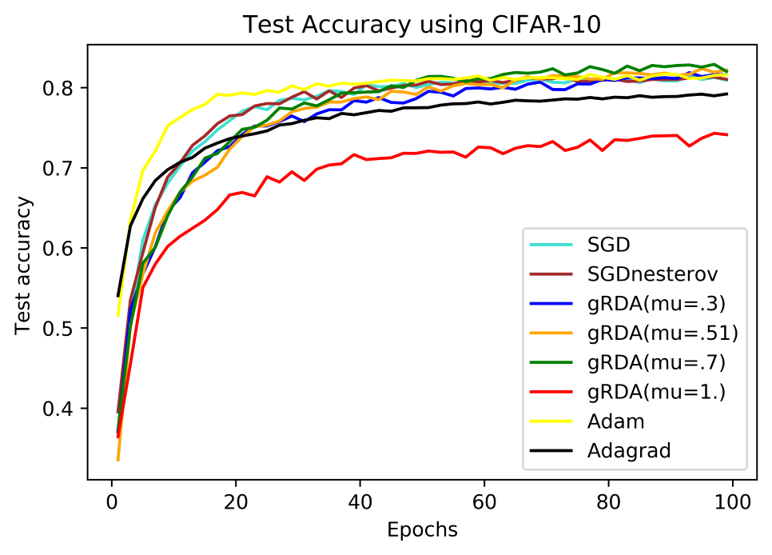}
 %\hspace{0.3cm}
 \includegraphics[scale=0.3]{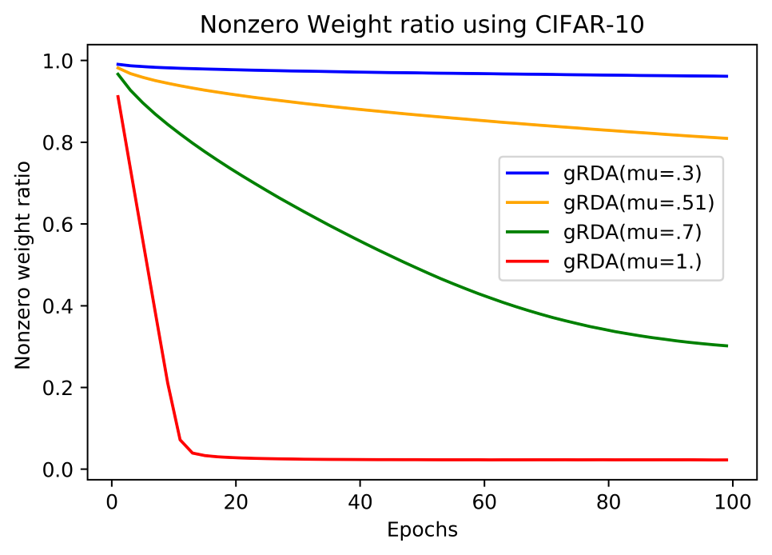}
 \caption{Left: testing accuracy of CIFAR--10 data with different algorithms. Right: the ratio of nonzero coefficients to the number of total coefficients. The initial step size for \texttt{Adagrad} and \texttt{Adam} are 0.005 and 0.0001 respectively. Minibatch size = 10. Except for gRDA, no other algorithms here can generate sparsity.}\label{fig:cifar10}
\vspace{-0.3cm}
\end{figure}
%%%%%%%%%%%%%%%%%%%%%%%%%%%%%%%%%%%%%%%%%%%%%%%%%%%%%%%%%%%%%%%%%%%%%%%%%%%%%%%%%%%%%%%%%%%%%%%%
%%%%%%%%%%%%%%%%%%%%%%%%%%%%%%%%%%%%%%%%%%%%%%%%%%%%%%%%%%%%%%%%%%%%%%%%%%%%%%%%%%%%%%%%%%%%%%%%
%%%%%%%%%%%%%%%%%%%%%%%%%%%%%%%%%%%%%%%%%%%%%%%%%%%%%%%%%%%%%%%%%%%%%%%%%%%%%%%%%%%%%%%%%%%%%%%%
%%%%%%%%%%%%%%%%%%%%%%%%%%%%%%%%%%%%%%%%%%%%%%%%%%%%%%%%%%%%%%%%%%%%%%%%%%%%%%%%%%%%%%%%%%%%%%%

Understanding the above observation and finding the best choice of $g(n,\gamma)$ in \eqref{eq:grda_gen} require us to develop distributional analysis that provides a complete picture on the algorithm. Despite the well established distributional analysis for \eqref{eq:sgd}, e.g., \cite{KY03,BMP90}, the existing theoretical analysis for \eqref{eq:rda}, e.g. \cite{X10,LW12,OCC15,FB17}, only focus on regret and convergence (with decreasing step size) analysis. One technical reason could be that \eqref{eq:rda} performs $\arg\min$ in every step with a time-varying penalty such that the martingale weak convergence theory \citep{EK86} in the analysis of \eqref{eq:sgd} cannot be directly applied.

%%%%%%%%%%%%%%%%%%%%%%%%%%%%%%%%%%%%%%%%%%%%%%%%%%%%%%%%%%%%%%%%%%%%%%%%%%%%%%%%%%%%%%%%%%%%%%%%
%%%%%%%%%%%%%%%%%%%%%%%%%%%%%%%%%%%%%%%%%%%%%%%%%%%%%%%%%%%%%%%%%%%%%%%%%%%%%%%%%%%%%%%%%%%%%%%%
%%%%%%%%%%%%%%%%%%%%%%%%%%%%%%%%%%%%%%%%%%%%%%%%%%%%%%%%%%%%%%%%%%%%%%%%%%%%%%%%%%%%%%%%%%%%%%%%
%%%%%%%%%%%%%%%%%%%%%%%%%%%%%%%%%%%%%%%%%%%%%%%%%%%%%%%%%%%%%%%%%%%%%%%%%%%%%%%%%%%%%%%%%%%%%%%%

Our second contribution is a rigorous and thorough distributional analysis for the learning process of \eqref{eq:grda_gen}, which holds for both convex and non-convex differentiable $f$. Define $\bw_\gamma(t):=\bw_{\itg}^\gamma$ as a piecewise constant stochastic process indexed by $t>0$. The process convergence is developed for $\bw_\gamma(t)$ by passing $\gamma\to 0$. %Note that in practice, the step size $\gamma$ can be chosen freely, but is usually small. 
We prove that as $\gamma\to 0$,
{\small
\begin{align}
	\bw_\gamma(t) \weakto \bw(t) := \psi_{g}(t,\bv(t)), \mbox{ where } \frac{d\bv(t)}{dt} = G(\psi_{g}(t,\bv(t))), \  \bv(0)=\bw_0.\label{eq:convode}
\end{align}}
where $\weakto$ denotes the weak convergence, and $G$ is defined in \eqref{eq:sa}. Note that the ordinary differential equation (ODE) in \eqref{eq:convode} may have multiple solutions. If the solution is unique, then the weak convergence can be improved to convergence in probability, which is analogous to the classical weak law of large number. For some Gaussian process $\bX(t)$ with independent increments, we develop the central limit theorem as follows: consider the re-scaled process $\WW_\gamma(t)=\frac{\bw_\gamma(t)-\bw(t)}{\sqrt{\gamma}}$, we prove
{\small 
\begin{align}
	\WW_\gamma(t)\weakto \WW(t):=\tilde\psi_{g}(t,\VV(t)), \mbox{ where } d\VV(t)=\nabla G(\tilde\psi_{g}(t,\VV(t)))dt + d\bX(t),\label{eq:convsde}
\end{align}
}
with $\VV(0)=0$, where $\nabla G$ is a matrix of derivative of $G$; $\psi_{g}$ and $\tilde \psi_{g}$ are some functions depending on $g$, and will be made explicit in the later sections. %Even though the analytic forms of the solutions $\bv(t)$ and $\VV(t)$ (if exist) are usually intractable, the convergence of $\bv(t)$ and the stationarity of $\VV(t)$ can often be inferred via techniques which do not require the knowledge of solutions, e.g. the (stochastic) Lyapunov functions \citep{K02,K10}. 
Properties of $\bv(t)$ and $\VV(t)$ can be used to infer the behavior of $\bw_\gamma(t)$ and $\WW_\gamma(t)$.

%%%%%%%%%%%%%%%%%%%%%%%%%%%%%%%%%%%%%%%%%%%%%%%%%%%%%%%%%%%%%%%%%%%%%%%%%%%%%%%%%%%%%%%%%%%%%%%%
%%%%%%%%%%%%%%%%%%%%%%%%%%%%%%%%%%%%%%%%%%%%%%%%%%%%%%%%%%%%%%%%%%%%%%%%%%%%%%%%%%%%%%%%%%%%%%%%
%%%%%%%%%%%%%%%%%%%%%%%%%%%%%%%%%%%%%%%%%%%%%%%%%%%%%%%%%%%%%%%%%%%%%%%%%%%%%%%%%%%%%%%%%%%%%%%%
%%%%%%%%%%%%%%%%%%%%%%%%%%%%%%%%%%%%%%%%%%%%%%%%%%%%%%%%%%%%%%%%%%%%%%%%%%%%%%%%%%%%%%%%%%%%%%%%

Among many interesting choices of $F$ and $\Pc$ for \eqref{eq:grda_gen} (see Example \ref{ex:FP}), we focus on the leading example $F(\bw)=\frac{1}{2}\|\bw\|_2^2$ and $\Pc(\bw)=\|\bw\|_1$ that potentially introduces sparsity. In this setup, a consequence of \eqref{eq:convode} is that \eqref{eq:rda}, i.e., $g(n,\gamma)=n\gamma$, is biased for some important learning problems (similar as LASSO for batch learning) with bias $c_0$. On the other hand, to ensure a non-degenerate limiting distribution, i.e. \eqref{eq:convsde}, our theory suggests $g(n,\gamma)\asymp\sqrt{\gamma}$. %More interestingly, multiplying an additional factor $(n\gamma)^\mu$ for some $0.5<\mu<1$ in $g(n,\gamma)$ facilitates to achieve support recovery and increase robustness to the selection of multiplicative constants, while having the bias vanish as $n\to\infty$. 
In addition, if we replace $c_0$ in \eqref{eq:rda} by a time decreasing function, e.g. $t^{\mu-1}=(n\gamma)^{\mu-1}$ with $\mu<1$, the bias will vanish. This motivates the form of \eqref{eq:tunel1}. A caveat is that $\mu$ also needs to be large enough to ensure support recovery. We observe that the components of $\VV(t)$ in \eqref{eq:convsde} corresponding to inactive coefficients asymptotically evolve like a Gaussian process that grows in rate $t^{1/2}$ (up to an iterative logarithmic term), while the active coefficients diverge like $t^{\max\{1/2,\mu\}}$. Hence, \eqref{eq:tunel1} with $\mu>0.5$ creates a contrast in learning dynamics that distinguishes the active and inactive set. The intuition above is rigorously justified in the setup of least square regression with orthogonal Hessian matrix for an arbitrarily small $t_0>0$ in \eqref{eq:tunel1}. The orthogonality condition is suggested to be unnecessary by the simulations in Section \ref{sec:simlr}. In addition, the same conclusion seems to continue to hold for online sparse principal component analysis (see Section \ref{sec:simpca}). Encouraged by the preliminary analysis in Figure \ref{fig:cifar10}, we conjecture that \eqref{eq:tunel1} is a universal recipe for using \eqref{eq:grda_gen} with the $\ell_1$ penalization and can work for more difficult task such as deep learning.

Both results, i.e., \eqref{eq:convode} and \eqref{eq:convsde}, apply to the entire training process $\{\bw_n^\gamma\}_{n\in\N}$, and their analysis relies on the so-called ``stochastic mirror descent'' (SMD) representation. The SMD representation for $\bw_\gamma(t)$ in \eqref{eq:convode} is known since \cite{OCC15}; see \eqref{eq:md}. However, to analyze $\WW_\gamma(t)$ in \eqref{eq:convsde}, we need to construct a new SMD representation using the localized Bregman divergence at $\bw(t)$; see Lemma \ref{lem:exprW}. Stochastic mirror descent represents a rich family of algorithms, but no distributional result has been considered for either time-varying or time-invariant regularizers; see the Related Works below. In this respect, our general theory is also new to the constant step size SMD literature.

%%%%%%%%%%%%%%%%%%%%%%%%%%%%%%%%%%%%%%%%%%%%%%%%%%%%%%%%%%%%%%%%%%%%%%%%%%%%%%%%%%%%%%%%%%%%%%%%
%%%%%%%%%%%%%%%%%%%%%%%%%%%%%%%%%%%%%%%%%%%%%%%%%%%%%%%%%%%%%%%%%%%%%%%%%%%%%%%%%%%%%%%%%%%%%%%%
%%%%%%%%%%%%%%%%%%%%%%%%%%%%%%%%%%%%%%%%%%%%%%%%%%%%%%%%%%%%%%%%%%%%%%%%%%%%%%%%%%%%%%%%%%%%%%%%
%%%%%%%%%%%%%%%%%%%%%%%%%%%%%%%%%%%%%%%%%%%%%%%%%%%%%%%%%%%%%%%%%%%%%%%%%%%%%%%%%%%%%%%%%%%%%%%%

Our theory not only provides a fundamental understanding to \eqref{eq:grda_gen}, but also, for the first time in the literature, makes uncertainty quantification possible for online sparse algorithms penalized by convex functions. The asymptotic confidence bands for the complete training process can be constructed with our theory. In particular, the confidence bands are non-smooth at the point where the mean trajectory switches sign. To illustrate, we consider asymptotic confidence bands for online sparse linear regression and online sparse principal component analysis, and validate the coverage by simulations. 

\texttt{Python} code for \eqref{eq:grda_gen} is on Github: \href{https://github.com/donlan2710/gRDA-Optimizer}{\texttt{gRDA-Optimizer}}.

The rest of this paper is organized as follows. In Section \ref{sec:cvx}, the stochastic mirror descent representation of \eqref{eq:grda_gen} is introduced. Section \ref{sec:gentheo} contains the main theoretical results of \eqref{eq:grda_gen}. After the general theory is introduced, we tailor it for the important $\ell_1$ norm penalization, and discuss the sufficient conditions that warrant the asymptotic analysis for this case. In Section \ref{sec:lrl1}, we discuss the oracle properties that arise when applying \eqref{eq:grda_gen} to the online sparse linear regression. Section \ref{sec:pca} focuses on the online sparse principal component analysis and its dynamics. In Section \ref{sec:sim}, we validate the relevance of our asymptotic results for non-infinitesimal step sizes through extensive Monte Carlo experiments. Proofs of all theoretical results are deferred to the online supplement. 

\noindent{\bf Related works.} 
We discuss some existing theoretical works of \eqref{eq:sgd}, \eqref{eq:rda} and stochastic mirror descent (SMD). SGD can also be implemented with shrinking step size $\gamma_n$ in $n$, and statistical inference based on this has been studied in \cite{SZ18} and \cite{CLTZ19}. But in this case, it often takes longer to converge for some modern learning tasks, and is empirically harder to tune as the results are sensitive to the decreasing rate \citep{DDB17,CT18}. On the other hand, SGD with a constant step size is simpler to implement, and has been found effective in navigating through complex loss landscape of non-smooth convex loss \citep{BM13} or non-convex loss \citep[Chapter 5]{GBC16}.

In the literature, \eqref{eq:rda} is often considered with shrinking step size in $n$, e.g., $\gamma_n=c_1 n^{-\alpha}$, where the values of $c_1,\alpha>0$ need to be selected with great care. For example, \cite{X10} proposed $\alpha=1/2$ for some constant $c_1>0$. For $\Pc(\bw)=\|\bw\|_1$, \cite{LW12} prove that as $n\to\infty$, $\bw_n$ converges to the minimizer of $L(\bw):=\E_Z[f(\bw;Z)]+c_0\|\bw\|_1$ where $f(\bw;Z)$ is required to be convex and smooth in $\bw$ (neither is required by us). This implicitly suggests that $\lim_{n\to\infty}\bw_n$ is biased for $\arg\min\E_Z[f(\bw;Z)]$ \citep{fan2001variable}. Our results show that \eqref{eq:rda} with constant step size is biased, which is remotely related to \cite{LW12}. In practice, the performance of RDA with shrinking $\gamma_n$ is very sensitive to the constant $c_0$ that controls the penalization strength in \eqref{eq:rda}, for either convex loss \citep{X10} or non-convex loss in deep learning \citep{mrda18}.

The SMD evolves from the classical mirror descent proposed by \cite{NY83}, and is arguably more robust than \eqref{eq:sgd}. The literature for the SMD is large and still growing fast. 
Traditionally, SMD is implemented with time-invariant regularizers, and convergence in mean or almost surely was obtained for convex and non-convex losses; see, e.g. \cite{NJLS09,DAJJ12,ZMBBG17,LZ18,ZH18,JNNT19}. Recently, many authors implicitly or explicitly consider generalized SMD with time-varying regularizers, with proven regret bounds; see e.g. \cite{V01,AW01,BHR07,ST10,OCC15}.

\noindent{\bf Notations.} For $a\geq 0$, $\lfloor a\rfloor$ returns the greatest integer less than or equal to $a$. Define $D([0,\infty))^d$ and $C([0,\infty))^d$ as spaces consisting of {c\'adl\'ag} functions and continuous functions mapping from $[0,\infty)$ to $\R^d$, respectively. The Skorohod metric $\rho_d^\infty$ on $D([0,\infty))^d$ is defined by $\rho_d^\infty(\bx,\by) = \sum_{j=1}^d \rho^\infty(x_j,y_j)$, where $x_j,y_j\in D[0,\infty)$ and
\begin{align}\label{eq:metric}
	\rho^\infty(x,y) := \inf_{\nu\in\Vc}\Big[&\sup_{0\leq t<s}\Big|\log\frac{\nu(s)-\nu(t)}{s-t}\Big|\\
	&\vee \int_0^\infty e^{-u} \sup_{0\leq t\leq u} \big\{\|x(t)-y(\nu(t))\|_2 \wedge 1\big\}du\Big].\notag 
\end{align}
Here, $\Vc$ is a set of Lipschitz continuous functions mapping from $[0,\infty)$ onto $[0,\infty)$. A sequence of random elements $X_\gamma$ in $\big(D([0,\infty))^d,\rho_d^\infty\big)$ weakly converges to $X$, if $\E[g(X_\gamma)]\to\E[g(X)]$ as $\gamma\to 0$ for any real valued bounded continuous function $g$ in the above space \citep{VW96}. 

\section{Stochastic mirror descent representation}\label{sec:cvx}

In the theoretical analysis, the stochastic mirror descent representation of \eqref{eq:grda_gen} is used. Define
\begin{align}
	\Psi_{n,\gamma}(\bw) := g(n,\gamma) \Pc(\bw) + F(\bw), \label{eq:smdreg}
\end{align}  
where $g(n,\gamma)$ is the tuning function. For any $\bv\in\R^d$, define the Fenchel conjugate
\begin{align}
\Psi_{n,\gamma}^*(\bv):=\max_{\bw\in\R^d}\big\{\langle \bw,\bv\rangle - \Psi_{n,\gamma}(\bw)\big\}. \label{eq:fenchel}
\end{align}

We need the following condition on $\Pc(\cdot)$ and $F(\cdot)$ throughout the paper. 
\begin{itemize}
	\labitemc{(R)}{as:R} $\Pc(\bw)$ is convex, $F(\bw)$ is $\beta$-strongly convex, where $\beta>0$ is a constant, and both are lower semicontinuous (l.s.c.)\footnote{A function $h:\R^d\to\R$ is {lower semicontinuous} if $\liminf_{\bu\to\bu_0} h(\bu)\geq h(\bu_0)$ for every $\bu_0\in\R^d$.} and finite.
\end{itemize}
Under the Condition \ref{as:R}, Proposition 11.3 of \cite{RW09} implies that $\Psi^*_{n,\gamma}$ is differentiable, and its derivative satisfies
\begin{align}
\nabla\Psi_{n,\gamma}^*(\bv)= \arg\,\min_{\bw\in\R^d}\big\{\Psi_{n,\gamma}(\bw)-\bw^\top\bv\big\}. \label{eq:dconj}
\end{align}

With these notations, \eqref{eq:grda_gen} can be rewritten in the SMD representation \citep{OCC15}:
 \begin{align}\tag{\texttt{gRDA-SMD}}
	\begin{split}\label{eq:md}
	\bv_{n+1} &= \bv_n - \gamma \nabla f(\bw_n;Z_{n+1}),\\
	\bw_{n+1} &= \nabla\Psi_{n+1,\gamma}^*(\bv_{n+1}),%
	\end{split}
\end{align}
where $\bv_0=\bw_0$, $\bw_n=(w_{n,1},w_{n,2},...,w_{n,d})^\top$, $\bv_n=(v_{n,1},v_{n,2},...,v_{n,d})^\top$ and
$$
\nabla\Psi_{n,\gamma}^*(\bv)=\big(\nabla\Psi_{n,\gamma,1}^*(\bv),\nabla\Psi_{n,\gamma,2}^*(\bv),...,\nabla\Psi_{n,\gamma,d}^*(\bv)\big)^\top\in\R^d.
$$ 
The parameter $\bv_n$ is an accumulator of gradients in the dual space spanned by the gradients. 

We next provide some examples for $F(\bw)$ and $\Pc(\bw)$.
\begin{example}[The common choices of $F(\bw)$ and $\Pc(\bw)$]\label{ex:FP}
$F$ can be any strongly convex function, such as $F(\bw) = \frac{1}{2}\|\bw\|_2^2$ and $F(\bw) = \frac{1}{2}\bw^\top A \bw$ with some p.d. pre-conditioning matrix $A$. $\Pc(\bw)$ are usually norms of $\bw$, such as $\ell_p$ norm, where often $p=1,2$ and $\infty$. In particular, $\Pc(\bw)=\|\bw\|_1$ is arguably the most popular choice. Other common penalties include elastic net of \cite{zou2005regularization} and group LASSO of \cite{YL06}. They can be incorporated into \eqref{eq:grda_gen}, by setting
		% \begin{align}
% 			\Lc_{n,\gamma}(\cdot):=(1/2)\|\cdot\|_2^2+g(n,\gamma)\|\cdot\|_1.\label{eq:rdal1}
% 		\end{align}
%then $\nabla\Psi_{n,\gamma}^*(\bv)$ in \eqref{eq:md} corresponds to the soft-thresholding operator:
	\begin{align}
			%\bv_{n+1} &= \bv_n - \gamma \nabla f(\bw_n;Z_{n+1})\\
		\mbox{LASSO: }\Pc(\bw)&=\|\bw\|_1;\\
		\mbox{Elastic Net: }\Pc(\bw) &= \frac{\kappa}{2}\|\bw\|_2^2+\|\bw\|_1;\\
		\mbox{Group LASSO: }\Pc(\bw) &= \sum_{a\in\Gc} \|\bw_{a}\|_2,
		\end{align}
		where $\Gc$ is a partition of $\{1,2,...,d\}$, and $\bw_{a}$ is the group $a$ coefficients of $\bw=(w_{1},w_{2},...,w_{d})^\top$. Given that $F(\bw) = \frac{1}{2}\|\bw\|_2^2$, \eqref{eq:md} can take advantage of the closed-form proximal operator of $g(n,\lambda)\Pc(\bw)$ \citep{PB14} such that for $j=1,2,...,d$,
		{\small 
	\begin{align}
		\mbox{LASSO: }\nabla\Psi_{n,\gamma,j}^*(\bv) &= \sgn(v_j)\cdot\big(|v_j|-g(n,\gamma)\big)_+,\tag{\texttt{gRDA-$\ell_1$}},\label{eq:grdal1}\\
	\mbox{Elastic Net: }\nabla\Psi_{n,\gamma,j}^*(\bv) &= \frac{1}{1+\kappa g(n,\gamma)}\sgn(v_j)\cdot\big(|v_j|- g(n,\gamma)\big)_+, \label{eq:elcon1}\\
			\mbox{Group LASSO: }\nabla\Psi_{n,\gamma,a}^*(\bv) &= \Big(1-\frac{g(n,\gamma)}{\|\bv_{a}\|_2}\Big) \bv_{a}, \quad a\in\Gc,
	\end{align}}
	where $\bv_{a}$ is the group $a$ coefficients of $\bv=(v_{1},v_{2},...,v_{d})^\top$. 
\end{example}

We remark that for the above selection of $\Pc$, the computational cost per iteration in \eqref{eq:md} is as cheap as SGD due to the closed-form proximal operators. Therefore, the leading example in this paper is $F(\bw) = \frac{1}{2}\|\bw\|_2^2$.

\section{Asymptotic analysis of gRDA}\label{sec:gentheo}

In this section, we study asymptotic behaviors of the two sequences $\bv_n$ and $\bw_n$ defined in \eqref{eq:md} as the step size $\gamma\to 0$ along a countable sequence. The asymptotic trajectory is characterized by a system of {time-inhomogeneous} ordinary differential equations (ODE). As for the distributional dynamics, we find that it is in general not an Ornstein-Uhlenbeck type stochastic differential equations (SDE), and thus is different from the well studied SGD case \citep{BMP90}. As a corollary, the asymptotic distribution for the averaged estimator is also developed. In the end, we 
tailor our general theory to study a special case that $\Pc(\bw)=\|\bw\|_1$, i.e. \eqref{eq:grdal1}.

\subsection{Asymptotic trajectory}\label{sec:asymtr}

We need the following regularity conditions. 
\begin{itemize}
	\labitemc{(S)}{as:S} Data sequence $Z_1, Z_2,\ldots,Z_n$ are i.i.d.  
	\labitemc{(M)}{as:M} $G(\bw):\R^d\to\R^d$ is continuous on $\R^d$,
	and for any $K>0$,
	\begin{align}
		\E\Big[\sup_{\bw:\|\bw\|_2\leq K}\big\|\nabla f\big(\bw,Z)\big)\big\|_2\Big]<\infty.\label{eq:GL1}
	\end{align}
\end{itemize}
Our theory can be readily generalized to Markovian data related to the modern ``adversarial'' setting in online learning \citep{S12}, by setting $Z_{n+1}=Q(\bw_n,\xi_n)$ for some measurable function $Q:\R^d\times\Sc\to\Zc$, and i.i.d. random variables $\xi_n$. 

Recall from \eqref{eq:smdreg} that $\Psi_{n,\gamma}(\bw) = g(n,\gamma) \Pc(\bw) + F(\bw)$. For simplicity, we adopt the following reparametrization:
\begin{align}
\Psi_\gamma(t,\bw):=\Psi_{\itg,\gamma}(\bw)\;\;\mbox{for $t\geq 0$}. \label{eq:trcond}
\end{align}
% Similar to \eqref{eq:fenchel}
% \begin{align}
% 	\Psi_\gamma^*(t,\bv):=\max_{\bw\in\R^d}\big\{\langle \bw,\bv\rangle - \Psi_\gamma(t,\bw)\big\}. \label{eq:fenchel}
% \end{align}
Similarly, \eqref{eq:dconj} can be re-parametrized as
\begin{align}
	\nabla\Psi_\gamma^*(t,\bv)= \arg\,\max_{\bw\in\R^d}\big\{\langle \bw,\bv\rangle - \Psi_\gamma(t,\bw)\big\}.\label{eq:rpdconj}
\end{align}
The $\bv_n$ and $\bw_n$ in the SMD representation \eqref{eq:md} can also be re-expressed as
\begin{align}
	\bv_\gamma(t) &:= \bv_{\lfloor t/\gamma\rfloor}, \quad \bw_\gamma(t) := \bw_{\lfloor t/\gamma\rfloor} = \nabla\Psi_\gamma^*(t,\bv_\gamma(t)).\label{eq:defvw0}
\end{align}
In the expression of $\Psi_\gamma(t,\bw)$, we have $g(\lfloor t/\gamma \rfloor,\gamma)$ whose limit is guaranteed to exist by the following condition:
\begin{itemize}[itemindent=45pt,leftmargin=0pt]	
	\labitemc{($g$-lim)}{as:Plim} Assume $g(\lfloor\cdot/\gamma\rfloor,\gamma)\in D([0,\infty))$, and there exists a continuous non-negative function $g^\dagger:[0,\infty)\to[0,\infty)$ such that 
	\begin{align}
			\lim_{\gamma\to 0}\sup_{t\in[0,T]}\big|g(\itg,\gamma)-g^\dagger(t)\big|=0 \quad \mbox{for every $T>0$}. \label{eq:Plim}
	\end{align} 
% The condition ($g$-lim) is the only requirement for the tuning function in this paper. 	
		% \labitem{(R)}{as:tPlim} (Regularizer) $\Psi$ is continuous, strongly convex. Define
	% \begin{align}
	% 	\tP_\gamma(t,\bu):=\gamma^{-1}\big(\Psi(\bw_*+\sqrt{\gamma}\bu)-\Psi(\bw_*) -\langle \sqrt{\gamma}\bu,\bv_*\rangle\big)\label{eq:locdgf}
	% \end{align}
	%  and the limit $\lim_{\gamma\to 0}\tP_\gamma(t,\bu)=: \tP(t,\bu)$ exists pointwisely for any $\bu\in\mbox{dom}\Psi$, and $\tP$ is finite.% and is strictly convex in the first argument $\bu$.
\end{itemize}
This gives the limit of $\Psi_\gamma(t,\bw)$ (when $\gamma\to 0$) as
\begin{align}
	\Psi(t,\bw):=F(\bw) + g^\dagger(t)\Pc(\bw) \label{def:Psi}
\end{align}
Note that $g^\dagger(t)$ is allowed to be zero for all $t$. The Fenchel conjugate of $\Psi(t,\bw)$ and its derivative at $\bv\in\R^d$ can be defined similarly as \eqref{eq:fenchel} and \eqref{eq:dconj}:
\begin{align}
	\Psi^*(t,\bv)&:=\sup_{\bw\in\R^d}\big\{\bw^\top\bv-\Psi(t,\bw)\big\}, \notag\\
	\nabla\Psi^*(t,\bv)&:=\arg\,\min_{\bw\in\R^d}\big\{\Psi(t,\bw)-\bw^\top\bv\big\}.\label{eq:conjctucvx}
\end{align}

We are now ready to present our first main theorem characterizing the mean dynamics of \eqref{eq:md}, which can be regarded as the law of large number for the learning process as $\gamma\to 0$. By virtue of a powerful weak convergence theorem adapted from \cite{K09}, we do not need to directly impose conditions on the operator $\nabla\Psi_{n,\gamma}^*$, which can be difficult to verify. Therefore, in this respect Theorem \ref{th:at} gives a generally applicable tool on studying the asymptotic behavior of \eqref{eq:md}. As far as we know, even its application to \eqref{eq:rda} is new.

For any $K>0$, define a stopping time $\tau_\gamma^K:=\inf\{t:\|\bv_\gamma(t)\|_2\geq K\}$ and the corresponding ``stopped'' processes $\bv_\gamma^K(\cdot):=\bv_\gamma(\cdot\wedge \tau_\gamma^K)$ and $\bw_\gamma^K(\cdot):=\bw_\gamma(\cdot\wedge \tau_\gamma^K)$. We say a sequence $\{\bv_\gamma\}_\gamma$ is {relatively compact} in $D([0,\infty))^d$ if for every subsequence $\bv_{\gamma'}(t)$, there exists a further subsequence that weakly converges in $D([0,\infty))^d$ [see p.57 of \cite{B99}].
\begin{theo}[Asymptotic trajectory]\label{th:at}
	%Let $\Psi_{n,\gamma}:\R^d\to\R$ be l.s.c. for each $n$ and $\gamma$ and is $\beta$-e.s.c. with respect to $n,\gamma$. 
	Suppose that \ref{as:R}, \ref{as:S}, \ref{as:M} and \ref{as:Plim} hold. 
	Let $\bv_0$ be a fixed initial vector for $\bv_n$. %In addition, $\bv_\gamma^K(0)\pto\bv_0$ as $\gamma\to 0$. 
	Then, we have
	\begin{itemize}
		\item[(a)] For each $K$, $\{\bv_\gamma^K,\gamma>0\}$ is relatively compact in $D([0,\infty))^d$, and as $\gamma\to 0$, every limit of $(\bv_\gamma^K(t),\bw_\gamma^K(t))$ satisfies
	\begin{align}
		\bv(t)&=\bv_0-\int_0^t G\big(\nabla\Psi^*(s,\bv(s))\big)ds,\label{eq:weak0v}\\
		\bw(t)&=\nabla\Psi^*(t,\bv(t))\label{eq:weak0w},
	\end{align}
	for all $t<\tau^K=\inf\{t:\|\bv(t)\|_2\geq K\}$, where $\nabla\Psi^*(t,\cdot)$ is defined in \eqref{eq:conjctucvx}.% $\bw(t)$ is continuous.
	\item[(b)] If the solution to \eqref{eq:weak0v} is unique, then the sequence converges in probability, i.e. for every $T,\epsilon>0$, $\lim_{\gamma\to 0} P\big(\sup_{0\leq t < T\wedge \tau^K}\|\bv_\gamma^K(t)-\bv(t)\|_2>\epsilon\big)=0$, and the same holds for $\bw_\gamma^K(t)$. %\skc{see discussion on p.27 of \cite{B99}}
	\item[(c)] If all the solutions to the ODE in \eqref{eq:weak0v} are bounded in finite time, then $(\bv_\gamma(t),\bw_\gamma(t))$ are relatively compact with every limit satisfying \eqref{eq:weak0v} and \eqref{eq:weak0w} for all $t\in[0,\infty)$.
	\end{itemize}
\end{theo}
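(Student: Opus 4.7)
\textbf{Proof proposal for Theorem \ref{th:at}.}

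The plan is to work from the SMD representation \eqref{eq:md}, which tells us that
\[
\bv_\gamma(t) = \bv_0 - \gamma\sum_{i=0}^{\itg-1}\nabla f(\bw_i;Z_{i+1}),
\qquad \bw_\gamma(t) = \nabla\Psi_\gamma^*(t,\bv_\gamma(t)).
\]
The driving idea, borrowed from the weak-convergence machinery of Kurtz (2009), is to view $\bv_\gamma$ as a semimartingale whose compensator is, up to a vanishing error, $\int_0^t G(\bw_\gamma(s))ds$, and then transport the resulting ODE limit across the continuous map $\bv\mapsto\nabla\Psi^*(t,\bv)$ to get $\bw_\gamma(t)\weakto\bw(t)$. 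Because $g$ varies in $t$ but the conjugate is only defined through an $\arg\min$, the key technical task is to justify this transport step uniformly on compact time intervals despite the time inhomogeneity.

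For part (a), I would first decompose
\[
\bv_\gamma(t) \;=\; \bv_0 \;-\; \int_0^{\gamma\itg} G(\bw_\gamma(s))ds \;+\; M_\gamma(t) \;+\; R_\gamma(t),
\]
where $M_\gamma(t)=-\gamma\sum_{i=0}^{\itg-1}\big[\nabla f(\bw_i;Z_{i+1})-G(\bw_i)\big]$ is the martingale part under the natural filtration (using \ref{as:S}) and $R_\gamma(t)$ is a negligible Riemann discretization remainder. On the stopped interval $t\le\tau_\gamma^K$, the vector $\bw_\gamma(t)=\nabla\Psi_\gamma^*(t,\bv_\gamma(t))$ lies in a bounded set uniformly in $\gamma$: \ref{as:R} gives $\beta$-strong convexity of $\Psi_{n,\gamma}$, so $\nabla\Psi_\gamma^*(t,\cdot)$ is $1/\beta$-Lipschitz, and therefore $\|\bw_\gamma^K(t)\|_2\le \|\nabla\Psi_\gamma^*(t,0)\|_2+K/\beta$, which in turn is bounded via \ref{as:Plim}. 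Combined with the moment bound \eqref{eq:GL1} from \ref{as:M}, this yields $\E\|M_\gamma(t)\|_2^2=O(\gamma)$ on $\{t\le\tau_\gamma^K\}$, so $M_\gamma\to 0$ uniformly on compacts in probability by Doob's maximal inequality. The same bounds give Aldous-type modulus-of-continuity estimates on $\bv_\gamma^K$, so $\{\bv_\gamma^K\}$ is tight in $D([0,\infty))^d$ and hence relatively compact.

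Any subsequential limit $\bv$ is continuous (jumps vanish in the limit), and I need to identify it as the solution of \eqref{eq:weak0v}. Here the time-inhomogeneous part enters: under \ref{as:Plim}, $g(\itg,\gamma)\to g^\dagger(t)$ uniformly on compacts, so the strongly convex functionals $\Psi_\gamma(t,\cdot)$ epi-converge to $\Psi(t,\cdot)$ jointly in $(t,\gamma)$; by Attouch's theorem (or a direct $\beta$-strong-convexity argument as in Rockafellar--Wets Prop.~12.32), $(t,\bv)\mapsto\nabla\Psi_\gamma^*(t,\bv)$ converges continuously to $(t,\bv)\mapsto\nabla\Psi^*(t,\bv)$ on compact sets. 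Continuity of $G$ from \ref{as:M} then lets me pass to the limit in the integral equation along a convergent subsequence via the continuous mapping theorem (the Skorohod metric becomes the uniform metric because the limit is continuous). The hardest step here, and the one I would spend the most care on, is this epi-convergence-plus-continuous-map argument, because $\nabla\Psi_\gamma^*$ is only well-behaved under strong convexity and the time dependence is not smooth in general.

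For part (b), uniqueness of the solution to \eqref{eq:weak0v} pins down every subsequential limit to be the same deterministic path, and then relative compactness upgrades weak convergence to convergence in probability by the standard Billingsley criterion; convergence of $\bw_\gamma^K$ follows from the continuous mapping of $\bv\mapsto\nabla\Psi^*(t,\bv)$. For part (c), if all ODE solutions are bounded in finite time, then for any fixed $T$ one can choose $K$ large enough that $\tau^K>T$; combined with the weak convergence of the stopped processes this shows $P(\tau_\gamma^K\le T)\to 0$, so $\bv_\gamma^K$ and $\bv_\gamma$ coincide on $[0,T]$ with probability tending to one, and part (a) lifts to the unstopped sequence on arbitrary compact intervals, hence on $[0,\infty)$ in the Skorohod topology.
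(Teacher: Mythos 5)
Your overall structure (split $\bv_\gamma$ into a drift integral plus a martingale part, prove tightness of the stopped processes, identify the limit through continuity of the conjugate map, then remove the truncation for (c)) follows the paper's strategy. There is, however, a genuine gap in how you kill the martingale part. You write that ``combined with the moment bound \eqref{eq:GL1} from \ref{as:M}, this yields $\E\|M_\gamma(t)\|_2^2=O(\gamma)$'' and then apply Doob's maximal inequality. But \eqref{eq:GL1} is a \emph{first}-moment condition; a second-moment bound $\E\big[\sup_{\|\bw\|_2\leq K}\|\nabla f(\bw,Z)\|_2^2\big]<\infty$ is only part of Assumption \ref{as:L}, which is reserved for Theorem \ref{th:dd} and is not assumed here. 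Under \ref{as:M} alone your $L^2$/Doob estimate cannot even be started. The paper's proof avoids this by working at the $L^1$ level: it uses SLLN-based modulus estimates (needing only \eqref{eq:GL1}) to obtain relative compactness of the stopped martingale $L_\gamma$, a uniform-integrability lemma for i.i.d.\ averages (Lemma \ref{lem:ui_iid}) to show any subsequential limit is still a martingale, and finally shows that the limit has locally finite variation, hence must vanish. To salvage your argument you would need to adopt such an $L^1$-level scheme; retaining the $L^2$ route proves the theorem only under the stronger hypothesis \ref{as:L}.

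A secondary, non-fatal difference is in the continuous-mapping step. You propose epi-convergence of $\Psi_\gamma(t,\cdot)$ plus Attouch's theorem to obtain continuous convergence of $\nabla\Psi_\gamma^*\to\nabla\Psi^*$, then compose with $G$. The paper instead proves a Skorokhod-space argmin continuous mapping theorem (Lemma \ref{lem:kato1}, adapted from Kato, 2009) and applies it directly to $\bw_\gamma(t)=\arg\,\min_{\bu}\{\Psi_\gamma(t,\bu)-\bu^\top\bv_\gamma(t)\}$; this delivers weak convergence of the whole process under only the pointwise-in-$\bu$, uniform-in-$t$ convergence that \ref{as:Plim} actually gives. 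Your route can likely be made to work, but you should spell out how the deterministic continuous-convergence statement lifts to weak convergence of the processes in $D([0,\infty))^d$ (e.g.\ Skorohod representation and a.s.\ continuity of the limit paths); as written this step is only a sketch. Your treatments of parts (b) and (c) match the paper's.
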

%\skc{\bf For non-convex optimization, the initializer $\bv_0$ cannot be set at an attracting stationary point.}
See Section \ref{sec:pfat} for a proof of Theorem \ref{th:at}. Note that $\nabla\Psi^*(t,\bv)$ is just the $\psi_{g}(t,\bv)$ in \eqref{eq:convode}. When the solution of \eqref{eq:weak0v} is not unique, (b) in Theorem \ref{th:at} does not hold, but (c) still applies.

 %because the assumptions of Theorem \ref{th:at} are imposed on the regularizer $\Psi_{n,\gamma}$ itself, which can be easily verified in most cases. 

\begin{rem}[Sufficient conditions for the uniqueness in (b) of Theorem \ref{th:at}]\label{rmk:at}
	If $G$ is locally Lipschitz continuous, i.e. 
	\begin{align}
		\sup_{\bw\neq \bw', \bw,\bw'\in\Wc}\frac{\big|G(\bw)-G(\bw')\big|}{|\bw-\bw'|}<\infty, \mbox{ for every compact $\Wc\subset\R^d$},\label{eq:locliput}
	\end{align}
	then $G\big(\nabla\Psi^*(t,\cdot))$ is locally Lipschitz uniformly with respect to $t$ because $\nabla\Psi^*(t,\cdot)$ is globally Lipchitz uniformly with respect to $t$ by Lemma \ref{lem:uc}(a). Under the above condition, the Picard-Lindel\"of theorem (see Theorem 2.2 in \cite{T12}) asserts that there exists a unique local solution $\bv(t)$ of \eqref{eq:weak0v} for $t\leq\overline{T}$ and some $0<\overline{T}<\infty$. This can be extended to $\overline{T}=\infty$ if $G$ is globally Lipschitz continuous, see, e.g. Corollary 2.6 in \cite{T12}. 
\end{rem}

% \begin{rem}[Convergence of ODE]
% 	Studying the convergence of \eqref{eq:grda_gen} as $n\to\infty$ (not $\gamma\to 0$) using the mean dynamics \eqref{eq:weak0v} and \eqref{eq:weak0w} is generally nontrivial, because the ODE in \eqref{eq:weak0v} is inhomogeneous (depending on $t$). %In particular, in the case of non-convex optimization problem, if the regularizer $\Psi_{n,\gamma}$ is invariant to $n$, the variationally coherent concept of \cite{ZMBBG17} may be adapted to our framework.
% 	In our applications of linear $\ell_2$ regression and sparse PCA in Sections \ref{sec:lrl1} and \ref{sec:pca}, the limiting ODEs are homogeneous, though.
% \end{rem}

\subsection{Distributional dynamics}\label{sec:disdy}

% Let $\gamma$ be a countably decreasing sequence in $(0,\infty)$. 

%Note that $\VV_\gamma(t), \WW_\gamma(t)\in D_{\R^d}[0,\infty)$, which is the space of right continuous with left limit function.
%For classical stochastic approximation, i.e., $\bw_n=\bv_n$, the asymptotic behavior of $\WW_\gamma(t)$ is presented in Chapter 4 in Part II of \cite{BMP90} and \cite{BKS93} under fairly general settings. 
%and $\Sigma^\dagger(\bv):=\Sigma\circ \nabla\Psi^*(\bv)$.

To study the distributional dynamics, we need stronger moment and differentiability conditions on $G$ than those in Assumption \ref{as:M} as follows. For a fixed $\bw\in\R^d$, define the covariance matrix
\begin{align}
	\Sigma(\bw):=\E\big[\big(\nabla f(\bw;Z)-G(\bw)\big)\big(\nabla f(\bw;Z)-G(\bw)\big)^\top\big].\label{eq:sig}
\end{align}
\begin{itemize}
		\labitemc{(L)}{as:L} $G:\R^d\to\R^d$ is continuously differentiable on $\R^d$ (i.e. $\nabla \E_Z[\nabla f(\cdot;Z)]: \R^d\to\R^{d\times d}$ is continuous on $\R^d$). $\Sigma:\R^d\to\R^{d\times d}$ is continuous on $\R^d$. For any $K>0$,
		\begin{align}
			\E\Big[\sup_{\bw:\|\bw\|_2\leq K}\big\|\nabla f\big(\bw,Z\big)\big\|_2^2\Big]<\infty.\label{eq:mom2bdd}
		\end{align}
		%in the sense that for any $\bw_n$ and $\bw(t)$,
		% \begin{align}
		% 	G(\bw_n)=G(\bw(t)) + \nabla G(\bw(t))(\bw_n-\bw(t))+R_{\bw_n-\bw(t)}(\bw_n-\bw(t)),\label{eq:expan}
		% \end{align}
		% for all $\gamma$ almost surely, where
		% \begin{itemize}
		% 	\item $\nabla G(\bw)$ symmetric, positive semidefinite and is Lipschitz continuous;
		% 	\item $R_{\bw_n-\bw(t)}\in\R^{d\times d}$ is $\sigma(\bw_n)$ measurable and satisfies $\|R_{\bw_n-\bw(t)}\|_2=o(\|\bw_n-\bw(t)\|_2)$ uniformly for all $\gamma$.
		% \end{itemize}
		%\labitem{(L2)}{as:L2}  
		%\labitem{(L3)}{as:L3} 
\end{itemize}
A consequence of Assumption~\ref{as:L} is that by Remark \ref{rmk:at}, $G(\nabla\Psi^*(t,\cdot))$ is locally Lipschitz uniformly with respect to $t$. So, there exists a unique solution $\bv(t)$ of \eqref{eq:weak0v} for $t\in[0,\overline{T}]$. Therefore, the quantities of interest to be studied in this section: 
\begin{align}
	%\VV_t^\gamma &:= (\bv_{r_\gamma+t}-\bv_*)/\sqrt{\gamma}, \quad \WW_t^\gamma := (\bw_{r_\gamma+t}-\bv_*)/\sqrt{\gamma}.
	\VV_\gamma(t) &:= \frac{\bv_{\gamma}(t)-\bv(t)}{\sqrt{\gamma}}, \quad \WW_\gamma(t) := \frac{\bw_{\gamma}(t)-\bw(t)}{\sqrt{\gamma}}\label{eq:defvw}
\end{align}
are well defined up to some time $\overline{T}$ under the condition \ref{as:L}.

\bigskip		
The first step in establishing the distributional dynamics is to link  $\VV_\gamma(t)$ with $\WW_\gamma(t)$ analytically. Specifically, it is through the local Bregman divergence at $(\bv(t),\bw(t))$ induced by $\Psi_\gamma(t,\bw)$:
	\begin{align}
		\tP_\gamma(t,\bu):=\gamma^{-1}\big(\Psi_\gamma(t,\bw(t)+\sqrt{\gamma}\bu)-\Psi_\gamma(t,\bw(t)) -\langle \sqrt{\gamma}\bu,\bv(t)\rangle\big).\label{eq:locdgf}
	\end{align}
In some sense, (\ref{eq:locdgf}) is similar to the local loss function used for deriving weak convergence of $M$-estimate, see \cite{VW96}. 

Lemma \ref{lem:exprW} shows that the SMD representation for $\WW_\gamma(t)$ can be derived from $\tP_\gamma(t,\bu)$, which facilitates the subsequent theoretical analysis. 
%In particular, \cite{KF00} prove the weak convergence of the LASSO estimator by defining a suitable local loss function.
\begin{lemma}[SMD representation of $\WW_\gamma(t)$]\label{lem:exprW}
	For each $t\geq 0$ and $\gamma$,
	\begin{align}
		\WW_\gamma(t)=\nabla\tPs_\gamma(t,\VV_\gamma(t)), \label{eq:exprW}
	\end{align}
	where $\nabla\tPs_\gamma(t,\VV)$ is defined similarly as \eqref{eq:rpdconj} by replacing $\Psi_\gamma$ with $\tP_\gamma$ in \eqref{eq:locdgf}.
\end{lemma}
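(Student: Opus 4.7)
The plan is to derive \eqref{eq:exprW} by an affine change of variable in the variational characterization \eqref{eq:rpdconj} that defines $\bw_\gamma(t)$. Since $\Psi_\gamma(t,\cdot)$ inherits strong convexity from $F$ under Assumption \ref{as:R}, the Fenchel conjugate $\Psi_\gamma^*(t,\cdot)$ is differentiable and the arg min in \eqref{eq:rpdconj} is a singleton. By \eqref{eq:defvw0},
\begin{equation*}
\bw_\gamma(t) \;=\; \arg\min_{\bw\in\R^d}\bigl\{\Psi_\gamma(t,\bw)-\bw^\top \bv_\gamma(t)\bigr\}.
\end{equation*}
I will reparametrize $\bw = \bw(t)+\sqrt{\gamma}\,\bu$, so that the minimizer in $\bu$ is, by definition \eqref{eq:defvw}, exactly $\WW_\gamma(t)$.

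Next I would simplify the resulting objective. After subtracting $\Psi_\gamma(t,\bw(t))$ and $\bw(t)^\top \bv_\gamma(t)$ (both constants in $\bu$) and multiplying by $\gamma^{-1}>0$, the arg min is preserved, and the objective becomes
\begin{equation*}
\gamma^{-1}\bigl[\Psi_\gamma(t,\bw(t)+\sqrt{\gamma}\bu)-\Psi_\gamma(t,\bw(t))\bigr] \;-\; \gamma^{-1/2}\bu^\top \bv_\gamma(t).
\end{equation*}
Using $\bv_\gamma(t)=\bv(t)+\sqrt{\gamma}\,\VV_\gamma(t)$, I split the linear term as $\gamma^{-1/2}\bu^\top\bv(t)+\bu^\top\VV_\gamma(t)$; the first piece combines with the rescaled increment of $\Psi_\gamma$ to reproduce the local Bregman divergence $\tP_\gamma(t,\bu)$ of \eqref{eq:locdgf}. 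Hence
\begin{equation*}
\WW_\gamma(t)\;=\;\arg\min_{\bu\in\R^d}\bigl\{\tP_\gamma(t,\bu)-\bu^\top \VV_\gamma(t)\bigr\},
\end{equation*}
which by the $\tP_\gamma$-analogue of \eqref{eq:rpdconj} is precisely $\nabla\tPs_\gamma(t,\VV_\gamma(t))$.

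To make the last step fully rigorous, I will verify that $\tP_\gamma(t,\cdot)$ itself satisfies Assumption \ref{as:R}, so that the characterization \eqref{eq:rpdconj} applies to it. Convexity and lower semicontinuity are inherited from $\Psi_\gamma(t,\cdot)$ because the map $\bu\mapsto \bw(t)+\sqrt{\gamma}\bu$ is affine and the remaining terms in \eqref{eq:locdgf} are linear in $\bu$; strong convexity with modulus $\beta$ is preserved because $\gamma^{-1}\cdot\beta(\sqrt{\gamma})^2 = \beta$. Thus the Fenchel conjugate $\tPs_\gamma(t,\cdot)$ is differentiable and its gradient coincides with the unique minimizer, completing the identification. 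The main obstacle is essentially bookkeeping: tracking that the constants discarded during the reparametrization and the two affine substitutions (for $\bw$ and for $\bv_\gamma(t)$) fit together to yield exactly $\tP_\gamma$ with the linear term $\bu^\top\VV_\gamma(t)$; no probabilistic or limiting argument is needed here, since the identity holds pointwise in $t$ and $\gamma$.
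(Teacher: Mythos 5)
Your proof is correct. The paper reaches the same conclusion by verifying the optimality condition in subdifferential form: it shows $0\in\partial\bigl(\tP_\gamma(t,\bu)-\langle\bu,\VV_\gamma(t)\rangle\bigr)\big|_{\bu=\WW_\gamma(t)}$ by first observing the same algebraic identity $\tP_\gamma(t,\bu)-\langle\bu,\VV_\gamma(t)\rangle=\gamma^{-1}\bigl(\Psi_\gamma(t,\bw(t)+\sqrt{\gamma}\bu)-\Psi_\gamma(t,\bw(t))-\langle\sqrt{\gamma}\bu,\bv_{\lfloor t/\gamma\rfloor}\rangle\bigr)$, then invoking the subdifferential sum and chain rules (Lemmas \ref{lem:sumsdif}, \ref{lem:chsdif}) to compute $\partial\tP_\gamma$ in terms of $\partial\Psi_\gamma$, and finally using the Fenchel--Young relation in Lemma \ref{lem:uc}(c) together with the SMD update $\bw_{\lfloor t/\gamma\rfloor}=\nabla\Psi^*_{\lfloor t/\gamma\rfloor,\gamma}(\bv_{\lfloor t/\gamma\rfloor})$ to show the required inclusion. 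You instead stay entirely at the level of the $\arg\min$: an invertible affine substitution $\bw=\bw(t)+\sqrt{\gamma}\bu$ plus discarding constants and positive rescaling preserve the minimizer, and the split $\bv_\gamma(t)=\bv(t)+\sqrt{\gamma}\VV_\gamma(t)$ reorganizes the linear term so that the transformed objective is exactly $\tP_\gamma(t,\bu)-\bu^\top\VV_\gamma(t)$. Both arguments hinge on the same two facts (the SMD characterization of $\bw_\gamma(t)$ and the strong convexity of $\tP_\gamma(t,\cdot)$ guaranteeing a unique minimizer, cf. Lemma \ref{lem:escreg}(i) in the paper); your route bypasses subdifferential calculus entirely, which makes the bookkeeping a bit more transparent, while the paper's subdifferential route slots naturally into the Fenchel-conjugate machinery developed in Lemma \ref{lem:uc} that is reused elsewhere.
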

See Section \ref{sec:pfdisdy} for a proof of Lemma \ref{lem:exprW}. We give a concrete example on the form of local Bregman divergence $\tP_\gamma(t,\bu)$.

% \begin{rem}[Computation of $\nabla\tP_\gamma^*(t,\bv)$]
% This remark shows that $\nabla\tP_\gamma^*(t,\bv)$ can be computed from the derivative $\nabla\Psi_\gamma^*(t,\bv)$, which is the proximal operator and is known in many cases, e.g. $ell_1$; see Section \ref{sec:cvx}. A closer inspection of the proof of Lemma \ref{lem:exprW} gives a more general result that for any $\bv\in\R^d$,
% 	\begin{align}
% 		\nabla\tP_\gamma^*(t,\bv)=\frac{\nabla\Psi_\gamma^*(t,\bv(t)+\sqrt{\gamma}\bv)-\bw(t)}{\sqrt{\gamma}}. \label{eq:epnt}
% 	\end{align}
% 	Plugging $\VV_\gamma(t) = \frac{\bv_{\itg}-\bv(t)}{\sqrt{\gamma}}$ for $\bv$ in \eqref{eq:epnt} recovers \eqref{eq:exprW}. 
% \end{rem}

\begin{example}[Constant tuning function]\label{ex:ldgf}
Suppose that $g(n,\gamma):=c$ is a constant, and that $F$ and $\Pc$ are twice continuously differentiable. Hence, $\Psi_{n,\gamma}(\bw)=\Qc(\bw):=c\Pc(\bw)+F(\bw)$ in \eqref{eq:smdreg}. Suppose $\inf_{t\geq 0}\lambda_{\min}(\nabla^2\Qc(\bw(t))>0$, where $\bw(t)$ is the mean dynamics. Replacing $\Psi_\gamma(t,\bw)=\Qc(\bw)$ and $\bv(t)=\nabla \Qc(\bw(t))$ in \eqref{eq:locdgf}, Taylor expansion for multivariate function gives
	\begin{align*}
		\tP_\gamma(t,\bu) &= \bu^\top \Big( \int_0^1 (1-s) \nabla^2\Qc\big(\bw(t)+s \sqrt{\gamma}\bu\big) ds\Big)\bu.%\\
		%&= \bu^\top \nabla^2\Qc(\bw(t)+\sqrt{\gamma}\bu^\dagger)\bu.
	\end{align*}
\end{example}

In the above example, if additionally $\sup_{t\leq T}\big\|\nabla^2\Qc\big(\bw(t)+s \sqrt{\gamma}\bu\big)\big\|_2$ is bounded uniformly in $\gamma$ for every $T$, we obtain the following limit as $\gamma\to 0$ (by the dominated convergence and the continuity of $\nabla^2 \Qc$):
	\begin{align*}
		\tP(t,\bu) = \frac{1}{2}\bu^\top \nabla^2\Qc(\bw(t)) \bu,
	\end{align*}
	whose conjugate is $\tP^*(t,\bu) = \frac{1}{2}\bu^\top (\nabla^2\Qc)^{-1}(\bw(t)) \bu$.
	
In general, we need to assume the existence of such a limit together with some regulatory properties. 
\begin{itemize}[itemindent=45pt,leftmargin=0pt]	
	\labitemc{($\tP$-lim)}{as:tPlim} 
	 There exists $\tP:[0,\infty)\times\R^d\to\R$ such that
	\begin{itemize}[itemindent=60pt,leftmargin=0pt]	
		% \labitemc{(R0)}{as:tPlim0} $\tP(\cdot,\bu)$ is continuous for each $\bu$; \skc{\bf Can we "really" get rid of this one? But it seems indispensable from the $\ell_1$ norm example}
		\labitemc{($\tP$-lim0)}{as:tPlim0} For any $T>0$, $\sup_{0\leq t\leq T}|\tP_\gamma(t,\bu) -\tP(t,\bu)|\to 0$ pointwise for every $\bu\in\R^d$.
		\labitemc{($\tP$-lim1)}{as:tPlim1} $\tP(t,\cdot)$ is l.s.c. for any $t\in[0,\infty)$ and $\tP(\cdot,\bu)$ is continuous on $[0,\infty)$ for any $\bu\in\R^d$.
		%, where $\tP_\gamma(t,\bu)$ is defined in \eqref{eq:locdgf};
		\labitemc{($\tP$-lim2)}{as:tPlim2} There exists a $\bu_0\in\R^d$ and a constant $c_T>0$ such that for any subgradient $\nabla\tP(t,\bu_0)\in\partial_\bu \tP(t,\bu_0)$ for $t\leq T$, $\sup_{0\leq t\leq T}\|\nabla\tP(t,\bu_0)\|_2 \leq c_T$. 
	\end{itemize}
\end{itemize}
Condition \ref{as:tPlim0} implicitly regularizes $F$, the penalty term $\Pc$ and tuning function $g(n,\gamma)$. Together with Condition \ref{as:Plim}, this consist of the only two conditions on $g(n,\gamma)$: Condition \ref{as:Plim} is for the asymptotic trajectory, while Condition \ref{as:tPlim0} is for the distributional dynamics.  

Now we are ready to present the second main result of this paper, which can be regarded as the central limit theorem of the learning process. 

\begin{theo}[Distributional dynamics]\label{th:dd}
 %Let $\Psi_{n,\gamma}:\R^d\to\R$ be l.s.c. for each $n$ and is $\beta$-e.s.c. with respect to $n,\gamma$. 
Suppose the solution $\bv(t)$ of \eqref{eq:weak0v} uniquely exists for all $t\geq 0$, and $\bv_0$ be a constant initial value of the sequence $\bv_n$ in (\ref{eq:md}). In addition, assume that \ref{as:R}, \ref{as:S}, \ref{as:M}, \ref{as:L}, \ref{as:Plim} and \ref{as:tPlim} hold. If $\tP_\gamma(\cdot,\bu)\in D[0,\infty)$ for each $\bu\in\R^d$ and $\gamma$, then we have $\VV_\gamma(t)\weakto\VV(t)$ in $D([0,\infty))^d$ with
	\begin{align}
		%\VV(t)&=\VV(0)-\int_0^t \nabla G(\bw(s)) \cdot\WW(s)ds+\MM(t),\mbox{\skc{\bf To be modified}}\label{eq:weakv}
		\VV(t)&=-\int_0^t \nabla G(\bw(s)) \cdot \nabla\tPs(s,\VV(s))ds+\MM(t),\label{eq:weakv}
	\end{align}
	where $\MM(t)\in\R^d$ is a centered continuous process with independent Gaussian increment and covariance kernel
	\begin{align}
		\E\big[\MM(t)\MM(t)^\top\big] = \int_0^t \Sigma\big(\bw(s)\big)ds.\label{eq:covfn}
	\end{align}
	Moreover, $\WW_\gamma(t)\weakto\WW(t)$ in $D([0,\infty))^d$ with
	\begin{align}
		\WW(t)&=\nabla\tPs(t,\VV(t)).\label{eq:weakw}
	\end{align}
	where $\nabla\tPs(t,\VV)$ is defined similarly as \eqref{eq:conjctucvx} by replacing $\Psi$ with $\tP$.
	%In particular, there exists a modification of $\VV(t)$ and $\WW(t)$ in $C([0,\infty))^d$.
\end{theo}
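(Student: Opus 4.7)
The plan is to reduce everything to a semimartingale problem for $\VV_\gamma(t)$, identify the limit by a martingale functional CLT plus epi-convergence of the conjugates, and then transfer the conclusion to $\WW_\gamma$ via Lemma~2.1 and the continuous mapping theorem. First I would write the $\bv$-update in integral form: iterating $\bv_{n+1}=\bv_n-\gamma\nabla f(\bw_n;Z_{n+1})$ and comparing with the ODE \eqref{eq:weak0v} satisfied by $\bv(t)$ yields the decomposition
\begin{align*}
\VV_\gamma(t)=\MM_\gamma(t)-\gamma^{-1/2}\!\int_0^t\!\Big(G\big(\bw_\gamma(s)\big)-G\big(\bw(s)\big)\Big)ds+R_\gamma(t),
\end{align*}
where $\MM_\gamma(t):=-\sqrt{\gamma}\sum_{i=0}^{\itg-1}\big(\nabla f(\bw_i;Z_{i+1})-G(\bw_i)\big)$ is a martingale with respect to $\mathcal{F}_i=\sigma(Z_1,\dots,Z_i)$ and $R_\gamma$ collects the Riemann-sum error between $\gamma\sum G(\bw_i)$ and $\int_0^t G(\bw_\gamma(s))ds$, which is $O(\sqrt{\gamma})$ uniformly on compacts by \ref{as:L} and Theorem~\ref{th:at}.

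Next I would analyze the drift. By continuous differentiability of $G$ (Assumption \ref{as:L}) and the first-order Taylor expansion along the segment from $\bw(s)$ to $\bw_\gamma(s)=\bw(s)+\sqrt{\gamma}\,\WW_\gamma(s)$,
\begin{align*}
\gamma^{-1/2}\big(G(\bw_\gamma(s))-G(\bw(s))\big)=\nabla G(\bw(s))\,\WW_\gamma(s)+o_P(1),
\end{align*}
uniformly on compact time intervals, where $\WW_\gamma(s)=\nabla\tPs_\gamma(s,\VV_\gamma(s))$ by Lemma~\ref{lem:exprW}. Inserting this into the decomposition yields the prelimit integral equation
\begin{align*}
\VV_\gamma(t)=\MM_\gamma(t)-\int_0^t\!\nabla G(\bw(s))\,\nabla\tPs_\gamma\!\big(s,\VV_\gamma(s)\big)\,ds+o_P(1).
\end{align*}
Two convergence facts now drive the proof. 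First, a martingale functional CLT (e.g.\ Theorem~7.1.4 in \cite{EK86}) applied to $\MM_\gamma$: conditional covariances equal $\gamma\sum_{i<\itg}\Sigma(\bw_i)\to\int_0^t\Sigma(\bw(s))ds$ by Theorem~\ref{th:at} and continuity of $\Sigma$, and the Lindeberg condition follows from the $L^2$ bound \eqref{eq:mom2bdd}; hence $\MM_\gamma\weakto\MM$ with the claimed covariance kernel~\eqref{eq:covfn}. Second, the map $\bu\mapsto\nabla\tPs_\gamma(t,\bu)$ converges to $\bu\mapsto\nabla\tPs(t,\bu)$ uniformly on compact $(t,\bu)$ sets: Condition~\ref{as:tPlim0} together with convexity and finiteness of $\tP_\gamma(t,\cdot)$ upgrades the pointwise convergence to locally uniform convergence (a standard fact for convex functions on $\R^d$), which implies epi-convergence; Attouch's theorem then delivers convergence of the Fenchel conjugates and their (sub)gradients at points of differentiability of $\tPs$, which is guaranteed by \ref{as:R} via Proposition~11.3 of~\cite{RW09}.

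With these two ingredients I would conclude as follows. Tightness of $\VV_\gamma$ in $D([0,\infty))^d$ follows from tightness of $\MM_\gamma$ (a consequence of the CLT above) combined with the drift bound supplied by Condition~\ref{as:tPlim2}, which controls $\|\nabla\tPs_\gamma(s,\bu_0)\|_2$ uniformly in $s$ and gives a uniform local-Lipschitz modulus for the integrand; an Aldous-type criterion then yields tightness. By Skorohod representation, I may assume $\MM_\gamma\to\MM$ and $\VV_\gamma\to\VV$ a.s.\ on an extended probability space; the locally uniform convergence of $\nabla\tPs_\gamma$ and the continuity of $\nabla G(\bw(s))$ let me pass to the limit inside the integral by dominated convergence, identifying $\VV$ as the unique solution of~\eqref{eq:weakv}; uniqueness is ensured because the drift is locally Lipschitz in $\bu$ (again via Lemma~\ref{lem:uc} adapted to $\tPs$). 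Finally \eqref{eq:weakw} follows by applying the continuous mapping theorem to $\VV_\gamma\mapsto\nabla\tPs_\gamma(\cdot,\VV_\gamma(\cdot))=\WW_\gamma$, using once more the locally uniform convergence of $\nabla\tPs_\gamma\to\nabla\tPs$.

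The main obstacle I expect is upgrading the rather weak Condition~\ref{as:tPlim} into the locally uniform convergence of the conjugate gradients $\nabla\tPs_\gamma\to\nabla\tPs$ that is required to pass to the limit in both the drift and in the representation of $\WW_\gamma$; in particular, handling the case where $\Pc$ is non-smooth (as in $\ell_1$) requires careful use of epi-convergence and subdifferential convergence for convex functions rather than classical analysis, and coupling this with the tightness argument for $\VV_\gamma$ where the drift depends nonlinearly on $\VV_\gamma$ through $\nabla\tPs_\gamma$.
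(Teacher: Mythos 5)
Your high-level architecture matches the paper's: the same decomposition of $\VV_\gamma$ into a martingale term $\MM_\gamma$ plus a drift-comparison term, the same martingale FCLT from Theorem~7.1.4 of \cite{EK86}, the same first-order Taylor expansion of $G$ around $\bw(s)$, and the same reliance on Lemma~\ref{lem:exprW} to feed $\WW_\gamma=\nabla\tPs_\gamma(\cdot,\VV_\gamma(\cdot))$ back into the drift. Where you diverge from the paper is in the machinery used to pass to the limit in the argmin map: the paper invokes a Skorohod-representation argmin continuous mapping theorem (Lemma~\ref{lem:kato1}, adapted from Kato~2009) applied to the localized Bregman divergences $\tP_\gamma(t,\bu)-\bu^\top\VV_\gamma(t)$, whereas you route through epi-convergence of $\tP_\gamma(t,\cdot)\to\tP(t,\cdot)$ (using Kato's Lemma~3 to upgrade pointwise to locally uniform convergence), then Attouch's theorem and Wijsman's theorem for conjugates to obtain locally uniform convergence of $\nabla\tPs_\gamma\to\nabla\tPs$, then an ordinary (generalized-map) CMT. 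That is a legitimate alternative route, and in finite dimensions the two are essentially equivalent in power; your version makes the role of convex-analytic stability more transparent, while the paper's argmin CMT handles the time-dependence and the random element structure in $D([0,T])^d$ in one shot. In either case one must take care to get uniformity in $t$ (condition~\ref{as:tPlim0} supplies this for the primal functions; the joint-continuity and coercivity machinery in Lemmas~\ref{lem:conticonj}, \ref{lem:coer}, \ref{lem:cptpres} transfers it to the conjugate gradients).

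There is, however, a genuine gap in your tightness and Taylor-expansion steps that the paper resolves by localizing with a stopping time, and which you gloss over. Your expansion
$\gamma^{-1/2}\big(G(\bw_\gamma(s))-G(\bw(s))\big)=\nabla G(\bw(s))\,\WW_\gamma(s)+o_P(1)$
requires $\sqrt{\gamma}\,\WW_\gamma(s)\to 0$ uniformly on compacts; by Lemma~\ref{lem:exprW} and the uniform Lipschitz bound on $\nabla\tPs_\gamma$ this amounts to a priori boundedness of $\VV_\gamma$, which is exactly what you are trying to prove. Similarly, your Aldous-criterion argument must control $\int_\tau^{\tau+\delta}\nabla G(\bw(s))\nabla\tPs_\gamma(s,\VV_\gamma(s))\,ds$, whose integrand grows linearly in $\|\VV_\gamma(s)\|$; the growth bound from~\ref{as:tPlim2} (more precisely from Lemmas~\ref{lem:uc}(a) and~\ref{lem:cptpres}) is a \emph{linear}, not uniform, bound, so you cannot conclude tightness without first closing the feedback loop. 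The paper handles both issues simultaneously by defining $\tilde\tau_\gamma^K:=\inf\{t:\|\VV_\gamma(t)\|_2>K\}$, working with the stopped process $\VV_\gamma(\cdot\wedge\tilde\tau_\gamma^K)$ (on which $\|\VV_\gamma\|\le K$ and hence $\|\WW_\gamma\|\le\bar K_T$ by~\eqref{eq:weak0}), proving convergence of the stopped processes, and then showing $\tilde\tau_\gamma^K\pto\infty$ in Step~5. A Gronwall argument could in principle replace the stopping-time localization for the tightness part, but you would still need an a priori local bound to justify the $o_P(1)$ Taylor remainder, so some form of localization is unavoidable. Related to this: the identification of the limit via Skorohod representation as you propose needs joint relative compactness of $(\MM_\gamma,\VV_\gamma)$ to be established first; the paper instead identifies the limit through the stopped martingale problem $(\delta_0,\Ac,\{\bv:\|\bv\|<K\})$, which dovetails cleanly with the localization and avoids having to prove joint tightness before identification.
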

See Section \ref{sec:pfweak} for a proof of Theorem \ref{th:dd}. Note that $\nabla\tP^*(t,\bv)$ is just the $\tilde\psi_{g}(t,\bv)$ in \eqref{eq:convsde}.

A key step in the proof of Theorem \ref{th:dd} is a continuous mapping theorem adapted from \cite{K09}, which connects the weak convergence of $\VV_\gamma(t)$ and that of $\WW_\gamma(t)$. Additionally, we need to show the relative compactness of $(\VV_\gamma(t),\WW_\gamma(t))$. Theorem \ref{th:dd} recovers the result of \eqref{eq:sgd} in \cite{BMP90} and \cite{BKS93}, but does not apply to \eqref{eq:rda} which does not satisfy \ref{as:tPlim0}. 

Studying $\WW(t)$ inevitably requires understanding $\VV(t)$ in the first place. Unfortunately, the closed form solution of \eqref{eq:weakv} may be unavailable in many cases. Numerical methods, such as Euler-Maruyama or Milstein method \citep{KP92}, can be applied to obtain an approximate solution of $\VV(t)$, which suffices in many applications; e.g. constructing confidence band for the entire learning trajectory. 

% If \eqref{eq:weakw} is stable, i.e. $\WW(t)\weakto \overline{\WW}$ as $t\to\infty$, for some random variable $\overline{\WW}\in\R^d$. Then, as $\gamma\to 0$ and $t\to\infty$,
% \begin{align}
% \frac{\bw_\gamma(t)-\bw(t)}{\sqrt{\gamma}}\weakto \overline{\WW}.\label{eq:staWW}
% \end{align}
% If in addition, $\bw(t)\to\bw^*$, then we can find confidence region for $\bw^*$ using \eqref{eq:staWW}. To see whether \eqref{eq:weakw} is stable, it requires to check the stability of the nonlinear SDE \eqref{eq:weakv}. $\VV(t)$ is generally an inhomogeneous Markov chain, so the usual criterion for homogeneous Markov chain (e.g. \cite{MT09,B98}) cannot be applied here, and one has to use techniques in, e.g. \cite{K10}. Recently, \cite{BBC17} propose to study the stability of inhomogeneous Markov chain by first approximating it with a homogeneous Markov chain using asymptotic pseudotrajectories.

\begin{rem}[Local convergence]\label{rem:thddex}
	The convergence in Theorem \ref{th:dd} is global in the sense that the convergence holds for the complete time domain $[0,\infty)$. Local convergence, namely $(\VV_\gamma(t),\WW_\gamma(t))\weakto (\VV(t),\WW(t))$ on $D(\Tc)^d$ for a compact $\Tc$, can be established by similar proof as that of Theorem \ref{th:dd}, under weaker assumptions that the \eqref{eq:Plim} in \ref{as:Plim} and \ref{as:tPlim0} hold for any compact subinterval $\Tc_1\subset\Tc$, instead of for $[0,T]$ for every $T>0$. The details are omitted.
\end{rem}

\bigskip
	For a sequence of estimates given by the stochastic gradient descent, it is known that averaging over them often leads to improved performance \citep{PJ92,R88,NJLS09}. Inspired by this observation, we consider
	\begin{align}
		\bar\bw_N = \frac{1}{N} \sum_{n=1}^N \bw_n. \label{eq:defavg}
	\end{align}
For any $T>0$, setting $N=\lfloor T/\gamma\rfloor$ and using the piecewise constant interpolation $\bw_\gamma(t)$ in \eqref{eq:defvw0}, \eqref{eq:defavg} can be rewritten as
\begin{align}
	\bar\bw_{\lfloor T/\gamma\rfloor} = \frac{1}{\gamma\lfloor T/\gamma\rfloor} \int_0^T \bw_\gamma(s) ds.\label{eq:barwT}
\end{align}
By applying Theorems \ref{th:at} and \ref{th:dd}, the asymptotic distribution of $\bar\bw_{\lfloor T/\gamma\rfloor}$ can be written as a functional of the limiting process $\WW(t)$ in \eqref{eq:weakw}. 
\begin{cor}[Asymptotic distribution of iterate average]\label{th:avg}
 Under the same conditions as Theorem \ref{th:dd}, we have for any fixed $T>0$,
 \begin{align}
 	\frac{\bar\bw_{\lfloor T/\gamma\rfloor}-T^{-1}\int_0^T\bw(s)ds}{\sqrt{\gamma}} \weakto \frac{1}{T}\int_0^T \WW(s) ds, \mbox{ as }\gamma\to 0,\label{eq:avg}
 \end{align}
where $\WW(s)$ is defined in \eqref{eq:weakw}.
\end{cor}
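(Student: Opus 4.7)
The plan is to invoke the integral representation \eqref{eq:barwT} and push the weak convergence $\WW_\gamma \weakto \WW$ of Theorem \ref{th:dd} through the integral functional $\Phi(x) := T^{-1}\int_0^T x(s)\,ds$. Using $\bw_\gamma(s) = \bw(s) + \sqrt{\gamma}\,\WW_\gamma(s)$ in \eqref{eq:barwT}, I would first decompose
\begin{align*}
\frac{\bar\bw_{\lfloor T/\gamma\rfloor} - T^{-1}\int_0^T \bw(s)\,ds}{\sqrt{\gamma}}
&= \frac{1}{\gamma\lfloor T/\gamma\rfloor}\int_0^T \WW_\gamma(s)\,ds \\
&\quad + \frac{1}{\sqrt{\gamma}}\Big(\frac{1}{\gamma\lfloor T/\gamma\rfloor}-\frac{1}{T}\Big)\int_0^T \bw(s)\,ds.
\end{align*}
The second term is purely deterministic. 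Since $|T-\gamma\lfloor T/\gamma\rfloor|\leq \gamma$ and $\gamma\lfloor T/\gamma\rfloor \geq T/2$ for small $\gamma$, its absolute value is bounded by $(2\sqrt{\gamma}/T)\sup_{s\in[0,T]}\|\bw(s)\|_2$, and this supremum is finite by Theorem \ref{th:at}. Thus the term is $O(\sqrt{\gamma})$ and vanishes.

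For the first term, the prefactor $T/(\gamma\lfloor T/\gamma\rfloor)$ tends to one, so by Slutsky it suffices to show $\Phi(\WW_\gamma)\weakto\Phi(\WW)$. Now $\Phi$ is a continuous map at every continuous path in $D([0,T])^d$, because Skorohod convergence to a continuous limit is equivalent to locally uniform convergence. Hence, via the continuous mapping theorem, $\Phi(\WW_\gamma)\weakto\Phi(\WW)$ follows once $\WW$ has continuous sample paths almost surely. To verify this, I would argue from \eqref{eq:weakv}--\eqref{eq:weakw} that the driving martingale $\MM$ is a continuous centered Gaussian process with deterministic quadratic variation \eqref{eq:covfn}; the drift in \eqref{eq:weakv} involves $\nabla\tPs(s,\cdot)$, which is $1/\beta$-Lipschitz uniformly in $s$ because the local Bregman divergence $\tP_\gamma$ inherits $\beta$-strong convexity in $\bu$ from $F$ (after the scaling in \eqref{eq:locdgf}). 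Standard SDE theory then gives $\VV \in C([0,\infty))^d$, and joint continuity of $\nabla\tPs(t,\bv)$ propagates continuity to $\WW(t)=\nabla\tPs(t,\VV(t))$.

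The main obstacle is establishing the joint continuity of $(t,\bv)\mapsto \nabla\tPs(t,\bv)$. Spatial continuity is automatic from $\beta$-strong convexity of $\tP(t,\cdot)$. Continuity in $t$ must be extracted from Conditions \ref{as:Plim}, \ref{as:tPlim1} and \ref{as:tPlim2}: since $\tP(t_n,\bu)\to \tP(t,\bu)$ pointwise for $t_n\to t$, and each $\tP(t,\cdot)$ is $\beta$-strongly convex with a uniformly bounded subgradient at some fixed $\bu_0$, I would apply a Berge-type stability argument for argmins of uniformly strongly convex families (in the spirit of Lemma \ref{lem:uc} already developed in the paper) to conclude $\nabla\tPs(t_n,\bv_n)\to \nabla\tPs(t,\bv)$ whenever $(t_n,\bv_n)\to(t,\bv)$. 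Once this is in hand, Slutsky combines the two pieces of the decomposition into the asserted weak limit $T^{-1}\int_0^T \WW(s)\,ds$.
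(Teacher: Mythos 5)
Your decomposition is correct and in fact slightly cleaner than the one the paper uses. The paper splits the error as
\begin{equation*}
\Bigl(\tfrac{1}{\gamma\lfloor T/\gamma\rfloor}-\tfrac{1}{T}\Bigr)\int_0^T \bw_{\lfloor s/\gamma\rfloor}\,ds \;+\; \tfrac{1}{T}\int_0^T\bigl(\bw_{\lfloor s/\gamma\rfloor}-\bw(s)\bigr)\,ds,
\end{equation*}
so the prefactor-mismatch term multiplies the \emph{random} integral $\int_0^T\bw_{\lfloor s/\gamma\rfloor}\,ds$, and one must separately establish that this integral is $O_p(1)$ (the paper does this via Theorem~\ref{th:at}(b) and the vanishing of the stopping time). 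You instead place the mismatch against the \emph{deterministic} integral $\int_0^T\bw(s)\,ds$, which is immediately bounded since $\bw(\cdot)$ is continuous on $[0,T]$; your error term is thus deterministic and trivially $O(\sqrt{\gamma})$, no tightness or stopping-time argument needed. Both arguments then handle the main term identically in substance: the paper appeals to Proposition~7.27 of Kallenberg (weak convergence of Riemann integrals against Lebesgue measure), while you invoke the continuous mapping theorem directly for the functional $x\mapsto T^{-1}\int_0^T x(s)\,ds$ at the a.s.\ continuous limit; these are two ways of saying the same thing.

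One minor caveat: you invest considerable effort re-establishing the a.s.\ sample-path continuity of $\WW$ and the joint continuity of $(t,\bv)\mapsto\nabla\tP^*(t,\bv)$, framing the latter as ``the main obstacle.'' In fact this is already proved in the paper: the joint continuity of $\nabla\tP^*$ on $[0,T]\times\R^d$ is exactly Lemma~\ref{lem:conticonj} applied to $\tP$ (with hypotheses checked via \ref{as:tPlim1}, \ref{as:tPlim2}, and Lemma~\ref{lem:escreg}(iii)), and the a.s.\ continuity of $\VV$ and hence of $\WW=\nabla\tP^*(\cdot,\VV(\cdot))$ is established inside the proof of Theorem~\ref{th:dd}. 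Also, the finiteness of $\sup_{s\leq T}\|\bw(s)\|_2$ is better attributed to the continuity of $\bw(\cdot)$ established under the hypotheses of Theorem~\ref{th:dd} (see Eq.~\eqref{eq:step1tmp2}) than to Theorem~\ref{th:at} itself. Citing these would shorten your write-up considerably without changing its correctness.
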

See Section \ref{sec:pfavg} for a proof of Corollary~\ref{th:avg}. Applying Corollary \ref{th:avg} to \eqref{eq:sgd} yields a distributional result complementary to that in Chapter 11.1.2 of \cite{KY03}, which concerns the convergence of $\bar\bw_{\lfloor (T+T_\gamma)/\gamma\rfloor}$, where $T_\gamma(\uparrow\infty$ as $\gamma$) is the transient time that $\bw_n$ takes to get closer to a stationary point $\bw^*$.

\begin{rem}[Excess risk of $\bar\bw_N$ under convex loss]\label{rem:excess}
Consider the same setting as Example \ref{ex:ldgf} that $\Psi_{n,\gamma}(\bw)=\Qc(\bw):=F(\bw)+c\Pc(\bw)$ is time-invariant. Suppose that the expected loss function $f_0(\bw):=\E_Z[f(\bw;Z)]$ is globally Lipschitz and convex. Corollary \ref{th:avg} implies an upper bound for the excess risk
	\begin{align}
		f_0(\bar\bw_{\lfloor T/\gamma\rfloor}) - \inf_{\bw} f_0(\bw) = O_p\bigg(\frac{\mbox{Breg}_\Qc(\bw_0,\bw^*)}{T}+  \sqrt{\frac{\gamma}{T}}\bigg). \label{eq:excess}
	\end{align}
where $\mbox{Breg}_\Qc(\bx,\by)=\Qc(\bx)-\Qc(\by)-\langle\nabla\Qc(\by),\bx-\by\rangle$ is the Bregman divergence induced by $\Qc$. See Section \ref{sec:pfavg} for a proof of \eqref{eq:excess}. If $F(\bw)=\frac{1}{2}\|\bw\|_2^2$, $\Pc(\bw)=0$ and $\bw_0=0$, then $\Qc(\bw)=\frac{1}{2}\|\bw\|_2^2$ and $\mbox{Breg}_\Qc(\bw_0,\bw^*)=\frac{1}{2}\|\bw^*\|_2^2\leq \frac{d}{2}\|\bw^*\|_\infty^2$. In this case, the rate \eqref{eq:excess} corresponds to the bound in page 1580 of \cite{NJLS09} (with $T = N\gamma$ and the optimal step size in (2.20) therein). The difference is that \eqref{eq:excess} holds in probability, which is slightly weaker than the bound in mean in   \cite{NJLS09}.
\end{rem}

\subsection{Distributional analysis under $\ell_1$ penalization}\label{sec:rdal1}
In this section, we apply the developed asymptotic results to an important case that 
$$F(\bw)=\frac{1}{2}\|\bw\|_2^2\;\;\mbox{and}\;\;\Pc(\bw)=\|\bw\|_1.$$ Define $\Lc_{n,\gamma}(\bw):=\frac{1}{2}\|\bw\|_2^2+g(n,\gamma)\|\bw\|_1.$ We obtain \eqref{eq:grdal1} by setting $\Psi_{n,\gamma}(\bw)=\Lc_{n,\gamma}(\bw)$ in the SMD representation \eqref{eq:md}. 

Using the above representation, we have Corollary \ref{cor:l1at} implied by Theorem \ref{th:at}. For $\bv=(v_1,...,v_d)$, define a soft-thresholding operator $\nabla\Lc^*(t,\bv)=(\nabla\Lc_1^*(t,v_1),...,\nabla\Lc_d^*(t,v_d))$ by
	\begin{align}
		\nabla\Lc_j^*(t,v_j) %= \arg\sup_{\bw\in\R^d}\big\{\bv^\top\bw-\Lc_\lambda(\bw)\big\} 
		%=  \sgn(\bv)\cdot\big(|\bv|-g(n,\gamma)\big)_+, 
		:=  \sgn(v_j)\cdot\big(|v_j|-g^\dagger(t)\big)_+, \mbox{ }j=1,...,d,\label{eq:sop}
	\end{align}
which maps $\bv$ to 0 whenever $\bv\in [-g^\dagger(t),g^\dagger(t)]$. Recall that $g^\dagger(t)$ is the limit of $g(\lfloor t/\gamma\rfloor,\gamma)$ defined in \ref{as:Plim}.

\begin{cor}[Mean dynamics]\label{cor:l1at}
	Suppose that \ref{as:Plim}, \ref{as:S} and \ref{as:M} hold. If every solution to the ODE:% (in an integral form) 
	\begin{align}
		\bv(t)&=\bv_0-\int_0^t G\big(\nabla\Lc^*(t,\bv(s))\big)ds,\label{eq:weak0vl1}
	\end{align}
	is bounded in finite time, then $$(\bv_\gamma(t),\bw_\gamma(t))\weakto (\bv(t),\bw(t))\;\;\mbox{on}\;\;D([0,\infty))^{2d},$$ where $(\bv_\gamma(t),\bw_\gamma(t))$ is defined in \eqref{eq:defvw0}, $\bv(t)$ is a solution of \eqref{eq:weak0vl1} and
		\begin{align}
			\bw(t)&=\nabla\Lc^*(t,\bv(t))\label{eq:weak0wl1}.
		\end{align}
\end{cor}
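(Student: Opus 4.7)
The plan is to obtain Corollary \ref{cor:l1at} as a direct specialization of Theorem \ref{th:at}(c) to the choices $F(\bw)=\tfrac{1}{2}\|\bw\|_2^2$ and $\Pc(\bw)=\|\bw\|_1$. The conditions \ref{as:S}, \ref{as:M}, and \ref{as:Plim} are carried over verbatim, and the finite-time boundedness hypothesis on solutions to \eqref{eq:weak0vl1} is precisely the prerequisite for Theorem \ref{th:at}(c). Hence the remaining work is just to check condition \ref{as:R} and to identify $\nabla\Psi^*(t,\bv)$ explicitly with the soft-thresholding operator $\nabla\Lc^*(t,\bv)$ defined in \eqref{eq:sop}.

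Verifying \ref{as:R} is routine: $\Pc(\bw)=\|\bw\|_1$ is a norm, hence convex and continuous, while $F(\bw)=\tfrac{1}{2}\|\bw\|_2^2$ is continuous and $1$-strongly convex. Both are everywhere finite and lower semicontinuous, so \ref{as:R} holds with $\beta=1$.

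Next I would identify the Fenchel conjugate derivative. By \eqref{def:Psi}, the limiting regularizer is $\Psi(t,\bw)=\tfrac{1}{2}\|\bw\|_2^2+g^\dagger(t)\|\bw\|_1$, which is separable across coordinates. From \eqref{eq:conjctucvx},
\begin{align*}
\nabla\Psi^*(t,\bv) = \arg\,\min_{\bw\in\R^d}\Big\{\tfrac{1}{2}\|\bw\|_2^2 + g^\dagger(t)\|\bw\|_1 - \bw^\top\bv\Big\},
\end{align*}
and the coordinatewise first-order condition $v_j \in w_j + g^\dagger(t)\,\partial|w_j|$ gives the closed form $w_j=\sgn(v_j)(|v_j|-g^\dagger(t))_+$, which is exactly $\nabla\Lc_j^*(t,v_j)$ of \eqref{eq:sop}. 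Substituting this identity into \eqref{eq:weak0v} and \eqref{eq:weak0w} transforms them precisely into \eqref{eq:weak0vl1} and \eqref{eq:weak0wl1}. Theorem \ref{th:at}(c) now yields the relative compactness of the pair $(\bv_\gamma(t),\bw_\gamma(t))$ in $D([0,\infty))^{2d}$, with every weak subsequential limit satisfying the coupled system \eqref{eq:weak0vl1}--\eqref{eq:weak0wl1}; equivalently, $(\bv_\gamma,\bw_\gamma)\weakto(\bv,\bw)$ on $D([0,\infty))^{2d}$.

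The only subtlety worth flagging as a potential obstacle is that the driving vector field $G\circ\nabla\Lc^*(t,\cdot)$ of \eqref{eq:weak0vl1} inherits the kinks of soft-thresholding on the faces $\{v_j=\pm g^\dagger(t)\}$, so even when $G$ is locally Lipschitz the composition generally is not, and uniqueness of the limiting ODE may fail. This is precisely why I invoke part (c) of Theorem \ref{th:at} rather than part (b): the conclusion of Corollary \ref{cor:l1at} is correspondingly stated for \emph{some} solution $\bv(t)$ of \eqref{eq:weak0vl1}, with weak rather than in-probability convergence. Beyond this point, all steps are direct plug-ins and no further technical work is needed.
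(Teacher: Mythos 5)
Your proposal is correct and matches the paper's own proof: the paper likewise verifies \ref{as:R} (strong convexity of $\tfrac{1}{2}\|\cdot\|_2^2$, convexity and continuity of $\|\cdot\|_1$) and then invokes Theorem \ref{th:at}(c). Your additional explicit identification of $\nabla\Psi^*$ with the soft-thresholding operator and your remark about why part (c) rather than (b) is needed are both accurate and consistent with how the paper treats these points elsewhere (Example \ref{ex:FP} and the remark after Corollary \ref{cor:l1at}).
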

The proof of Corollary \ref{cor:l1at} is in Section \ref{sec:pfl1gen}. We remark that $\bw(t)$ is unique even when $\bv(t)$ may not be, and also that the sparsity of $\bw(t)$ is induced by the soft-thresholding operator $\nabla\Lc^*$. 

We now apply Corollary \ref{cor:l1at} to \eqref{eq:rda}, which reveals its bias in linear regression.
\begin{theo}[\eqref{eq:rda} is biased for active coefficients]\label{th:biasrda}
\eqref{eq:rda} with an arbitrary initializer $\bw_0$ and $c_0>0$ is biased for linear regression with $\ell_2$ loss (see \eqref{eq:modrdal1} for the model). In particular, if $X$ in \eqref{eq:modrdal1} is centered with $H=\E[XX^\top]=\mbox{diag}(\sigma_1^2,\sigma_2^2,...,\sigma_d^2)$, and if the mean trajectory $\bw(t)$ of \eqref{eq:rda} converges, i.e. $\lim_{t\to\infty}\bw(t)=\bw^\infty$, where $\bw^\infty\neq 0$, then $|w_j^\infty - w_j^*|=c_0/\sigma_j^2>0$ for all $j\in\{k:w_k^\infty\neq 0\}$.%  for linear regression with $\ell_2$ loss (see \eqref{eq:modrdal1} for the model); in particular, if the covariance matrix $H$ of $X$ is diagonal in \eqref{eq:modrdal1}, and $\bw(t)=(w_1(t),w_2(t),...,w_d(t))$ is the mean trajectory in \eqref{eq:weak0wl1}, %and $0<c_0<\min_{j\in\{k:w_k^*\neq 0\}}|w_j^*|$ in \eqref{eq:rda},
% then
% \begin{itemize}
% 	\labitemc{(a)}{rda1} (Inactive coefficients) For $j\in\{k:w_k^*= 0\}$, $\lim_{t\to\infty}|w_j(t)|=0$.
% 	\labitemc{(b)}{rda2} (Active coefficients) For $j\in\{k:w_k^*\neq 0\}$, $\lim_{t\to\infty}|w_j(t)-w_j^*|=\min\{c_0,|w_j^*|\}>0$.
% \end{itemize}
\end{theo}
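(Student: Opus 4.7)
The plan is to specialize Corollary~\ref{cor:l1at} to linear regression with \eqref{eq:rda} and read off the asymptotic relationship between $\bw^\infty$ and $\bw^*$ from the resulting mean-dynamics equations. Since \eqref{eq:rda} corresponds to $g(n,\gamma)=c_0 n\gamma$, the limit in Condition~\ref{as:Plim} is $g^\dagger(t)=c_0 t$. For the squared-loss model $Y=X^\top\bw^*+\varepsilon$ with centered $X$ and $H=\E[XX^\top]=\mathrm{diag}(\sigma_1^2,\ldots,\sigma_d^2)$, a direct calculation yields $G(\bw)=H(\bw-\bw^*)$. Corollary~\ref{cor:l1at} then produces the coordinate-wise decoupled system, for $j=1,\ldots,d$,
\begin{equation*}
\frac{dv_j(t)}{dt}=-\sigma_j^2\bigl(w_j(t)-w_j^*\bigr),\qquad w_j(t)=\sgn(v_j(t))\cdot\bigl(|v_j(t)|-c_0 t\bigr)_+.
\end{equation*}

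Now fix any active coordinate $j$, that is, one with $w_j^\infty\neq 0$. Because $\bw(t)\to\bw^\infty$ and $w_j(\cdot)$ is continuous, there exists $T_0$ with $\sgn(w_j(t))=\sgn(w_j^\infty)$ for every $t\geq T_0$. On this tail the soft-thresholding identity unwinds to $v_j(t)=w_j(t)+c_0 t\cdot\sgn(w_j^\infty)$, so dividing by $t$ and using that $w_j(t)$ stays bounded gives $v_j(t)/t\to c_0\,\sgn(w_j^\infty)$ as $t\to\infty$.

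Independently, integrating the ODE produces $v_j(t)=v_j(0)-\sigma_j^2\int_0^t\!\bigl(w_j(s)-w_j^*\bigr)\,ds$; since the integrand converges to $w_j^\infty-w_j^*$, a standard Cesaro argument yields $v_j(t)/t\to -\sigma_j^2(w_j^\infty-w_j^*)$. Equating the two limits gives $-\sigma_j^2(w_j^\infty-w_j^*)=c_0\,\sgn(w_j^\infty)$, from which $|w_j^\infty-w_j^*|=c_0/\sigma_j^2>0$ follows, establishing the claimed bias for every active coordinate. The conclusion is manifestly independent of the initializer $\bw_0$ because $v_j(0)/t\to 0$ does not contribute to the limit.

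The only real obstacle is bookkeeping: one must verify that Conditions~\ref{as:R}, \ref{as:S}, \ref{as:M}, and \ref{as:Plim} hold so that Corollary~\ref{cor:l1at} applies in the first place. Conditions~\ref{as:R} and \ref{as:Plim} are immediate from the form of $F$, $\Pc$, and $g$; \ref{as:S} is the standard i.i.d.\ assumption on $(X,Y)$; and \ref{as:M} reduces to a finite-second-moment condition on $(X,Y)$, standard for linear regression. No additional subtlety arises: the argument never requires $w_j(t)$ to be differentiable, only continuous and convergent, and the entire deduction decouples coordinate-wise thanks to the diagonal structure of $H$.
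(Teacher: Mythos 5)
Your proof is correct and arrives at the same formula $|w_j^\infty-w_j^*|=c_0/\sigma_j^2$, but the route differs from the paper's in a way worth noting. Both proofs share the key observation that on the sign-stable tail $\{t\geq T_0\}$ the soft-thresholding inverts to $v_j(t)=w_j(t)+c_0 t\,\sgn(w_j^\infty)$. From there, the paper substitutes this identity into the ODE for $v_j$ to obtain a linear ODE for $w_j$ directly, $\dot w_j=-\sigma_j^2\bigl(w_j-w_j^*+\sgn(w_j^\infty)c_0/\sigma_j^2\bigr)$, solves it in closed form, and reads the limit off the fixed point. You instead compute the leading-order growth rate of $v_j(t)$ in two independent ways --- first as $v_j(t)/t\to c_0\,\sgn(w_j^\infty)$ from the thresholding identity and boundedness of $w_j$, and second as $v_j(t)/t\to-\sigma_j^2(w_j^\infty-w_j^*)$ from the integral form of the ODE together with a Cesaro-mean argument --- and equate the two. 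Your version is a bit lighter: it never needs to solve the linear ODE or invoke its uniqueness, only the integral representation and the assumed convergence $w_j(t)\to w_j^\infty$, and it makes the $\bw_0$-independence transparent since $v_j(0)/t\to 0$. The paper's version, in exchange, gives the explicit exponential rate of convergence as a byproduct. One small piece of bookkeeping you wave at but should state: Corollary~\ref{cor:l1at} also requires that every solution of the ODE be bounded in finite time; this follows because $\nabla\Lc^*(t,\cdot)$ is $1$-Lipschitz uniformly in $t$ (Lemma~\ref{lem:uc}(a)), so the drift $G(\nabla\Lc^*(t,\bv))$ is globally Lipschitz and the solution exists globally --- the same verification the paper performs in its Step~1.
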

See Section \ref{sec:pfl1gen} for a proof of Theorem \ref{th:biasrda}. This theorem shows that the performance of \eqref{eq:rda} in the long run is biased like LASSO \citep{tibshirani1996regression,fan2001variable} in the batch learning, and is remotely related to the results of \cite{LW12} who treat \eqref{eq:rda} with shrinking step size. We remark that verifying the convergence of the mean trajectory $\bw(t)$ of \eqref{eq:rda} is nontrivial; see Remark \ref{rem:wsta} for more details.

\bigskip
We next characterize the distributional dynamics using Theorem \ref{th:dd}. According to (\ref{eq:locdgf}), the local Bregman divergence $\tilde\Lc_\gamma(t,\bu)$ induced by $\Lc_\gamma(t,\bu):=\Lc_{\itg,\gamma}(\bu)$:
\begin{align}
\tilde\Lc_\gamma(t,\bu):=\gamma^{-1}\big(\Lc_\gamma(t,(\bw(t)+\sqrt{\gamma}\bu)-\Lc_\gamma(t,(\bw(t)) -\langle \sqrt{\gamma}\bu,\bv(t)\rangle\big),\label{eq:locbregl1}
\end{align}
where $\bw(t)$ is the mean trajectory from \eqref{eq:weak0wl1}. Whether $\tilde\Lc_\gamma(t,\bu)$ satisfies the Condition \ref{as:tPlim} depends on the sign stability of the mean dynamics $\bw(t)$ defined below. Recall that $\bw(t) = \nabla \Psi^*(t,\bv(t))$ and $\bv(t)$ follows a time-inhomogeneous ODE in \eqref{eq:weak0v}. 
\begin{defin}[Sign stability]\label{def:sist}
A $d$-dimensional trajectory $\bw(t)=(w_1(t),w_2(t),...,w_d(t))^\top$ is \emph{sign stable on an interval $\Tc\subset [0,\infty)$} if for each $j=1,...,d$, $\sgn(w_j(t_1))=\sgn(w_j(t_2))$ for all $t_1,t_2\in\Tc$. If $\Tc= [0,\infty)$, then $\bw(t)$ is \emph{globally sign stable}. By convention, $\sgn(0)=0$. %all closed subintervals of $\Tc$ and all $j=1,...,d$.
\end{defin} 

	\begin{figure}[!h] 
 \includegraphics[scale=0.12]{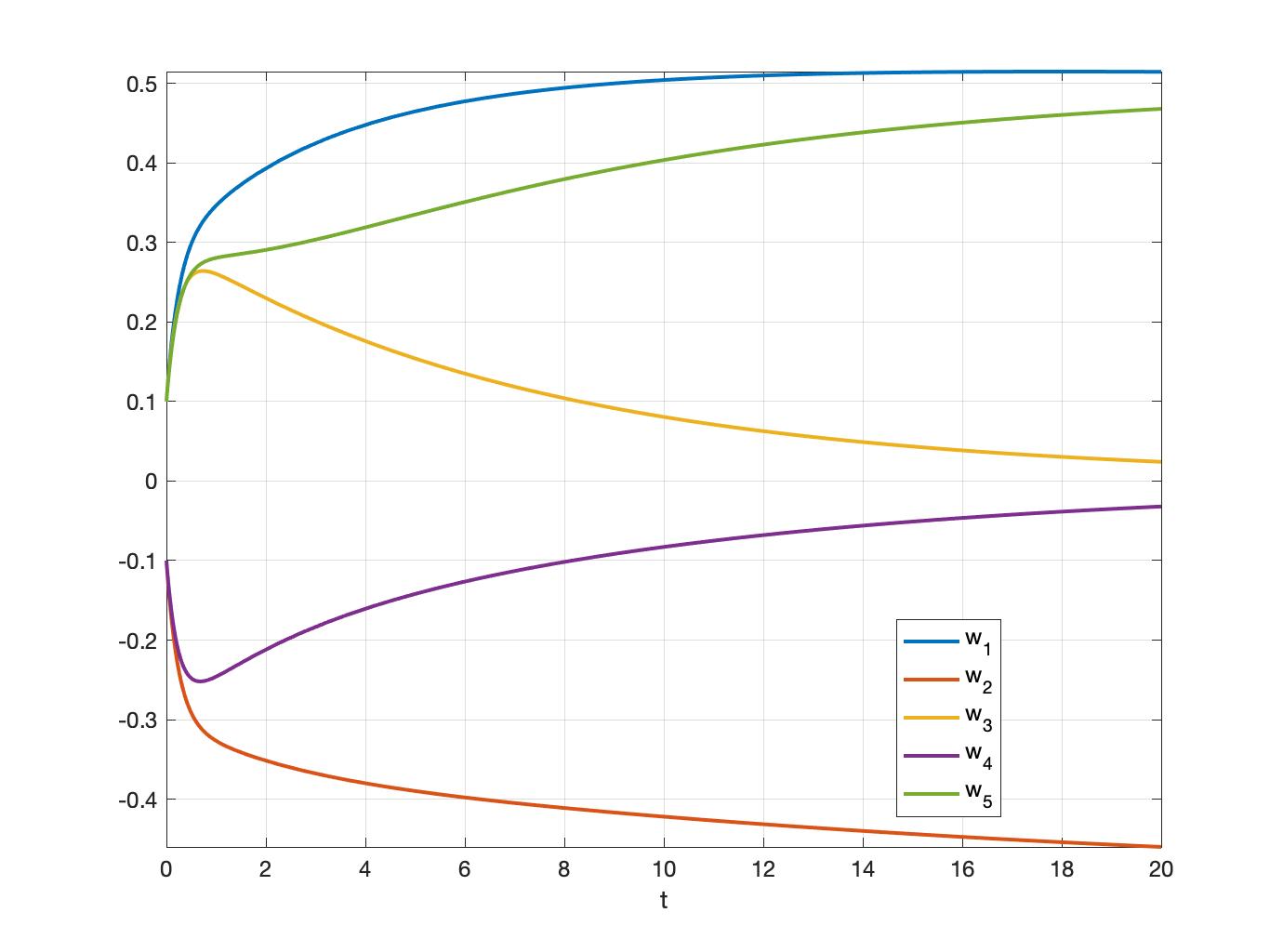}
 \includegraphics[scale=0.12]{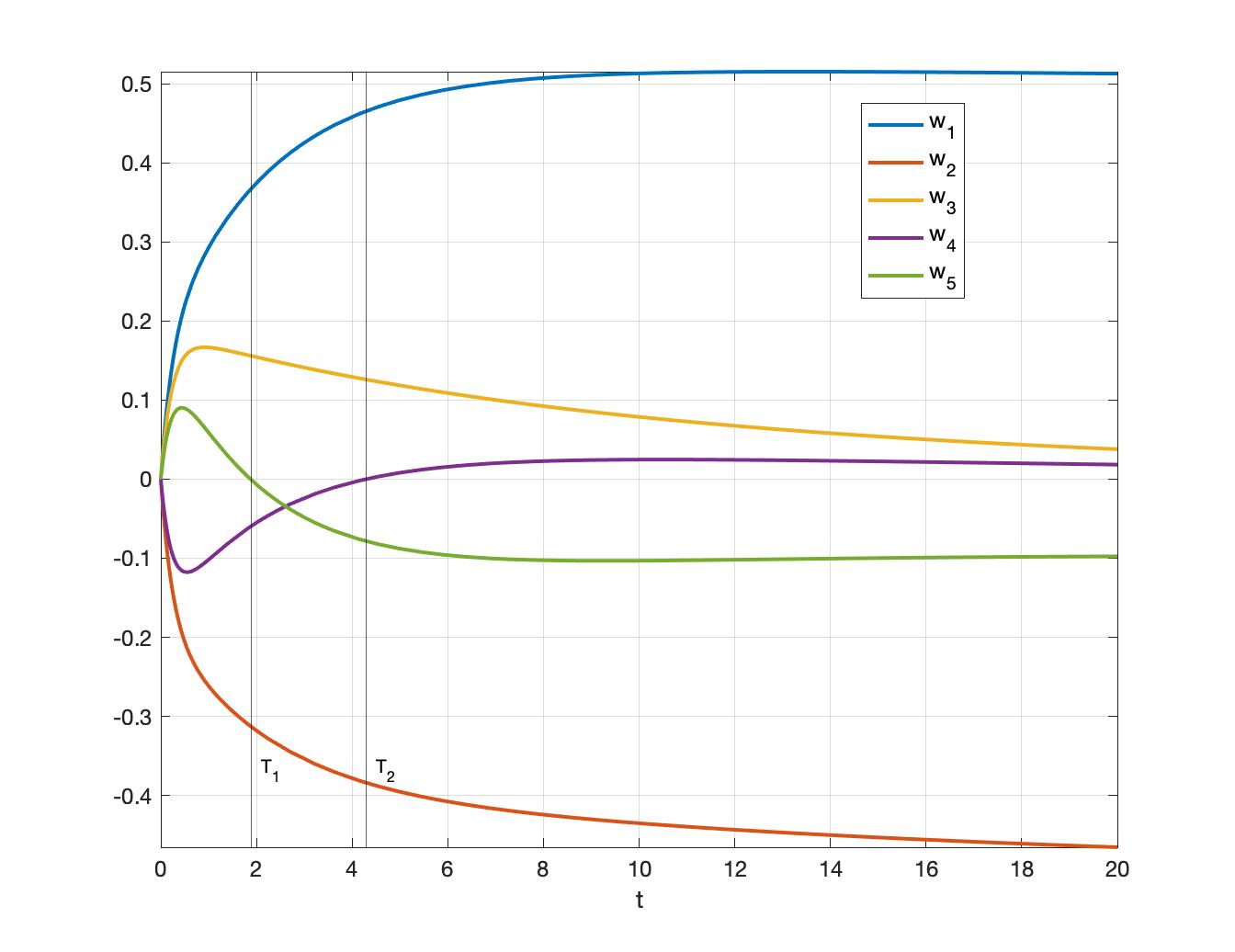}
 \caption{Left: globally sign stable path. Right: sign stable on $(0,T_1)$, $(T_1,T_2)$ and $(T_2,\infty)$.}\label{fig:sta} 
\vspace{-0.3cm}
\end{figure}

\bigskip
Whether sign stability holds depends on the specific form of $G$ and the regularizer $\Psi_{n,\gamma}(\bw)$. However, we show in the next lemma that the sign stability condition is equivalent to the condition that $\tilde\Lc_\gamma(t,\bu)$ has a limit as $\gamma\to 0$, i.e.,  \ref{as:tPlim0}.
\begin{lemma}[Equivalence between sign stability and \ref{as:tPlim0}]\label{lem:lbrel1time}
	Let $\Tc\subset[0,\infty)$ be a closed interval. Suppose there exists a continuous function $g^\ddagger(t)\geq 0$ with $g^\ddagger(t)>0$ for some $t\in\Tc$ satisfying% the non-negative process $g(n,\gamma)$ in \eqref{eq:grdal1} satisfies that
	\begin{align}
			\lim_{\gamma\to 0}\sup_{t\in\Tc}\big|g(\itg,\gamma)/\sqrt{\gamma}-g^\ddagger(t)\big|=0.\label{eq:lamcond}
	\end{align}
	Then, $\tilde\Lc_\gamma(t,\bu)$ satisfies 
	\begin{align}
	\lim_{\gamma\to 0}\sup_{t\in\Tc}|\tilde\Lc_\gamma(t,\bu) -\tilde\Lc(t,\bu)|= 0 \mbox{ pointwise for every $\bu\in\R^d$} \label{eq:tPlim1l1}
	\end{align}
	where for $t\in\Tc$,
    \begin{align}
			\tilde\Lc(t,\bu)= \frac{1}{2}\|\bu\|_2^2 + g^\ddagger(t) \sum_{j=1}^d\Big(u_j\sgn(w_j(t))\IF\{w_j(t)\neq 0\}+|u_j|\IF\{w_j(t)=0\}\Big), \label{eq:rdal1tP}
	\end{align}
	if and only if $\bw(t)$ is sign stable on $\Tc$.
\end{lemma}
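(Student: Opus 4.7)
The plan is to reduce \eqref{eq:locbregl1} to an explicit scalar formula coordinate-by-coordinate and then track how sign-stability of $\bw(t)$ controls the limit. The condition \eqref{eq:lamcond} forces $g(\itg,\gamma)\to 0$ uniformly on $\Tc$, so the limit $g^\dagger$ in Condition~\ref{as:Plim} vanishes identically on $\Tc$. Consequently the soft-thresholding operator $\nabla\Lc^*(t,\cdot)$ in \eqref{eq:sop} reduces to the identity, and the mean trajectory of \eqref{eq:weak0wl1} satisfies $\bw(t)=\bv(t)$ throughout $\Tc$. Substituting this into \eqref{eq:locbregl1} and expanding $\tfrac{1}{2}\|\bw(t)+\sqrt{\gamma}\bu\|_2^2$, the cross-term $\sqrt{\gamma}\langle\bw(t),\bu\rangle$ exactly cancels $\langle\sqrt{\gamma}\bu,\bv(t)\rangle$, yielding
\begin{align*}
\tilde\Lc_\gamma(t,\bu)=\tfrac{1}{2}\|\bu\|_2^2+\frac{g(\itg,\gamma)}{\sqrt{\gamma}}\sum_{j=1}^d\phi_\gamma(t,u_j),\qquad \phi_\gamma(t,u_j):=\frac{|w_j(t)+\sqrt{\gamma}u_j|-|w_j(t)|}{\sqrt{\gamma}}.
\end{align*}
Since $g(\itg,\gamma)/\sqrt{\gamma}\to g^\ddagger(t)$ uniformly on $\Tc$ by hypothesis, the whole problem reduces to the uniform-in-$t$ convergence of $\phi_\gamma(t,u_j)$ to $u_j\sgn(w_j(t))\IF\{w_j(t)\neq 0\}+|u_j|\IF\{w_j(t)=0\}$, for each coordinate $j$ and each $u_j\in\R$.

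For the direction that sign stability implies \eqref{eq:tPlim1l1}, I split into two elementary cases per coordinate. If $w_j\equiv 0$ on $\Tc$, then $\phi_\gamma(t,u_j)=|u_j|$ for every $\gamma$ and $t$, exactly matching the target. If $\sgn(w_j(t))$ equals a constant $\eps_j\in\{\pm 1\}$ on $\Tc$, then continuity of $w_j$ (as $\bw(t)=\bv(t)$ satisfies an ODE with continuous right-hand side) combined with compactness of $\Tc$ gives $c_j:=\inf_{t\in\Tc}|w_j(t)|>0$. Whenever $\sqrt{\gamma}|u_j|<c_j$, the sign of $w_j(t)+\sqrt{\gamma}u_j$ agrees with $\eps_j$ uniformly in $t\in\Tc$, so $\phi_\gamma(t,u_j)=u_j\eps_j=u_j\sgn(w_j(t))$ exactly. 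In both subcases this equality holds uniformly in $t$ for all sufficiently small $\gamma$; combined with the uniform convergence of $g(\itg,\gamma)/\sqrt{\gamma}$ this delivers \eqref{eq:tPlim1l1}.

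For the converse, assume some $w_j$ is not sign stable on $\Tc$. Then $Z_j=\{t\in\Tc:w_j(t)=0\}$ and $N_j=\Tc\setminus Z_j$ are both nonempty (if $\sgn w_j$ takes values $\pm 1$, the intermediate value theorem produces a zero; otherwise the two sets are directly witnessed). Continuity of $w_j$ forces $\partial Z_j\subset Z_j$, so there is a boundary point $t^*\in Z_j$ with nonzero values of $w_j$ accumulating. Pick $t_n\in N_j$ with $t_n\to t^*$ and set $\gamma_n:=4|w_j(t_n)|^2\to 0$, so that $|w_j(t_n)|=\sqrt{\gamma_n}/2$. Take $\bu$ with $u_j=-\sgn(w_j(t_n))$ and other components zero; then $|w_j(t_n)+\sqrt{\gamma_n}u_j|=\sqrt{\gamma_n}/2=|w_j(t_n)|$, so $\phi_{\gamma_n}(t_n,u_j)=0$, while the target at $t_n$ evaluates to $u_j\sgn(w_j(t_n))=-1$. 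Hence $|\tilde\Lc_{\gamma_n}(t_n,\bu)-\tilde\Lc(t_n,\bu)|=g^\ddagger(t_n)+o(1)$, which is bounded below by a positive constant once $g^\ddagger(t^*)>0$, contradicting \eqref{eq:tPlim1l1}.

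The main obstacle is the edge case $g^\ddagger(t^*)=0$: the above construction produces a discrepancy multiplied by a vanishing factor and so does not, on its own, invalidate \eqref{eq:tPlim1l1}. I would handle this by observing that on the closed subset $\{t\in\Tc:g^\ddagger(t)=0\}$ both sides of \eqref{eq:tPlim1l1} collapse to $\tfrac{1}{2}\|\bu\|_2^2$ uniformly (using boundedness of $\phi_\gamma$ by $|u_j|$ and uniform smallness of $g(\itg,\gamma)/\sqrt{\gamma}$ there), so uniform convergence on $\Tc$ is equivalent to uniform convergence on the relatively open subset $\{g^\ddagger>0\}$, and on the latter the contradiction argument above applies without change. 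Thus sign stability is indeed the right equivalent condition modulo the inert zero set of $g^\ddagger$.
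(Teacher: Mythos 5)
Your decomposition and both directions track the paper's own argument: you derive the Knight-type expansion $\tilde\Lc_\gamma(t,\bu)=\tfrac12\|\bu\|_2^2+\frac{g(\itg,\gamma)}{\sqrt\gamma}\sum_j \phi_\gamma(t,u_j)$ directly (the paper isolates the same identity in Lemma~\ref{lem:l1dd}), make the $\bw(t)=\bv(t)$ identification explicit (the paper uses it silently inside Lemma~\ref{lem:l1dd}), and reduce \eqref{eq:tPlim1l1} to controlling when $\phi_\gamma$ deviates from its piecewise-linear target, which happens exactly when $0<|w_j(t)|\le\sqrt\gamma\,|u_j|$. Your ``if'' direction is clean, modulo the tacit assumption that $\Tc$ is compact rather than merely closed so that $\inf_{t\in\Tc}|w_j(t)|>0$; the paper makes the same implicit assumption when it writes $\min_{t\in\Tc}|w_j(t)|$.

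On the ``only if'' direction you have caught a genuine issue that the paper's proof glosses over, but your patch does not close it. If $g^\ddagger$ vanishes at the sign-change point $t^*$, restricting the contradiction argument to the relatively open set $\{g^\ddagger>0\}$ is of no help: the approach points $t_n\in\{g^\ddagger>0\}$ still satisfy $t_n\to t^*$, hence $g^\ddagger(t_n)\to g^\ddagger(t^*)=0$ by continuity, and the lower bound $g^\ddagger(t_n)+o(1)$ on the discrepancy still vanishes. Indeed, a concrete configuration such as $\Tc=[0,2]$, $w_j(t)=t-1$, $g^\ddagger(t)=|1-t|$ shows that \eqref{eq:tPlim1l1} can hold even though sign stability fails, so the ``only if'' direction of the lemma as stated is false whenever $g^\ddagger$ vanishes at every sign-change point of $\bw$. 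This does not affect the paper's downstream results (Theorem~\ref{th:l1dd} invokes only the ``if'' direction), but closing the gap requires an extra hypothesis such as $\inf_{t\in\Tc}g^\ddagger(t)>0$; the weaker assumption $g^\ddagger(t)>0$ for \emph{some} $t\in\Tc$ is not enough.
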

The proof of Lemma \ref{lem:lbrel1time} is in Section \ref{sec:pfl1gen}. 

The following theorem describes the distributional dynamics of \eqref{eq:grdal1}. Under Condition \eqref{eq:lamcond}, the asymptotic mean trajectory of $\bw_n$ in \eqref{eq:grda_gen} is the same as that of \eqref{eq:sgd} (see, e.g. Theorem 1 of \cite{BKS93}). Rather, the effect of $\ell_1$ penalization enters in the distributional dynamics (second order asymptotics). 
\begin{theo}\label{th:l1dd}
 		Let $\bv_n$ and $\bw_n$ being iterates from \eqref{eq:grdal1}. Suppose \ref{as:S}, \ref{as:M} and \ref{as:L} hold, and that $g(\lfloor\cdot/\gamma\rfloor,\gamma)\in D([0,\infty))$ satisfies \eqref{eq:lamcond} with $\Tc=[0,T]$ for any $T>0$. 
 		%with
% 		\begin{align}
% 			\lim_{\gamma\to 0}\sup_{t\in[0,T]}\big|g(\itg,\gamma)/\sqrt{\gamma}-g^\ddagger(t)\big|=0 \quad \mbox{for every $T>0$},\label{eq:l1lam}
% 		\end{align}
% 		 where $g^\ddagger(t):[0,\infty)\to[0,\infty)$ is a continuous function. 
Suppose the solution of
		\begin{align}
			\bv(t)&=\bv_0-\int_0^t G\big(\bv(s)\big)ds \label{eq:bvdd}
		\end{align}
	uniquely exists for all $t>0$. Then, 
	\begin{itemize}
		\item[(a)] In Corollary \ref{cor:l1at}, $g^\dagger(t)=0$ for all $t$ and $\bw(t)=\bv(t)$;
		\item[(b)] Suppose that $\bw(t)$ in (a) is {sign stable} on some closed interval $\tilde\Tc\subset[0,\infty)$. Let interval $\Tc\subset\tilde\Tc$ and $T_0:=\inf \Tc$. If $\VV_\gamma(T_0)\pto\VV(T_0)$ as random vector in $\R^d$, then $(\VV_\gamma,\WW_\gamma)\weakto(\VV,\WW)$ as $\gamma\to 0$ as random elements in $D(\Tc)^{2d}$ where
				\begin{align}
					\VV(t) &= \VV(T_0)-\int_0^t \nabla G(\bw(s)) \cdot \nabla \tilde\Lc^*(s,\VV(s)) ds + \MM(t),\label{eq:rdaV}\\
					 \WW(t)&= \nabla \tilde\Lc^*(t,\VV(t)),\label{eq:rdaW}
				\end{align}
				where $\tilde\Lc(t,\bw)$ is defined in \eqref{eq:rdal1tP}, and %by replacing $\Psi$ with $\tilde\Lc$ in \eqref{eq:conjctucvx},
				\begin{align}\label{eq:rdasoftthreshold}
					\nabla \tilde\Lc^*_j(t,\bV) = \begin{cases}
						V_j-g^\ddagger(t), &\mbox{ for }j:w_j(t)>0;\\
						V_j+g^\ddagger(t), &\mbox{ for }j:w_j(t)<0;\\
						\sgn(V_j)\big[|V_j|- g^\ddagger(t)\big]_+, &\mbox{ for }j:w_j(t)=0.\\
					\end{cases} 
				\end{align}
				and
				$\MM(t)$ is a centered continuous process with independent Gaussian increment and covariance $\E[\MM(t)\MM(t)] = \int_{0}^t \Sigma(\bw(s)) ds$ where $\Sigma(\cdot)$ is defined in \eqref{eq:sig}.
		% \begin{align}
		% 	\VV(t) &= \VV(T_0)-\int_{T_0}^t \nabla G(\bw(s)) \cdot \nabla \tilde\Lc^*(s,\VV(s)) ds + \MM(t),\label{eq:locrdaV}
		% 	\end{align}
		% where $\MM(t)$ is defined as in (a).
		
		If $\bw(t)$ in \eqref{eq:weak0wl1} is {globally sign stable}, then the convergence of \eqref{eq:rdaV} and \eqref{eq:rdaW} holds in $D([0,\infty))^{2d}$ with $T_0=0$, and $\VV(0)=0$ almost surely.
	\end{itemize}	
\end{theo}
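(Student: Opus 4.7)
The plan is to verify the hypotheses of Theorem \ref{th:dd} (and its local variant in Remark \ref{rem:thddex}) for the specific choice $F(\bw)=\tfrac12\|\bw\|_2^2$, $\Pc(\bw)=\|\bw\|_1$, then explicitly evaluate the Fenchel conjugate of the limit local Bregman divergence $\tilde\Lc(t,\bu)$ identified in \eqref{eq:rdal1tP}, and finally handle the random initial condition at $T_0$ by a Markov restart.

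For part (a), condition \eqref{eq:lamcond} gives $g(\itg,\gamma)=\sqrt{\gamma}(g^\ddagger(t)+o(1))$ uniformly on any $[0,T]$, so $g(\itg,\gamma)\to 0$ uniformly on compacts. This verifies \ref{as:Plim} with $g^\dagger(t)\equiv 0$, which collapses the soft-thresholding operator $\nabla\Lc^*(t,\bv)$ in \eqref{eq:sop} to the identity. Substituting into \eqref{eq:weak0vl1} and \eqref{eq:weak0wl1} produces \eqref{eq:bvdd} with $\bw(t)=\bv(t)$, and the assumed unique solvability of \eqref{eq:bvdd} gives uniqueness. Corollary \ref{cor:l1at} then delivers the claim.

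For part (b), I need to verify \ref{as:R}, \ref{as:tPlim1}, \ref{as:tPlim2} (the others are given or follow from part (a) and Lemma \ref{lem:lbrel1time}). Condition \ref{as:R} is immediate since $F$ is $1$-strongly convex and $\Pc=\|\cdot\|_1$ is convex, both finite and continuous. For \ref{as:tPlim1}, $\tilde\Lc(t,\cdot)$ is convex hence l.s.c.\ by inspection of \eqref{eq:rdal1tP}; continuity of $\tilde\Lc(\cdot,\bu)$ on $\Tc$ follows because the indicators $\IF\{w_j(t)\neq 0\}$ and signs $\sgn(w_j(t))$ are constant on $\Tc\subset\tilde\Tc$ by sign stability, leaving only the continuous factor $g^\ddagger(t)$. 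For \ref{as:tPlim2}, choose $\bu_0=\IO$; any subgradient at $\IO$ is bounded coordinate-wise by $g^\ddagger(t)$, and $\sup_{t\in\Tc}g^\ddagger(t)<\infty$ by continuity on the compact set $\Tc$. The gradient of the conjugate $\nabla\tilde\Lc^*(t,\bV)$ is then computed coordinate-wise: \eqref{eq:rdal1tP} separates across $j$, and for coordinates with $w_j(t)>0$ (resp.\ $w_j(t)<0$) the minimizer of $\tfrac12 u_j^2+g^\ddagger(t)u_j-V_ju_j$ (resp.\ with sign flipped) is $V_j\mp g^\ddagger(t)$; for $w_j(t)=0$ one recovers the standard soft-thresholding formula. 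This yields \eqref{eq:rdasoftthreshold}, and \eqref{eq:rdaV}--\eqref{eq:rdaW} follow from \eqref{eq:weakv}--\eqref{eq:weakw} upon substitution.

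The main obstacle is the random initial condition $\VV_\gamma(T_0)\pto \VV(T_0)$ when $T_0>0$: Theorem \ref{th:dd} is stated for a deterministic $\bv_0$. The plan is to reset time at $T_0$. Since \eqref{eq:md} is a Markov recursion in $\bv_n$ with a deterministic, time-dependent tuning $g(n,\gamma)$, the shifted sequence $\bv_{n+\lfloor T_0/\gamma\rfloor}^\gamma$ obeys the same update with initial value $\bv_{\lfloor T_0/\gamma\rfloor}^\gamma$. The local version of Theorem \ref{th:dd} in Remark \ref{rem:thddex}, applied conditionally on this initial datum, gives weak convergence on $D(\Tc)^d$ for each fixed initial value; combining this with $\VV_\gamma(T_0)\pto \VV(T_0)$ via a standard Slutsky/continuous-mapping argument for Skorohod space (conditioning on $\VV_\gamma(T_0)$ and passing to the limit inside the driving Gaussian martingale representation) upgrades the statement to joint weak convergence of $(\VV_\gamma,\WW_\gamma)$ on $\Tc$. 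For the globally sign-stable case one sets $T_0=0$, so $\VV_\gamma(0)=\VV(0)=\IO$ deterministically and no restart is needed; Lemma \ref{lem:lbrel1time} and the preceding verifications apply on every $[0,T]$, so Theorem \ref{th:dd} applies directly on $[0,\infty)$.
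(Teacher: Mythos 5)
Your proposal follows the same route as the paper: part (a) reads off $g^\dagger\equiv 0$ from \eqref{eq:lamcond} and invokes Corollary \ref{cor:l1at} (the paper additionally cites Theorem \ref{th:at}(b) to upgrade to convergence in probability, which gives the identification $\bw(t)=\bv(t)$), and part (b) applies Lemma \ref{lem:lbrel1time} to identify $\tilde\Lc$ and then reruns the machinery of Theorem \ref{th:dd} on the shifted interval $\Tc$. Your explicit coordinate-wise computation of $\nabla\tilde\Lc^*$ and verification of \ref{as:tPlim1}--\ref{as:tPlim2} are useful detail that the paper leaves implicit, and your restart-at-$T_0$ device is precisely what the paper does when it rewrites the SMD recursion from $T_0$ in \eqref{eq:recurvloc}; both you and the paper treat the joint convergence of $\VV_\gamma(T_0)$ with the driving martingale and drift --- which is not a pure Slutsky step since $\VV_\gamma(T_0)$ feeds back into the drift through $\WW_\gamma$ --- at the level of a sketch.
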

The proof of Theorem \ref{th:l1dd} is in Section \ref{sec:pfl1gen}. Note that (\ref{eq:rda}) is not covered by this theorem, as for any $\Tc\subset[0,\infty)$, $\lim_{\gamma\to 0}\sup_{t\in\Tc}c_0 \lfloor t/\gamma\rfloor \sqrt{\gamma}$ diverges to infinity. So,  \eqref{eq:lamcond} does not hold.

%Note that stochastic gradient descent is a special case of \eqref{eq:md} with time independent $\Psi_{n,\gamma}(\cdot)=(1/2)\|\cdot\|_2^2$. 
Sign stability ensures the weak convergence to a continuous limiting SDE. Simulation (Figure \ref{fig:LRpath_nsta} in Section \ref{sec:simlr}) shows that the empirical trajectories and limiting SDE of \eqref{eq:grdal1} are non-smooth (potentially a jump) at the boundary point of two adjacent sign stable intervals. In contrast, the limiting SDE of \eqref{eq:sgd} is globally continuous \citep{BMP90}.

\section{Online sparse linear regression}\label{sec:lrl1}

In this section, we consider $Z_n=(X_n,Y_n)\in\R^d\times\R$, and $Y_n$ is an i.i.d. sequence generated by 
\begin{align}
	Y_n = X_n^\top\bw^* + \varepsilon_n, \quad \E\varepsilon_n = 0,  \quad \Var(\varepsilon_n)=\sigma_\varepsilon^2.\label{eq:modrdal1}
\end{align}
Let $X_n$ be i.i.d. with $\E X_n = 0$, covariance matrix $H:=\E[X_nX_n^\top]$, and $\E[\|X_n\|_2^4]<\infty$. Assume there exists an absolute constant $C>0$ such that $C^{-1}<\sigma_{\min}(H)\leq\sigma_{\max}(H)<C$, where $\sigma_{\min}$ and $\sigma_{\max}$ denote the minimal and maximal eigenvalues.

We consider $f(\bw;Z_n)=(Y_n-X_n^\top\bw)^2/2$, which is a special case of Section~\ref{sec:rdal1}. As an application of Theorem \ref{th:l1dd}, the following Corollary shows that $\bw(t)\to\bw^*$ exponentially as $t\to\infty$, if the minimal eigenvalue of $H$ is bounded from below by a constant. Again, the mean dynamics given by \eqref{eq:weakatrda1} is the same as \eqref{eq:sgd} following the comments before Theorem \ref{th:l1dd}.   

\begin{cor}[Online sparse linear regression]\label{cor:lrrdaat}
	Under the linear regression model \eqref{eq:modrdal1}, assume that $g(\lfloor\cdot/\gamma\rfloor,\gamma)\in D([0,\infty))$ satisfies \eqref{eq:lamcond} with $\Tc=[0,T]$ for any $T>0$. Let $\bw_0$ be a fixed initial vector for $\bw_n$ in  \eqref{eq:grdal1}. Then, 
	\begin{itemize}
		\item[(a)] the asymptotic trajectory of $\bw_n$ satisfies the following linear ODE: 
		\begin{align}
			\bw(t)=\bw_0-\int_0^t H\big(\bw(s)-\bw^*\big)ds.\label{eq:weakatrda1}
		\end{align}
		In particular, the unique solution to \eqref{eq:weakatrda1} is 
		\begin{align}
			\bw(t)=e^{-H t}\bw_0 + (I_d-e^{-H t})\bw^* = P e^{-D t} P^{-1} \bw_0 + P(I_d-e^{-D t})P^{-1}\bw^*,
			\label{eq:solatrda1}
		\end{align}
		where $e^{-H t}:=\sum_{k=0}^\infty\frac{(-H)^k}{k!}$ and $e^{-D t} = \diag(e^{-D_{11} t},e^{-D_{22} t},...,e^{-D_{dd}t})$, and the diagonalization of $H = P D P^{-1}$ with the diagonal matrix $D$.
	\item[(b)] the conclusion of (b) in Theorem \ref{th:l1dd} holds with $\nabla G(\bw(s))=H=\E[X_nX_n^\top]$, and the covariance kernel of $\MM(t)$
	\begin{align}
		\Sigma(\bw)=\E\big[(XX^\top-H)(\bw-\bw^*)(\bw-\bw^*)^\top(XX^\top-H)\big] + \sigma_\varepsilon^2 H.\label{eq:covrdadd}
	\end{align}
	\end{itemize}
\end{cor}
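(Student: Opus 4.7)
The plan is to apply Theorem~\ref{th:l1dd} to the squared-loss case and make explicit the objects $G$, $\nabla G$ and $\Sigma$ appearing in its conclusion. For $f(\bw;Z)=\frac{1}{2}(Y-X^\top\bw)^2$, direct differentiation gives $\nabla f(\bw;Z)=XX^\top\bw-XY$. Under model \eqref{eq:modrdal1}, $\E[XY]=\E[XX^\top]\bw^*+\E[X\varepsilon]=H\bw^*$ by $\E\varepsilon=0$ and independence of $X$ and $\varepsilon$, so $G(\bw)=H(\bw-\bw^*)$. Since $G$ is affine, it is $C^\infty$ with $\nabla G\equiv H$ and is globally Lipschitz; by Remark~\ref{rmk:at} this guarantees a unique global solution to the ODE \eqref{eq:bvdd}. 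Assumption \ref{as:M} and the $L^2$-moment condition \eqref{eq:mom2bdd} in \ref{as:L} follow from the bound $\|\nabla f(\bw;Z)\|_2\leq \|X\|_2^2 K+\|X\|_2 |Y|$ on $\{\|\bw\|_2\leq K\}$ together with the assumed $\E\|X\|_2^4<\infty$ and $\E Y^2<\infty$ (the latter is implied by $\E\|X\|_2^4<\infty$ and $\sigma_\varepsilon^2<\infty$). Continuity of $\Sigma(\cdot)$ is immediate from the quadratic form computed in the last paragraph below.

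For part~(a), Theorem~\ref{th:l1dd}(a) yields $\bw(t)=\bv(t)$, and $\bv(t)$ satisfies $\dot\bv(t)=-G(\bv(t))=-H(\bv(t)-\bw^*)$ with $\bv(0)=\bw_0$; rewriting as an integral equation reproduces \eqref{eq:weakatrda1}. Setting $\bu(t):=\bw(t)-\bw^*$ converts the ODE to the linear homogeneous system $\dot\bu=-H\bu$ with $\bu(0)=\bw_0-\bw^*$, whose unique solution is $\bu(t)=e^{-Ht}(\bw_0-\bw^*)$. Rearranging gives the first equality in \eqref{eq:solatrda1}, and the second equality is the standard diagonalization $e^{-Ht}=Pe^{-Dt}P^{-1}$ applied term by term to the power series.

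For part~(b), Theorem~\ref{th:l1dd}(b) applies under the sign-stability hypothesis (inherited as an input), and $\nabla G(\bw(s))\equiv H$ is immediate. For the covariance kernel, substitute $Y=X^\top\bw^*+\varepsilon$ to obtain
\begin{align*}
\nabla f(\bw;Z)-G(\bw)=(XX^\top-H)(\bw-\bw^*)-X\varepsilon.
\end{align*}
Expanding the outer product produces three terms; the two cross terms are of the form $\E[(XX^\top-H)(\bw-\bw^*)\varepsilon X^\top]=\E[(XX^\top-H)(\bw-\bw^*)X^\top]\,\E[\varepsilon]=0$ by independence of $X$ and $\varepsilon$ together with $\E\varepsilon=0$, and the two remaining diagonal terms are exactly the two summands in \eqref{eq:covrdadd}. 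The only genuine ``obstacle'' here is bookkeeping: the structural content is provided by Theorem~\ref{th:l1dd}, and the least-squares specialization collapses to a linear ODE plus a one-line variance computation that uses nothing beyond the zero-mean, $X$-independent noise assumption in \eqref{eq:modrdal1}.
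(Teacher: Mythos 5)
Your proposal is correct and follows essentially the same route as the paper: specialize Theorem~\ref{th:l1dd} to the squared loss, compute $G(\bw)=H(\bw-\bw^*)$ and $\nabla G=H$, solve the resulting linear ODE, and evaluate the covariance kernel using $Y=X^\top\bw^*+\varepsilon$. The paper's proof is terser (it simply invokes Theorem~\ref{th:l1dd} and standard linear-ODE theory), but there is no substantive difference; your verification of Assumptions~\ref{as:M} and \ref{as:L} and the cross-term cancellation in the variance computation are the details the paper leaves implicit.
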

\noindent The proof of Corollary \ref{cor:lrrdaat} is in Section \ref{sec:pfslr}. 

\bigskip
Support recovery is important for model interpretation and statistical inference. Existing methods in batch learning, e.g. LASSO \citep{tibshirani1996regression} or Bayesian variable selection \citep{GM93}, typically require to load the entire data set in processing unit. This is infeasible for large and streaming data. Although \eqref{eq:rda} is an alternative, it critically depends on the value of $c_0$. Increasing $c_0$ yields larger sparsity but also more bias (Theorem \ref{th:biasrda}). In the following, we show that by choosing the tuning function $g(n,\gamma)$ carefully, \eqref{eq:grda_gen} can not only recover the support consistently, but also provide unbiased stationary distribution, which facilitates statistical inference of active coordinates \citep{fan2001variable}.

Under the model \eqref{eq:modrdal1}, the dynamics $\VV(t)$ in \eqref{eq:rdaV} can be written specifically as
\begin{align}
\VV(t) &= -\int_0^t H \cdot \nabla \tilde\Lc^*(s,\VV(s)) ds + d\MM(t).
\end{align}
If $H$ is not diagonal, the drift in $\VV(t)$ fails to de-couple, and thus is complicated. For a more transparent exposition, we impose the following condition.
\begin{itemize}
	\labitemc{(D1)}{as:D1} $H=\E[X_nX_n^\top] = \mbox{diag}(\sigma_1^2,...,\sigma_d^2)$ with $\sigma_j\in(0,\infty)$ for all $j=1,...,d$. Moreover, $X_n$ are bounded in each component almost surely. 
\end{itemize} 
Under \ref{as:D1}, $\bw(t)$ in \eqref{eq:solatrda1} de-couples as 
\begin{align}
	\bw(t)=\begin{bmatrix}
	e^{-\sigma_1^2 t}w_{0,1}+(1-e^{-\sigma_1^2 t})w_1^*\\
	e^{-\sigma_2^2 t}w_{0,2}+(1-e^{-\sigma_2^2 t})w_2^*\\
	 \vdots \\
	e^{-\sigma_d^2 t}w_{0,d}+(1-e^{-\sigma_d^2 t})w_d^*\\
	\end{bmatrix},
	\label{eq:lintra}
\end{align}
whose path is a straight line connecting $\bw_0$ and $\bw^*$ in $\mathbb R^d$. The boundedness of $X_n$ in Assumption \ref{as:D1} is assumed for simplicity, and may be relaxed to sub-Gaussian tail, as supported by the simulations in Section \ref{sec:simlr}.

\begin{theo}[Online support recovery and stationary distribution]\label{th:lrsu} Consider the linear regression model \eqref{eq:modrdal1} and suppose \ref{as:D1} holds. Let $\bw_n$ be iterates generated by \eqref{eq:grdal1} with initialization $\bw_0=0$, 
	\begin{align}\label{eq:lrlamlong}
		g(n,\gamma)= c\gamma^{1/2} (n\gamma-t_0)_+^\mu
		% \left\{\begin{array}{ll}
	% 		0,&\ n \leq t_0/\gamma;\\
	% 		c \gamma^{1/2}(n\gamma-t_0)^\mu, &\ n > t_0/\gamma,
	% 	\end{array}\right.
	\end{align}
	for an arbitrarily small fixed $t_0>0$ independent of $\gamma$ and $n$, some $\mu>0$ and $c >0$. %\geq \sqrt{\|\bw^*\|_2^2 + \sigma_\varepsilon^2 \|H\|_2}
Let $w_{\gamma,j}(t)$ be the $j$th component of $\bw_\gamma(t)$ and $w_j(t) = e^{-\sigma_j^2 t}w_{0,j}+(1-e^{-\sigma_j^2 t})w_j^*$ be the $j$th coordinate of \eqref{eq:lintra}. Then
	\begin{itemize}
		\labitemc{(a)}{lrsu1}(Inactive coordinates) For $j\in\{k:w_k^*=0\}$, if $\mu>1/2$, then $|w_{n,j}|=o_p(\sqrt{\gamma})$, as $\gamma\to 0$ and $n\to\infty$. 
		\labitemc{(b)}{lrsu2}(Active coordinates) For $j\in\{k:w_k^*\neq 0\}$, then 
		\begin{align}
		\frac{w_{\gamma,j}(t)-w_j(t)}{\sqrt{\gamma}}=W_{\gamma,j}(t)\weakto e^{-\sigma_j^2 (t-t_0)} W_j(t_0)+h_j(t)+U_j(t) \label{eq:limw}
		\end{align} 
	on $(D[t_0,\infty))^d$ as $\gamma\to 0$, where $W_j(t_0)\in\R$ is a random variable with $\E\big[W_j(t_0)^2\big]<\infty$, and
		\begin{align}
			h_j(t) &= -\sgn(w_j^*) e^{-\sigma_j^2 t} c \mu \int_{t_0}^t (s-t_0)^{\mu-1} e^{\sigma_j^2 s} ds,\label{eq:hj}\\ %\sgn(w_j^*) c \mu (-\sigma_j^2)^{-\mu} e^{-\sigma_j^2 t} \frac{(-\sigma_j^2 t)^\mu}{\sigma_j^2 t}\sum_{k=0}^\infty \frac{(\sigma_j^2 t)^{k+1}}{(k+1)!} \Big(\frac{k+1}{k+\mu}\Big)\\
			U_j(t) &= e^{-\sigma_j^2 t}\int_{t_0}^t e^{\sigma_j^2 s}\  \Sigma_{j\cdot}^{1/2}(\bw(s))^\top d\BB(s),
			\label{eq:Uj}
		\end{align}
		where $\Sigma_{j\cdot}^{1/2}(\bw(s))$ is the $j$th row of the root matrix of $\Sigma(\bw(s))$, $\BB(t)=(B_1(t),B_2(t),...,B_d(t))^\top$ are independent standard Brownian motions. 
		
		\labitemc{(c)}{lrsu3}(Long-run bias and distribution) For any fixed $\mu>0$, $h_j(t)$ for $t\geq t_0$ in \eqref{eq:hj} satisfies
		\begin{align}
			\big|h_j(t) - (-1) \sgn(w_j^*) c \sigma_j^{-2} \mu (t-t_0)^{\mu-1}\big| \leq \tilde h_j(t),\label{eq:hest}
		\end{align}
		where $\tilde h_j(t)=o(1)$ as $t\to\infty$ for any $\mu>0$. % $\tilde h_j(t)=c\mu (t-t_0)^{\mu-1} e^{-\sigma_j^2 (t-t_0)\min\{1,\mu^{-1}\}}$
		
		For $U_j(t)$ in \eqref{eq:Uj}, $U_j(t) \weakto \Nc(0,\sigma_\varepsilon^2/2)$ as $t\to\infty$ and 
		$$
		\lim_{t\to \infty}\Cov\big\{U_i(t),U_j(t)\big\}=0
		$$ 
		for all $i\neq j$ where $i,j\in\{k:w_k^*\neq 0\}$.
	\end{itemize}
\end{theo}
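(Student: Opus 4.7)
The plan is to invoke Theorem \ref{th:l1dd} in combination with Corollary \ref{cor:lrrdaat}, exploit the diagonal Hessian from \ref{as:D1} to decouple the limiting SDE coordinate-wise, and then perform explicit computations on the resulting scalar soft-thresholded and OU equations.

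First I verify the hypotheses of Theorem \ref{th:l1dd}. The tuning function \eqref{eq:lrlamlong} satisfies $g(\itg,\gamma)/\sqrt{\gamma}=c(\itg\gamma-t_0)_+^\mu\to c(t-t_0)_+^\mu=:g^\ddagger(t)$ uniformly on any compact interval, so \eqref{eq:lamcond} holds. The unique solution of \eqref{eq:bvdd} under \ref{as:D1} with $\bv_0=\bw_0=0$ is the decoupled trajectory \eqref{eq:lintra}. Sign stability on $\Tc=[t_0,T]$ is immediate: for inactive $j$, $w_j(t)\equiv 0$; for active $j$, $w_j(t)=(1-e^{-\sigma_j^2 t})w_j^*$ shares the sign of $w_j^*$ for every $t>0$. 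During the pre-penalization phase $[0,t_0]$ the iteration reduces to \eqref{eq:sgd}, so Theorem \ref{th:dd} (with $\Pc\equiv 0$) yields a diffusion limit which, via a Skorohod realization, delivers $\VV_\gamma(t_0)\pto\VV(t_0)$. Theorem \ref{th:l1dd}(b) then produces a weak limit that, thanks to the diagonal $H$, decouples into scalar SDEs
\begin{align*}
dV_j(t) = -\sigma_j^2\,\nabla\tilde\Lc^*_j(t,V_j(t))\,dt + dM_j(t),\qquad \langle M_j\rangle_t=\int_{t_0}^t\Sigma_{jj}(\bw(s))\,ds.
\end{align*}

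For part \ref{lrsu2}, sign stability combined with \eqref{eq:rdasoftthreshold} gives the linear drift $\nabla\tilde\Lc^*_j(t,V_j)=V_j-\sgn(w_j^*)g^\ddagger(t)$, turning the equation into a linear inhomogeneous OU equation solvable by variation of constants. Using $g^\ddagger(t_0)=0$ so that $V_j(t_0)=W_j(t_0)$, and setting $W_j(t)=V_j(t)-\sgn(w_j^*)g^\ddagger(t)$, one integration by parts on $\int_{t_0}^t e^{\sigma_j^2 s}c(s-t_0)^\mu\,ds$ (moving the $\mu$-th power onto the exponential) cancels the $-\sgn(w_j^*)g^\ddagger(t)$ correction and produces the decomposition $e^{-\sigma_j^2(t-t_0)}W_j(t_0)+h_j(t)+U_j(t)$ with $h_j$ as in \eqref{eq:hj}; representing the Gaussian martingale via $dM_j(s)=\Sigma_{j\cdot}^{1/2}(\bw(s))^\top d\BB(s)$ yields \eqref{eq:Uj}. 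For part \ref{lrsu1}, the soft-thresholding operator $\nabla\tilde\Lc^*_j(t,V_j)=\sgn(V_j)(|V_j|-g^\ddagger(t))_+$ vanishes whenever $|V_j(t)|\leq g^\ddagger(t)$; inside this region $V_j$ is a centered Gaussian martingale of quadratic variation $O(t)$, hence of magnitude $O_p(\sqrt{t\log\log t})$ by the LIL, while outside the region the drift is strictly inward. Since $\mu>1/2$ forces $g^\ddagger(t)=c(t-t_0)^\mu$ to eventually dominate $\sqrt{t\log\log t}$, a comparison argument shows $P(|V_j(t)|\leq g^\ddagger(t))\to 1$, so $W_j(t)=0$ in probability and $|w_{n,j}|=o_p(\sqrt{\gamma})$.

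For part \ref{lrsu3}, the bound \eqref{eq:hest} follows from integration by parts on $\int_{t_0}^t(s-t_0)^{\mu-1}e^{\sigma_j^2 s}\,ds$, whose leading term is $(t-t_0)^{\mu-1}e^{\sigma_j^2 t}/\sigma_j^2$ and whose remainder, after multiplication by $c\mu e^{-\sigma_j^2 t}$, yields $\tilde h_j(t)=o(1)$. For the Gaussian law of $U_j$, under \ref{as:D1} and $\bw(s)\to\bw^*$ the covariance \eqref{eq:covrdadd} reduces at $\bw^*$ to $\sigma_\varepsilon^2 H$ (the first term vanishes), so $\Sigma_{jj}(\bw(s))\to\sigma_\varepsilon^2\sigma_j^2$ and $\Sigma_{ij}(\bw(s))\to 0$ for $i\neq j$; dominated convergence on the Gaussian variance $e^{-2\sigma_j^2 t}\int_{t_0}^t e^{2\sigma_j^2 s}\Sigma_{jj}(\bw(s))\,ds$ gives $U_j(t)\weakto\Nc(0,\sigma_\varepsilon^2/2)$, and the analogous covariance integral gives $\Cov(U_i(t),U_j(t))\to 0$. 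The main obstacle I expect is part \ref{lrsu1}: making the ``$\sqrt{t}$ beaten by $t^\mu$'' intuition rigorous requires a comparison principle for a time-dependent soft-thresholded SDE whose drift is only locally Lipschitz, either via a pathwise coupling with a reflected process or via a direct tail-probability bound exploiting the explicit variance of $M_j$; a secondary subtlety is that sign stability fails at $t=0$, which is why Theorem \ref{th:l1dd} must be launched at $t_0>0$ with a distributional initial value supplied by the SGD phase.
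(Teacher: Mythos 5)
Your proposal follows the paper's route: verify the hypotheses of Theorem \ref{th:l1dd} via \eqref{eq:lamcond}, exploit the diagonal $H$ of \ref{as:D1} to decouple the limiting SDE coordinate-wise, solve the resulting linear time-inhomogeneous OU equation for active coordinates (Lemma \ref{lem:solV} in the paper), and bound the inactive coordinates via the threshold $g^\ddagger(t)=c(t-t_0)_+^\mu$. Parts \ref{lrsu2} and \ref{lrsu3} are essentially the paper's own calculations; in \ref{lrsu3} you in fact shortcut the paper slightly, since Ito isometry gives $\var\{U_j(t)\}=e^{-2\sigma_j^2 t}\int_{t_0}^t e^{2\sigma_j^2 s}\Sigma_{jj}(\bw(s))\,ds$ directly and $\Sigma_{jj}(\bw(s))\to\sigma_\varepsilon^2\sigma_j^2$ exponentially fast, so you need not go through the matrix-square-root perturbation bound that the paper invokes.

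The one place your sketch would not close is part \ref{lrsu1}. The LIL gives an almost-sure pathwise growth rate, but the claim requires a quantitative, fixed-$t$ bound on $P(|W_j(t)|>u)$ that vanishes as $t\to\infty$; an almost-sure envelope does not convert into that without extra work, and a ``comparison principle'' for the nonsmooth, time-inhomogeneous drift $-\sigma_j^2\sgn(V_j)(|V_j|-g^\ddagger(t))_+$ is not available off the shelf. The paper's device is cleaner: it introduces the event $E_1(t)$ that the \emph{driftless} Gaussian process $V_j(t_0)+\int_{t_0}^\cdot\Sigma^{1/2}_{j\cdot}(\bw(s))^\top d\BB(s)$ stays within $c(\cdot-t_0)^\mu$ on all of $[t_0,t]$. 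On $E_1(t)$ this driftless process already solves \eqref{eq:vprime} (the drift vanishes identically), so by pathwise uniqueness it \emph{is} $V_j$, and then $|V_j(t)|\leq c(t-t_0)^\mu$ forces $W_j(t)=0$ deterministically on $E_1(t)$. The whole burden thus falls on $P(E_1(t)^c)$, which is killed when $\mu>1/2$ by a non-asymptotic Gaussian maximal-deviation inequality (Theorem D.4 of \cite{P96}), not by the LIL. You do name ``a direct tail-probability bound exploiting the explicit variance of $M_j$'' as a fallback, which is the right move; the missing piece is the uniqueness-on-the-good-event observation that reduces the problem to exactly that single maximal-deviation estimate.
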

The proof of Theorem \ref{th:lrsu} is in Section \ref{sec:pfslr}. The $t_0>0$ in \eqref{eq:lrlamlong} is the time to allow $\bw(t)$ to achieve sign stability. In simulation we find that even with $t_0=0$, the conclusions of Theorem \ref{th:lrsu} continue to hold. 

In Theorem \ref{th:lrsu}(a), the $\mu>1/2$ in the tuning function \eqref{eq:lrlamlong} is set to dominate the magnitude of the maximum of Gaussian processes \citep{P96}. Consequently, the $V_j$, which is approximately a centered Gaussian process for $j\in\{k:w_k^*=0\}$, is eliminated as $t\to\infty$. However, for $j\in\{k:w_k^*\neq 0\}$, $V_j$ has an exploding trend in addition to the stochastic Gaussian term, and thus it cannot be eliminated as $t\to\infty$. Instead, Theorem \ref{th:lrsu}(b) shows $W_{\gamma,j}(t)$ for $j\in\{k:w_k^*\neq 0\}$ converges to a stationary distribution, which enables the asymptotic (infinitesimal $\gamma$) long-term analysis ($t\to\infty$). Since $W_j(t_0)$ has a finite second moment, $e^{-\sigma_j^2 t} W_j(t_0)\pto 0$ as $t\to\infty$. We summarize the above discussions in Corollary \ref{cor:longterm}.

\begin{cor}\label{cor:longterm}
	For $j\in\{k:w_k^*\neq 0\}$ and under the same conditions and notations in Theorem \ref{th:lrsu}(b), the distribution of $W_{\gamma,j}(t)$ as $t\to\infty$ is determined by only two components in \eqref{eq:limw}: the asymptotic {bias} $h_j(t)$, and the stochastic component $U_j(t)$ with $\E U_j(t)=0$ for all $t$. 
\end{cor}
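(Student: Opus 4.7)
The plan is to read this off Theorem \ref{th:lrsu}(b) by eliminating the transient term $e^{-\sigma_j^2(t-t_0)}W_j(t_0)$ in \eqref{eq:limw} as $t\to\infty$ and then verifying that $U_j(t)$ is a centered process. Because the statement is purely about what survives in the long-run decomposition, there is essentially no new asymptotic machinery to deploy; the job is to show that the only genuinely $t$-dependent pieces left to control are $h_j(t)$ and $U_j(t)$.

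First I would invoke \eqref{eq:limw} directly to write $W_{\gamma,j}(t)\weakto e^{-\sigma_j^2(t-t_0)}W_j(t_0)+h_j(t)+U_j(t)$ on $D[t_0,\infty)$. For the transient term, since $W_j(t_0)$ has finite second moment by Theorem \ref{th:lrsu}(b), the second moment of $e^{-\sigma_j^2(t-t_0)}W_j(t_0)$ equals $e^{-2\sigma_j^2(t-t_0)}\E[W_j(t_0)^2]$, which tends to $0$ as $t\to\infty$ (recall $\sigma_j>0$ under \ref{as:D1}). $L^2$-convergence implies convergence in probability, so this component is asymptotically negligible and does not contribute to the limiting distribution.

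Next I would check the claim $\E U_j(t)=0$. From \eqref{eq:Uj}, $U_j(t)=e^{-\sigma_j^2 t}\int_{t_0}^t e^{\sigma_j^2 s}\,\Sigma_{j\cdot}^{1/2}(\bw(s))^{\top}d\BB(s)$ is an It\^o integral of a bounded deterministic integrand against a standard Brownian motion $\BB$; the boundedness of $\Sigma(\bw(s))$ on $[t_0,t]$ follows from continuity of $\Sigma$ (Assumption \ref{as:L}) and the fact that $\bw(\cdot)$ given by \eqref{eq:lintra} is bounded on $[t_0,t]$. Therefore the integrand lies in the space of square-integrable predictable processes, so the stochastic integral is a zero-mean martingale, which yields $\E U_j(t)=0$ for every $t\ge t_0$.

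Combining these two observations, the Slutsky-type decomposition shows that the limiting distribution of $W_{\gamma,j}(t)$, as first $\gamma\to 0$ and then $t\to\infty$, is driven entirely by $h_j(t)+U_j(t)$, with $h_j(t)$ contributing the deterministic bias (whose long-run behavior is already quantified in \eqref{eq:hest}) and $U_j(t)$ contributing the centered stochastic fluctuation. No substantial obstacle is anticipated: the only mild point is ensuring that the $L^2$-decay of the transient is uniform enough to pass to convergence in probability, which is immediate from the explicit exponential factor.
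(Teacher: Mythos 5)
Your proposal is correct and takes essentially the same route as the paper, which treats Corollary~\ref{cor:longterm} as an immediate consequence of Theorem~\ref{th:lrsu}(b): the transient term $e^{-\sigma_j^2(t-t_0)}W_j(t_0)$ vanishes in probability as $t\to\infty$ because $\E[W_j(t_0)^2]<\infty$ and $\sigma_j>0$, and $U_j(t)$ is a centered It\^o integral with square-integrable integrand (the paper gets this boundedness from Condition~\ref{as:D1} and the explicit bound \eqref{eq:covfunbd}, whereas you cite the continuity in Assumption~\ref{as:L} together with boundedness of $\bw(\cdot)$; both are adequate).
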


From \eqref{eq:hest} and $\tilde h_j(t)$ (whose explicit form is given in the proof), we know that the bias $h_j(t)$ depends on $\mu$ in the definition of $g(n,\gamma)$. Their relation is described in Table \ref{tab:mu_bias} below. In particular, we want to point out the difference between the long-term ($t\to\infty$) central limiting distribution in Theorem \ref{th:lrsu} and that in batch Lasso ($n\to\infty$ there); see Theorem 2 of \cite{KF00}. For batch Lasso, the central limiting distribution is biased for active coefficients, while \eqref{eq:grdal1} can achieve unbiased central limit theorem (in particular, the limiting distribution is same as SGD; see Remark \ref{rem:comp_sgd} for details) when properly tuned, i.e. setting $1/2<\mu<1$, $c>0$ and arbitrarily small $t_0>0$ in \eqref{eq:lrlamlong}. This surprising phenomenon is an example that online learning can be a strong candidate for modern statistical learning, with rather different statistical properties from the corresponding batch optimization.

	\begin{table}[!h] 
		\centering
	\begin{tabular}{cccc}
		\hline\hline
				&$\mu<1$	&$\mu=1$ &$\mu>1$\\
		\hline
				$\lim_{t\to\infty}h_j(t)$ &$0$ &$-\sgn(w_j^*)\cdot c$ &$-\sgn(w_j^*)\cdot \infty$\\
		\hline\hline	  
	\end{tabular}
	\caption{The relation between $\mu$ in $g(n,\gamma)=c\gamma^{1/2} (n\gamma-t_0)_+^\mu$ and the long-run ($t\to\infty$) bias. For $j$ in the active set, if $\mu\geq 1$, the $w_{n,j}$ fails to converge to $w_j^*$ as $n\to\infty$.}\label{tab:mu_bias}
	\end{table}

\section{Online Sparse Principal Component Analysis (OSPCA)}\label{sec:pca}

Consider $Z_n=X_n\in\R^d$ are i.i.d. and centered with covariance $\Cc=\E[X_nX_n^\top]$. Classical principal component analysis (PCA) estimate the leading $k$ components $\UU_{\cdot 1}^*,...,\UU_{\cdot k}^*$ of $\Cc$ by solving the following optimization problem: 
\begin{align}
	\min_{\UU\in\Uc} \frac{1}{2}\mbox{tr}(\hat\Cc\UU\UU^\top), \quad\mbox{$\Uc$: $d\times k$ matrices with orthonormal columns}\label{eq:pca0}
\end{align}
based on its sample covariance matrix $\hat\Cc=N^{-1}\sum_{n=1}^N X_nX_n^\top$. Note that the sign is indeterminable for PCA. So, this optimization problem has two valid solutions, and the origin is a saddle point.

Suppose now that $X_n$'s are received in a streaming fashion. In this case, batch optimization of \eqref{eq:pca0} is infeasible, but the algorithms for online PCA are popular alternatives; see \cite{CD18} for a survey. %\cite{K70,O83,OK85,O92,S89,CYLZ19}
Importantly, \cite{OK85} argue that finding the $j$th component of $\Cc$ can be cast as a stochastic approximation problem \eqref{eq:sa}, with a $d\times k$ matrix $G(\UU)=[G_1(\UU_{\cdot 1}),G_2(\UU_{\cdot 1},\UU_{\cdot 2}),...,G_k(\UU_{\cdot 1},\UU_{\cdot 2},...,\UU_{\cdot k})]$ [see Lemma 4 therein], where for $j=1,2,...,k$,
\begin{align}
	\begin{split}\label{eq:sa_pca}
%		G_1(\UU_{\cdot 1}) &= \big(\Id_d-\UU_{\cdot 1}\UU_{\cdot 1}^\top\big)\Cc\UU_{\cdot 1}, \\
		G_j(\UU_{\cdot 1},...,\UU_{\cdot j}) &= -\Ac_j\Cc\UU_{\cdot j}. 
	\end{split}
\end{align}
where 
\begin{align}
	\Ac_j=\Id_d-\UU_{\cdot j}\UU_{\cdot j}^\top-2\sum_{i=1}^{j-1}\UU_{\cdot i}\UU_{\cdot i}^\top. \label{eq:Acj}
\end{align}
By convention, $\sum_{i=1}^0 a_i=0$ for any sequence $a_i$. Using the orthogonality $\UU_{\cdot i_1}^* \perp \UU_{\cdot i_2}^*$, $\forall i_1\neq i_2$ where $i_1,i_2=1,...,k$, it can be easily verified that $\{\UU_{\cdot 1}^*,...,\UU_{\cdot j}^*\}$ is the root of $G_j$ for $j=1,...,k$, i.e. $G_j(\UU_{\cdot 1}^*,...,\UU_{\cdot j}^*)=0$. Using $X_{n+1}X_{n+1}^\top$ as the stochastic gradient to replace $\Cc=\E[X_{n+1}X_{n+1}^\top]$, \eqref{eq:sgd} can be applied as follows: for $j=1,2,...,k$,
\begin{align}\tag{\texttt{OPCA}}
	\begin{split}\label{eq:opca}
	\UU_{n+1, \cdot j} &= \UU_{n, \cdot j} + \gamma \Ac_{n,j} X_{n+1}X_{n+1}^\top\UU_{n, \cdot j},
	\end{split}
\end{align}
where $\Ac_{n,j}$ is defined by replacing $\UU_{\cdot j}$ in $\Ac_j$ by $\UU_{n,\cdot j}$. The distributional dynamics for \eqref{eq:opca} was derived in \cite{LWLZ17}.

Sparse principal component analysis (SPCA), which sparsifies the coefficients of principal components, has been a very active research area since \cite{JTU03}; see Section \ref{sec:respca} for a brief literature review and algorithms for online SPCA. 
Despite the flurry of recent studies on estimation, the asymptotic distribution of sparse PCA estimators is only addressed in the batch setting \citep{JG18}, whereas in the online setting it is still not well understood. We will fill this gap in this section. Motivated by \cite{SH08,SSM13}, we propose an online SPCA algorithm and estimate the $j$th principal component $\UU_{\cdot j}^*$ as follows
\begin{align}\tag{\texttt{OSPCA}}
	\begin{split}\label{eq:ospca}
	\tilde\UU_{n+1, \cdot j} &= \tilde\UU_{n, \cdot j} + \gamma\Ac_{n,j}X_{n+1}X_{n+1}^\top\UU_{n, \cdot j} \\
	\UU_{n+1, \cdot j} &= \sgn\big(\tilde\UU_{n+1, \cdot j}\big)\cdot\big(\big|\tilde\UU_{n+1, \cdot j}\big|-g(n,\gamma)\big)_+
	\end{split}
\end{align}
where the operators $\sgn(\cdot)$, $|\cdot|$ and $(\cdot)_+$ in the second line of \eqref{eq:ospca} are applied componentwisely. This algorithm is equivalent to performing \eqref{eq:grdal1} with $\bv_{n+1}=\tilde\UU_{n+1, \cdot j}$ and $\bw_{n+1}=\UU_{n+1, \cdot j}$ for each $j=1,2,...,k$.

The following Corollary is an application of Theorem \ref{th:l1dd}. For notational convenience, denote matrices $\tilde\UU_n := [\tilde\UU_{n,\cdot 1},\tilde\UU_{n,\cdot 2},...,\tilde\UU_{n,\cdot k}]$ and $\UU_n := [\UU_{n,\cdot 1},\UU_{n,\cdot 2},...,\UU_{n,\cdot k}]$. 

\begin{cor}[OSPCA]\label{cor:ospca}
	Assume the random vectors $X_n\in\R^d$ are i.i.d. with zero mean and are bounded in each component almost surely with covariance matrix $\Cc$. Suppose that the $k$ largest eigenvalues of $\Cc$ are positive and each of unit multiplicity. Let $\UU_{n, \cdot j}$ be iterates of \eqref{eq:ospca} with orthonormal initializer $\UU_{\cdot j,0}\neq 0$ for $j=1,...,k$. Given that $g(\lfloor\cdot/\gamma\rfloor,\gamma)\in D([0,\infty))$ satisfies \eqref{eq:lamcond} with $\Tc=[0,T]$ for any $T>0$, we have
	\begin{itemize}[itemindent=20pt,leftmargin=0pt]
		\item[(a)] $\UU_{\itg, \cdot j}\weakto\UU_{\cdot j}(t)$ as $\gamma\to 0$ on $D[0,t)^d$ for $j=1,...,k$, where $\UU_{\cdot j}(t)$ satisfies the ODE: 
		\begin{align}
			\begin{split}\label{eq:atospca}
				%\dot\UU_{\cdot 1}&=\big(\Id_d-\UU_{\cdot 1}\UU_{\cdot 1}^\top\big)\Cc\UU_{\cdot 1},\ \UU_{\cdot 1}(0)=\UU_{\cdot 1,0}.\\
				\dot\UU_{\cdot j}&=\Ac_j\Cc\UU_{\cdot j},\  \UU_{\cdot j}(0)=\UU_{\cdot j,0}, \mbox{ }j=1,2,...,k.
			\end{split}			
		\end{align}
		where $\Ac_j$ is defined in \eqref{eq:Acj}. Moreover, there exists a unique global solution of \eqref{eq:atospca}, and the trajectory $\UU_{\cdot j}(t)$ converges to $-\UU_{\cdot j}^*$ or $\UU_{\cdot j}^*$ as $t\to\infty$ if the initial value $\UU_{\cdot j,0}$ is within an attraction region centering around $-\UU_{\cdot j}^*$ or $\UU_{\cdot j}^*$ for $j=1,...,k$.
	\item[(b)] Setting 
	\small{\begin{align}
	\VV_\gamma(t)=\frac{\vect(\tilde \UU_{\itg})-\vect(\UU(t))}{\sqrt{\gamma}},\ \WW_\gamma(t)= \frac{\vect(\UU_{\itg})-\vect(\UU(t))}{\sqrt{\gamma}}. \label{eq:VWpca}
	\end{align}}
	Then the conclusions in Theorem \ref{th:l1dd}(b) hold for $(\VV_\gamma,\WW_\gamma)$, with $\nabla \tilde\Lc^*(t,\bV)$ defined in \eqref{eq:rdasoftthreshold}, 
	and the lower-triangular block matrix $\nabla G(\UU)\in\R^{kd\times kd}$:
	\begin{align}\label{eq:hesspca}
			\nabla G(\UU)=\left[
			\begin{array}{ccccc}
				\nabla G_{11}(\UU) &0 & 0&\cdots & 0\\
				\nabla G_{21}(\UU) &\nabla G_{22}(\UU) &0 &\cdots & 0\\
				\vdots & \vdots & \vdots & \ddots & \vdots \\
				\nabla G_{k1}(\UU) &\nabla G_{k2}(\UU) &\nabla G_{k3}(\UU) &\cdots & \nabla G_{kk}(\UU)\\
			\end{array}
			\right]
	\end{align}
where 
\begin{align*}
	\nabla G_{jj}(\UU) &= -\Cc + \Big(\Id_d \UU_{\cdot j}^\top \Cc \UU_{\cdot j} + 2 \sum_{i=1}^j \UU_{\cdot i} \UU_{\cdot i}^\top \Cc\Big),\quad j=1,...,k,\\
	\nabla G_{jl}(\UU) &= 2 \big(\Id_d \UU_{\cdot l}^\top \Cc \UU_{\cdot j} + \UU_{\cdot l} \UU_{\cdot j}^\top \Cc\big),\quad j=2,...,k, \quad l=1,...,j-1,
\end{align*}
	and the covariance kernel $\Sigma(\UU)\in\R^{kd\times kd}$ of $\MM(t)$:
	\begin{align}\label{eq:covpca}
		\Sigma(\UU)=\left[
			\begin{array}{ccccc}
				\Sigma_{11}(\UU) &\Sigma_{12}(\UU) & \Sigma_{13}(\UU)&\cdots & \Sigma_{1k}(\UU)\\
				\Sigma_{21}(\UU) &\Sigma_{22}(\UU) &\Sigma_{23}(\UU) &\cdots & \Sigma_{2k}(\UU)\\
				\vdots & \vdots & \vdots & \ddots & \vdots \\
				\Sigma_{k1}(\UU) &\Sigma_{k2}(\UU) &\Sigma_{k3}(\UU) &\cdots & \Sigma_{kk}(\UU)\\
			\end{array}
			\right]
	\end{align}
	where for $l,j=1,...,k$, $\Sigma_{lj}(\UU)=\Sigma_{jl}(\UU)^\top$ and
	\begin{align*}
		\Sigma_{jl}(\UU) 
		&=\Ac_j\E\big[\big(XX^\top-\Cc\big)\UU_{\cdot j}\UU_{\cdot l}^\top\big(XX^\top-\Cc\big)\big]\Ac_l,
	\end{align*}
	where $\Ac_j$ is defined in \eqref{eq:Acj}.
	\end{itemize}
\end{cor}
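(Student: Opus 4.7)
My plan is to recognize \eqref{eq:ospca} as a $kd$-dimensional instance of \eqref{eq:grdal1} applied to the stochastic approximation problem \eqref{eq:sa_pca}, then invoke Corollary~\ref{cor:l1at} for part~(a) and Theorem~\ref{th:l1dd}(b) for part~(b). Setting $\bv_n=\vect(\tilde\UU_n)$ and $\bw_n=\vect(\UU_n)$, the ``stochastic gradient'' is $-\vect\big([\Ac_{n,j}X_{n+1}X_{n+1}^\top\UU_{n,\cdot j}]_{j=1}^k\big)$, whose conditional mean is $\vect(G(\UU_n))$ with $G$ from \eqref{eq:sa_pca}. The componentwise soft-thresholding in the second line of \eqref{eq:ospca} coincides with $\nabla\Lc^*_{n,\gamma}$ for $F(\bw)=\tfrac12\|\bw\|_2^2$ and $\Pc(\bw)=\|\bw\|_1$, so that \eqref{eq:ospca} is exactly \eqref{eq:md} in this vectorized parametrization.

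For part~(a) I will verify the hypotheses of Corollary~\ref{cor:l1at}: \ref{as:S} is given; \ref{as:M} follows because $G$ is polynomial in $\UU$ and the a.s.\ boundedness of the entries of $X$ yields \eqref{eq:GL1} uniformly on compact sets; \ref{as:Plim} holds with $g^\dagger(t)\equiv 0$ since \eqref{eq:lamcond} forces $g(\itg,\gamma)=O(\sqrt{\gamma})$. Because $g^\dagger\equiv 0$, the soft-thresholding in \eqref{eq:sop} reduces to the identity and \eqref{eq:weak0vl1} collapses to \eqref{eq:atospca}. To meet the ``bounded in finite time'' requirement I will argue inductively in $j$: the identity $\tfrac{d}{dt}\|\UU_{\cdot 1}\|_2^2=2(1-\|\UU_{\cdot 1}\|_2^2)\,\UU_{\cdot 1}^\top\Cc\UU_{\cdot 1}$ shows that $\|\UU_{\cdot 1}(t)\|_2\to 1$ monotonically, and an analogous calculation using the deflation structure of $\Ac_j$ will bound $\|\UU_{\cdot j}(t)\|_2$ once $\|\UU_{\cdot i}(t)\|_2$ for $i<j$ are under control. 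Unique global existence then follows from local Lipschitz continuity of $G$ combined with this a priori bound, via Remark~\ref{rmk:at}. For convergence of $\UU_{\cdot j}(t)$ to $\pm\UU_{\cdot j}^*$ inside the attraction region, I will use the classical Oja--Karhunen Lyapunov argument: for $j=1$ the function $-\UU_{\cdot 1}^\top\Cc\UU_{\cdot 1}$ is non-increasing along the flow restricted to the unit sphere and has strict minima at $\pm\UU_{\cdot 1}^*$; for $j\ge 2$ I linearize $\Ac_j$ around its limit $\Id_d-2\sum_{i<j}\UU_{\cdot i}^*\UU_{\cdot i}^{*\top}$, use the spectral gap of $\Cc$ (guaranteed by the unit multiplicity assumption on the top $k$ eigenvalues) to obtain exponential convergence of the earlier components, and propagate to the $j$-th component by a perturbation argument.

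For part~(b) I will apply Theorem~\ref{th:l1dd}(b) on an interval $\tilde\Tc\subset[T_0,\infty)$ chosen so that every coordinate of $\UU_{\cdot j}(t)$ has either attained its limiting sign or stays at zero (the zero case is covered by the third branch of \eqref{eq:rdasoftthreshold}). The block form of $\nabla G$ in \eqref{eq:hesspca} follows by direct matrix differentiation of $G_j=-\Cc\UU_{\cdot j}+\UU_{\cdot j}(\UU_{\cdot j}^\top\Cc\UU_{\cdot j})+2\sum_{i<j}\UU_{\cdot i}(\UU_{\cdot i}^\top\Cc\UU_{\cdot j})$: differentiating with respect to $\UU_{\cdot j}$ gives $-\Cc+\Id_d(\UU_{\cdot j}^\top\Cc\UU_{\cdot j})+2\sum_{i=1}^j\UU_{\cdot i}\UU_{\cdot i}^\top\Cc$; differentiating with respect to $\UU_{\cdot l}$ for $l<j$ gives $2\bigl(\Id_d(\UU_{\cdot l}^\top\Cc\UU_{\cdot j})+\UU_{\cdot l}\UU_{\cdot j}^\top\Cc\bigr)$; and the upper-triangular blocks vanish because $G_j$ is independent of $\UU_{\cdot l}$ for $l>j$. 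The covariance \eqref{eq:covpca} comes from computing the conditional variance of the innovation $-\Ac_j(XX^\top-\Cc)\UU_{\cdot j}$: taking outer products across indices $j,l$ and using symmetry of $\Ac_l$ yields the stated $\Sigma_{jl}$. Continuity of $\Sigma$ on compact sets, required by \ref{as:L}, is inherited from the a.s.\ boundedness of $X$, which delivers uniform fourth-moment control.

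The main obstacle will be the dynamical-systems portion of (a): upgrading the qualitative Oja--Karhunen convergence to an exponential rate with explicit dependence on the spectral gap of $\Cc$, so that the deflation coupling between $\UU_{\cdot j}$ and $(\UU_{\cdot 1},\ldots,\UU_{\cdot j-1})$ can be closed inductively through all $k$ levels. A secondary obstacle is the choice of $T_0$ in (b): I will take it large enough that all active coordinates of every $\UU_{\cdot j}(t)$ have attained their definite signs on $\tilde\Tc$, after which the inactive coordinates are handled automatically by the third case of \eqref{eq:rdasoftthreshold} and Theorem~\ref{th:l1dd}(b) delivers the stated distributional limit.
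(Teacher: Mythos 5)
Your high-level plan for both parts matches the paper's: vectorize \eqref{eq:ospca} into the \eqref{eq:grdal1}/\eqref{eq:md} framework, check \ref{as:S}, \ref{as:M}, \ref{as:Plim}, \ref{as:L}, apply Corollary~\ref{cor:l1at} and Theorem~\ref{th:l1dd}(b), and compute $\nabla G$ and $\Sigma$ by direct matrix differentiation; your derivations of \eqref{eq:hesspca} and \eqref{eq:covpca} are the same computations the paper makes and are fine.

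Where you diverge is the dynamical-systems content of part (a). The paper does not re-derive the a priori bound, global existence, or the attraction-region statement from first principles; it cites Lemma~4 of \cite{OK85} (verifying condition A5 there via the unit-multiplicity hypothesis on the top $k$ eigenvalues of $\Cc$) for the existence of the attraction region and convergence to $\pm\UU^*_{\cdot j}$, and Theorem~2.1 of \cite{P95} for the a priori bound on $\UU(t)^\top\UU(t)$ yielding non-explosion and hence global existence, combining the latter with Picard--Lindel\"of for local uniqueness. You propose to re-derive all of this. Your observation $\tfrac{d}{dt}\|\UU_{\cdot 1}\|_2^2=2(1-\|\UU_{\cdot 1}\|_2^2)\,\UU_{\cdot 1}^\top\Cc\UU_{\cdot 1}$ is correct and does give the a priori bound for $j=1$, but the analogous computation for $j\ge 2$ is
\begin{align*}
\tfrac{d}{dt}\|\UU_{\cdot j}\|_2^2
= 2(1-\|\UU_{\cdot j}\|_2^2)\,\UU_{\cdot j}^\top\Cc\UU_{\cdot j}
- 4\sum_{i<j}(\UU_{\cdot j}^\top\UU_{\cdot i})(\UU_{\cdot i}^\top\Cc\UU_{\cdot j}),
\end{align*}
and the deflation cross-terms are not sign-definite, so the ``analogous calculation will bound $\|\UU_{\cdot j}(t)\|_2$'' step does not go through as stated without a genuine argument controlling the orthogonality defects $\UU_{\cdot j}^\top\UU_{\cdot i}$ in finite time. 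Similarly, your sketch of the Oja--Karhunen Lyapunov argument and the ensuing induction with a perturbation/linearization step for $j\ge 2$ is, as you yourself note, the main obstacle — you are effectively re-proving the content of \cite{OK85} Lemma~4 and \cite{P95} Theorem~2.1. The corollary is meant to be a thin wrapper around those results, not a new proof of them, so the self-contained route you propose would require substantial additional work to close the gaps you flag. Your explicit choice of a sign-stable interval $[T_0,\infty)$ before invoking Theorem~\ref{th:l1dd}(b) is a sensible clarification that the paper leaves implicit.
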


See Section \ref{sec:pfospca} for a proof of Corollary \ref{cor:ospca}. 
Corollary \ref{cor:ospca}(a) does not provide the size of the region of attraction. \cite{P95} considered the ODE of an orthogonal projection matrix $\PP=\UU(\UU^\top\UU)^{-1}\UU^\top$, where $\UU$ is the solution of \eqref{eq:atospca}, and showed that the region of attraction is {\em almost everywhere} on $\R^d$. Therefore, we conjecture the region of attraction in Corollary \ref{cor:ospca}(a) to be $\R^d$. In the simulation study (Section \ref{sec:simpca}), the initial $\UU_{0,\cdot j}$, $j=1,2$ are selected randomly, and no convergence issue is witnessed. %, as also witnessed in the implementation of \eqref{eq:opca} \citep{OK85}. 
%The sign stability of the asymptotic trajectory still needs to be verified be. 
See Figure \ref{fig:pca_ode} in Section \ref{sec:ode_ospca} of the supplementary material for an illustration of the mean dynamics in \eqref{eq:atospca}, and the sign stable regions.

\section{Simulation analysis}\label{sec:sim}
In this section, we validate the relevance of theory in Sections \ref{sec:lrl1} and Section \ref{sec:pca} for non-infinitesimal step size with synthetic data, by setting
\begin{align}
	g(n,\gamma)=\gamma^{1/2+\mu}n^{\mu}  \label{eq:lamsim}
\end{align}
where $\mu>0$, and $n=1,2,3,...$. This is motivated by \eqref{eq:lrlamlong} with $t_0=0$, as $t_0$ can be taken arbitrarily small in Theorem \ref{th:lrsu}. Section \ref{sec:simlr} concerns the linear regression model in Section \ref{sec:lrl1}, and Section \ref{sec:simpca} discusses the sparse online principal component analysis in Section \ref{sec:pca}.

\subsection{Online sparse linear regression}\label{sec:simlr}
We consider linear regression model \eqref{eq:modrdal1} in Section \ref{sec:lrl1} with dimension $d=100$. Specifically, the $(i, j)$-th element of $H$ is $H_{ij}=(-0.5)^{|i-j|}$ for any $1\leq i,j\leq d$. For the true coefficient $\bw^*$, the support of the active coordinates is selected randomly with size 30. The values of active coefficients are chosen randomly from an independent standard Gaussian distribution. Algorithm \eqref{eq:grdal1} is studied with initial value $\bw_0=0$.

The mean dynamics is computed from \eqref{eq:solatrda1}: 
\begin{align}
	\bw(t)=(w_1(t),w_2(t),...,w_d(t))^\top = e^{-H t}\bw_0 + (I_d-e^{-H t})\bw^*,
\end{align}
which is not globally sign stable (Definition \ref{def:sist}). The 95\% confidence bands are computed by first simulating 500 trajectories of $\VV(t)$ with the Euler-Maruyama method \citep{KP92} with $\Delta t=0.1$. The covariance kernel in \eqref{eq:covrdadd} is computed with a data set of size 5000 simulated independently from the data simulated for computing $\VV(t)$. Next, $\WW(t)$ is computed from $\VV(t)$ by \eqref{eq:rdaW}. Finally, quantiles $Q_{0.025,j}(t)$ and $Q_{0.975,j}(t)$ were taken from the 500 trajectories of $\WW(t)$, for each $t$. This procedure yields the 95\% confidence band:
\begin{align}
	\big[w_j(t)+\sqrt{\gamma}Q_{0.025,j}(t),w_j(t)+\sqrt{\gamma}Q_{0.975,j}(t)\big]\quad j=1,2,...,d.\tag{TACB}\label{eq:TACB}
\end{align}
This will be referred to as the theoretical asymptotic confidence band (TACB) subsequently. In all figures below, $t=n\gamma$.% Note that TACA have not accounted for the bias that occurs for $\mu\geq 1$ implied by Theorem \ref{th:lrsu}.
%The $\Delta t$ is relatively large here to reduce the computational cost.

First, we want to check what if the sign stability condition in Theorem \ref{th:l1dd} does not hold, by examining coefficients $j=1,54$ that are not sign stable in $(0,20)$. As shown in Figure \ref{fig:LRpath_nsta}, the TACB are not smooth, and a {jump} is observed in the neighborhood of time $t=t^\circ$, where $w_j(t^\circ)=0$, as can be seen in the second row panels of Figure \ref{fig:LRpath_nsta}. The jump is necessary for maintaining good coverage of the TACB for empirical trajectories after time $t^\circ$, as the empirical trajectories linger at 0 around time $t^\circ$. Such a phenomenon is more obvious when $\mu=0.7$ than when $\mu=0.4$. By contrast, the confidence band for SGD is everywhere continuous as shown in the first row panels of Figure \ref{fig:LRpath_nsta}. Note that for all algorithms at small $t$ (particularly between $t=0$ and $t=3$), the TACBs have larger dispersion, due to large covariance kernel \eqref{eq:covrdadd} resulted from large $\|\bw(t)-\bw^*\|_2$.

	\begin{figure}[!h] 
		\centering
	\includegraphics[scale=0.22]{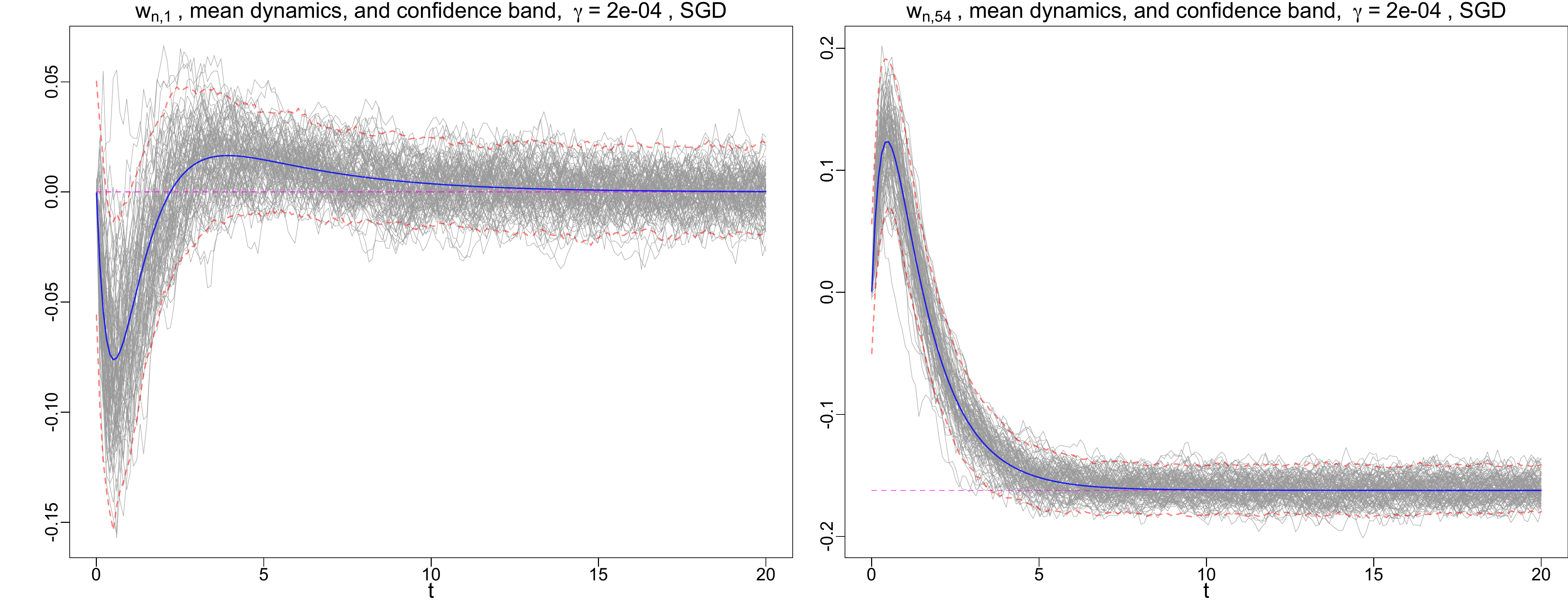}\\	
 \includegraphics[scale=0.22]{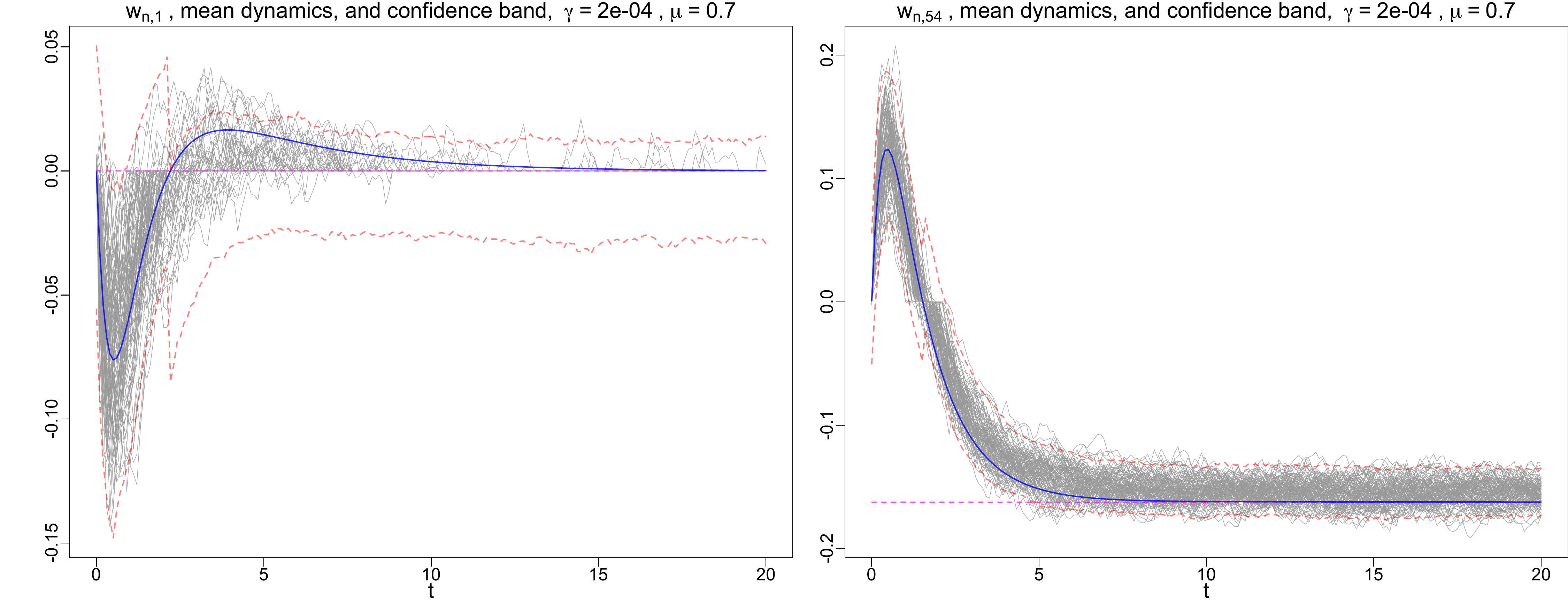}\\
 \caption{{Sign unstable coefficients in $(0,20)$:} 100 empirical trajectories $w_{n,j}$ (gray curves), $j=1,54$ mean dynamics (blue curve, \eqref{eq:solatrda1}) and confidence band (area bounded between two red dashed curves) for $w_1^*=0$ and $w_{54}^*=-0.1623$ (magenta dashed lines), under algorithms \eqref{eq:sgd} and \eqref{eq:grdal1} with $g(n,\gamma)=\gamma^{1/2+\mu}n^{\mu}$ with $\mu=0.7$, initiated at $\bw_0=0$. The number of steps is $N=20/\gamma$. The confidence band for \eqref{eq:solatrda1} has a jump in the neighborhood of where the mean trajectory takes zero.}\label{fig:LRpath_nsta}
\vspace{-0.3cm}
\end{figure}

To see the performance of our weak approximation result for different $\mu$ when step size $\gamma$ is non-infinitesimal, the left panel of Figure \ref{fig:cover_lr} shows the averaged coverage probabilities of \eqref{eq:TACB} over active coefficients at $\gamma=2\times 10^{-4}$. The coverage probabilities are close to the nominal level 95\% as $\mu<1$. For some $t$ (e.g. around $t=5$) the coverage probabilities of \eqref{eq:grdal1} slightly deviate from the nominal level, but they return to the nominal level as $t$ increases. For $\mu\geq 1$, the averaged coverage probabilities deviate from the nominal level after $t=9$, and never returns. The coverage probabilities improve when shrinking the step size; see additional simulation analysis in Figure \ref{fig:LRpath_mu15} and Figure \ref{fig:LRcover15_varyg} in Section \ref{sec:addsimlr}. 

	\begin{figure}[!h] 
		\centering
	\includegraphics[scale=0.3]{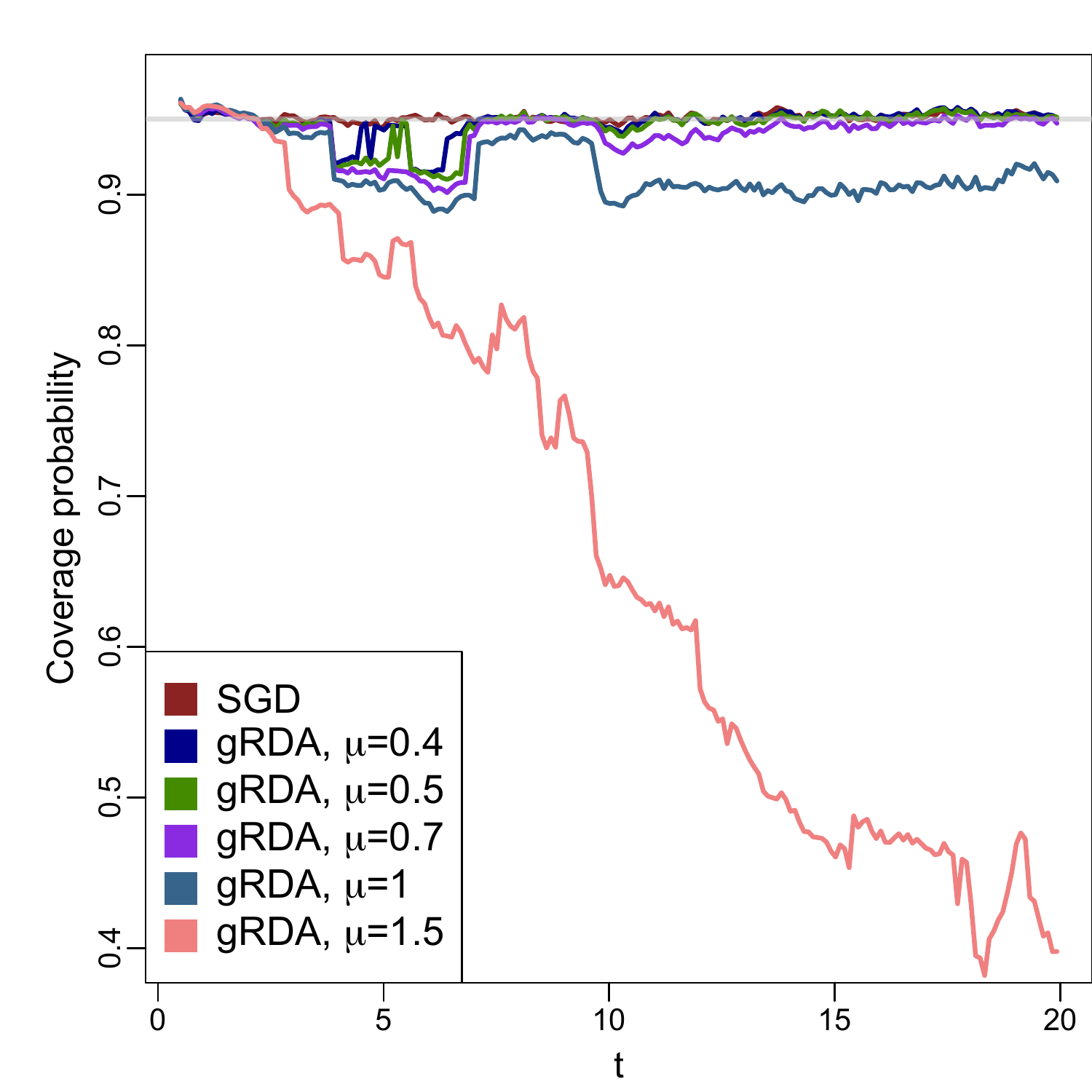}
	\includegraphics[scale=0.3]{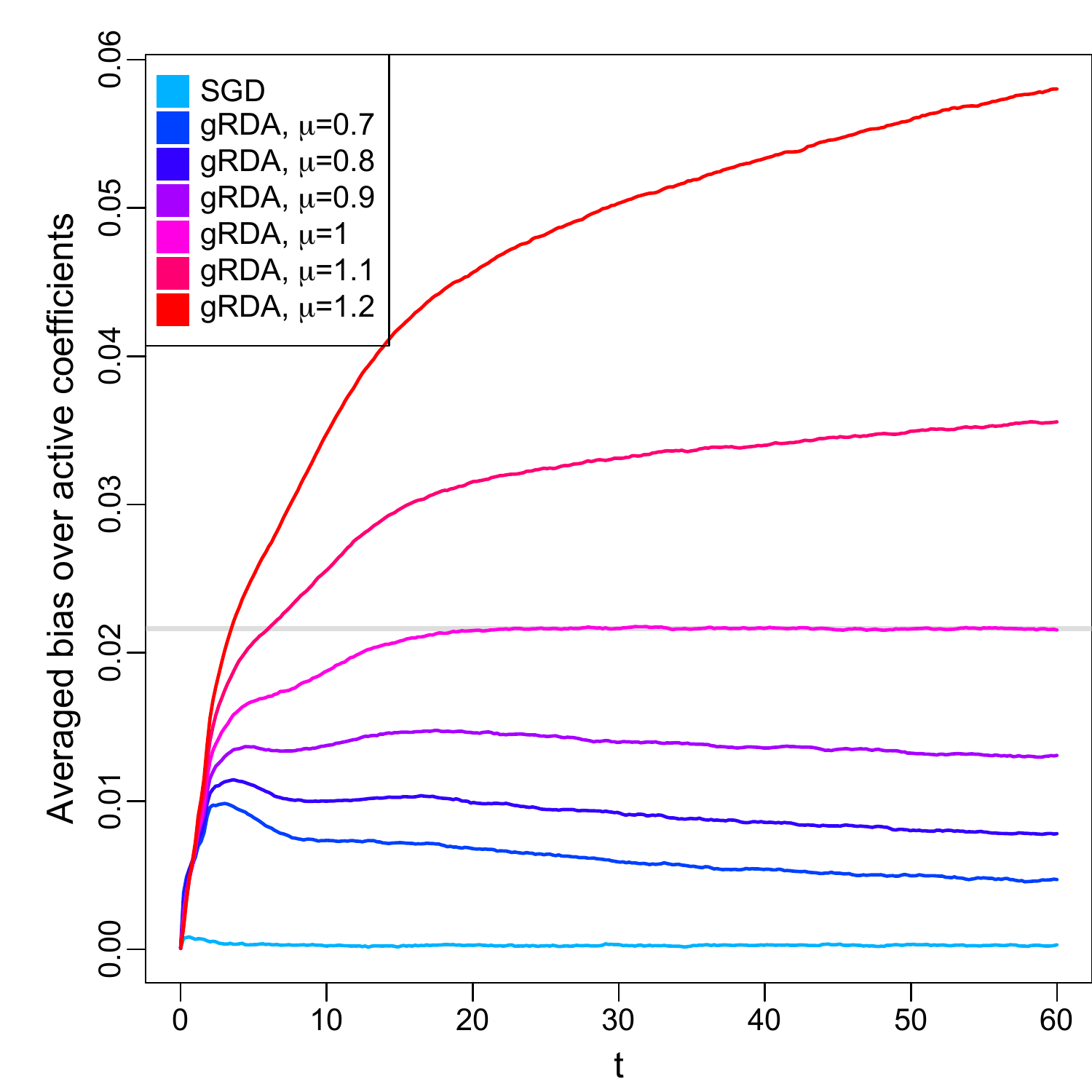}
	\caption{Left: The averaged coverage probability of \eqref{eq:TACB} over active coefficients. The gray line is the nominal level 95\%. Right: The averaged bias of \eqref{eq:TACB} over active coefficients, i.e. $30^{-1}\sum_{j\in\{k:w_k^*\neq 0\}}|\hat\E[w_{\itg,j}]-w_j(t)|$, where $\hat \E$ is computed based on 1000 simulations. The gray horizontal line is the long-term bias for \eqref{eq:grdal1} with $\mu=1$. Results are averages of 1000 simulation repetitions. Step size $\gamma=2\times 10^{-4}$.}\label{fig:cover_lr}
	\vspace{-0.3cm}
	\end{figure}

In Figure \ref{fig:LRpath_nsta}, the TACBs are asymmetric around $w_j(t)$ for $j\in\{k:w_k^*\neq 0\}$. This indicates the presence of bias in $\bw_n$. To better examine bias, the right panel of Figure \ref{fig:cover_lr} shows the averaged absolute bias of $w_{\itg,j}$ to $w_j(t)$ over the active set $j\in\{k:w_k^*\neq 0\}$. For $\mu<1$, the decreasing trend of the curves supports the bias order of $t^{\mu-1}$ in \eqref{eq:hj} for $t\geq 10$, even if $H$ is not diagonal here. The bias stays at a constant level for $\mu= 1$ and explodes for $\mu=1$ at large $t$, which also supports our finding in \eqref{eq:hj}. Therefore, we recommend to implement \eqref{eq:grdal1} with tuning function \eqref{eq:lamsim} and set $\mu<1$ therein. However, we cannot explain the increase of bias before $t\leq 5$ for $\mu\leq 1$, which is related to the non-orthogonality of $H$. We conjecture that the results in \eqref{eq:hj} hold for general $H$ only when $t$ is large enough.

	\begin{figure}[!h] 
		\centering
 \includegraphics[scale=0.3]{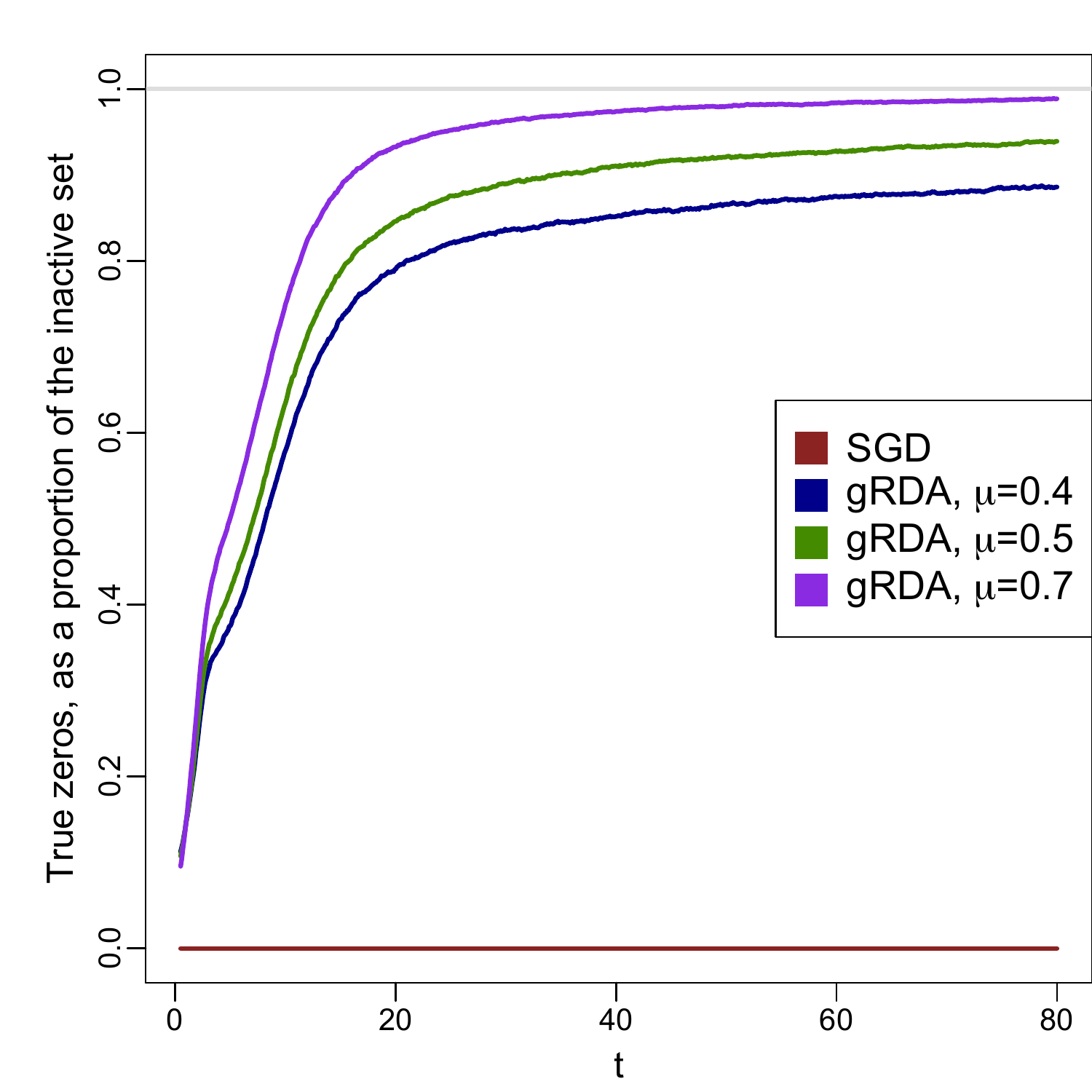}
 \includegraphics[scale=0.3]{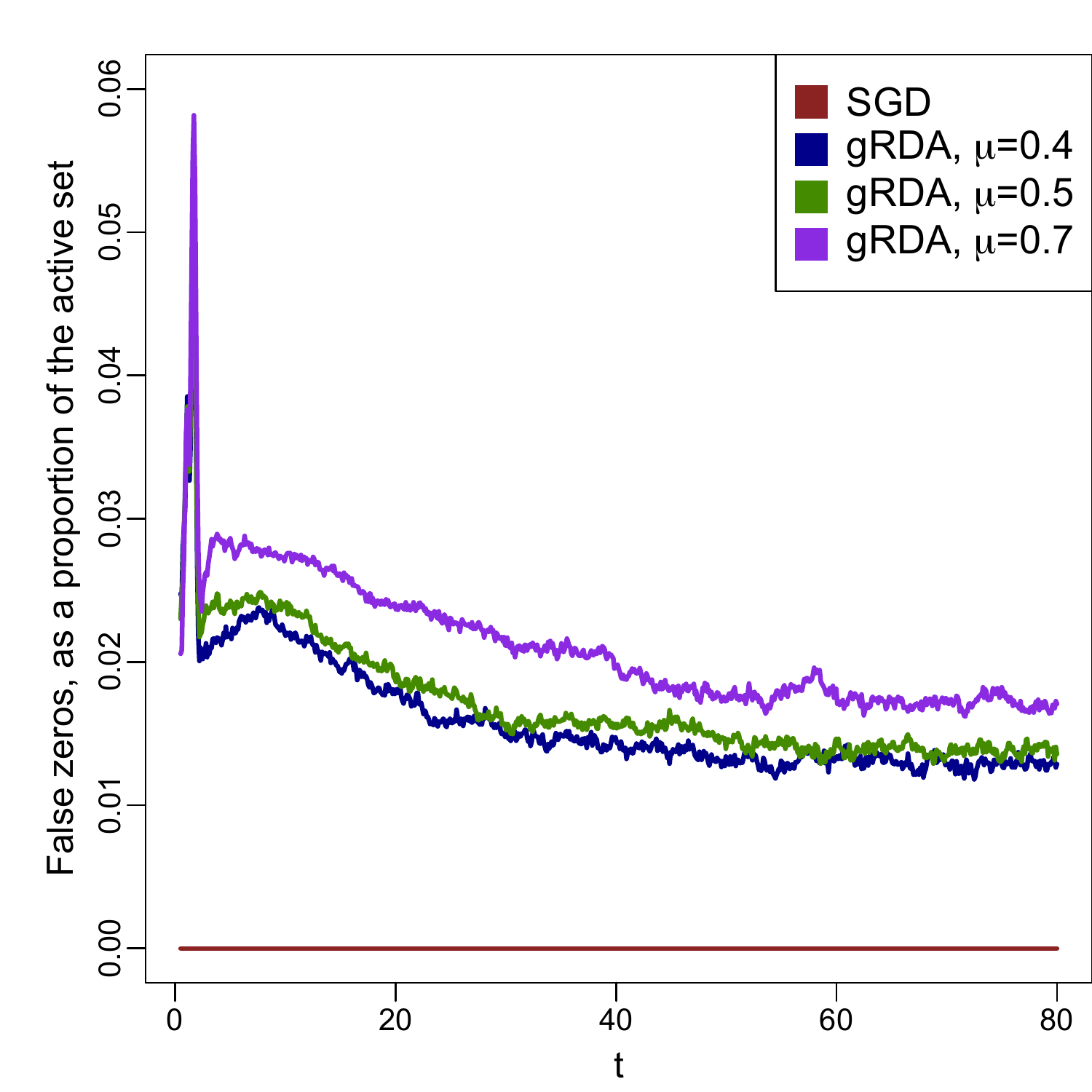}
 \caption{Left: the number of true zeros, or the cardinality $|\{j:w_{\itg,j}=0,w_j^*=0\}|$, as a proportion of the cardinality of inactive set $|\{j:w_j^*=0\}|$. Right: the number of false zeros, or the cardinality $|\{j:w_{\itg,j}=0,w_j^*\neq 0\}|$, as a proportion of the cardinality of active set $|\{j:w_j^*\neq 0\}|$. Step size $\gamma=2\times 10^{-4}$.}\label{fig:support_lr}
\vspace{-0.3cm}
\end{figure}

Support recovery performance is shown in Figure \ref{fig:support_lr}. The left panel of Figure \ref{fig:support_lr} presents the identified true zeros as a proportion of the cardinality of the inactive set $\{j:w_j^*= 0\}$, in which \eqref{eq:grdal1} with $\mu=0.7$ performs the best. \eqref{eq:sgd} cannot generate zero coefficients so its curve stays at 0. The right panel of Figure \ref{fig:support_lr} focuses on the identified false zeros as a proportion of the cardinality of the active set $\{j:w_j^*\neq 0\}$. The performance improve as $t$ increases. For the left and the right panel of Figure \ref{fig:support_lr}, results for $\mu=1$ and 1.5 are similar to those of $\mu=0.7$ and are omitted. Observations here on Figure \ref{fig:support_lr} suggest the Theorem \ref{th:lrsu} holds even under non-orthogonal $H$ and $t_0=0$ in \eqref{eq:lrlamlong}.

For additional simulation analysis on the sign stable coefficients and the effect of step size $\gamma$, see Section \ref{sec:addsimlr}.
%this simulation study shows that the asymptotic mean trajectory and the central limiting SDE in Corollary \ref{cor:lrrdaat} are valid for linear regression model, when the step size $\gamma$ is sufficiently small. Moreover, the conclusions of Theorem \ref{th:lrsu} appear to be valid even when the Hessian $H$ is not orthogonal and $t_0=0$ in \eqref{eq:lrlamlong}.%, so there may be room to improve the current theory.

%\skc{\bf Comment on step size: results are sensitive to the step size. there is a sweet spot.}
	
	% \begin{figure}[!h]
	% 	\centering
	% \includegraphics[scale=0.355]{LR_bias.pdf}
	% \caption{The averaged bias of \eqref{eq:TACB} over active coefficients, i.e. $30^{-1}\sum_{j\in\{k:w_k^*\neq 0\}}|\hat\E[w_{\itg,j}]-w_j(t)|$, where $\hat \E$ is computed based on 1000 simulations. The gray horizontal line is the long-term bias for \eqref{eq:grdal1} with $\mu=1$. Step size $\gamma=2\times 10^{-4}$.}\label{fig:bias_lr}
	% \vspace{-0.3cm}
	% \end{figure}
		
\subsection{Online sparse PCA}\label{sec:simpca}

Now we turn to the sparse PCA problem in Section \ref{sec:pca}, by focusing on the top principal component, i.e. $k=1$. Results for the second principal component are in Section \ref{sec:addsimpca}. Consider i.i.d. random vectors $X_n\sim\Nc(0,\Cc)$ in $\R^d$ with $d=100$, where the covariance matrix 
$$
\Cc = 2*\UU_{\cdot 1}^*\UU_{\cdot 1}^{*\top}+\UU_{\cdot 2}^*\UU_{\cdot 2}^{*\top}+\Id_d.$$ 
This setting of covariance matrix is adopted from Section 4.3 of \cite{GWS18}. The solution of the mean ODE in \eqref{eq:atospca} has no closed form, and is computed by the numerical ODE solver \texttt{ode45} in \texttt{Matlab}. A known problem of this solver is that it introduces artificial but small oscillations to the solution; see \cite{SB08}. In our case, oscillations are observed for $\UU_{\cdot 1}(t)$ after $\UU_{\cdot 1}(t)$ has converged to $\UU_{\cdot 1}^*$. Therefore, as a remedy, we replace the \texttt{ode45} solution of $\UU_{\cdot 1}(t)$ with $\UU_{\cdot 1}^*$ after $t=12$, and treat this modified solution as $\UU_{\cdot 1}(t)$. %The solver \texttt{ode45} automatically selects the time grid based on the derivative of the ODE. 
Simulations on empirical trajectories of algorithms \eqref{eq:opca} and \eqref{eq:ospca} were repeated 1000 times with $g(n,\gamma)$ in \eqref{eq:lamsim}.%, all starting at a random initialization on the unit sphere $\Sc^{d-1}$.

Procedures for computing quantiles in the 95\% \eqref{eq:TACB} is similar as those in Section \ref{sec:simlr}, except that here we use $\nabla G(\UU)$ in \eqref{eq:hesspca} and covariance kernel in \eqref{eq:covpca}, and the time grid is selected by \texttt{ode45} in \texttt{Matlab}.

% To compute the quantiles in the 95\% TACB defined as \eqref{eq:TACB}, we first simulate 500 trajectories of $\VV(t)$ in \eqref{eq:VWpca} using the Euler-Maruyama method \citep{KP92} with the $\nabla G(\UU)$ in \eqref{eq:hesspca} and covariance kernel in \eqref{eq:covpca}, and $\Delta t$ is the distance of time grid automatically selected by \texttt{ode45} in \texttt{Matlab}. The covariance kernel in \eqref{eq:covpca} is computed by an independently simulated data set of $X$ with size 5000. Next, we transform $\VV(t)$ to get the trajectories for $\WW(t)$ by \eqref{eq:rdaW}. Finally, we take the 2.5\% and 97.5\% quantiles of the 500 trajectories of $\WW(t)$ at each $t$ in a discretized time point.

\begin{figure}[!h]
	\centering
	 \includegraphics[scale=0.22]{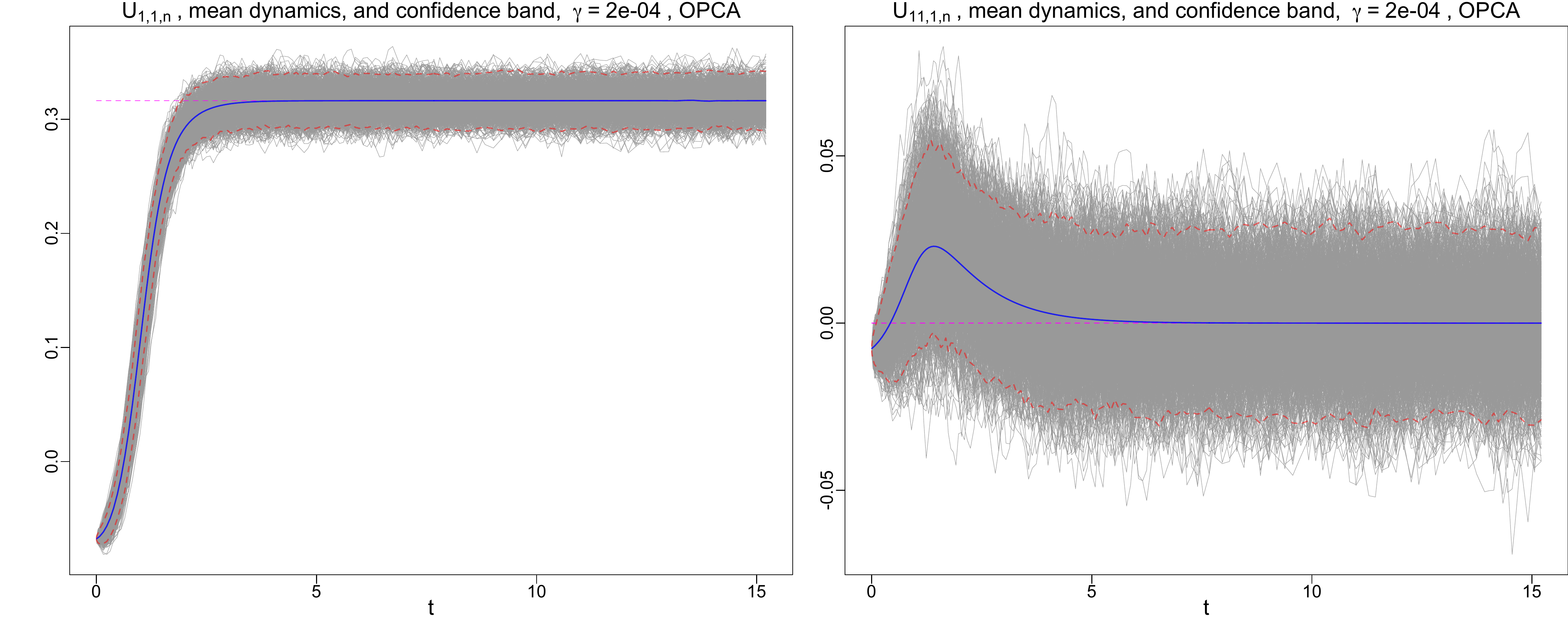}\\
	 \includegraphics[scale=0.22]{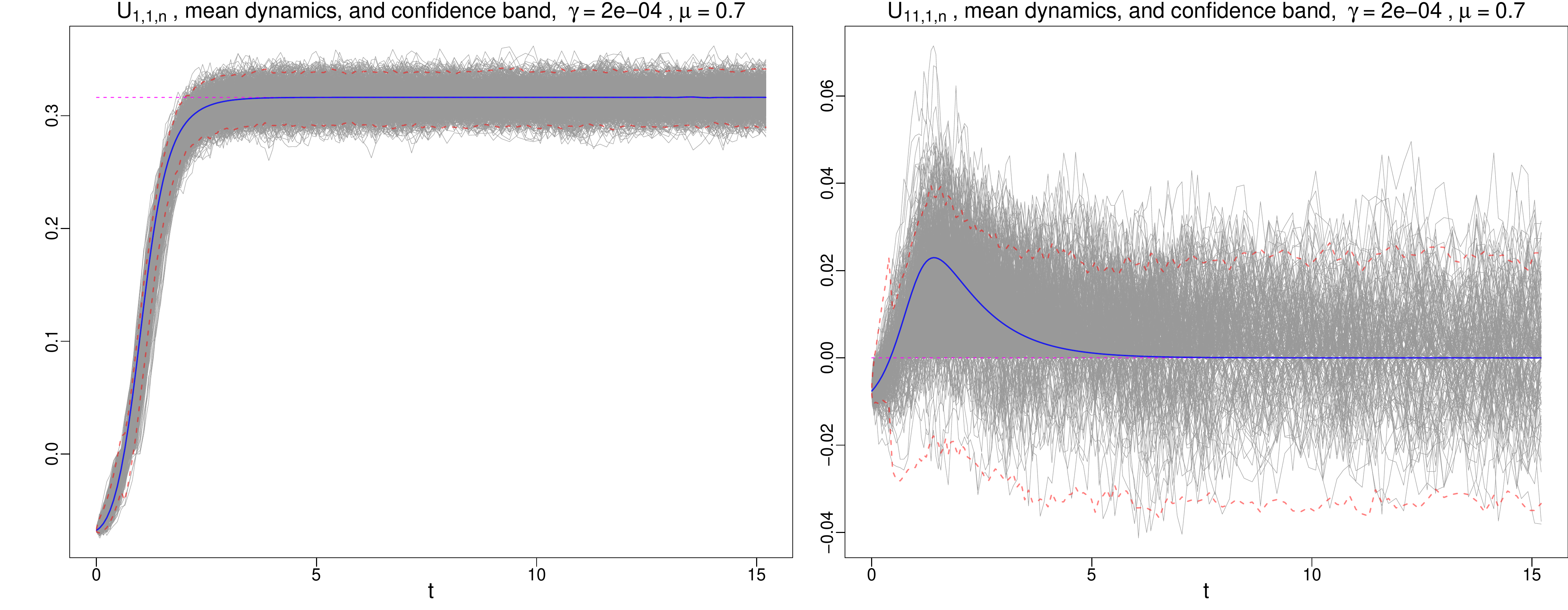}\\
	 \caption{{First} PC: 1000 empirical trajectories (gray curves), mean dynamics (blue curve) and confidence band (area bounded between two red dashed curves) for $\UU_{1,1}^*=10^{-1/2}$ and $\UU_{11,1}^*=0$ (magenta dashed lines). Empirical trajectories are computed from algorithms \eqref{eq:opca} and \eqref{eq:ospca} with random initiation on the unit sphere. The tuning function $g(n,\gamma)$ in \eqref{eq:lamsim} of \eqref{eq:ospca} depends on $\mu$. The number of steps is $N=15/\gamma$ with $\gamma=2\times 10^{-4}$.}\label{fig:ospca1}
\end{figure}

Figure \ref{fig:ospca1} focuses on an active coordinate $j=1$ with $\UU_{1,1}^*=10^{-1/2}$ and an inactive coordinate $j=11$ in the first principal component $\UU_{\cdot 1}^*$. Jumps in the TACB around the times $t$ where the mean trajectories $U_{1,1}(t)=0$ and $U_{11,1}(t)=0$ are observed. From the left column panels of Figure \ref{fig:ospca1} corresponding to $j=1$, the classical \eqref{eq:opca}, i.e., SGD, does not stuck at 0, while \eqref{eq:ospca} got stuck at 0 before continuing to increase. %However, interestingly, when $\mu=1$, \eqref{eq:ospca} escapes the saddle point faster than when $\mu=0.7$. 
For the right column panels of Figure \ref{fig:ospca1} corresponding to $j=11$, trajectories of \eqref{eq:opca} are non-sparse, while \eqref{eq:ospca} has sparse trajectories. 

	\begin{figure}[!h]
		\centering
	\includegraphics[scale=0.3]{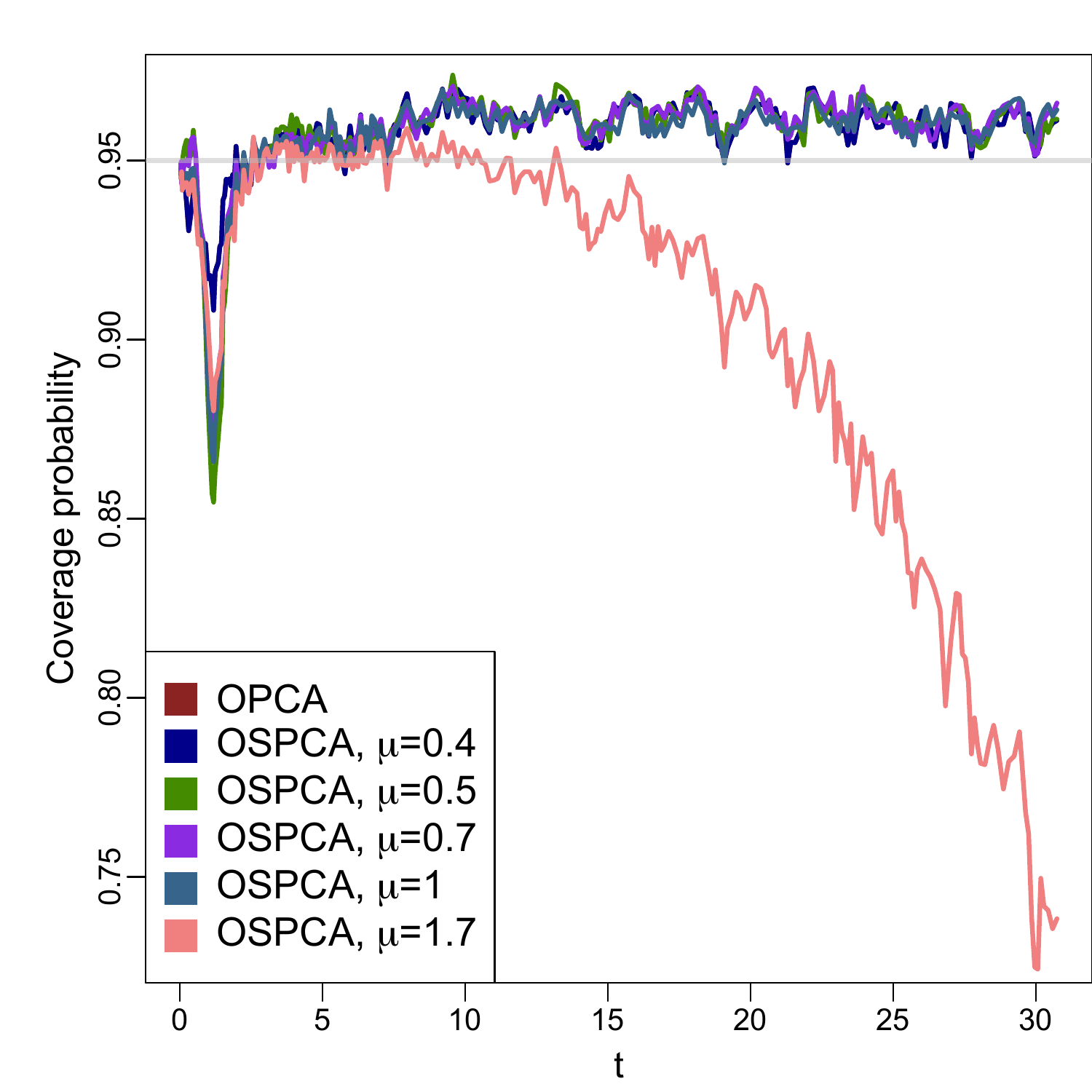}
	\includegraphics[scale=0.3]{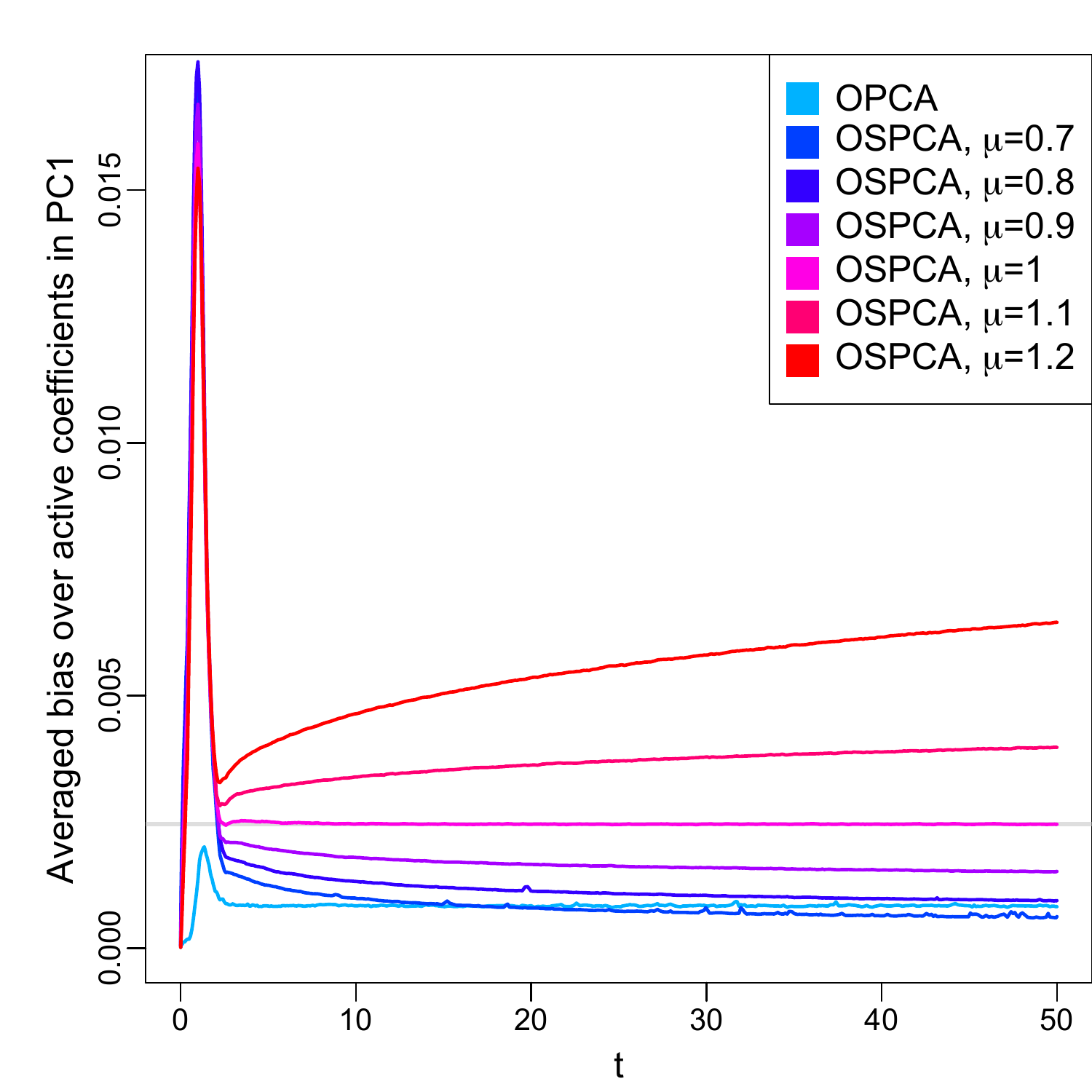}
	\caption{Left: The averaged coverage probability of \eqref{eq:TACB} over active coefficients in the first principal component. The tuning function $g(n,\gamma)$ in \eqref{eq:lamsim} of \eqref{eq:ospca} depends on $\mu$. \eqref{eq:opca} performs similarly as \eqref{eq:ospca} with $\mu=0.4$, so the curve for \eqref{eq:ospca} is covered by that of \eqref{eq:ospca} with $\mu=0.4$. The gray line is the nominal level 95\%. Right: The averaged bias of \eqref{eq:TACB} over active coefficients in the first principal component, i.e. $10^{-1}\sum_{j=1}^{10}|\hat\E[\UU_{\itg,j,1}]-\UU_{j,1}(t)|$, where $\hat \E$ is computed based on 1000 simulations. The gray horizontal line is the long-term bias for \eqref{eq:ospca} with $\mu=1$. Results are averages of 1000 simulation repetitions. Step size $\gamma=2\times 10^{-4}$.}\label{fig:cover_pca}
	\end{figure}

To see the performance of our weak approximation result under different $\mu$ when step size $\gamma$ is non-infinitesimal, Figure \ref{fig:cover_pca} presents the averaged coverage probability over active coefficients with $\gamma=2\times 10^{-4}$. \eqref{eq:TACB} does not cover the empirical trajectories well around $t=2$ for all algorithms, which is the time when empirical trajectories move quickly away from the initializer towards the minimizer. The coverage probability of \eqref{eq:ospca} with $\mu=1.7$ performs poorly as $t$ increases at step size $\gamma = 2\times 10^{-4}$, but it improves once the step size $\gamma$ decreases; see Figure \ref{fig:OSPCAcover17_varyg} in Section \ref{sec:addsimpca}. Another notable phenomenon is the cyclic pattern in coverage probability for \eqref{eq:opca} and \eqref{eq:ospca} with $\mu\leq 1$ for $t\geq 15$. Because the variation in \eqref{eq:TACB} is pretty small after, e.g. $t=5$ (see Figure \ref{fig:ospca1}), the cyclic pattern in the coverage probability must result from the cyclic behavior of iterates $\UU_{n,\cdot 1}$. Recently, \cite{CS18} suggest that the long-term behavior of \eqref{eq:sgd} could resemble closed loops with deterministic component. The long-term behavior of \eqref{eq:opca}, which is essentially \eqref{eq:sgd}, appears to support the finding of \cite{CS18}, and \eqref{eq:ospca} also has the cyclic pattern with the same deterministic component as \eqref{eq:opca}. This phenomenon may be interesting for future study. 

In the panels of Figure \ref{fig:ospca1} associated with \eqref{eq:ospca}, asymmetry of the TACB around $\bw(t)$ is observed. This is due to the bias in the SDE in Corollary \ref{cor:ospca}(b). Even though it is not proven explicitly as in the linear regression case in Theorem \ref{th:lrsu}, right panel of Figure \ref{fig:cover_pca} suggests that except for $t\leq 3$ where all algorithms have large bias, bias decreases as $\mu<1$, holds fixed as $\mu=1$ and explodes as $\mu>1$ as $t\to\infty$. This implies that $\bw_n$ is inaccurate for $\bw^*$ as $t\to\infty$ if $\mu$ is larger. This supports that the tuning function $g(n,\gamma)$ in \eqref{eq:lamsim} with $\mu<1$ should be a universal recipe, and applies to tasks beyond linear regression. Another notable observation is that the bias of \eqref{eq:opca} decreases slower than \eqref{eq:ospca} with $\mu=0.7$. This may be because the accuracy of \eqref{eq:opca} for active coefficients are negatively impacted by inactive coefficients which are not zeroed out with \eqref{eq:opca}.

	\begin{figure}[!h] 
		\centering
 \includegraphics[scale=0.3]{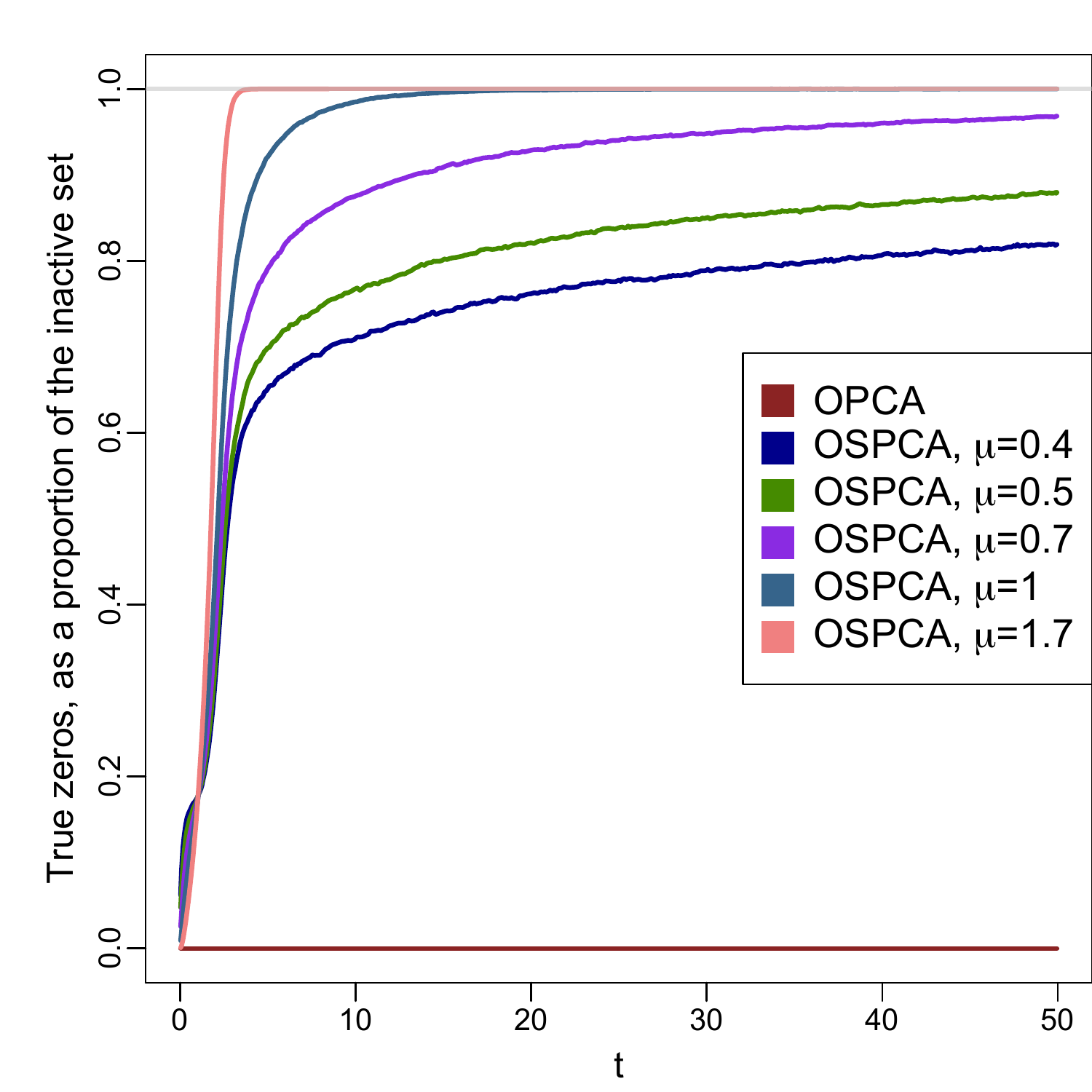}
 \includegraphics[scale=0.3]{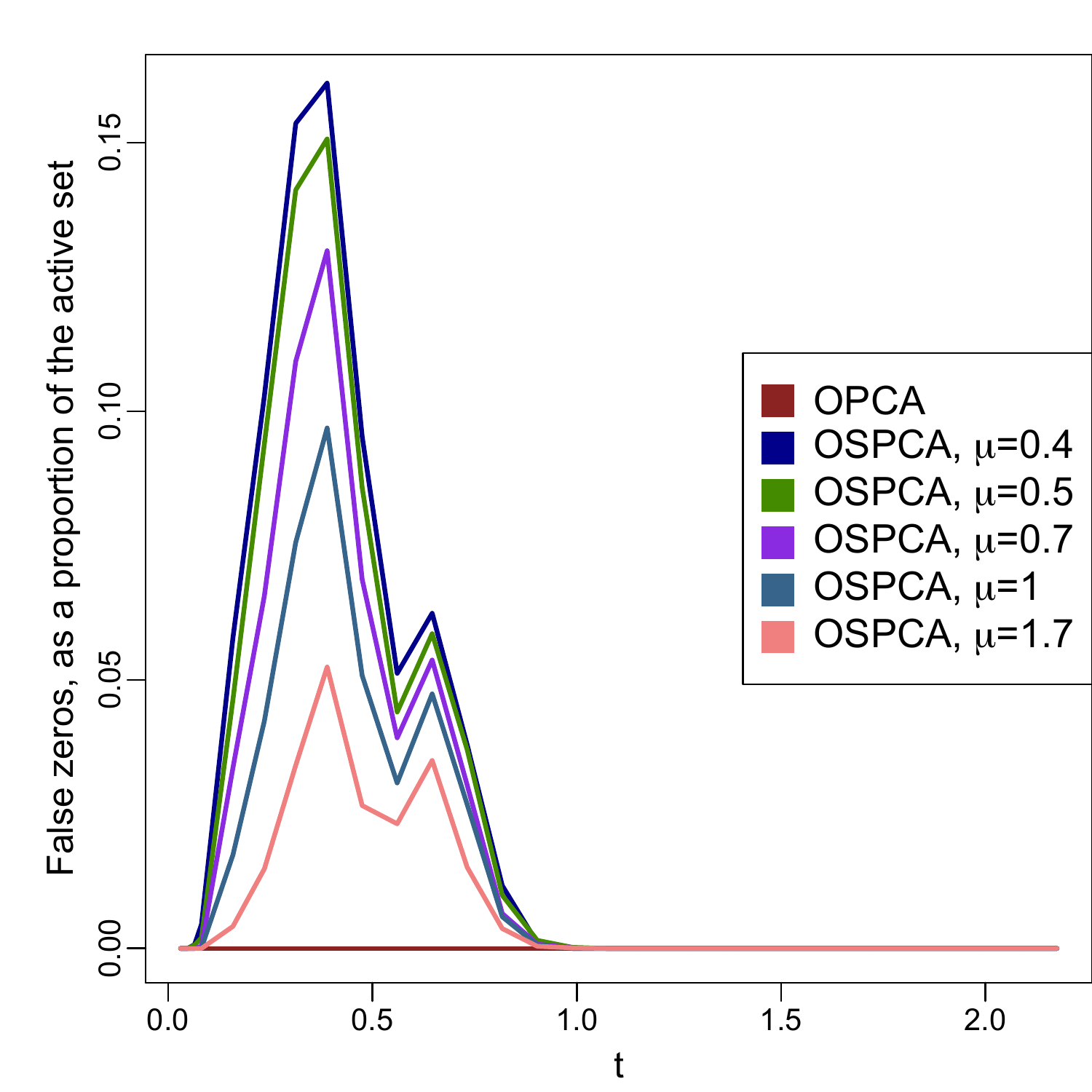}
 \caption{Support recovery performance of \eqref{eq:ospca}. Left: the number of true zeros, or the cardinality $|\{j:\UU_{\itg,j1}=0,\UU_{j1}^*=0\}|$, as a proportion of the cardinality of inactive set $|\{j:\UU_{j1}^*=0\}|$. Right: the number of false zeros, or the cardinality $|\{j:\UU_{\itg,j1}=0,\UU_{j1}^*\neq 0\}|$, as a proportion of the cardinality of active set $|\{j:\UU_{j1}^*\neq 0\}|$. The proportion of false zeros stays at 0 for time $t\geq 1$. Results are the average of 1000 simulation repetitions. Step size $\gamma=2\times 10^{-4}$.}\label{fig:support_pca}
\vspace{-0.3cm}
\end{figure}

To investigate the support recovery performance of \eqref{eq:ospca}, the left panel of Figure \ref{fig:support_pca} suggests that \eqref{eq:ospca} with $\mu=1.7$ or 1 identifies the true zeros in the early stage of training, while \eqref{eq:opca} cannot identify any zeros. The right panel of Figure \ref{fig:support_pca} provide evidence that except for the early stage of training, \eqref{eq:ospca} with the selected $\mu$ can correctly identify the active set for most of the time $t$. Notably, \eqref{eq:ospca} with $\mu=0.4$ has more false zeros than \eqref{eq:ospca} with $\mu=1.7$ for time $t=n\gamma<1$. This stems from the tuning function in \eqref{eq:lamsim}, in which $(n\gamma)^{\mu}$ is a strictly decreasing function in $\mu$ for $0<n\gamma<1$. Hence, the strength of penalization for $\mu=0.4$ is stronger than $\mu=1.7$ for $t<1$. This phenomenon also appears in the left panel of Figure \ref{fig:support_pca} and in both panels of Figure \ref{fig:cover_lr}. But it is most visible in the right panel of Figure \ref{fig:support_pca} due to the scale of time axis.

Section \ref{sec:addsimpca} contains additional simulation results for the second principal component, and for the effect of step size $\gamma$ on the first principal component.

	% \begin{figure}[!h]
	% 	\centering
	% \includegraphics[scale=0.4]{OSPCA_coverage.pdf}
	% \caption{The averaged coverage probability of \eqref{eq:TACB} over active coefficients in the first principal component. The tuning function $g(n,\gamma)$ in \eqref{eq:lamsim} of \eqref{eq:ospca} depends on $\mu$. \eqref{eq:opca} performs similarly as \eqref{eq:ospca} with $\mu=0.4$, so the curve for \eqref{eq:ospca} is covered by that of \eqref{eq:ospca} with $\mu=0.4$. The gray line is the nominal level 95\%. Results are the average of 1000 simulation repetitions. Step size $\gamma=2\times 10^{-4}$.}\label{fig:cover_pca}
	% \end{figure}
	%
	% \begin{figure}[!h]
	% 	\centering
	% \includegraphics[scale=0.4]{OSPCA_bias.pdf}
	% \caption{The averaged bias of \eqref{eq:TACB} over active coefficients in the first principal component, i.e. $10^{-1}\sum_{j=1}^{10}|\hat\E[\UU_{\itg,j,1}]-\UU_{j,1}(t)|$, where $\hat \E$ is computed based on 1000 simulations. The gray horizontal line is the long-term bias for \eqref{eq:ospca} with $\mu=1$. Step size $\gamma=2\times 10^{-4}$.}\label{fig:bias_pca}
	% \end{figure}

\section{Discussion and future works}
In this paper, we propose a generalization of regularized dual averaging algorithms, and develop its mean trajectories and distributional dynamics that apply to both convex and non-convex loss functions. There are three important consequences: (i) we prove that \eqref{eq:rda} is biased for some linear regression problems (Theorem \ref{th:biasrda}), due to strong penalization; (ii) our theory provides a theoretical guideline in selecting an appropriate penalization level; (iii) for the first time in the literature, we show that uncertainty quantification may be made for online penalized algorithms, and further discover an interesting observation that there exists discontinuity in the distributional dynamics for the $\ell_1$ penalized problems.

%For the future works, accelerated algorithms, such as Nesterov's momentum \citep{N83}, can be considered. Recently, \cite{W18} considers continuous time distributional approximation of accelerated \eqref{eq:sgd}, and obtain results of similar flavor as ours. 
%Empirical works on applying our algorithm to deep neural networks is interesting as preliminary analysis in 
Preliminary analysis in Figure \ref{fig:cifar10} demonstrates promising outcomes for using gRDA to compress over-parametrized deep neural networks (DNNs) without sacrificing accuracy. A common practice of DNN is to start with a dense and over-parametrized model, and then train it with stochastic optimization algorithms \citep{GBC16}. However, many modern DNNs are too large to fit in devices with resource constraints, e.g. mobile phones. It is thus necessary to obtain compressed DNNs before deploying DNNs on such devices. %Currently, compressed DNNs are usually achieved by first pruning the pre-trained dense DNN, and then re-training the pruned DNN \citep{LDS90,HPTD15}. Unfortunately, this approach can be computationally costly due to the re-training step, and the performance is not theoretically warranted. On the other hand, penalization based methods are proposed to obtain compressed DNN \citep{LWK18,mrda18,CI18} without the pre- and post-training, but these approaches, too, lack theoretical justifications, and can be sensitive to the hyperparameters in practice. 
Our gRDA sparsifies DNN during training, and is pretty robust to the hyperparameters. The learning rate can taken to be equal to that of SGD. Only two other hyperparameters are involved in the tuning function in \eqref{eq:tunel1} (with $t_0=0$): constant $c$ is an initial sparsity level which has limited influence on the outcome as the number of epochs gets large; $\mu$ is the key hyperparameter, and our theory provides insight on its selection. Hence, our method can be a step--stone toward a provable approach for sparsifying DNN, especially for large DNNs such as ResNet50. %Our theory can provide theoretical guarantees, if the convergence of the asymptotic ODE and SDE can be fully understood for the DNN models.

\begin{supplement}[id=suppA]
  \sname{ONLINE SUPPLEMENTARY MATERIAL}
  \stitle{A generalization of regularized dual averaging and its dynamics}
  \slink[doi]{COMPLETED BY THE TYPESETTER}
  \sdatatype{.pdf}
  \sdescription{The supplementary materials contain all the proofs, additional technical details, and additional numerical results and figures.}
\end{supplement}

\bibliography{spglm.bib}
% \newpage
% \vskip 2em \centerline{\Large \bf APPENDIX} \vskip -1em
% \setcounter{subsection}{0}\renewcommand{\thesubsection}{A.\arabic{subsection}}\setcounter{equation}{0}\renewcommand{\theequation}{A.\arabic{equation}}\setcounter{theo}{0}\renewcommand{\thetheo}{A.\arabic{theo}}\vskip 2em

% \renewcommand{\thesection}{A}
% \setcounter{subsection}{0}
% \renewcommand{\thesubsection}{A.\arabic{subsection}}
% \setcounter{equation}{0}
% \renewcommand{\theequation}{A.\arabic{equation}}
% \setcounter{theo}{0}
% \renewcommand{\thetheo}{A.\arabic{theo}}

\clearpage
\newpage
\vskip 2em \centerline{\Large \bf ONLINE SUPPLEMENTARY MATERIAL} \vskip -1em
\setcounter{page}{1}
\setcounter{section}{0}
\renewcommand{\thesection}{S.\arabic{section}}
\renewcommand{\thetheo}{S.\arabic{section}.\arabic{theo}}
\renewcommand{\theequation}{S.\arabic{section}.\arabic{equation}}
\renewcommand{\thesubsection}{S.\arabic{section}.\arabic{subsection}}

\section{Theory on equi-strongly convex functions}\label{sec:appcvx}

Denote $\partial\Psi(\bw)$ by the subdifferential of a function $\Psi$ at $\bw$, and the $\partial\Psi(\bw)$ is a singleton if and only if $\Psi(\bw)$ is differentiable (Corollary 2.4.10 in \cite{Z02}). Recall that a function $f:\R^d\to\R$ is {lower semicontinuous} (l.s.c.) if $\liminf_{\bu\to\bu_0} f(\bu)\geq f(\bu_0)$ for every $\bu_0\in\R^d$ (see, e.g. page 8 of \cite{RW09}).
% \begin{rem}[Subdifferential, Corollary 2.4.10 in \cite{Z02}]
% 	For a continuous function $\Psi$, $\partial\Psi(\bw)$ is a singleton if and only if $\Psi(\bw)$ is differentiable.
% \end{rem}

A property shared by all $\Psi_\gamma(t,\bw), \tP_\gamma(t,\bw)$, $\Psi(t,\bw)$ and $\tP(t,\bw)$ in Section \ref{sec:gentheo} is stated in the next definition. This allows us to leverage powerful tools from convex and variational analysis.
\begin{defin}[$\beta$-equi strong convexity, or $\beta$-e.s.c.]\label{def:euc}
	A sequence of functions $\{\Phi_\gamma\}_\gamma$, where $\Phi_\gamma:[0,\infty) \times \R^d\to\R^d$, $(t,\bw)\mapsto \Phi_\gamma(t,\bw)$, is said to be {$\beta$-equi-strongly convex with respect to $t,\gamma$} if and only if there exists $\beta>0$ independent from $t,\gamma$ such that for any $\bw,\bw'\in\R^d$ and $\alpha\in[0,1]$,
		\begin{align}
			\Phi_\gamma\big(t,\alpha\bw+(1-\alpha)\bw'\big) \leq \alpha\Phi_\gamma(t,\bw)+(1-\alpha)\Phi_\gamma(t,\bw')-\beta\frac{\alpha(1-\alpha)}{2}\|\bw-\bw'\|_2^2;\label{eq:defuc}
		\end{align}
		similarly, a single function $\Phi:[0,\infty) \times \R^d\to\R^d$ is $\beta$-e.s.c. with respect to $t$ if and only if %for any $\bw,\bw'\in\R^d$, $\alpha\in[0,1]$ and $\beta>0$,
			\begin{align}
				\Phi(t,\alpha\bw+(1-\alpha)\bw') \leq \alpha\Phi(t,\bw)+(1-\alpha)\Phi(t,\bw')-\beta\frac{\alpha(1-\alpha)}{2}\|\bw-\bw'\|_2^2.\label{eq:defuccon}
			\end{align}
\end{defin}
A strongly convex function is naturally $\beta$-e.s.c. All results in this section can be extended to the class of uniformly convex functions \citep{Z02}, which is a larger class than the class of strongly convex functions.

The following lemma provides a convenient criterion for determining whether a function is e.s.c.
\begin{lemma}\label{lem:eucchar}
	 If an arbitrary function $\Phi_\gamma:[0,\infty)\times \R^d\to\R$ is in the form $\Phi_\gamma=\Phi_\gamma^{(1)}+\Phi_\gamma^{(2)}$, where $\Phi_\gamma^{(1)}(t,\bw)$ is $\beta$-e.s.c. with respect to $\gamma$ and $t$, and $\Phi_\gamma^{(2)}(t,\bw)$ is convex for any $\gamma,t$, then $\Phi_\gamma$ is $\beta$-e.s.c. The same holds if replacing the family $\{\Phi_\gamma\}_\gamma$ by a single function $\Phi:[0,\infty)\times \R^d\to\R$.
\end{lemma}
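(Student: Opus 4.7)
The plan is to prove this directly from the definition in \eqref{eq:defuc}. The key observation is that the $\beta$-e.s.c. property is preserved under addition with ordinary convex functions, because ordinary convexity corresponds exactly to the $\beta = 0$ case of the same Jensen-type inequality, and the strong-convexity defect on the right-hand side is additive.

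Concretely, first I would fix arbitrary $\gamma$, $t \geq 0$, $\bw, \bw' \in \R^d$, and $\alpha \in [0,1]$. Applying the defining inequality \eqref{eq:defuc} to $\Phi_\gamma^{(1)}$ with its constant $\beta > 0$ gives
\begin{align*}
\Phi_\gamma^{(1)}\big(t, \alpha \bw + (1-\alpha)\bw'\big)
\leq \alpha \Phi_\gamma^{(1)}(t,\bw) + (1-\alpha)\Phi_\gamma^{(1)}(t,\bw') - \beta \frac{\alpha(1-\alpha)}{2}\|\bw - \bw'\|_2^2,
\end{align*}
while ordinary convexity of $\Phi_\gamma^{(2)}(t,\cdot)$ gives
\begin{align*}
\Phi_\gamma^{(2)}\big(t, \alpha \bw + (1-\alpha)\bw'\big)
\leq \alpha \Phi_\gamma^{(2)}(t,\bw) + (1-\alpha)\Phi_\gamma^{(2)}(t,\bw').
\end{align*}
Adding the two inequalities and using $\Phi_\gamma = \Phi_\gamma^{(1)} + \Phi_\gamma^{(2)}$ yields the $\beta$-e.s.c.\ inequality \eqref{eq:defuc} for $\Phi_\gamma$, with exactly the same constant $\beta$ that appears in the hypothesis on $\Phi_\gamma^{(1)}$. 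Since $\gamma$, $t$, $\bw$, $\bw'$, $\alpha$ were arbitrary and $\beta$ does not depend on them, this establishes the claim for the family $\{\Phi_\gamma\}_\gamma$.

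For the single-function version with $\Phi = \Phi^{(1)} + \Phi^{(2)}$ satisfying the analogous hypotheses, the argument is identical, only \eqref{eq:defuccon} is invoked in place of \eqref{eq:defuc}. There is no genuine obstacle in this proof; the only thing to be careful about is to verify that the same $\beta$ works (i.e.\ no loss of constant), and that the bound is uniform in $\gamma$ and $t$, which follows because the convex term $\Phi_\gamma^{(2)}$ contributes nothing to the $\|\bw - \bw'\|_2^2$ defect and the hypothesis on $\Phi_\gamma^{(1)}$ already provides a $\gamma, t$-uniform $\beta$.
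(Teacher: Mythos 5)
Your proof is correct and follows essentially the same route as the paper: apply the defining inequality to $\Phi_\gamma^{(1)}$, ordinary convexity to $\Phi_\gamma^{(2)}$, and add. The paper merely writes the two inequalities with terms rearranged (as lower bounds on the Jensen defect) before summing, but this is a cosmetic difference; the content and the preserved constant $\beta$ are identical.
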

\begin{proof}[Proof of Lemma \ref{lem:eucchar}]
		Take arbitrary $\alpha\in [0,1]$, $\bw,\bw'\in\R^d$,
		\begin{align}
			\alpha\Phi_\gamma^{(1)}(t,\bw)+(1-\alpha)\Phi_\gamma^{(1)}(t,\bw')-\Phi_\gamma^{(1)}(t,\alpha\bw+(1-\alpha)\bw')
					\geq \beta\frac{\alpha(1-\alpha)}{2}\|\bw-\bw'\|_2^2 \label{eq:eucc1}
		\end{align}
		and by the convexity,
		\begin{align}
			\alpha\Phi_\gamma^{(2)}(t,\bw)+(1-\alpha)\Phi_\gamma^{(2)}(t,\bw')-\Phi_\gamma^{(2)}(t,\alpha\bw+(1-\alpha)\bw')
					\geq 0. \label{eq:eucc2}
		\end{align}
	Thus, combining \eqref{eq:eucc1} and \eqref{eq:eucc2}, we obtain that
	\begin{align*}
		&\alpha\Phi_\gamma(t,\bw)+(1-\alpha)\Phi_\gamma(t,\bw')-\Phi_\gamma(t,\alpha\bw+(1-\alpha)\bw')
		%\geq \beta_{n,\gamma}\frac{\alpha(1-\alpha)}{q}\|\bw-\bw'\|_2^q 
		\geq 
		\beta\frac{\alpha(1-\alpha)}{2}\|\bw-\bw'\|_2^2.
	\end{align*}
	This proves that $\Phi_\gamma(t,\bw)$ is $\beta$-e.s.c. with respect to $\gamma,t$.
\end{proof}

\bigskip
The following lemma shows that $\Psi_\gamma(t,\bw)$ in \eqref{eq:trcond} and $\tP_\gamma(t,\bw)$ in \eqref{eq:locdgf} are e.s.c. with respect to $t,\gamma$ if \ref{as:R} holds, and $\Psi(t,\bw)$ in \eqref{def:Psi} and $\tP(t,\bw)$ in Condition \ref{as:tPlim} are e.s.c. with respect to $t$ if \ref{as:Plim} and \ref{as:tPlim} hold.

\begin{lemma}\label{lem:escreg}
	 Suppose Condition \ref{as:R} holds.
	 \begin{itemize}
	 	\item[(i)] $\Psi_\gamma(t,\bw), \tP_\gamma(t,\bw)$ are both $\beta$-e.s.c. with respect to $t,\gamma$ and $\tP_\gamma(t,\cdot)$ is l.s.c. for any $t$.
		\item[(ii)] If additionally \ref{as:Plim} holds, then $\Psi(t,\bw)$ is $\beta$-e.s.c. with respect to $t$.
		\item[(iii)] If additionally \ref{as:tPlim} holds, then $\tP(t,\bw)$ is $\beta$-e.s.c. with respect to $t$.
	 \end{itemize}
\end{lemma}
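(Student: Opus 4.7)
The strategy is to use Lemma \ref{lem:eucchar} (the structural sum criterion) for all three parts except one step, and to close part (iii) by a one-line limiting argument. Throughout, the constant $\beta$ will be the strong-convexity constant of $F$ supplied by Condition \ref{as:R}.

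For part (i), I would first observe that by \eqref{eq:trcond}, $\Psi_\gamma(t,\bw)=g(\lfloor t/\gamma\rfloor,\gamma)\Pc(\bw)+F(\bw)$. Because $g(\lfloor t/\gamma\rfloor,\gamma)\geq 0$ and $\Pc$ is convex, the first summand is convex in $\bw$ for every $(t,\gamma)$; since $F$ is $\beta$-strongly convex (hence $\beta$-e.s.c.\ in the trivial constant-in-$(t,\gamma)$ sense), Lemma \ref{lem:eucchar} yields the $\beta$-e.s.c.\ of $\Psi_\gamma$. For $\tP_\gamma$ I would expand the definition \eqref{eq:locdgf} and split
\begin{align*}
\tP_\gamma(t,\bu)
&=\underbrace{\tfrac{1}{\gamma}\bigl[F\bigl(\bw(t)+\sqrt{\gamma}\bu\bigr)-F(\bw(t))\bigr]}_{=:A_\gamma(t,\bu)}\\
&\quad+\underbrace{\tfrac{g(\lfloor t/\gamma\rfloor,\gamma)}{\gamma}\bigl[\Pc\bigl(\bw(t)+\sqrt{\gamma}\bu\bigr)-\Pc(\bw(t))\bigr]}_{=:B_\gamma(t,\bu)}-\underbrace{\tfrac{1}{\sqrt{\gamma}}\langle \bu,\bv(t)\rangle}_{=:C_\gamma(t,\bu)}.
\end{align*}
Part $C_\gamma$ is linear in $\bu$ (hence convex), and $B_\gamma$ is a non-negative multiple of the convex function $\bu\mapsto\Pc(\bw(t)+\sqrt{\gamma}\bu)$ plus a constant (hence convex). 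The content sits in $A_\gamma$: for any $\bu,\bu'\in\R^d$ and $\alpha\in[0,1]$, applying the $\beta$-strong convexity of $F$ at the two points $\bw(t)+\sqrt{\gamma}\bu$ and $\bw(t)+\sqrt{\gamma}\bu'$ and dividing by $\gamma$ produces exactly the $\beta$-e.s.c.\ inequality \eqref{eq:defuc} with constant $\beta$, because the rescaling $\|\sqrt{\gamma}(\bu-\bu')\|_2^2/\gamma=\|\bu-\bu'\|_2^2$ is precisely what cancels the powers of $\gamma$ built into \eqref{eq:locdgf}. Lemma \ref{lem:eucchar} then delivers $\beta$-e.s.c.\ of $\tP_\gamma$. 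Lower semicontinuity of $\tP_\gamma(t,\cdot)$ follows because the map $\bu\mapsto \bw(t)+\sqrt{\gamma}\bu$ is a homeomorphism of $\R^d$ and l.s.c.\ is preserved under composition with continuous maps, non-negative scaling, addition of l.s.c.\ functions, and addition of continuous (linear) functions.

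Part (ii) is then a one-line invocation of Lemma \ref{lem:eucchar}: under \ref{as:Plim} we have $g^\dagger(t)\ge 0$, so the decomposition $\Psi(t,\bw)=F(\bw)+g^\dagger(t)\Pc(\bw)$ exhibits a $\beta$-strongly convex term (in $\bw$, uniformly in $t$) plus a convex term (in $\bw$ for each $t$).

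For part (iii), Lemma \ref{lem:eucchar} does not directly apply because we are only given $\tP$ abstractly through \ref{as:tPlim}, without an explicit structural decomposition. Instead, for arbitrary fixed $t\in[0,\infty)$, $\bu,\bu'\in\R^d$ and $\alpha\in[0,1]$, part (i) gives for every $\gamma>0$
\begin{equation*}
\tP_\gamma\bigl(t,\alpha\bu+(1-\alpha)\bu'\bigr)\le \alpha\tP_\gamma(t,\bu)+(1-\alpha)\tP_\gamma(t,\bu')-\beta\tfrac{\alpha(1-\alpha)}{2}\|\bu-\bu'\|_2^2.
\end{equation*}
Choosing $T>t$ and letting $\gamma\to 0$ along any sequence, \ref{as:tPlim0} supplies the pointwise convergence $\tP_\gamma(t,\bv)\to\tP(t,\bv)$ at each of the three points $\bv\in\{\bu,\bu',\alpha\bu+(1-\alpha)\bu'\}$, so the inequality passes to the limit and yields \eqref{eq:defuccon} for $\tP$. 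The only mildly delicate point in the whole proof is the bookkeeping for $A_\gamma$ in part (i); this is routine once one observes that the factor $\gamma^{-1}$ in \eqref{eq:locdgf} is matched to the $\sqrt{\gamma}$ inside $F$ precisely to preserve the strong-convexity constant.
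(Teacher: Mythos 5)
Your proof is correct and follows essentially the same route as the paper's: Lemma \ref{lem:eucchar} for $\Psi_\gamma$ and $\Psi$, and a limiting argument for $\tP$ using \ref{as:tPlim0}. The only (minor) variation is in showing $\tP_\gamma$ is $\beta$-e.s.c.: the paper directly evaluates the convexity-defect combination $\alpha\tP_\gamma(t,\bu)+(1-\alpha)\tP_\gamma(t,\bu')-\tP_\gamma(t,\alpha\bu+(1-\alpha)\bu')$, notes that the affine pieces of \eqref{eq:locdgf} cancel, and lower-bounds the remainder by the e.s.c.\ of $\Psi_\gamma$ after the scaling $\|\sqrt{\gamma}(\bu-\bu')\|_2^2/\gamma=\|\bu-\bu'\|_2^2$; whereas you split $\tP_\gamma=A_\gamma+B_\gamma-C_\gamma$, check each summand, and apply Lemma \ref{lem:eucchar} again — same computation, different bookkeeping, and both are equally valid.
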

\begin{proof}[Proof of Lemma \ref{lem:escreg}]
	Since $F$ is strongly convex with constant $\beta$, and $g(\lfloor t/\gamma\rfloor,\gamma)\Pc$ is convex for any $t,\gamma$, $\Psi_\gamma(t,\bw)$ is $\beta$-e.s.c. by Lemma \ref{lem:eucchar}. The proof for $\Psi(t,\bw)$ is similar.
	
	$\tP_\gamma(t,\cdot)$ is l.s.c. by the fact that $\Psi_\gamma(t,\cdot)$ is l.s.c. for any $t$. For any $\alpha\in [0,1]$, and $\bu,\bu'\in\R^d$,
	\begin{align}
		&\alpha\tP_\gamma(t,\bu) + (1-\alpha)\tP_\gamma(t,\bu')-\tP_\gamma(t,\alpha\bu+(1-\alpha)\bu') \notag\\
		&=\gamma^{-1}\big[\alpha\Psi_\gamma(t,\bw(t)+\sqrt{\gamma}\bu)+(1-\alpha)\Psi_\gamma(t,\bw(t)+\sqrt{\gamma}\bu') \notag\\
		&\hspace{4cm}- \Psi_\gamma\{t,\alpha(\bw(t)+\sqrt{\gamma}\bu)+(1-\alpha)(\bw(t)+\sqrt{\gamma}\bu')\}\big]\notag\\
		%&\geq \gamma^{q/2-1}\frac{\alpha(1-\alpha)}{q}\|\bu-\bu'\|_2^q. \label{eq:tpsicvx}
		&\geq \beta \frac{\alpha(1-\alpha)}{2}\|\bu-\bu'\|_2^2, \label{eq:tpsicvx}
	\end{align}
	where the last inequality follows from the $\beta$-e.s.c. of $\Psi_\gamma(t,\bw)$. This proves that $\tP_\gamma(t,\cdot)$ is $\beta$-e.s.c. with respect to $\gamma$ and $t$.
	
	Taking $\lim_{\gamma\to 0}$ on the both sides of \eqref{eq:tpsicvx}, the $\beta$-e.s.c. of $\tP(t,\bu)$ with respect to $t$ follows from \ref{as:tPlim0}.
\end{proof}

\bigskip
The next lemma states the properties shared by any family of functions that is e.s.c.

\begin{lemma}[Properties of e.s.c. family]\label{lem:uc}
	Let $\Phi_\gamma:[0,\infty)\times\R^d\to\R$ be a family of finite functions such that $\Phi_\gamma(t,\cdot)$ is l.s.c. for each $\gamma, t$ and $\beta$-e.s.c with respect to $\gamma,t$, and $\Phi:[0,\infty)\times\R^d\to\R$ is $\beta$-e.s.c with respect to $t$ and $\Phi(t,\cdot)$ is l.s.c. for each t. Then
	\begin{itemize}
		\item[(a)] $\Phi_\gamma^*(t,\cdot)$ is differentiable on $\R^d$, $\dom(\Phi_\gamma^*(t,\cdot))=\R^d$ for each $\gamma,t$, and for $\bv,\bv'\in\R^d$,
		 $$
		 \sup_{\gamma, t}\|\nabla \Phi_\gamma^*(t,\bv)-\nabla \Phi_\gamma^*(t,\bv')\|_2\leq \beta^{-1}\|\bv-\bv'\|_2.
		 $$ 
		 Same result holds for $\Phi^*(t,\cdot)$.
		\item[(b)] $\partial_\bv \Phi_\gamma^*(t,\bv)|_{\bv=\bv_0}$ is a singleton for any $\bv$. In particular, putting $\bv_0=0$ implies the minimizer of $\Phi_\gamma(t,\cdot)$ is unique for each $\gamma,t$. Same result holds for $\Phi^*(t,\cdot)$.
		\item[(c)] For $\bw_0\in\mbox{dom}(\Phi_\gamma(t,\cdot))$, $\bv_0\in\partial_{\bw} \Phi_\gamma(t,\bw)|_{\bw=\bw_0}$ if and only if $\bw_0=\nabla_\bv \Phi_\gamma^*(t,\bv)|_{\bv=\bv_0}$.
		% \item[(c)] There exists an absolute constant $c_0>0$ such that for any $\bw_1,\bw_2\in\R^d$,
% 		\begin{align}
% 			f^*(\bw_1)-f^*(\bw_2) \leq \nabla f^*(\bw_2)^\top (\bw_1-\bw_2)+\frac{c_0^{1-p}}{p}\|\bw_1-\bw_2\|^p.\label{eq:2smooth}
% 		\end{align}
	\end{itemize}
\end{lemma}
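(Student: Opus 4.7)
The plan is to recognize that the three claims are standard consequences of (uniform) strong convexity combined with classical convex duality, and to bundle them together by deriving (a) first, then (c) as a restatement of Fenchel--Young, and finally (b) as the special case $\bv_0=\IO$ of (c). The structural point to exploit is that $\beta$ is independent of $(\gamma,t)$: every bound obtained for a single strongly convex function will therefore hold uniformly in $(\gamma,t)$ without extra work.

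For part (a), I would first argue that $\Phi_\gamma(t,\cdot)$ is proper, l.s.c.\ and coercive: properness and l.s.c.\ are assumed, and $\beta$-strong convexity together with finiteness implies the standard lower bound $\Phi_\gamma(t,\bw)\geq \Phi_\gamma(t,\bw_0)+\langle s,\bw-\bw_0\rangle+\frac{\beta}{2}\|\bw-\bw_0\|_2^2$ for some subgradient $s\in\partial_\bw\Phi_\gamma(t,\bw_0)$ (which exists at any interior point of the finite-valued convex function), so $\Phi_\gamma(t,\cdot)-\langle\cdot,\bv\rangle$ has bounded level sets for every $\bv\in\R^d$. Hence $\dom(\Phi_\gamma^*(t,\cdot))=\R^d$, and Proposition~11.3 of \cite{RW09} (already cited above to define \eqref{eq:dconj}) yields differentiability of $\Phi_\gamma^*(t,\cdot)$ on $\R^d$. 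For the Lipschitz bound, set $\bw=\nabla\Phi_\gamma^*(t,\bv)$ and $\bw'=\nabla\Phi_\gamma^*(t,\bv')$, which (by part (c), established independently below) are characterized by $\bv\in\partial_\bw\Phi_\gamma(t,\bw)$ and $\bv'\in\partial_\bw\Phi_\gamma(t,\bw')$. The $\beta$-strong convexity then gives the standard monotonicity inequality
\begin{equation*}
\langle \bv-\bv',\bw-\bw'\rangle\geq \beta\|\bw-\bw'\|_2^2,
\end{equation*}
and Cauchy--Schwarz produces $\|\bw-\bw'\|_2\leq \beta^{-1}\|\bv-\bv'\|_2$, uniformly in $(\gamma,t)$ since $\beta$ is. The same argument, word for word, applies to $\Phi(t,\cdot)$ once the e.s.c.\ property is invoked in place of the e.s.c.\ w.r.t.\ $(\gamma,t)$.

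Part (c) is exactly the Fenchel--Young duality for subgradients of a proper, l.s.c., convex function (Theorem~11.3 / Proposition~11.3 of \cite{RW09}, or equivalently \cite{Z02}): $\bv_0\in\partial_\bw\Phi_\gamma(t,\bw_0)$ if and only if $\bw_0\in\partial_\bv\Phi_\gamma^*(t,\bv_0)$. Because $\Phi_\gamma^*(t,\cdot)$ is differentiable by part (a), the set $\partial_\bv\Phi_\gamma^*(t,\bv_0)$ is the singleton $\{\nabla_\bv\Phi_\gamma^*(t,\bv_0)\}$, which gives the stated equivalence $\bw_0=\nabla_\bv\Phi_\gamma^*(t,\bv)|_{\bv=\bv_0}$.

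Part (b) then follows directly. The first claim is just a restatement of differentiability in (a): $\partial_\bv\Phi_\gamma^*(t,\bv_0)=\{\nabla_\bv\Phi_\gamma^*(t,\bv_0)\}$. For the minimizer, $\bw^*$ minimizes $\Phi_\gamma(t,\cdot)$ iff $\IO\in\partial_\bw\Phi_\gamma(t,\bw^*)$, which by part (c) (taking $\bv_0=\IO$) is equivalent to $\bw^*=\nabla_\bv\Phi_\gamma^*(t,\IO)$, a uniquely determined vector. There is no substantial obstacle here: the only real computation is the Lipschitz step in (a), and the rest is organizing classical duality correctly; the one point to handle with mild care is verifying coercivity to get $\dom(\Phi_\gamma^*(t,\cdot))=\R^d$, since only then does Proposition~11.3 of \cite{RW09} apply to yield everywhere differentiability.
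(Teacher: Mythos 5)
Your proof is correct and reaches all three conclusions, but it takes a more elementary, hands-on route than the paper. The paper delegates almost everything to Z\u{a}linescu's monograph: part (a) (differentiability, domain, and the $\beta^{-1}$-Lipschitz bound all at once) is read off from Corollary~3.5.11(x) of \cite{Z02}, part (b) from Corollary~2.4.10 (G\^ateaux $=$ Fr\'echet on $\R^d$), and part (c) from Theorem~2.4.4(iv) combined with (b). You instead unpack the same content from first principles: coercivity of $\Phi_\gamma(t,\cdot)-\langle\cdot,\bv\rangle$ yields $\dom(\Phi_\gamma^*(t,\cdot))=\R^d$, uniqueness of the argmin feeds into Proposition~11.3 of \cite{RW09} for differentiability, and the Lipschitz constant is derived directly from the strong-monotonicity inequality $\langle\bv-\bv',\bw-\bw'\rangle\geq\beta\|\bw-\bw'\|_2^2$ and Cauchy--Schwarz. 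You also reverse the order of (b) and (c) relative to the paper, deriving (b) as the $\bv_0=\IO$ specialization of (c). One presentational caveat: you invoke ``part~(c)'' while proving the Lipschitz step of (a), and then prove (c) using differentiability from (a). This is not actually circular, because the only implication you need in (a) is the forward direction $\bw_0=\nabla\Phi_\gamma^*(t,\bv_0)\Rightarrow\bv_0\in\partial_\bw\Phi_\gamma(t,\bw_0)$, which follows from the argmin characterization and the bare Fenchel--Young inclusion $\partial f^* = (\partial f)^{-1}$ without any differentiability input; but it would be cleaner to flag that you are only using this one cheap direction of Fenchel--Young rather than the full equivalence of (c). The trade-off: your version is self-contained and shows where each hypothesis (finiteness, l.s.c., the uniform $\beta$) is actually used, at the cost of being longer than the paper's citation-only proof.
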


\begin{proof}[Proof of Lemma \ref{lem:uc}]
We note that the definition of $\beta$-e.s.c. in \eqref{eq:defuc} of Definition \ref{def:euc} %  implies for any $n,\gamma$,
% \begin{align*}
% 	\alpha\Psi_{n,\gamma}(\bw)+(1-\alpha)\Psi_{n,\gamma}(\bw')-\Psi_{n,\gamma}(\alpha\bw+(1-\alpha)\bw')\geq\beta\frac{\alpha(1-\alpha)}{q}\|\bw-\bw'\|_2^2,
% \end{align*}
% which
is effectively Corollary 3.5.11(i) of \cite{Z02} with $q=2, c_1=\beta$. Part (a) immediately follows by Corollary 3.5.11(x) on p.217-218 of \cite{Z02}. 

Part (b) follows by Part (a) and Corollary 2.4.10 on p.91 of \cite{Z02} (on finite dimensional normed vector spaces, G\^ateaux and Fr\'echet differentiability coincide). 

For Part (c), if $\bv_0\in\partial_{\bw} \Phi_\gamma(t,\bw)|_{\bw=\bw_0}$ (the domain of $\Phi_\gamma(t,\bw)$ is $\R^d$ because $\Phi_\gamma(t,\bw)$ is finite), then $\bw_0 \in\partial_\bw \Phi_\gamma^*(t, \bv)|_{\bv=\bv_0}$ by Theorem 2.4.4 (iv) on p.85 of \cite{Z02}. Since $\partial_\bv \Phi_\gamma^*(t,\bv)|_{\bv=\bv_0}=\{\nabla_{\bv} \Phi_\gamma^*(t,\bv)|_{\bv=\bv_0}\}$ by Part (b), the proof of part of (c) is complete. %(c) follows directly from (viii) of Corollary 3.5.11 on p.217 of \cite{Z02}.
\end{proof}

\bigskip
The next lemma concerns $\nabla\Phi^*$ where $\Phi(t,\bw)$ is an arbitrary e.s.c. function.

\begin{lemma}[Joint continuity of $\nabla\Phi^*$]\label{lem:conticonj}
	Let $\Phi:[0,\infty)\times\R^d\to\R$ be such that $\Phi(t,\cdot)$ is $\beta$-e.s.c. with respect to $t$ and l.s.c. for each $t\in[0,\infty)$, and $\Phi(\cdot,\bu)$ is continuous on $[0,\infty)$ for any fixed $\bu$. If in addition, there exits a $\bu_0\in\R^d$ and a constant $c_T>0$ for any $T>0$ such that for any subgradient $\nabla\Phi(t,\bu_0)\in\partial_\bu \Phi(t,\bu_0)$ for $t\leq T$, $\sup_{0\leq t\leq T}\|\nabla\Phi(t,\bu_0)\|_2 \leq c_T$. Then $\nabla\Phi^*:[0,\infty)\times\R^d\to\R^d$ is jointly continuous on $[0,T]\times \R^d$ for any $T>0$.
\end{lemma}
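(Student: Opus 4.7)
The plan is to decompose joint continuity into uniform Lipschitz continuity in $\bv$ (which is already furnished by Lemma \ref{lem:uc}(a)) plus continuity in $t$ at each fixed $\bv$. Concretely, for any $(t_0,\bv_0),(t,\bv)\in[0,T]\times\R^d$, I would write
\begin{equation*}
\|\nabla\Phi^*(t,\bv)-\nabla\Phi^*(t_0,\bv_0)\|_2\le\|\nabla\Phi^*(t,\bv)-\nabla\Phi^*(t,\bv_0)\|_2+\|\nabla\Phi^*(t,\bv_0)-\nabla\Phi^*(t_0,\bv_0)\|_2,
\end{equation*}
where the first term is at most $\beta^{-1}\|\bv-\bv_0\|_2$ uniformly in $t$ by Lemma \ref{lem:uc}(a). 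All the work is therefore in the second term: to show $t\mapsto\bw(t):=\nabla\Phi^*(t,\bv_0)$ is continuous on $[0,T]$.

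The key algebraic step is the two-sided e.s.c. inequality. By Lemma \ref{lem:uc}(c), $\bv_0\in\partial_\bw\Phi(t,\bw(t))$ and $\bv_0\in\partial_\bw\Phi(t_0,\bw(t_0))$; combining the $\beta$-e.s.c. lower bound at both base points and adding yields
\begin{equation*}
\beta\|\bw(t)-\bw(t_0)\|_2^2\le\bigl[\Phi(t,\bw(t_0))-\Phi(t_0,\bw(t_0))\bigr]+\bigl[\Phi(t_0,\bw(t))-\Phi(t,\bw(t))\bigr].
\end{equation*}
The first bracket vanishes as $t\to t_0$ by the assumed pointwise continuity of $\Phi(\cdot,\bw(t_0))$. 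The delicate piece is the second bracket because its argument $\bw(t)$ moves with $t$.

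To control it, I would first prove that $\{\bw(t):t\in[0,T]\}$ is bounded. Optimality gives $\Phi(t,\bw(t))-\langle\bv_0,\bw(t)\rangle\le\Phi(t,\bu_0)-\langle\bv_0,\bu_0\rangle$; on the other hand $\beta$-e.s.c.\ and the subgradient bound at $\bu_0$ yield $\Phi(t,\bw(t))\ge\Phi(t,\bu_0)+\langle\nabla\Phi(t,\bu_0),\bw(t)-\bu_0\rangle+\tfrac{\beta}{2}\|\bw(t)-\bu_0\|_2^2$. Subtracting gives the bound
\begin{equation*}
\|\bw(t)-\bu_0\|_2\le\tfrac{2}{\beta}\bigl(c_T+\|\bv_0\|_2\bigr),\qquad t\in[0,T],
\end{equation*}
so $\bw(t)$ lies in a compact ball $B_R$ independent of $t$. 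Next, given any sequence $t_n\to t_0$ in $[0,T]$, the functions $\bw\mapsto\Phi(t_n,\bw)$ are finite convex functions on $\R^d$ converging pointwise to the finite convex function $\Phi(t_0,\cdot)$ (by the hypothesis $\Phi(\cdot,\bu)\in C([0,\infty))$). The classical result that pointwise convergence of a sequence of finite convex functions on $\R^d$ upgrades to uniform convergence on compact sets (Rockafellar, Theorem 10.8) then gives $\sup_{\bw\in B_R}|\Phi(t_n,\bw)-\Phi(t_0,\bw)|\to 0$, which sends the second bracket to zero.

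Assembling the pieces, $\bw(t_n)\to\bw(t_0)$, hence $\nabla\Phi^*(\cdot,\bv_0)$ is continuous at $t_0$, and combined with the uniform Lipschitz estimate in $\bv$ this delivers joint continuity of $\nabla\Phi^*$ on $[0,T]\times\R^d$ for every $T>0$. I expect the main obstacle to be precisely the uniform-convergence-on-compact-sets upgrade for the $t$-indexed family, since only pointwise-in-$t$ continuity is assumed; the e.s.c.\ bound on $\bw(t)$ around $\bu_0$ is what keeps the argument inside a fixed compact set so that this upgrade can be used.
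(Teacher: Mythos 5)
Your proof is correct, and it takes a genuinely different route from the paper's. Both proofs start identically: Lemma~\ref{lem:uc}(a) gives the uniform-in-$t$ $\beta^{-1}$-Lipschitz bound in $\bv$, so only pointwise-in-$t$ continuity of $t\mapsto\nabla\Phi^*(t,\bv_0)$ remains (the paper packages this reduction as Lemma~\ref{lem:jointconti}; your triangle-inequality decomposition does the same job). From there the two arguments diverge. The paper works with $\tPh(t,\bu)=\Phi(t,\bu)-\bu^\top\bv_0$, establishes joint continuity of $\tPh$ via Lemma~\ref{lem:rock107}, proves boundedness of $\{\tilde\bu(t_l)\}$ by contradiction against the uniform coercivity Lemma~\ref{lem:coer}, and then uses a subsequence--accumulation-point argument with uniqueness of the minimizer to conclude $\tilde\bu(t_l)\to\tilde\bu(t_0)$. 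You instead exploit the two-sided strong-convexity inequality at the pair of minimizers (via Lemma~\ref{lem:uc}(c), $\bv_0$ is a subgradient at both $\bw(t)$ and $\bw(t_0)$), which yields the quantitative estimate $\beta\|\bw(t)-\bw(t_0)\|_2^2\le[\Phi(t,\bw(t_0))-\Phi(t_0,\bw(t_0))]+[\Phi(t_0,\bw(t))-\Phi(t,\bw(t))]$. You then prove boundedness of $\{\bw(t):t\in[0,T]\}$ directly from optimality plus the subgradient bound at $\bu_0$, and close the second bracket with Rockafellar's Theorem 10.8 (pointwise convergence of finite convex functions upgrades to uniform convergence on compacta). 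What your route buys: it replaces the coercivity-plus-subsequence machinery with a single explicit modulus-of-continuity inequality, gives a concrete a priori radius $\|\bw(t)-\bu_0\|_2\le\tfrac{2}{\beta}(c_T+\|\bv_0\|_2)$, and sidesteps Lemma~\ref{lem:coer} entirely; the paper's route, while less quantitative, is closer in spirit to the general argmin-continuity tools (Lemma~\ref{lem:kato1}) used elsewhere in the supplement. Both correctly identify the crux: the term $\Phi(t_0,\bw(t))-\Phi(t,\bw(t))$ (or its analogue) must be controlled uniformly over a moving argument, and in either approach this is exactly where the convex-analytic uniformity result (Theorem 10.7 vs.\ 10.8 of Rockafellar) enters.
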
 

\begin{proof}[Proof of Lemma \ref{lem:conticonj}]	
	Because $\Phi(t,\cdot)$ is l.s.c. and $\beta$-e.s.c. for each $t\in [0,\infty)$ by the conditions in the Theorem, Applying Lemma \ref{lem:uc}(a) to $\Phi(t,\cdot)$, we obtain that $\nabla\Phi^*(t,\cdot)$ is $\beta^{-1}$-Lipschitz. By Lemma \ref{lem:jointconti}, it is enough to show that $\nabla\Phi^*(\cdot,\bv)$ is continuous on $[0,T]$ for any $\bv$ and $T>0$.
	
	Fix a $\bv\in\R^d$, to show that $\nabla\Phi^*(\cdot,\bv)$ is continuous on $[0,T]$, it is enough to show $\lim_{l\to\infty}\nabla\Phi^*(t_l,\bv) = \nabla\Phi^*(t_0,\bv)$, for any convergent sequence $\{t_l\}_{l\in\N}\subset[0,T]$ such that $t_l\to t_0$ as $l\to\infty$. To save notations, define 
	\begin{align*}
		\tPh(t,\bu)&:=\Phi(t,\bu)-\bu^\top\bv\\
		\tilde\bu(t)&:=\nabla\Phi^*(t,\bv)=\arg\min_{\bu\in\R^d}\tPh(t,\bu).
	\end{align*}
By Lemma \ref{lem:uc}(b), $\tPh(t,\bu)$ has a unique minimizer $\tilde\bu(t)$ for each $t$ since $\Phi(t,\bu)$ is e.u.c. with respect to $t$. Applying the conditions of $\Phi$ in this Theorem, $\Phi$ is continuous on $[0,T]\times\R^d$ by Lemma \ref{lem:rock107}, so $\tPh$ is also continuous on $[0,T]\times\R^d$.
	
	By the continuity of $\tPh(\cdot,\bu)$ on $[0,T]$ for each $\bu\in\R^d$, $\tPh(t_l,\tilde\bu(t_0))\to \tPh(t_0,\tilde\bu(t_0))$. Hence, there exists constant $K_0>0$ depending on $\tP$ and $t_0$ such that
	\begin{align}
		K_0 \geq \tPh(t_l,\tilde\bu(t_0)) \geq \tPh(t_l,\tilde\bu(t_l)), \label{eq:mint1}
	\end{align}
	where the second inequality is from the uniqueness of minimizer. 
	
	We claim that $\{\tilde\bu(t_l)\}_{l\in\N}$ is a bounded sequence. Suppose to the contrary that $\{\tilde\bu(t_l)\}_{l\in\N}$ is not bounded, then there is a subsequence $l'$ such that $\|\bu(t_{l'})\|_2\to\infty$. From \eqref{eq:mint1},
	\begin{align}
		K_0 \geq \lim_{l'\to\infty}\tPh(t_{l'},\tilde\bu(t_{l'})) &= \lim_{l'\to\infty} \inf_{s\in[0,T]}\tPh(s,\tilde\bu(t_{l'})) \notag\\
		&\geq \lim_{l'\to\infty} \inf_{s\in[0,T]} \bigg(\underbrace{\frac{\Phi(s,\bu(t_{l'}))-\|\bu(t_{l'})\|_2\|\bv\|_2}{\|\bu(t_{l'})\|_2^2}}_{\mbox{\scriptsize $\geq \beta/2$ as $\|\bu(t_{l'})\|_2\to\infty$}}\bigg)\|\bu(t_{l'})\|_2^2=\infty,
	\end{align}
	where the last equality follows from \eqref{eq:coer} in Lemma \ref{lem:coer}. 
	% \begin{align*}
	% 	\liminf_{\|\bu\|_2\to\infty}\inf_{0\leq t\leq T}\frac{\Phi(t,\bu)}{\|\bu\|_2^2}>0,\mbox{ for any $T>0$},
	% \end{align*}
	Therefore, we have a contradiction and conclude that $\{\tilde\bu(t_l)\}_{l\in\N}$ is a bounded sequence. 
	
	Since $\{\tilde\bu(t_l)\}_{l\in\N}$ is bounded, there exist at least one accumulation point for this sequence. We claim that $\tilde\bu(t_0)$ is the {only} accumulation point, and therefore $\tilde\bu(t_0)=\lim_{l\to\infty}\tilde\bu(t_l)$. To prove this claim, suppose $\bar\bu$ is an arbitrary accumulation point and the subsequence $\tilde\bu(t_{l''})\to\bar\bu$. Using the relation in \eqref{eq:mint1},
	\begin{align*}
		\tPh(t_0,\tilde\bu(t_0)) = \lim_{l''\to\infty}\tPh(t_{l''},\tilde\bu(t_0)) \geq \lim_{l''\to\infty} \tPh(t_{l''},\tilde\bu(t_{l''})) = \tPh(t_0,\bar\bu).
	\end{align*}
	where in the first equality we apply the continuity of $\tPh(\cdot,\tilde\bu(t_0))$, while in the third equality we apply the joint continuity of $\tPh$. Because $\tilde\bu(t_0)$ uniquely minimizes $\tPh(t_0,\cdot)$, it follows that $\tilde\bu(t_0)=\bar\bu=\lim_{l''\to\infty}\tilde\bu(t_{l''})$, for any convergent subsequence $\bu(t_{l''})$ of the bounded sequence $\bu(t_l)$. Therefore, the proof for the continuity of $\tilde\bu(\cdot)$ is complete. 
	
	% \bigskip
% 	It is now left to show that $\Phi(t,\bu)$ is continuous on $[0,T]\times\R^d$. By the condition in this Theorem, $\Phi(t,\cdot)$ is convex for any $t$, and $\Phi(\cdot,\bu)$ is continuous for any $\bu$. Hence, we have that $\Phi(t,\bu)$ is jointly continuous on $[0,T)\times\R^d$ by Theorem 10.7 of \cite{R70}.
\end{proof}

\begin{lemma}\label{lem:jointconti}
	Let $\Tc\subset\R$ be a compact set, and $h:\Tc\times\R^d\to\R$ be such that $h(t,\cdot)$ is globally Lipschitz continuous on $\R^d$ with the Lipschitz constant $c$ independent of $t$, and $h(\cdot,\bu)$ is continuous for every $\bu\in\R^d$. Then $h$ is continuous on $\Tc\times\R^d$.
\end{lemma}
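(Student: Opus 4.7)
The plan is a standard two-step triangle inequality argument that separates the two coordinates. Compactness of $\Tc$ is actually not needed for continuity (it would be needed for uniform continuity), so I would simply prove continuity at an arbitrary point $(t_0,\bu_0)\in\Tc\times\R^d$.

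First I would fix $(t_0,\bu_0)\in\Tc\times\R^d$ and an arbitrary $\epsilon>0$. For any other point $(t,\bu)\in\Tc\times\R^d$ I would insert the intermediate value $h(t,\bu_0)$ and apply the triangle inequality:
\begin{align*}
    |h(t,\bu)-h(t_0,\bu_0)| \;\leq\; |h(t,\bu)-h(t,\bu_0)| + |h(t,\bu_0)-h(t_0,\bu_0)|.
\end{align*}
The first term is controlled by the hypothesis that $h(t,\cdot)$ is $c$-Lipschitz with $c$ independent of $t$, giving $|h(t,\bu)-h(t,\bu_0)|\leq c\|\bu-\bu_0\|_2$. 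The second term is controlled by the hypothesis that $h(\cdot,\bu_0)$ is continuous at $t_0$.

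Then I would choose $\delta_1:=\epsilon/(2c)$ if $c>0$ (and any positive number if $c=0$), and use the continuity of $h(\cdot,\bu_0)$ at $t_0$ to pick $\delta_2>0$ such that $|t-t_0|<\delta_2$, $t\in\Tc$ implies $|h(t,\bu_0)-h(t_0,\bu_0)|<\epsilon/2$. Setting $\delta:=\min(\delta_1,\delta_2)$, whenever the product-space distance $\max\{|t-t_0|,\|\bu-\bu_0\|_2\}<\delta$ holds, both terms above are bounded by $\epsilon/2$, so $|h(t,\bu)-h(t_0,\bu_0)|<\epsilon$. Since $(t_0,\bu_0)$ was arbitrary, this proves joint continuity on $\Tc\times\R^d$.

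There is no real obstacle here: the only subtlety is the $t$-uniformity of the Lipschitz constant in $\bu$, which is explicit in the assumption and allows us to bound the $\bu$-increment without needing to control how the Lipschitz constant varies with $t$. Compactness of $\Tc$ is not used in the argument and the conclusion holds even without it.
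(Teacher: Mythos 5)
Your argument is essentially identical to the paper's: both insert the intermediate term $h(t,\bu_0)$, bound the $\bu$-increment by the $t$-uniform Lipschitz constant, and bound the $t$-increment by continuity of $h(\cdot,\bu_0)$; you phrase it in $\epsilon$-$\delta$ form while the paper uses sequences, which is an immaterial difference. Your observation that compactness of $\Tc$ is not actually used is correct — the paper invokes it to claim uniform continuity of $h(\cdot,\bu)$ but then only uses pointwise continuity at $t_0$.
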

\begin{proof}[Proof of Lemma \ref{lem:jointconti}]
	Since $\Tc$ is compact, $h(\cdot,\bu)$ is uniformly continuous on $\Tc$ for every $\bu\in\R^d$. Take arbitrary convergent sequence $(t_n,\bu_n)\to(t_0,\bu_0)$, $n\in\N$. For any $\epsilon>0$, let $N_1\in\N$ depending on $\bu_0$ be such that $|h(t_n,\bu_0) - h(t_0,\bu_0)|\leq \epsilon/2$ for all $n\geq N_1$. Let $N_2\in\N$ be such that $\|\bu_n-\bu_0\|_2\leq \epsilon/(2c)$ for all $n\geq N_2$. Then, for all $n\geq \bar N:=\mbox\{N_1,N_2\}$,
	\begin{align*}
		|h(t_n,\bu_n) - h(t_0,\bu_0)| &\leq |h(t_n,\bu_n) - h(t_n, \bu_0)| + |h(t_n,\bu_0) - h(t_0,\bu_0)|\\
		&\leq c \|\bu_n-\bu_0\|_2 + \epsilon/2\\
		&\leq \epsilon.
	\end{align*}
\end{proof}

The following lemma is inspired by Proposition 3.5.8 of \cite{Z02}.

\begin{lemma}[Uniform coercity]\label{lem:coer}
Let $\Phi:[0,\infty)\times\R^d\to\R$ satisfy the conditions of Lemma \ref{lem:conticonj}. Then
  		\begin{align}
  			\liminf_{\|\bu\|_2\to\infty}\inf_{0\leq t\leq T}\frac{\Phi(t,\bu)}{\|\bu\|_2^2}\geq\frac{\beta}{2}, \mbox{ for any $T>0$}. \label{eq:coer}
  		\end{align}
\end{lemma}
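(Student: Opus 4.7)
The idea is to combine the subgradient inequality guaranteed by $\beta$-strong convexity at the distinguished base point $\bu_0$ from hypothesis~(iv) of Lemma~\ref{lem:conticonj} with the uniform boundedness of the affine terms over $t\in[0,T]$, and then let $\|\bu\|_2\to\infty$.

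First, I would observe that the definition of $\beta$-e.s.c.\ in \eqref{eq:defuccon} is precisely $\beta$-strong convexity of $\Phi(t,\cdot)$ in the $\ell_2$ norm, uniformly in $t$. Since $\Phi(t,\cdot)$ is proper, l.s.c., and finite on $\R^d$, standard convex analysis (e.g.\ Theorem~3.5.10 in \cite{Z02} with $q=2$) yields the strong-convexity subgradient inequality
\[
\Phi(t,\bu) \;\geq\; \Phi(t,\bu_0) \;+\; \langle \nabla \Phi(t,\bu_0), \bu - \bu_0\rangle \;+\; \frac{\beta}{2}\|\bu-\bu_0\|_2^2
\]
for every $\bu\in\R^d$, every $t\in[0,T]$, and every subgradient $\nabla\Phi(t,\bu_0)\in\partial_\bu\Phi(t,\bu_0)$, whose existence (and norm control $\sup_{0\le t\le T}\|\nabla\Phi(t,\bu_0)\|_2\le c_T$) is exactly what hypothesis~(iv) supplies.

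Second, I would bound the affine part uniformly in $t\in[0,T]$. By the continuity of $\Phi(\cdot,\bu_0)$ on $[0,\infty)$, the quantity $M_T:=\sup_{0\le t\le T}|\Phi(t,\bu_0)|$ is finite. Cauchy--Schwarz combined with the hypothesis gives $|\langle \nabla\Phi(t,\bu_0),\bu-\bu_0\rangle|\le c_T(\|\bu\|_2+\|\bu_0\|_2)$ for all $t\in[0,T]$. Dividing the strong-convexity inequality by $\|\bu\|_2^2$ and taking the infimum over $t\in[0,T]$,
\[
\inf_{0\le t\le T}\frac{\Phi(t,\bu)}{\|\bu\|_2^2}
\;\geq\; -\,\frac{M_T + c_T\bigl(\|\bu\|_2+\|\bu_0\|_2\bigr)}{\|\bu\|_2^2}
\;+\; \frac{\beta}{2}\,\frac{\|\bu-\bu_0\|_2^2}{\|\bu\|_2^2}.
\]

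Third, I would let $\|\bu\|_2\to\infty$: the first term tends to $0$ (its numerator grows at most linearly in $\|\bu\|_2$), while $\|\bu-\bu_0\|_2^2/\|\bu\|_2^2\to 1$ since $\bu_0$ is fixed. This yields $\liminf_{\|\bu\|_2\to\infty}\inf_{0\le t\le T}\Phi(t,\bu)/\|\bu\|_2^2\ge \beta/2$, as claimed.

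The argument is essentially mechanical once the subgradient inequality is in hand. The only checkpoint worth flagging is verifying the strong-convexity subgradient inequality directly from the e.s.c.\ definition if one does not want to invoke \cite{Z02}: for any $g\in\partial_\bu\Phi(t,\bu_0)$ and any $\bu$, apply \eqref{eq:defuccon} with $\bu'$ replaced by $\bu_0$ and $\alpha\downarrow 0$ to $(\Phi(t,\bu_0+\alpha(\bu-\bu_0))-\Phi(t,\bu_0))/\alpha$; the left side is $\ge \langle g, \bu-\bu_0\rangle$, and the right side rearranges to $\Phi(t,\bu)-\Phi(t,\bu_0)-\frac{\beta}{2}(1-\alpha)\|\bu-\bu_0\|_2^2$, which produces the desired inequality upon $\alpha\to 0$. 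Everything else is Cauchy--Schwarz and the continuity assumption.
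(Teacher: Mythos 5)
Your proof is correct and follows the same underlying strategy as the paper: bound $\Phi(t,\bu)$ from below by an affine-plus-quadratic expansion around the anchor $\bu_0$ (whose subgradient norm is uniformly controlled over $[0,T]$ by hypothesis), then divide by $\|\bu\|_2^2$ and send $\|\bu\|_2\to\infty$. The paper routes the quadratic lower bound through the ``gage of uniform convexity'' $\rho_{\Phi(t,\cdot)}$ and Proposition 3.5.1 of \cite{Z02} (which gives $\rho(s)\geq s^2\rho(1)$ and then uses $\inf_t\rho_{\Phi(t,\cdot)}(1)\geq\beta/2$); you instead derive the strong-convexity subgradient inequality
\[
\Phi(t,\bu)\geq \Phi(t,\bu_0)+\langle \nabla\Phi(t,\bu_0),\bu-\bu_0\rangle+\tfrac{\beta}{2}\|\bu-\bu_0\|_2^2
\]
directly from the e.s.c.\ definition \eqref{eq:defuccon} by letting $\alpha\downarrow 0$, which is a cleaner route for the specific case $q=2$. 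Your self-contained derivation of this inequality (and the remark that $\|\bu-\bu_0\|_2^2/\|\bu\|_2^2\to 1$ rather than silently conflating $\|\bu\|_2$ with $\|\bu-\bu_0\|_2$ as the paper's intermediate display does) makes the argument slightly more transparent; the paper's version is more general in that the gage machinery extends to uniformly convex functions beyond $q=2$, a generality the authors mention but do not need here. Both are valid; there is no gap.
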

\begin{proof}[Proof of Lemma \ref{lem:coer}]
	%By the Br{\o}ndsted-Rochafellar Theorem 3.1.2 on page 161 of \cite{Z02}, the set $\mbox{gr}\,\partial \Phi(t,\cdot) :=\{(\bw,\bw^*):\bw^* \in\partial\Phi(t,\bw)\}$ is nonempty and dense on $R^d$ for any $t$. For any $0\leq t<\infty$ and $(0,\nabla \Phi(t,0))\in\mbox{gr}\,\partial \Phi(t,\cdot)$, 
	The definition of the subgradient implies
	\begin{align}
		\Phi(t,\bu) \geq \Phi(t,\bu_0) + \langle \bu-\bu_0,\nabla \Phi(t,\bu_0)\rangle, \mbox{ for any }\bu\in\R^d. \label{eq:coer1}
	\end{align} 
	Moreover, for any $\alpha\in (0,1)$, applying \eqref{eq:coer1} gives
	\begin{align}
		\Phi(t,\bu_0)+\alpha \langle \bu-\bu_0,\nabla \Phi(t,\bu_0)\rangle &\leq \Phi(t,(1-\alpha)\bu_0+\alpha \bu)\notag\\
		&\leq (1-\alpha)\Phi(t,\bu_0) + \alpha \Phi(t,\bu) - \alpha(1-\alpha)\rho_{\Phi(t,\cdot)}(\|\bu\|_2), \label{eq:coer2}
	\end{align}
	where the gage of uniform convexity (page 203 of \cite{Z02}, note that the domain of $\Phi(t,\cdot)$ is $\R^d$ for all $t$) for each $t$:
	\begin{align}
		\rho_{\Phi(t,\cdot)}(s):=\inf\Big\{\frac{(1-\alpha)\Phi(t,\bu)+\alpha \Phi(t,\bu')-\Phi(t,(1-\alpha)\bu+\alpha \bu')}{\alpha(1-\alpha)}\Big|\alpha\in(0,1), \|\bu-\bu'\|=s\Big\}. 
	\end{align}
	Applying the definition of $\beta$-e.s.c. (Definition \ref{def:euc}), we find that 
	\begin{align}
		\inf_{0\leq t<\infty} \rho_{\Phi(t,\cdot)}(1) \geq \beta/2>0. \label{eq:uniscvx1}
	\end{align}
	Dividing both sides of \eqref{eq:coer2} by $\alpha$ gives $\Phi(t,\bu_0)+ \langle \bu-\bu_0,\nabla \Phi(t,\bu_0)\rangle + (1-\alpha)\rho_{\Phi(t,\cdot)}(\|\bu\|_2) \leq \Phi(t,\bu)$. Letting $\alpha\to 0$ yields
	\begin{align}
		\Phi(t,\bu_0)+ \langle \bu-\bu_0,\nabla \Phi(t,\bu_0)\rangle + \rho_{\Phi(t,\cdot)}(\|\bu\|_2) \leq \Phi(t,\bu).\label{eq:coer3}
	\end{align}
	Dividing both sides of \eqref{eq:coer3} by $\|\bu\|_2^2$, and taking into account that $\langle \bu-\bu_0,\nabla \Phi(t,\bu_0)\rangle \geq -\|\bu\|_2 \|\nabla \Phi(t,\bu_0)\|_2$ for large $\bu$, we obtain from \eqref{eq:coer3} that
	\begin{align}
		&\liminf_{\|\bu\|_2\to\infty}\inf_{0\leq t\leq T}\frac{\Phi(t,\bu)}{\|\bu\|_2^2} \notag\\
		&\geq \liminf_{\|\bu\|_2\to\infty} \Big\{\frac{\inf_{0\leq t\leq T}\Phi(t,\bu_0)}{\|\bu\|_2^2}- \frac{\sup_{0\leq t\leq T}\|\nabla \Phi(t,\bu_0)\|_2}{\|\bu\|_2} + \frac{\inf_{0\leq t\leq T}\rho_{\Phi(t,\cdot)}(\|\bu\|_2)}{\|\bu\|_2^2}\Big\}\notag\\
		&\geq \liminf_{\|\bu\|_2\to\infty} \Big\{\frac{\inf_{0\leq t\leq T}\Phi(t,\bu_0)}{\|\bu\|_2^2}- \frac{\sup_{0\leq t\leq T}\|\nabla \Phi(t,\bu_0)\|_2}{\|\bu\|_2} + \inf_{0\leq t\leq T}\rho_{\Phi(t,\cdot)}(1)\Big\}\notag\\
		&\geq \frac{\beta}{2},\label{eq:coer4}
	\end{align}
		where the penultimate inequality applies the fact $\rho_{\Phi(t,\cdot)}(s) \geq s^2 \rho_{\Phi(t,\cdot)}(1)$ for any $s\geq 1$ by Proposition 3.5.1 on page 203 of \cite{Z02}; in the last inequality, we use the fact that $\sup_{0\leq t\leq T}|\Phi(t,\bu_0)|<\infty$ since $\Phi(\cdot,\bu_0)$ is continuous on $\R$, $\sup_{0\leq t\leq T}\|\nabla\Phi(t,\bu_0)\|_2 \leq c_T$ from the conditions of Lemma \ref{lem:conticonj} and \eqref{eq:uniscvx1}. 
\end{proof}

\section{Proofs for Section \ref{sec:gentheo}}\label{sec:pfgentheo}

In this section, we will view the restriction of an element in $D([0,\infty))^d$ on $[0,T]$ as an element in $D([0,T])^d$. We will use the results for $D([0,T])^d$ in \cite{B99} extensively, and  \cite{B99} metrizes $D([0,T])^d$ with metric $\rho_{d,\circ}^{T}$, defined by $\rho_{d,\circ}^{T}(\bx,\by):=\sum_{j=1}^d \rho_\circ^T(x_j,y_j)$ where 
\begin{align}		
\rho_\circ^T(x,y)=\inf_{\nu\in\Vc_T}\Big\{\sup_{0\leq t<s\leq T}\Big|\log\frac{\nu(s)-\nu(t)}{s-t}\Big|\vee \sup_{0\leq t\leq T}\big\|x(t)-y(\nu(t))\big\|_2\Big\}\label{eq:metricT}
\end{align}
where $\Vc_T$ is a class of nondecreasing functions on mapping $[0,T]$ onto itself. $\rho_\circ^T$ is one of the common metrics which topologize $D([0,T])$ which gives rise to the $J_1$ topology. 
This metric makes $D([0,T])$ a separable and complete space (see e.g. page 125-129 of \cite{B99}). 

%The results in \cite{B99} are based on the product metric with base
Another metric on $D([0,\infty))^d$ can be defined by the product metric with base  
$$
d_\infty^\circ(x,y)=\sum_{m=1}^\infty 2^{-m}(1\wedge \rho_\circ^m(x^m,y^m))
$$ 
as page 168 of \cite{B99}, where $x^m$ and $y^m$ are the restrictions of $x,y\in D[0,\infty)$ on $[0,m]$. We note that the product metric with base $d_\infty^\circ$ generates the same topology as $\rho_d^\infty$ defined in \eqref{eq:metric}, because their base metric $d_\infty^\circ(x,y)\to 0$ if and only if $\rho^\infty(x,y)\to 0$, which follows by Theorem 16.1 on page 168 of \cite{B99} and Proposition 3.5.3 (a)$\Leftrightarrow$(b) on page 119 of \cite{EK86}. It follows from this property that the almost sure and in probability convergence are equivalent under the two metrics. Furthermore, by Lemma \ref{lem:fint}, the weak convergence in $D([0,\infty))^d$ is equivalent to the weak convergence in $(D[0,T))^d$ for every $T>0$.

\subsection{Proofs for Section \ref{sec:asymtr}}\label{sec:pfasymtr}

\begin{lemma}\label{lem:ui_iid}
	Suppose $S_n:=\sum_{i=1}^n X_n$, where $X_n$ is an i.i.d. sequence and $\E|X_1|<\infty$. Then $\{S_n/n\}_{n\in\N}$ is uniformly integrable. 
\end{lemma}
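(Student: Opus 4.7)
The plan is to verify the classical criterion that a family $\{Y_n\}$ is uniformly integrable iff $\sup_n\E|Y_n|<\infty$ and $\lim_{K\to\infty}\sup_n\E[|Y_n|\IF\{|Y_n|>K\}]=0$. For $Y_n=S_n/n$ the first condition is immediate from the triangle inequality: $\E|S_n/n|\le n^{-1}\sum_{i=1}^n\E|X_i|=\E|X_1|<\infty$.

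For the uniform tail condition I would use a truncation argument. Fix $M>0$ and split $X_i=X_i^M+R_i^M$, where $X_i^M:=X_i\IF\{|X_i|\le M\}$ and $R_i^M:=X_i\IF\{|X_i|>M\}$. This gives the decomposition $S_n/n=\bar X_n^M+\bar R_n^M$, where $|\bar X_n^M|\le M$ uniformly in $n$. Consequently, whenever $|S_n/n|>K$ with $K>2M$, necessarily $|\bar R_n^M|>K-M>K/2$, so
\[
\E\bigl[|S_n/n|\IF\{|S_n/n|>K\}\bigr]\le \E\bigl[(M+|\bar R_n^M|)\IF\{|\bar R_n^M|>K/2\}\bigr]\le M\,\Pe(|\bar R_n^M|>K/2)+\E|\bar R_n^M|.
\]
Applying Markov's inequality and the subadditivity bound $\E|\bar R_n^M|\le \E|R_1^M|$ (which holds uniformly in $n$ precisely because the $X_i$ are identically distributed), this is at most $(2M/K)\E|R_1^M|+\E|R_1^M|$.

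Given $\epsilon>0$, I would first choose $M$ large enough that $\E|R_1^M|=\E[|X_1|\IF\{|X_1|>M\}]<\epsilon/2$ (possible by dominated convergence using $\E|X_1|<\infty$), and then choose $K$ large enough, depending on $M$, that $2M\E|R_1^M|/K<\epsilon/2$. This gives $\sup_n\E[|S_n/n|\IF\{|S_n/n|>K\}]<\epsilon$, completing the verification.

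The argument is essentially routine; the only subtleties are the order of quantifiers (pick $M$ first, then $K$ depending on $M$) and the role of identical distribution to obtain a bound on $\E|\bar R_n^M|$ that is uniform in $n$—independence is not actually used. An equally short alternative would invoke de la Vallée-Poussin's theorem: choose a convex increasing $\phi$ with $\phi(x)/x\to\infty$ and $\E\phi(|X_1|)<\infty$, and use Jensen's inequality on $|S_n/n|\le n^{-1}\sum_i|X_i|$ to get $\E\phi(|S_n/n|)\le\E\phi(|X_1|)<\infty$ uniformly in $n$.
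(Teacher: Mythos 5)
Your proof is correct and rests on the same ingredients as the paper's: truncate the $X_i$ at a level $M$ chosen first (by dominated convergence so that $\E[|X_1|\IF\{|X_1|>M\}]$ is small), then control the large-average event via Markov's inequality, with $M$ fixed before the threshold. The only difference is organizational---the paper splits $|X_i|$ under the expectation and applies Markov to $\Pe(|S_n/n|>M_2)$ via $\E|S_n/n|\le\E|X_1|$, with the explicit choice $M_2=2M_1\E|X_1|/\epsilon$, whereas you split $S_n/n$ into a bounded average plus a tail average $\bar R_n^M$, observe that $\{|S_n/n|>K\}$ forces $\{|\bar R_n^M|>K/2\}$, and apply Markov to $\Pe(|\bar R_n^M|>K/2)$ using $\E|\bar R_n^M|\le\E|R_1^M|$; your closing remark that independence is never used and your de la Vall\'ee-Poussin/Jensen alternative are both correct, the latter being a genuinely distinct and tidier route.
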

\begin{proof}[Proof of Lemma \ref{lem:ui_iid}]
It is sufficient to prove that for any $\epsilon>0$, there exists $M_2=M_2(\epsilon,\E|X_1|)$ such that $\sup_n\frac{1}{n}\sum_{i=1}^n \E\big[|X_n|\IF\{|S_n/n|>M_2\}\big]<\epsilon$. Then, the proof is completed by invoking the definition of the uniform integrability (e.g. Eq. (25.10) on page 338 of \cite{B95}).

	Take $M_1=M_1(\epsilon)$ such that $\E\big[|X_1|\IF\{|X_1|>M_1\}\big]<\epsilon/2$ (the existence of $M_1$ is guaranteed by the dominated convergence). Set $M_2:=2M_1\E|X_1|/\epsilon$, then
	\begin{align*}
		&\sup_n\E\big[|S_n/n|\IF\{|S_n/n|>M_2\}\big]\\ 
		&\leq \sup_n\frac{1}{n}\sum_{i=1}^n \E\big[|X_n|\IF\{|S_n/n|>M_2\}\big]\\%\label{eq:ui_iid1}
		&\leq \sup_n\frac{1}{n}\sum_{i=1}^n \big\{\E\big[|X_n|\IF\{|X_n|>M_1\}\big] + \E\big[|X_n|\IF\{|X_n|\leq M_1, |S_n/n|>M_2\}\big]\big\}\\
		&\leq \epsilon/2 + M_1 P(|S_n/n|>M_2)\\
		&\leq \epsilon/2 + M_1 (\E|X_1|/M_2) \\
		&= \epsilon,
	\end{align*}
	where the Markov inequality is applied in the penultimate inequality. 
\end{proof}

\subsubsection{Proof of Theorem \ref{th:at}}\label{sec:pfat}
First, we note that \ref{as:R} and \ref{as:Plim} have the following consequences for both $\Psi_\gamma(t,\bw)$ in \eqref{eq:trcond} and $\Psi(t,\bw)$ in \eqref{def:Psi}:
	\begin{itemize}[itemindent=55pt,leftmargin=0pt]	
		\labitemc{($\Psi$-lim0)}{as:Plim0}
		$\Psi(t,\cdot)$ is l.s.c. for any $t\in[0,\infty)$, and $\Psi(\cdot,\bu)$ is continuous on $[0,\infty)$ for any $\bu\in\R^d$.
		\labitemc{($\Psi$-lim1)}{as:Plim1} For every $T>0$, $\sup_{0\leq t\leq T}\big|\Psi_\gamma(t,\bu)-\Psi(t,\bu)\big|\to 0$ pointwise for every $\bu\in\R^d$.
		\labitemc{($\Psi$-lim2)}{as:Plim2} There exits a $\bu_0\in\R^d$ and a constant $c_T>0$ for any $T>0$ such that for any subgradient $\nabla_{\bu}\Phi(t,\bu_0)\in\partial_\bu \Phi(t,\bu_0)$ for $t\leq T$, $\sup_{0\leq t\leq T}\|\nabla\Phi(t,\bu_0)\|_2 \leq c_T$. %For any $T>0$, there exists a constant $c_T>0$ such that for any subgradient $\nabla\Psi(t,0)\in\partial_\bu \Psi(t,0)$ for $t\leq T$, $\sup_{0\leq t\leq T}\|\nabla\Psi(t,0)\|_2 \leq c_T$. 
	\end{itemize}
In particular, the first part of fact \ref{as:Plim0} follows by \ref{as:R}, and the second part follows by that $g^\dagger(t)$ is continuous in \ref{as:Plim}. \ref{as:Plim1} follows immediately from \ref{as:Plim}. \ref{as:Plim2} follows by that \ref{as:R} that both $F$ and $\Pc$ are finite.

\bigskip
Now we start with the main proof. Observe that from \eqref{eq:md} and Assumption \ref{as:M},
	\begin{align}
		\bv_{n+1} &= \bv_n - \gamma \nabla f_{n+1}(\bw_n)
				= \bv_n - \gamma\DD_{n+1}^\gamma-\gamma G(\bw_n).\label{eq:wtmp1_at}
				%-\gamma \int_0^1 \nabla G(\bw(t)+s(\bw_n-\bw(t)))ds\cdot(\bw_n-\bw(t))
	\end{align}
where $\DD_{n+1}^\gamma := \nabla f_{n+1}(\bw_n)-G(\bw_n)$ is a sequence of martingale differences. By iteration we have that the stopped process
\begin{align}%_{\lfloor (t\wedge \tau_\gamma^K)/\gamma\rfloor+1} 
	\bv_\gamma(t\wedge \tau_\gamma^K)&= \bv_0 - \gamma \sum_{n=0}^{\lfloor (t\wedge \tau_\gamma^K)/\gamma\rfloor-1} \DD_{n+1}^\gamma - \gamma \sum_{n=0}^{\lfloor (t\wedge \tau_\gamma^K)/\gamma\rfloor-1} G(\bw_n)\notag\\
	&=: \bv_0 -L_\gamma(t\wedge \tau_\gamma^K) - r_\gamma(t \wedge \tau_\gamma^K).\label{eq:at1}
\end{align}
% Let
% 	\begin{align}
% %		L_\gamma(t):= \gamma \sum_{k=0}^{\lfloor (t\wedge \tau_\gamma^K)/\gamma\rfloor-1} \big\{\Delta_{k+1}-\E[\Delta_{k+1}|\Fc_k]\big\}.
% L_\gamma(t):= \gamma \sum_{k=0}^{\lfloor t/\gamma\rfloor-1}\DD_{n+1}^\gamma.\label{eq:at1}
% 	\end{align}
% and its stopped process $L_\gamma(t\wedge \tau_\gamma^K)$.

In \ref{p1}, we will show that $L_\gamma(\cdot \wedge \tau_\gamma^K) \weakto 0$ in $D([0,\infty))^d$. In \ref{p2}, it is shown that the stopped process $r_\gamma(\cdot \wedge \tau_\gamma^K)$ is relatively compact on $D([0,\infty))^d$. 

By \ref{p1-1} in \ref{p1} and \ref{p2}, all terms on the right of \eqref{eq:at1} converge to an element in $C([0,\infty))^d$ respectively. $C([0,\infty))^d$ is a Polish space, so both $L_\gamma(\cdot\wedge\tau_\gamma^K)$ and $r_\gamma(\cdot\wedge\tau_\gamma^K)$ are asymptotically tight by the discussion on page 23 of \cite{VW96}. It follows that $\bv_\gamma(\cdot\wedge\tau_\gamma^K)$ is relative compact by Lemma 1.4.3 of \cite{VW96} and Problem 3.22(c) on p.153 of \cite{EK86}.

\bigskip
Now we identify the limit. For any $K$ and any convergent subsequence $\bv_{\gamma'}(\cdot\wedge\tau_{\gamma'}^K)$ with limit $\bv_{K}'$ in $D([0,\infty))^d$, by the continuous mapping theorem and the fact that $\inf\{t:\cdot>K\}$ is a continuous,
\begin{align}
	(\bv_{\gamma'}(\cdot\wedge\tau_{\gamma'}^K),\tau_{\gamma'}^K)\weakto(\bv_{K}'(\cdot\wedge\tau^{K'}),\tau^{K'}) \quad\mbox{in }D([0,\infty))^d\times \R,\label{eq:jointstop_at}
\end{align}
for all but countably many $K$ at which $\tau^{K'}:=\inf\{t:\|\bv_{K}'(t)\|_2>K\}$ is discontinuous, and $\bv_{K}'(\cdot\wedge\tau^{K'})$ is continuous a.s. on $[0,\infty)$ by the almost sure continuity on $[0,\infty)$ of the limit of $L_\gamma(\cdot\wedge\tau_{\gamma'}^K)$ (from \ref{p1}) and $r_\gamma(\cdot\wedge\tau_{\gamma'}^K)$ (from \ref{p2}). 

	 A consequence followed by \eqref{eq:jointstop_at} and \ref{p3} is that %$\{\WW_\gamma(\cdot\wedge\tilde\tau_{\gamma}^K)\}_\gamma$ is also relatively compact. To see this, for any subsequence $\gamma'$ such that $\VV_{\gamma'}(\cdot\wedge\tilde\tau_{\gamma'}^K)\weakto\VV_{K}'$ in $D([0,\infty))^d$ where $\VV_{K}'\inC([0,\infty))^d$, 
\begin{align}
	\bw_{\gamma'}(\cdot\wedge\tau_{\gamma'}^K)\weakto\bw_{K}'(\cdot\wedge\tau^{K'}):=\nabla\tPs(\cdot,\bv_{K}'(\cdot\wedge\tau^{K'})) \quad\mbox{in $D([0,\infty))^d$}.\label{eq:weakwt}
\end{align}	  
where $\bw_{K}'(\cdot\wedge\tau^{K'})\in C([0,\infty))^d$ almost surely for all but countably many $K$, since $\nabla\tPs$ is continuous by Lemma \ref{lem:conticonj} [conditions there are verified by facts \ref{as:Plim1}, \ref{as:Plim2} and Lemma \ref{lem:escreg}(ii)] and $\bv_{K}'(\cdot\wedge\tau^{K'})\in C([0,\infty))^d$ a.s.

We will show that all possible limit $\bv_{K}'(\cdot\wedge\tau^{K'})$ satisfy
	\begin{align}
		\bv_{K}'(t\wedge \tau^{K'})&=\bv_0-\int_0^{t\wedge \tau^{K'}} G(\bw_K'(s\wedge \tau^{K'}))ds, \quad\mbox{for all $t\geq 0$},\label{eq:bvprime}
	\end{align}
and this suggests that $\bv_{K}'$ is one of the possibly multiple solutions of \eqref{eq:weak0v}, so we can write
\begin{align}
		(\bv_{\gamma'}(\cdot\wedge\tau_{\gamma'}^K),\tau_{\gamma'}^K)\weakto(\bv(\cdot\wedge\tau^K),\tau^K) \quad\mbox{in }D([0,\infty))^d\times \R,\label{eq:jointstop_at1}
\end{align}
for all but countably many $K$ at which $\tau^K$ is discontinuous, where $\bv(\cdot)$ is one of the solution of \eqref{eq:weak0v}. 

To see \eqref{eq:bvprime}, since $L_\gamma(\cdot\wedge \tau_\gamma^K)\weakto 0$ in $D([0,\infty))^d$ by \ref{p1}, it is sufficient to show 
\begin{align*}
	r_{\gamma'}(\cdot\wedge\tau_{\gamma'}^K)\rightsquigarrow \int_0^{\cdot} G(\bw_K'(s\wedge\tau^{K'}))ds\quad\mbox{in }D([0,T])^d.% \label{eq:limr0}
\end{align*}
To see this, $G(\bw_{\gamma'}(\cdot\wedge\tau_\gamma^K))\rightsquigarrow G(\bw(\cdot\wedge\tau^{K'}))$ in $D([0,\infty))^d$ by \eqref{eq:weakwt} and the continuous mapping theorem with the continuity of $G(\cdot)$ in Assumption \ref{as:M}, this and Proposition 7.27 on page 118-119 of \cite{K08} yield
\begin{align}
	r_{\gamma'}(\cdot\wedge\tau_{\gamma'}^K) = \int_0^{\cdot} G(\bw_{\gamma'}(s\wedge\tau_\gamma^K))ds \rightsquigarrow \int_0^{\cdot} G(\bw_K'(s\wedge\tau^{K'}))ds \quad\mbox{in }D([0,T])^d,\label{eq:limr}
\end{align}
but this implies the convergence in $D([0,\infty))^d$ by Lemma \ref{lem:fint} as $T>0$ is arbitrary. Hence, \eqref{eq:bvprime} is proved by the fact that the limit on the right hand side of \eqref{eq:limr} is continuous and $\cdot\wedge\tau_{\gamma'}^K\weakto\cdot\wedge\tau^{K'}$ in $D[0,\infty)$. This proves \eqref{eq:weak0v} of Theorem \ref{th:at}(a). The proof for \eqref{eq:weak0w} in Theorem \ref{th:at}(a) is complete by noting \eqref{eq:weakwt}.

%\skc{Write down main line of the proof}	
\bigskip
To prove (b), we note that if the solution to \eqref{eq:weak0v} is unique, then we have 
\begin{align}
	\rho_d^\infty\big(\bv_{\gamma}(\cdot \wedge \tau_{\gamma}^K),\bv(\cdot)\big)\stackrel{p}{\longrightarrow} 0\label{eq:in_p_conv}
\end{align}
by the discussion above Eq. (3.7) on p.27 of \cite{B99}. By the discussion below \eqref{eq:metricT}, the condition \eqref{eq:in_p_conv} is equivalent to
\begin{align}
	\rho_{d,\circ}^T\big(\bv_{\gamma}(\cdot \wedge \tau_{\gamma}^K),\bv(\cdot)\big)\stackrel{p}{\longrightarrow} 0, \quad\mbox{for any $T>0$}\label{eq:in_p_conv1}.
\end{align}
Since $\bv(\cdot)$ is continuous, the distance between $\bv_{\gamma}(\cdot \wedge \tau_{\gamma}^K)$ and $\bv(\cdot)$ in product uniform metric also converges to 0, by the argument at 5th line above Eq. (18.3) on p.150 in the first edition of \cite{B99}. This proves the first assertion. 

Similar arguments show $\bw_\gamma(\cdot\wedge\tau_{\gamma}^K)\stackrel{p}{\to}\bw(\cdot)$ in product uniform metric on $[0,T]$. Hence, (b) is proved.

\bigskip
To prove (c), since $\{\bv_{\gamma}(\cdot \wedge \tau_{\gamma}^K)\}_\gamma$ is relatively compact with all limits satisfying \eqref{eq:weak0v}, it is enough to show that for a sequence $\gamma'\to 0$ and $K\to\infty$ that \eqref{eq:jointstop_at} (at the same time, \eqref{eq:jointstop_at1}) holds,
\begin{align}
\rho_d^\infty(\bv_{\gamma'}(\cdot),\bv_{\gamma'}(\cdot \wedge \tau_{\gamma'}^K)) &\leq d e^{-\tau_{\gamma'}^K}\stackrel{p}{\longrightarrow} 0, \label{eq:at_c1}\\
\mbox{and}\quad\quad\quad\quad	\rho_d^\infty(\bv(\cdot\wedge\tau^K),\bv) &\leq d e^{-\tau^K}\stackrel{p}{\longrightarrow} 0, \label{eq:at_c2}
\end{align}
then it follows by Corollary 3.3.3 on p.110 of \cite{EK86} that $\bv_{\gamma'} \weakto \bv$ in $D([0,\infty))^d$, where $\bv$ is one of the solutions of \eqref{eq:weak0v} satisfying \eqref{eq:jointstop_at1}. 

To see \eqref{eq:at_c2}, since all the solutions of \eqref{eq:weak0v} are bounded in finite time,  $\tau^{K}\to\infty$ as $K\to\infty$ along a countable sequence. The proof of \eqref{eq:at_c1} is complete once we show $\tau_{\gamma'}^K\stackrel{p}{\to}\infty$, but this can be done by similar argument as in \ref{ws:stopping} in the proof of Theorem \ref{th:dd} in Section \ref{sec:pfdisdy}, so the details are omitted for brevity.

\bigskip
In the following, we show \ref{p1}-\ref{p3}. Before that, we note a fact: 
 \begin{align}
 \exists K_\Psi>0 \mbox{ depending on $\Psi$  such that $\sup_\gamma\sup_{t\in[0,T]}\|\nabla\Psi_\gamma^*(t,\bv)\|_2\leq K_\Psi$ on $\|\bv\|_2\leq K$.}\label{eq:bdwv}
 \end{align}
 This fact follows by Lemma \ref{lem:cptpres}, and the facts \ref{as:Plim0}, \ref{as:Plim1}, \ref{as:Plim2} and Lemma \ref{lem:escreg}(i) and (ii).

\begin{itemize}[itemindent=45pt,leftmargin=0pt]	

\labitem{Step 1}{p1}	{\bf $L_\gamma(\cdot\wedge \tau_\gamma^K)\weakto 0$ for every $K$ in $D([0,\infty))^d$.}

	We will proceed in three parts: first, we show that $L_\gamma(t)\in D([0,\infty))^d$ is relatively compact. Next, by the relative compactness of $L_\gamma(t\wedge \tau_\gamma^K)$ shown above, there exists a subsequence $\gamma'$ such that $L_{\gamma'}(t\wedge \tau_\gamma^K)\weakto L(t\wedge \tau_\gamma^K)$ for any $K$, we will show that $L(t)$ is a \emph{continuous martingale} with \emph{finite variation}, then $L(t\wedge \tau_\gamma^K)=0$ for all $t$ and every $K$ a.s. by, e.g. Proposition 15.2 on page 276 of \cite{K97}.  
	
%Since every limit point will be continuous, we will only show the martingale and finite variation properties of $L(t)$.
	 
	 \bigskip
	 \begin{itemize}[itemindent=54pt,leftmargin=0pt]
	 \labitem{Part 1.1}{p1-1}\textbf{$\{L_\gamma(\cdot\wedge \tau_\gamma^K)\}_\gamma \subset D([0,\infty))^d$ is relatively compact and any limit of $L_\gamma(\cdot\wedge \tau_\gamma^K)$ is a.s. continuous.} 
	 
	 	Without loss of generality, we will assume $\|\bv_\gamma(t)\|_2\leq K$ for all $\gamma$ and $t$, and drop $\tau_\gamma^K$ in $L_\gamma(\cdot\wedge \tau_\gamma^K)$.
	 
	 Recall the definition of $L_\gamma(t)$ in \eqref{eq:at1}. Using \eqref{eq:bdwv}, for any $s>0$,
	 \begin{align*}
	 	\|L_\gamma(t+s)-L_\gamma(t)\|_2 \leq 2 \sum_{k=\lfloor t/\gamma\rfloor +1}^{\lfloor (t+s)/\gamma\rfloor} \sup_{\|\bw\|_2\leq K_\Psi}\|\nabla f(\bw,Q(\bw,\psi_{k+1}))\|_2.
	 \end{align*}
	  Since $\E\big[\sup_{\|\bw\|_2\leq K_\Psi}\|\nabla f(\bw,Q(\bw,\psi_{k+1}))\|_2\big]<\infty$ by \eqref{eq:GL1} in Assumption \ref{as:M}, the last display and the strong law of large number imply that for any fixed $t$,
	\begin{align*}
		\sup_{s\leq \delta}\|L_\gamma(t+s)-L_\gamma(t)\|_2
		&\leq 2\gamma \sum_{k=\lfloor t/\gamma\rfloor +1}^{\lfloor (t+\delta)/\gamma\rfloor} \sup_{\|\bw\|_2\leq K_\Psi}\|\nabla f(\bw,Q(\bw,\psi_{k+1}))\|_2\notag\\
		&\to 2\E\big[\sup_{\|\bw\|_2\leq K_\Psi}\|\nabla f(\bw,Q(\bw,\psi_{k+1}))\|_2\big] \delta \leq K_3 \delta \quad\mbox{a.s.}
	\end{align*}
	where $K_3$ depends on $K_\Psi$. By similar arguments on page 976 of \cite{BKS93}, %similar argument as \eqref{eq:uniformize}, 
	there exists a constant $K_4>0$ depends on $K_3$ that
	\begin{align}
	\limsup_{\gamma\to 0}\sup_{0\leq t\leq T}\sup_{s\leq \delta}\|L_\gamma(t+s)-L_\gamma(t)\|_2\leq K_4 \delta, \mbox{ a.s.}	\label{eq:conti}
	\end{align}
	This verifies Eq. (28) on page 976 of \cite{BKS93}. 
	
	On the other hand, again by $\E\big[\sup_{\|\bw\|_2\leq K_\Psi}\|\nabla f(\bw,Q(\bw,\psi_{k+1}))\|_2\big]<\infty$ in \eqref{eq:GL1} of Assumption \ref{as:M}, the SLLN implies that for each $T>0$,
	\begin{align*}
		\limsup_{\gamma\to 0}\sup_{0\leq t\leq T}\|L_\gamma(t)\|_2&\leq 2\sup_{0\leq t\leq T}\limsup_{\gamma\to 0}\gamma\sum_{k=0}^{\lfloor t/\gamma\rfloor-1} \sup_{\|\bw\|_2\leq K_\Psi}\|\nabla f(\bw,Q(\bw,\psi_{k+1}))\|_2\\
		&\to 2 T \E\Big[\sup_{\|\bw\|_2\leq K_\Psi}\|\nabla f(\bw,Q(\bw,\psi_{k+1}))\|_2\Big]<\infty, \quad\mbox{a.s.,}
	\end{align*}
	where the bound for $G(\bw)$ follows from the continuity of $G(\bw)$ on $\R^d$ by Assumption \ref{as:M}. Hence, by Lemma 3 on page 976 of \cite{BKS93}, $L_\gamma(t)$ is relatively compact.
	
	To show that any possible limit of $L_\gamma(t\wedge\tau_\gamma^K)$ is continuous almost surely, observe that
	\begin{align}
		\sup_{0<t\leq T}\big\|L_\gamma(t\wedge\tau_\gamma^K)-L_\gamma(t_-\wedge\tau_\gamma^K)\big\|_2\leq \gamma\sup_{\|\bw\|_2\leq K_\Psi}\|\nabla f(\bw,Q(\bw,\psi_{k+1}))\|_2\to 0, \quad\mbox{a.s.}\notag
	\end{align}
	since $\sup_{\|\bw\|_2\leq K_\Psi}\|\nabla f(\bw,Q(\bw,\psi_{k+1}))\|_2<\infty$ a.s. by \eqref{eq:GL1} in Assumption \ref{as:M}. The desired continuity follows by Theorem 13.4 of \cite{B99}.
	
	\bigskip
\labitem{Part 1.2}{p1-2}\textbf{$\{L_\gamma(t\wedge \tau_\gamma^K)\}_\gamma$ is uniformly integrable for all $t$ and any possible limit $L(\cdot\wedge \tau^K)$ of $L_{\gamma'}(t\wedge \tau_\gamma^K)$ is a martingale. }

First we show that $\{L_{\gamma}(t\wedge \tau_\gamma^K)\}_{\gamma}$ is uniformly integrable for all $t$, note that
	\begin{align}
		&\sup_{\gamma}\E\big[\|L_{\gamma}(t\wedge \tau_\gamma^K)\|_2 \IF\{\|L_{\gamma}(t\wedge \tau_\gamma^K)\|_2\geq M\}\big]\notag\\
		&\leq 4 \sup_{\gamma} \gamma\sum_{k=0}^{\lfloor (t\wedge \tau_\gamma^K)/\gamma\rfloor-1} \E\Big[\sup_{\|\bw\|_2\leq K_\Psi} \big\{\|\nabla f(\bw,Q(\bw,\psi_k))\|_2\IF\{\|L_{\gamma}(t\wedge \tau_\gamma^K)\|_2\geq M\}\Big] \label{eq:ui0}
	\end{align}
	For every $\epsilon>0$, there exists $M_\epsilon>0$ such that the right hand side of the display above is less than $\epsilon$ by Lemma \ref{lem:ui_iid} and \eqref{eq:GL1} in Assumption \ref{as:M}. Therefore, $\{\|L_{\gamma}(t\wedge \tau_\gamma^K)\|_2\}_{\gamma}$ is uniformly integrable, and this implies  $\{L_{\gamma,j}(t\wedge \tau_\gamma^K)\}_{\gamma}$ is uniformly integrable for each $j=1,...,d$.
	%The conditions \eqref{eq:GL1} and \eqref{eq:GL2} in Assumption \ref{as:M}, Assumption \ref{as:S} imply the right hand side of \eqref{eq:ui1} is $L_1$, and the dominated convergence theorem (see the discussion at the bottom on page 338 of \cite{B95}) implies that $L_{\gamma'}(t)$ are uniformly integrable. 

	$L_\gamma(t\wedge \tau_\gamma^K)$ is a stopped process of a martingale by the defintion \eqref{eq:at1}, and is itself a martingale by the optional stopping theorem (see page 105 of \cite{K97}). %For any $s\leq t$, by Lemma \ref{lem:ui_co}, $\{\E[L_{\gamma'}(t)|\Fc_s]\}_{\gamma'}$ is uniformly integrable since $\{L_{\gamma'}(t)\}_{\gamma'}$ is uniformly integrable. 
	By \ref{p1-1}, $\{L_\gamma(\cdot\wedge \tau_\gamma^K)\}_\gamma$ is relatively compact. Take any subsequence $\gamma'$ such that 
	\begin{align}
		(L_{\gamma'}(\cdot\wedge \tau_{\gamma'}^K),\bv_{\gamma'}(\cdot\wedge\tau_{\gamma'}^K),\tau_{\gamma'}^K)\weakto(L(\cdot\wedge \tau^{K'}),\bv_{K}'(\cdot\wedge\tau^{K'}),\tau^{K'}) \quad\mbox{in }D([0,\infty))^d\times D([0,\infty))^d\times \R.\label{eq:jointconvL}
	\end{align}
	
	Therefore, for any event $A_s\in\Fc_s$ for $s\leq t$,
	\begin{align*}
		\E[L(t\wedge \tau^{K'})\IF\{A_s\}] = \lim_{\gamma'\to 0} \E[L_{\gamma'}(t\wedge  \tau_{\gamma'}^K)\IF\{A_s\}] = \lim_{\gamma'\to 0} \E[L_{\gamma'}(s\wedge  \tau_{\gamma'}^K)\IF\{A_s\}] = \E[L(s\wedge \tau^{K'})\IF\{A_s\}].
	\end{align*}
	where in the first and the last equality we apply Theorem 25.12 of \cite{B95}, and the fact that $\{L_{\gamma'}(t\wedge \tau_{\gamma'}^K)\IF(A_s)\}_{\gamma'}$ is uniformly integrable and $L_{\gamma'}(t\wedge \tau_{\gamma'}^K)\IF\{A_s\}\weakto L(t\wedge \tau^{K'})\IF\{A_s\}$ for any $t$ and $s\leq t$ by \eqref{eq:jointconvL}; the second equality follows from that $L_\gamma(t\wedge \tau_{\gamma'}^K)$ is a martingale. The last display shows that 
	\begin{align}
		\E\big[\E[L(t\wedge \tau^{K'})|\Fc_s]\IF\{A_s\}\big] =\E[L(t\wedge \tau^{K'})\IF\{A_s\}] = \E[L(s\wedge \tau^{K'})\IF\{A_s\}].
	\end{align}
	Hence, $\E[L(t\wedge \tau^{K'})|\Fc_s]=L(s\wedge \tau^{K'})$ and $L(\cdot\wedge \tau^{K'})$ is a martingale.

	\bigskip
	\labitem{Part 1.3}{p1-3}\textbf{Any possible limit $L(\cdot\wedge\tau^{K'})$ of $L_{\gamma}(\cdot\wedge\tau_{\gamma}^K)$ is of locally finite variation almost surely.} 
	%The total variation of $L$ satisfies $V_t(L)\leq\sup_{\gamma'} V_t(L_{\gamma'})$, where $V_t(L):=\sup\sum\|L(t_{k+1})-L(t_k)\|_2$ and the $\sup$ is over all partitions $t_k$ of the interval $[0,t]$. We have
	
	Assume \eqref{eq:jointconvL} holds along some subsequence $\gamma'$. The total variation process is defined by $V_t(L_{\gamma'}(\cdot\wedge\tau_{\gamma'}^K)):=\sup\sum\|L_{\gamma'}(t_{k+1}\wedge\tau_{\gamma'}^K)-L_{\gamma'}(t_k\wedge\tau_{\gamma'}^K)\|_2$ with the $\sup$ ranging over all partitions $t_k$ of the interval $[0,t]$. $V_t(L_{\gamma'}(\cdot\wedge\tau_{\gamma'}^K))$ satisfies
	\begin{align}
		\E[V_t(L_{\gamma'}(\cdot\wedge\tau_{\gamma'}^K))] \leq 2 t \E \Big[\sup_{\|\bw\|_2\leq K_\Psi}\|\nabla f(\bw,Q(\bw,\psi_{k+1}))\|_2\Big] \leq Ct, \label{eq:bvLg}
	\end{align}
	for some $C>0$ by \eqref{eq:GL1} in Assumption \ref{as:M}. % \eqref{eq:bvLg} also implies that
	Using the fact that $V_t(\cdot)$ is lower semicontinuous and is bounded below by 0, by the Portmanteau theorem and the weak convergence \eqref{eq:jointconvL}, we obtain from \eqref{eq:bvLg} that for any $t>0$,
	\begin{align*}
		\E[V_t(L(\cdot\wedge\tau^{K'}))] \leq \liminf_{\gamma'\to 0}\E[V_t(L_{\gamma'}(\cdot\wedge\tau_{\gamma'}^K))] \leq Ct.
	\end{align*}
	This implies that $V_t(L(\cdot\wedge\tau^{K'}))$ is finite for any $t>0$ almost surely and the proof is complete.
	
	%By the weak convergence \ref{p1-2}, $\{L_{\gamma'}(t)\}_{\gamma'}$ is uniformly integrable for all $t$
	%Therefore, $P(V_t(L_{\gamma'})=\infty)=0$ for each $\gamma'$. By $\gamma\to 0$ along a countable sequence and $\gamma'$ is a subsequence of $\gamma$, 
	% \begin{align*}
% 		P(V_t(L)=\infty) \leq P(\sup_{\gamma'} V_t(L_{\gamma'})=\infty) \leq \sum_{l=1}^\infty P( V_t(L_{\gamma_l'})=\infty) = 0.
% 	\end{align*}
	%Hence, $V_t(L)<\infty$ almost surely for every $t$. 
	\end{itemize}
	
	\bigskip
	\labitem{Step 2}{p2}{\bf $r_\gamma(\cdot \wedge \tau_\gamma^K)$ is relatively compact on $D([0,\infty))^d$, and all its possible limits are continuous a.s.}
	
Without loss of generality, we will assume $\|\bv_\gamma(t)\|_2\leq K$ for all $\gamma$ and $t$, and drop $\tau_\gamma^K$ in $r_\gamma(\cdot\wedge \tau_\gamma^K)$.

	Observe that
	\begin{align*}
		\sup_{s\leq \delta}\|r_\gamma(t+s)-r_\gamma(t)\|_2
		&\leq \gamma \lfloor \delta/\gamma\rfloor  \sup_{\|\bw\|_2\leq K_\Psi}\|\nabla G(\bw)\|_2
		\leq \delta K_1', \quad\mbox{a.s.}
	\end{align*}
	where the constant $K_1'>0$ depends on $K_\Psi$ and $\sup_{\|\bw\|_2\leq K_\Psi}\|\nabla G(\bw)\|_2<\infty$ [which follows by the continuity of $G$ in Assumption \ref{as:M}]. By similar arguments on page 976 of \cite{BKS93}, %similar argument as \eqref{eq:uniformize}, 
	there exists a constant $K_2'>0$ depends on $K_1'$ that
	\begin{align}
	\limsup_{\gamma\to 0}\sup_{0\leq t\leq T}\sup_{s\leq \delta}\|r_\gamma(t+s)-r_\gamma(t)\|_2\leq K_2' \delta, \mbox{ a.s.}	\label{eq:conti_r}
	\end{align}
	This verifies Eq. (28) on page 976 of \cite{BKS93}. 
	
	Similarly, for any $T>0$,
	\begin{align*}
		\limsup_{\gamma\to 0}\sup_{0\leq t\leq T}\|L_\gamma(t)\|_2&\leq \limsup_{\gamma\to 0}\gamma \lfloor T/\gamma\rfloor\sup_{\|\bw\|_2\leq K_\Psi}\|\nabla G(\bw)\|_2<\infty, \quad\mbox{a.s.,}
	\end{align*}
	Hence, by Lemma 3 on page 976 of \cite{BKS93}, $L_\gamma(t)$ is relatively compact.
	
	Lastly, the continuity of any possible limit of $r_\gamma(\cdot\wedge\tau_\gamma^K)$ follows by Theorem 13.4 of \cite{B99} based on the fact
	\begin{align}
		\sup_{0<t\leq T}\big\|r_\gamma(t\wedge\tau_\gamma^K)-r_\gamma(t_-\wedge\tau_\gamma^K)\big\|_2\leq \gamma\sup_{\|\bw\|_2\leq K_\Psi}\|G(\bw)\|_2\to 0, \quad\mbox{a.s.}\notag
	\end{align}
	
	\labitem{Step 3}{p3} \textbf{$\bw_\gamma\weakto\bw:=\nabla\Psi(\cdot,\bv(\cdot))$ in $D([0,\infty))^d$, for any sequence $\gamma$ such that $\bv_\gamma\weakto\bv$ in $D([0,\infty))^d$ with a deterministic function $\bv\in C([0,\infty))^d$.}
		
	By Lemma \ref{lem:fint}, it is enough to show the restricted $\br_T \bw_\gamma(t)\weakto \br_T \bw(t)$ in $(D[0,T))^d$ for any $T>0$, where $\br_t:D([0,\infty))^d\to D([0,T])^d$ is given by $\br_t \bw \mapsto (r_t w_1,...,r_t w_d)$, and $r_t w_j$ is the restriction of $w_j\in D[0,\infty)$ on $[0,t]$.

	The algorithm \eqref{eq:md} and \eqref{eq:weak0w} suggests that for each $t>0$,
	\begin{align}
		\bw_\gamma(t)&=\arg\min_{\bu\in\R^d}\{\Psi_\gamma(t,\bu)-\bu^\top\bv_\gamma(t)\}=\bw_{\lfloor t/\gamma\rfloor},\label{eq:step4tmp1_at}\\
		\bw(t)&=\arg\min_{\bu\in\R^d}\{\Psi(t,\bu)-\bu^\top\bv(t)\}.
	\end{align}
	We will apply Lemma \ref{lem:kato1} to show $\br_T \bw_\gamma(t)\weakto \br_T \bw(t)$ in $(D[0,T))^d$ for any $T>0$. 
	
	To verify condition (a) in Lemma \ref{lem:kato1}, we note that $t\mapsto\Psi_\gamma(t,\bu)-\bu^\top\bv_\gamma(t)$ is in $D[0,T]$ for each $\bu$ since $\bv_\gamma(t)\in D([0,T])^d$ and $\Psi_\gamma(t,\bu)\in D[0,T]$ for each $\bu$. Furthermore, $\bu\mapsto\Psi_\gamma(t,\bu)-\bu^\top\bv_\gamma(t)$ is $(\beta,q)$-e.u.c. in $\bu$ for each $t$ by the $(\beta,q)$-e.u.c. of $\Psi_{\lfloor t/\gamma\rfloor,\gamma}$ and Lemma \ref{lem:eucchar}. On the other hand, $t\mapsto\Psi(t,\bu)-\bu^\top\bv(t)$ is continuous in $t$ for each $\bu$ since $\bv$ is an a.s. continuous process,  
	%by \ref{ws:sol} on a rich enough probability space (by Corollary 5.3.23 on page 310 of \cite{KS98}), 
	and $\Psi(\cdot,\bu)$ is continuous on $[0,\infty)$ by fact \ref{as:Plim0}, and $\bu\mapsto\Psi(t,\bu)-\bu^\top\bv(t)$ is $\beta$-e.s.c. with respect to $t$ by Lemma \ref{lem:escreg}(ii) and Lemma \ref{lem:eucchar}. Thus, condition (a) in Lemma \ref{lem:kato1} holds. 

	For the condition (b) in Lemma \ref{lem:kato1}, we note that $\bu\mapsto\Psi(t,\bu)-\bu^\top\bv(t)$ is $\beta$-e.s.c. with respect to $t$ by Lemma \ref{lem:escreg}(ii) and Lemma \ref{lem:eucchar}, so Lemma \ref{lem:uc}(b) implies that the minimizer is unique for each $t\in[0,T]$. 

	For the condition (c) in Lemma \ref{lem:kato1}, from \eqref{eq:step4tmp1_at} it is clear that $\bw_\gamma(t)\in D([0,\infty))^d$; Lemma \ref{lem:conticonj} [conditions verified by facts \ref{as:Plim0}, \ref{as:Plim2} and Lemma \ref{lem:escreg}(ii)] suggests that $\nabla\Psi^*$ is continuous on $[0,T]\times\R^d$ for any $T>0$, so $\bw(t)=\nabla\Psi^*(t,\cdot)\circ \bv(t)$ is almost surely continuous on $[0,T]$ because $\bv(t)$ is almost surely continuous.

	To verify the finite dimensional convergence \eqref{eq:kato1}, take arbitrary points $\{\bu_l\}_{l\leq L}\subset\R^d$ where $L\in\N$, and any bounded, nonnegative Lipschitz functions $f_1,...,f_L:D[0,T]\to\R$. By $\bv_\gamma\weakto\bv$ in $D([0,T])^d$ and the uniform convergence $\sup_{t\in[0,T]}|\Psi_\gamma(t,\bu)-\Psi(t,\bu)|$ for every $\bu$. We have 
	\begin{align*}
		\int \prod_{l=1}^L f_l\big(\Psi_\gamma(t,\bu_l)-\bu_l^\top\bv_\gamma\big) dP_{\bv_\gamma} \to \prod_{l=1}^L f_l\big(\Psi(\cdot,\bu_l)-\bu_l^\top\bv\big) 
	\end{align*} 
	by the boundedness of $f_1,...,f_L$ and the definition of the weak convergence of each $\Psi_\gamma(\cdot,\bu_l)-\bu_l^\top\bv_\gamma$. Hence, the desired joint convergence \eqref{eq:kato1} follows by Corollary 1.4.5 on page 31 of \cite{VW96}. %\skc{\bf To be done}

	The proof is complete by invoking Lemma \ref{lem:kato1}.	
		
\end{itemize}
\hfill$\qed$

\subsection{Proofs for Section \ref{sec:disdy}}\label{sec:pfdisdy}

\begin{proof}[Proof of Lemma \ref{lem:exprW}]
	It is enough to show that for each $t$,
	\begin{align*}
		%0\in \partial (\gamma^{-1}\big(\Psi(\bw(t)+\sqrt{\gamma}\bu)-\Psi(\bw(t)) -\langle \sqrt{\gamma}\bu,\bv_{\lfloor t/\gamma\rfloor}\rangle\big)).
		0\in \partial\big(\tP_\gamma(t,\bu)-\langle \bu,\VV_\gamma(t)\rangle\big)\big|_{\bu=\WW_\gamma(t)}. %=\partial \big(\gamma^{-1}\big(\Psi(\bw_{\lfloor t/\gamma\rfloor})-\Psi(\bw(t)) -\langle \sqrt{\gamma}\WW_\gamma(t),\bv_{\lfloor t/\gamma\rfloor}\rangle\big)\big).
	\end{align*}
	First we observe that
	\begin{align*}
		\tP_\gamma(t,\bu)-\langle \bu,\VV_\gamma(t)\rangle = \gamma^{-1}\big(\Psi_\gamma(t,\bw(t)+\sqrt{\gamma}\bu)-\Psi_\gamma(t,\bw(t)) -\langle \sqrt{\gamma}\bu,\bv_{\lfloor t/\gamma\rfloor}\rangle\big).
	\end{align*}
	
	Note that from Lemma \ref{lem:sumsdif} and \ref{lem:chsdif}, and the fact that $\langle \sqrt{\gamma}\bu,\bv_{\lfloor t/\gamma\rfloor}\rangle$ is differentiable w.r.t. $\bu$ and $\sqrt{\gamma}\bu$ is an invertible linear transformation, for any $\bu\in$dom$\Psi_\gamma(t,\bw(t)+\sqrt{\gamma}\cdot)=\R^d$,
	\begin{align*}
		\partial \big(\gamma^{-1}\big(\Psi_\gamma(t,\bw(t)&+\sqrt{\gamma}\bu)-\Psi_\gamma(t,\bw(t)) -\langle \sqrt{\gamma}\bu,\bv_{\lfloor t/\gamma\rfloor}\rangle\big)\big) \\
		&= \partial (\gamma^{-1}\Psi_\gamma(t,\bw(t)+\sqrt{\gamma}\bu)) - \gamma^{-1/2} \bv_{\lfloor t/\gamma\rfloor} \\
		&= \gamma^{-1/2}\partial \Psi_\gamma(t,\bw(t)+\sqrt{\gamma}\bu)- \gamma^{-1/2} \bv_{\lfloor t/\gamma\rfloor}.
	\end{align*}
	Take $\bu=\WW_\gamma(t)$, we need to show $0\in \gamma^{-1/2}\partial \Psi_\gamma(t,\bw_{\lfloor t/\gamma\rfloor})- \gamma^{-1/2} \bv_{\lfloor t/\gamma\rfloor}$. By \eqref{eq:md}, $\bw_{\lfloor t/\gamma\rfloor}=\nabla\Psi_{\lfloor t/\gamma\rfloor,\gamma}^*(\bv_{\lfloor t/\gamma\rfloor})$, so $\bv_{\lfloor t/\gamma\rfloor}\in\partial \Psi_\gamma(t,\bw_{\lfloor t/\gamma\rfloor})$ by Lemma \ref{lem:uc}(c). Hence, $0\in \gamma^{-1/2}\partial \Psi_\gamma(t,\bw_{\lfloor t/\gamma\rfloor})- \gamma^{-1/2} \bv_{\lfloor t/\gamma\rfloor}$ and the proof is complete.
	\end{proof}

\subsubsection{Proof of Theorem \ref{th:dd}}\label{sec:pfweak}

%\begin{proof}[Proof of Theorem \ref{th:dd}]

% \bigskip
% \noindent\textbf{Step 1. The martingale problem $A$ is well-posed and satisfies the conditions Lemma 4.1 in Chapter 7 on page 354 of \cite{EK86}. Every solution of it has a modification in $C_{\R^d}[0,\infty)$. }
Observe that from \eqref{eq:md} and Assumption \ref{as:M},
	\begin{align}
		\bv_{n+1} &= \bv_n - \gamma \nabla f_{n+1}(\bw_n)
				= \bv_n - \gamma\DD_{n+1}^\gamma-\gamma G(\bw_n).\label{eq:wtmp1}
				%-\gamma \int_0^1 \nabla G(\bw(t)+s(\bw_n-\bw(t)))ds\cdot(\bw_n-\bw(t))
	\end{align}
where $\DD_{n+1}^\gamma := \nabla f_{n+1}(\bw_n)-G(\bw_n)$ is a sequence of martingale difference. Since $\bv_0$ is a constant vector, \eqref{eq:wtmp1} implies $\VV_\gamma(t)$ defined in \eqref{eq:defvw} satisfies
	\begin{align}
		% \begin{split}\label{eq:recurv}
% 		\VV_\gamma(t) = -\MM_\gamma(t) &- \underbrace{\gamma^{1/2}\Big(\sum_{k=0}^{\lfloor t/\gamma\rfloor-1} G(\bw_k)- \sum_{k=0}^{\lfloor t/\gamma\rfloor-1} G(\bw(k\gamma))\Big)}_{=:R_\gamma(t)}\\
% 		&-\underbrace{\gamma^{-1/2}\Big(\gamma \sum_{k=0}^{\lfloor t/\gamma\rfloor-1} G(\bw(k\gamma))-\int_0^t G(\bw(s))ds\Big) }_{\mbox{\scriptsize =:(I)}}.
% 		\end{split}
		\VV_\gamma(t) = -\MM_\gamma(t) &- \gamma^{-1/2}\Big(\gamma\sum_{k=0}^{\lfloor t/\gamma\rfloor-1} G(\bw_k)-\underbrace{\int_0^t G(\bw(s))ds}_{\mbox{\scriptsize from }  \bv(t)}\Big)\label{eq:recurv}
	\end{align}
	where $\MM_\gamma(t):=\sqrt{\gamma}\sum_{k=0}^{\lfloor t/\gamma\rfloor-1} \DD_{k+1}^\gamma$ is a martingale. 
	
	%For any compact set $K\subset \R^d$ such that $0\inK$, 
	Define $\tilde\tau_\gamma^K:=\inf\{t:\|\VV_\gamma(t)\|_2>K\}$. It will be shown in \ref{ws:M} that $\MM_\gamma(\cdot\wedge\tilde\tau_\gamma^K)$ is relatively compact in $D([0,\infty))^d$. %$\MM_\gamma(\cdot\wedge\tilde\tau_\gamma^K)\weakto\MM(\cdot)$ in $D([0,\infty))^d$ via a martingale convergence theorem.  
	
	% From the mean value theorem Theorem 4.2 on page 341 of \cite{L93}, and \eqref{lem:exprW},
% 	\begin{align}
% 		R_\gamma(t\wedge \tilde\tau_\gamma^K) = \gamma \sum_{k=0}^{\lfloor (t \wedge \tilde\tau_\gamma^K)/\gamma\rfloor-1} \int_0^1 \nabla G\big(\bw(k\gamma)+u \sqrt{\gamma} \WW_\gamma(k\gamma)\big)du \cdot \WW_\gamma(k\gamma)\label{eq:Rgexpr}
% 	\end{align}
	In \ref{ws:R}, we show that {almost surely} for any $t>0$,
	\begin{align}
		\gamma^{-1/2}\Big(\gamma\sum_{k=0}^{\lfloor t\wedge \tilde\tau_\gamma^K/\gamma\rfloor-1} G(\bw_k)-\int_0^{t\wedge \tilde\tau_\gamma^K} G(\bw(s))ds\Big) = R_\gamma(t\wedge \tilde\tau_\gamma^K) + o(1).\label{eq:appRg}
		%R_\gamma(t\wedge \tilde\tau_\gamma^K) = \int_0^t \nabla G\big(\bw(s)\big) \cdot \nabla\tPs_\gamma\big(s,\VV_\gamma(s)\big)ds + o(1).\label{eq:appRg}
	\end{align}
	where the $o(1)$ holds almost surely as $\gamma\to 0$ and
	\begin{align}
		R_\gamma(t):=\int_0^t \nabla G\big(\bw(s)\big) \cdot \WW_\gamma(s)ds, \label{eq:defR}
	\end{align}
	and the first term on the right hand side of \eqref{eq:appRg} is relatively compact.

	 By \ref{ws:M} and \ref{ws:R}, for every compact $K\subset\R^d$, all terms on the right of \eqref{eq:recurv} converge to an element in $C([0,\infty))^d$ respectively, which is a Polish space. This implies that both $\MM_\gamma(\cdot\wedge\tilde\tau_\gamma^K)$ and $R_\gamma(\cdot\wedge \tilde\tau_\gamma^K)$ are asymptotically tight by the discussion on page 23 of \cite{VW96}. It follows that $\VV_\gamma(\cdot\wedge\tilde\tau_\gamma^K)$ is relative compact by Lemma 1.4.3 of \cite{VW96} and Problem 3.22(c) on p.153 of \cite{EK86}. 

\bigskip
Now we identify the limit. For any $K$ and any convergent subsequence $\VV_{\gamma'}(\cdot\wedge\tilde\tau_{\gamma'}^K)$ with limit $\VV_{K}'$ in $D([0,\infty))^d$, by the continuous mapping theorem and the fact that $\inf\{t:\cdot>K\}$ is a continuous,
\begin{align}
	(\VV_{\gamma'}(\cdot\wedge\tilde\tau_{\gamma'}^K),\tilde\tau_{\gamma'}^K)\weakto(\VV_{K}'(\cdot\wedge\tilde\tau^{K'}),\tilde\tau^{K'}) \quad\mbox{in }D([0,\infty))^d\times \R,\label{eq:jointstop}
\end{align}
for all but countably many $K$ at which $\tilde\tau^{K'}:=\inf\{t:\|\VV_{K}'(t)\|_2>K\}$ is discontinuous, where $\VV_{K}'(\cdot\wedge\tilde\tau^{K'})$ is continuous a.s. on $[0,\infty)$ by the almost sure continuity on $[0,\infty)$ of the limit of $\MM_\gamma(\cdot\wedge\tilde\tau_{\gamma'}^K)$ (from \ref{ws:M}) and $R_\gamma(\cdot\wedge\tilde\tau_{\gamma'}^K)$ (from \ref{ws:R}). 

	 A consequence followed by \eqref{eq:jointstop} and \ref{ws:cmt} is that %$\{\WW_\gamma(\cdot\wedge\tilde\tau_{\gamma}^K)\}_\gamma$ is also relatively compact. To see this, for any subsequence $\gamma'$ such that $\VV_{\gamma'}(\cdot\wedge\tilde\tau_{\gamma'}^K)\weakto\VV_{K}'$ in $D([0,\infty))^d$ where $\VV_{K}'\in C([0,\infty))^d$, 
\begin{align}
	\WW_{\gamma'}(\cdot\wedge\tilde\tau_{\gamma'}^K)\weakto\WW_{K}'(\cdot\wedge\tilde\tau^{K'}):=\nabla\tPs(\cdot,\VV_{K}'(\cdot\wedge\tilde\tau^{K'})) \quad\mbox{in $D([0,\infty))^d$}.\label{eq:Wconv}
\end{align}	  
where $\WW_{K}'(\cdot\wedge\tilde\tau^{K'})\in C([0,\infty))^d$ almost surely for all but countably many $K$, since $\VV_{K}'(\cdot\wedge\tilde\tau^{K'})\in C([0,\infty))^d$ a.s. and $\nabla\tPs$ is continuous by Lemma \ref{lem:conticonj}, where the conditions of Lemma \ref{lem:conticonj} are verified by \ref{as:tPlim1}, \ref{as:tPlim2} and Lemma \ref{lem:escreg}(iii), 
%Thus, $\{\WW_\gamma(\cdot\wedge\tilde\tau_{\gamma}^K)\}_\gamma$ is also relatively compact.

We will show that $\VV_{K}'$ satisfies
	\begin{align}
		\VV_{K}'(t\wedge \tilde\tau^{K'})&=-\int_0^{t\wedge \tilde\tau^{K'}} \nabla G(\bw(s)) \cdot \nabla\tPs(s,\VV_{K}'(s))ds+\MM'(t\wedge\tilde\tau^{K'}), \quad\mbox{for all $t\geq 0$},\label{eq:weakvprime}
	\end{align}
To see \eqref{eq:weakvprime}, first note that $\MM_{\gamma'}(\cdot\wedge\tilde\tau_{\gamma'}^K)\weakto\MM'(\cdot\wedge\tilde\tau^{K'})$ in $D([0,\infty))^d$ as $\MM'$ is continuous martingale a.s. by \ref{ws:M}. Second, if
\begin{align}
	\int_0^{\cdot\wedge \tilde\tau_{\gamma'}^K} \nabla G\big(\bw(s)\big) \cdot \WW_{\gamma'}(s)ds\weakto \int_0^{\cdot\wedge \tilde\tau^{K'}} \nabla G\big(\bw(s)\big) \cdot \WW_K'(s)ds \quad\mbox{in }  D([0,\infty))^d, \label{eq:wcinte}
\end{align}
then \eqref{eq:weakvprime} follows by the continuous mapping theorem and the continuity of the additive operator by Theorem 4.1 on page 79 of \cite{W80} (note that all terms on the right hand side of \eqref{eq:weakvprime} are continuous in $t$ almost surely). 

To show \eqref{eq:wcinte}, %note that it follows by \eqref{eq:jointstop} that $\WW_{\gamma'}(\cdot\wedge\tilde\tau_{\gamma'}^K)\weakto\WW_{K}':=\nabla\tPs(\cdot,\VV_{K}'(\cdot))$ by \ref{ws:cmt}, where $\WW_{K}'\in C([0,\infty))^d$ almost surely. 
consider two processes $\cdot\wedge\tilde\tau_{\gamma'}^K$ and $\cdot\wedge\tilde\tau^{K'}$ in $D[0,\infty)$. 
Using the fact that $(s,t)\mapsto s\wedge t=(1/2)(s+t-|s-t|)$ is continuous in joint topology, finite dimensional convergence and tightness can be verified and we have $\cdot\wedge\tilde\tau_{\gamma'}^K\weakto\cdot\wedge\tilde\tau^{K'}$ in $D[0,\infty)$. $\nabla G\big(\bw(\cdot)\big)$ is continuous by \ref{as:L} and Theorem \ref{th:at}, so continuous mapping theorem gives $\nabla G\big(\bw(\cdot\wedge\tilde\tau_{\gamma'}^K)\big)\weakto \nabla G\big(\bw(\cdot\wedge\tilde\tau^{K'})\big)$ in $(D[0,\infty))^{d\times d}$. Together with \eqref{eq:Wconv}, continuous mapping theorem and Theorem 4.2 of \cite{W80} [$G\big(\bw(\cdot\wedge\tilde\tau^{K'})\big)$ is continuous by Assumption \ref{as:L} and Theorem \ref{th:at}], we obtain 
\begin{align}
\nabla G\big(\bw(\cdot\wedge\tilde\tau_{\gamma'}^K)\big)\WW_{\gamma'}(\cdot\wedge\tilde\tau_{\gamma'}^K) \weakto \nabla G\big(\bw(\cdot\wedge\tilde\tau^{K'})\big)\WW_{K}'(\cdot\wedge\tilde\tau^{K'}) \mbox{ in $D([0,\infty))^d$}. \label{eq:integrandweakto}	
\end{align} 
Using \eqref{eq:integrandweakto} and Proposition 7.27 on page 118-119 of \cite{K08} [taking $G=G_\gamma=$Lebesgue measure in that proposition], we have for every $T>0$,
\begin{align}
	\int_0^{\cdot} &\nabla G\big(\bw(s\wedge \tilde\tau_{\gamma'}^K)\big) \cdot \WW_{\gamma'}(s\wedge \tilde\tau_{\gamma'}^K)ds \weakto \int_0^{\cdot} \nabla G\big(\bw(s\wedge \tilde\tau^{K'})\big) \cdot \WW_{K}'(s\wedge \tilde\tau^{K'})ds,\notag\\
	 &\quad\quad\quad\mbox{ in }D([0,T])^d, \label{eq:wcinte1}
\end{align}
but this implies the convergence in $D([0,\infty))^d$ by Lemma \ref{lem:fint}. Hence, \eqref{eq:wcinte} is proved by the fact that the limit on the right hand side of \eqref{eq:wcinte1} is continuous and $\cdot\wedge\tilde\tau_{\gamma'}^K\weakto\cdot\wedge\tilde\tau^{K'}$ in $D[0,\infty)$.

% \skc{\bf If $\VV_{\gamma'}\weakto\VV'$, by \ref{ws:cmt} $\WW_{\gamma'}(\cdot\wedge\tilde\tau_{\gamma'}^K)\weakto\WW'$, show  $\int_0^{t\wedge \tilde\tau_{\gamma'}^K} \nabla G\big(\bw(s)\big) \cdot \WW_{\gamma'}(s)ds\weakto \int_0^{t\wedge \tilde\tau^{K'}} \nabla G\big(\bw(s)\big) \cdot \WW'(s)ds$}
%
% \newpage
If the stochastic integral equation \eqref{eq:weakv} allows for the unique solution $\VV$ (we will show its uniqueness below), we claim that for any $K>0$,
\begin{align}
		(\VV_K'(\cdot\wedge\tilde\tau_{\gamma}^{K'}),\tilde\tau_{\gamma}^{K'})\stackrel{d}{=}(\VV(\cdot\wedge\tilde\tau^K),\tilde\tau^K),\label{eq:limitchar}
\end{align}
where $\VV_{K}'$ satisfies \eqref{eq:weakvprime} and $\tilde\tau^K:=\inf\{t:\|\VV(t)\|_2>K\}$. This claim will be shown later before we show \ref{ws:M} - \ref{ws:stopping}. Hence, combining \eqref{eq:limitchar} and \eqref{eq:jointstop}, we conclude that for all but countably many $K>0$,
\begin{align}
	(\VV_{\gamma}(\cdot\wedge\tilde\tau_{\gamma}^K),\tilde\tau_{\gamma}^K)\weakto(\VV(\cdot\wedge\tilde\tau^K),\tilde\tau^K) \quad\mbox{in }D([0,\infty))^d\times \R.\label{eq:wctrunc}
\end{align}
To prove the desired $\VV_\gamma\weakto\VV$ in $D[0,\infty))^d$ in \eqref{eq:weakv}, note that we have the estimations
\begin{align}
	\rho_d^\infty(\VV_\gamma(\cdot),\VV_\gamma(\cdot \wedge \tilde\tau_\gamma^K)) &\leq d e^{-\tilde\tau_\gamma^K},\label{eq:ridofK1}\\
	\rho_d^\infty(\VV(\cdot\wedge\tilde\tau^K),\VV) &\leq d e^{-\tilde\tau^K}. \label{eq:ridofK2}
\end{align}
By the above claim, the almost sure continuity of $\VV$ on $[0,\infty)$ by \ref{ws:sol} (on a rich enough probability space) ensures that $\tilde\tau^K\uparrow\infty$ almost surely as $K\uparrow\infty$ ($K$ increases along a countable sequence of compact sets), so $\rho_d^\infty(\VV_K',\VV)\stackrel{p}{\to} 0$ as $K\to\infty$. We will show in \ref{ws:stopping} that $\tilde\tau_\gamma^K\to \infty$ in probability as $K\to\infty$ and $\gamma\to 0$, and this implies that $\rho_d^\infty(\VV_\gamma(t),\VV_\gamma(t \wedge \tilde\tau_\gamma^K))\stackrel{p}{\to} 0$ as $K\to\infty$ and $\gamma\to 0$. The proof of \eqref{eq:weakv} in Theorem \ref{th:dd} is complete by Corollary 3.3.3 on p.110 of \cite{EK86}. 

Finally, we complet the proof of Theorem \ref{th:dd} by noting that \eqref{eq:weakw} follows by $\VV_\gamma\weakto\VV$ in $D[0,\infty))^d$ and \ref{ws:cmt}.
 % Skorohod representation implies that on a rich enough probability space, $\VV_{K}'\aseq\VV$ on $\Fc_{\tilde\tau^K}$. Note in addition that $\tilde\tau^K\uparrow\infty$ in distribution as $K\uparrow \R^d$ ($K$ increases along a countable sequence of compact sets). Let $d(\cdot,\cdot):D([0,\infty))^d\timesD([0,\infty))^d\to\R$ be the Skorohod metric (cf. Eq. (5.2) on page 117 of \cite{EK86}), then we can estimate
% \begin{align*}
% 	d(\VV(\cdot\wedge\tilde\tau^K),\VV)\leq \int_0^\infty e^{-u}\sup_{0\le t\leq u}[\|\VV(\cdot\wedge\tilde\tau^K),\VV\|_2\wedge 1]du
% 	\leq e^{-\tilde\tau^K} \pto 0.\quad\mbox{as } n\to\infty,
% \end{align*}
% where the penultimate inequality follows by $\VV_{K}'(t\wedge\tilde\tau^K)\aseq\VV(t)$ for $t\leq \tilde\tau^K$; and the final convergence follows by the continuous mapping theorem and the equivalence between convergence in distribution and convergence in probability when the limit is a constant. This shows that $\VV_\gamma\weakto\VV$ in $D([0,\infty))^d$. By \ref{ws:cmt}, $\WW_\gamma\weakto\WW$ in $D([0,\infty))^d$, and the proof of Theorem \ref{th:dd} is complete.

\bigskip
Now it is left to prove \eqref{eq:limitchar}. We will first establish the uniqueness of the stopped martingale problem $(\delta_0,\Ac,\{\bv:\|\bv\| < K\})$ (see Section 4.6 of \cite{EK86} for the details) with solution $\VV(\cdot\wedge\tilde\tau^K)$, where the martingale problem $(\delta_0,\Ac)$ is defined as follows: let $C_c^\infty(\R^d)$ be the set of infinite times differentiable functions on $\R^d$ with compact support,
\begin{align}\label{eq:mp}
	\begin{split}
		&\mbox{initial distribution: $\delta_0$, Dirac measure at 0}\\
		&\{(g,\Ac g): g\in C_c^\infty(\R^d)\},\mbox{ and }\\
		&\Ac = \frac{1}{2}\sum_{j,k=1}^d \Sigma_{jk}(\bw(t))\partial_j\partial_k + \sum_{j=1}^d \nabla G_{j\cdot}(\bw(t))^\top\nabla\tPs(t,\bu)\partial_j.
	\end{split}
	%f(\VV(t))&=-\int_0^t \nabla G(\bw(s)) \cdot \nabla\tPs(s,\VV(s))ds,\label{eq:mp}
\end{align}
We will show this via the uniqueness of the solution of the SDE \eqref{eq:weakv}. The pathwise uniqueness of the solution $\VV$ of the stochastic integral equation \eqref{eq:weakv} is proved in \ref{ws:sol}, and this implies the distribution uniqueness by Theorem 5.3.6 on page 296 of \cite{EK86} [locally boundedness conditions are implied by Assumption \ref{as:L}]. 
% by Ito's formula (Eq. (3.6) on page 292 of \cite{EK86}): 
By Corollary 5.3.4 on page 295 of \cite{EK86}, the distribution uniqueness of the SDE \eqref{eq:weakv} implies that the solution to the martingale problem $(\delta_0,\Ac)$ in \eqref{eq:mp} is unique with solution $\VV$ [the association between the SDE and the martingale problem is described in Eq. (3.3)-(3.5) on page 291 of \cite{EK86}]. This also implies that the stopped martingale problem $(\delta_0,\Ac,\{\bv:\|\bv\| < K\})$ is unique with solution $\VV(\cdot\wedge\tilde\tau^K)$ by Theorem 4.6.1 (and its proof) on page 216 of \cite{EK86}.%. Hence, $\VV(\cdot\wedge\tilde\tau^K)$ uniquely solves the stopped martingale problem ($t\leq\tilde\tau^K$) for any $K$ by Theorem 4.6.1 (and its proof) on page 216 of \cite{EK86}. 

To show \eqref{eq:limitchar}, we will show that $\VV_{K}'$ also solves the stopped martingale problem $(\delta_0,\Ac,\{\bv:\|\bv\| < K\})$ and the desired results will follow by its uniqueness. That is, we need to show that for any $g\in C_c^\infty(\R^d)$,
\begin{align}
	g(\VV_K'(t\wedge \tilde\tau^{K'})) - \int_0^{t\wedge \tilde\tau^{K'}} \Ac g(\VV_K'(s))ds
\end{align}
is a martingale. To this goal, by the Ito's formula (Eq. (3.6) on page 292 of \cite{EK86}), it is sufficient to show that $\MM'(t\wedge\tilde\tau^{K'})$ in \eqref{eq:weakvprime} and 
\begin{align}
	\big\langle\MM_i'(t\wedge\tilde\tau^{K'}), \MM_j'(t\wedge\tilde\tau^{K'})\big\rangle - \int_0^{t\wedge \tilde\tau^{K'}} \Sigma_{ij}(\bw(s)) ds \label{eq:anglebracket1}
\end{align} 
are martingales for all $1\leq i,j\leq d$, where the $\langle\cdot,\cdot\rangle$ is defined on page 79 of \cite{EK86}. By \ref{ws:M}, $\MM'(t\wedge\tilde\tau^{K'})$ is shown to be a martingale. By Proposition 2.6.2 on p.79 of \cite{EK86}, $\langle\MM_i'(t\wedge\tilde\tau^{K'}), \MM_j'(t\wedge\tilde\tau^{K'})\rangle-\MM_i'(t\wedge\tilde\tau^{K'})\MM_j'(t\wedge\tilde\tau^{K'})$ is a martingale because $\MM_i'(t\wedge\tilde\tau^{K'})$ for each $1\leq i\leq d$ is a martingale. Therefore, we will check if the following process is a martingale:
\begin{align}
	\MM_i'(t\wedge\tilde\tau^{K'})\MM_j'(t\wedge\tilde\tau^{K'}) - \int_0^{t\wedge \tilde\tau^{K'}} \Sigma_{ij}(\bw(s)) ds. \label{eq:anglebracket2}
\end{align} 
If we show that the process $\{\MM_{\gamma',i}(t\wedge\tilde\tau^{K'})\MM_{\gamma',j}(t\wedge\tilde\tau^{K'})\}_{\gamma'}$ for every $1\leq i,j\leq d$ is uniformly integrable, then by \eqref{eq:Mmtgle}, the process in \eqref{eq:anglebracket2} is a martingale. This can be done by similar arguments in the proof of Theorem 7.1.4 case (b) on page 344-345 of \cite{EK86} (see a nice exposition on 299-301 of \cite{W07}), and we omit the details for brevity.
%Finally, by the uniqueness of the stopped martingale problem (Theorem 4.6.1 on page 216 of \cite{EK86}), we obtain $\VV_{K}'\deq\VV$ on $\Fc_{\tilde\tau^K}$.

\bigskip
Before we formally show \ref{ws:M} - \ref{ws:stopping}, we note a fact that will be used repeatedly: For $K>0$, $\|\VV_\gamma(t)\|_2\leq K$ for all $t\leq \tilde\tau_\gamma^K$ and $\gamma$, applying Lemma \ref{lem:exprW} to get
\begin{align}
	&\sup_\gamma\sup_{k\leq \lfloor (t\wedge \tilde\tau_\gamma^K)/\gamma\rfloor -1}\big\|\WW_\gamma(k\gamma)\big\|_2 %\frac{\bw_k-\bw(k\gamma)}{\sqrt{\gamma}}
	= \sup_\gamma\sup_{k\leq \lfloor (t\wedge \tilde\tau_\gamma^K)/\gamma\rfloor -1}\big\|\nabla\tPs_\gamma(k\gamma,\VV_\gamma(k\gamma))\big\|_2\leq
	% &\leq \sup_\gamma\sup_{k\leq \lfloor (t\wedge \tilde\tau_\gamma^K)/\gamma\rfloor -1}\big\{\big\|\nabla\tPs_\gamma(k\gamma,\VV_\gamma(k\gamma))-\nabla\tPs_\gamma(k\gamma,0)\big\|_2\notag\\
	% &\hspace{3.5cm}+ \|\nabla\tPs_\gamma(k\gamma,0)-\nabla\tPs(k\gamma,0)\|_2+\|\nabla\tPs(k\gamma,0)\|_2\big\}\notag\\
	% &\leq K/c_1 + \sup_{s\in[0,t]}\|\nabla\tPs_\gamma(s,0)-\nabla\tPs(s,0)\|_2+\sup_{s\in[0,t]}\|\nabla\tPs(s,0)\|_2\notag\\ %\mbox{ for all $k\leq \lfloor (t\wedge \tilde\tau_\gamma^K)/\gamma\rfloor -1$}
	% &\leq K/c_1 + C_t + C_t'=:
	\bar K_t,\label{eq:weak0}
\end{align}
where the last inequality follows by Lemma \ref{lem:cptpres}. The conditions there are verified by \ref{as:tPlim} and Lemma \ref{lem:escreg}(i) and (iii).

\begin{itemize}[itemindent=45pt,leftmargin=0pt]
\labitem{Step 1}{ws:M} \textbf{$\{\MM_\gamma(\cdot\wedge \tilde\tau_\gamma^K)\}_\gamma$ is relatively compact in $D([0,\infty))^d$, and any possible limit is a continuous martingale a.s.}
%\labitem{Step 1}{ws:M} \textbf{For any compact $K\subset\R^d$, $\MM_\gamma(\cdot\wedge \tilde\tau_\gamma^K)\weakto \MM$ in $D([0,\infty))^d$, where $\MM$ is a centered Gaussian process with \eqref{eq:covfn}.}

This step follows by the proof of Theorem 7.1.4 case (b) of \cite{EK86} (in particular, on page 344). We will verify the conditions of Theorem 7.1.4 case (b) of \cite{EK86}, and the desired properties follow. By the optional stopping theorem (see page 105 of \cite{K97}) and the fact that $\MM_\gamma(t)$ is a martingale, $\MM_\gamma(t\wedge \tilde\tau_\gamma^K)$ is a martingale. Without loss of generality, we will assume $\|\VV_\gamma(t)\|_2\leq K$ for all $\gamma$ and $t$, and drop $\tilde\tau_\gamma^K$ in $\MM_\gamma(t\wedge \tilde\tau_\gamma^K)$ for brevity. 

Define 
\begin{align}
	\ZZ_\gamma(t):=\MM_\gamma(t)\MM_\gamma(t)^\top-\gamma\sum_{k=0}^{\lfloor t/\gamma\rfloor-1} \Sigma(\bw_k).\label{eq:Mmtgle}
\end{align}
%It can be easily verified from the definition that $\ZZ_\gamma(t)$ is a martingale with respect to the filtration $\Gc_t^\gamma:=\Fc_{\lfloor t/\gamma\rfloor}$. 
By the fact that $\MM_\gamma(t)$ is a sum of martingale differences, the law of iterative expectation and the orthogonality of martingale differences (e.g. page 250 of \cite{D05}) show that $\ZZ_\gamma(t)$ is a martingale with respect to the filtration $\Fc_{\lfloor t/\gamma\rfloor}$. Thus, Eq. (7.1.18) on page 340 of \cite{EK86} is proved. 

The hypothesis of this Theorem says that the solution $\bv(t)$ exists for $t\geq 0$, and Assumption \ref{as:L} implies that the solution $\bv(t)$ is unique. Hence, Theorem \ref{th:at}(b) and (c) implies that 
\begin{align}
\sup_{0\leq t\leq T}\|\bv_{\lfloor t/\gamma\rfloor}-\bv(t)\|_2\pto 0\mbox{ and }\sup_{0\leq t\leq T}\|\bw_{\lfloor t/\gamma\rfloor}-\bw(t)\|_2\pto 0 \mbox{ for any $T>0$.}\label{eq:ptosol}	
\end{align}
Recall that (see e.g. Theorem 1.6.2 on page 46 of \cite{D05}) to prove 
\begin{align}
	\Big|\gamma\sum_{k=0}^{\lfloor t/\gamma\rfloor-1} \Sigma_{ij}(\bw_k) - \int_0^t \Sigma_{ij}(\bw(s)) ds\Big| \pto 0,\mbox{ for each $t\geq 0$, $1\leq i\leq j \leq d$,}
\end{align}
it is enough to show that for any subsequence $\gamma'$ of $\gamma$, there exists a further subsequence $\gamma''$ such that 
\begin{align}
	\Big|\gamma''\sum_{k=0}^{\lfloor t/\gamma''\rfloor-1} \Sigma_{ij}(\bw_k) - \int_0^t \Sigma_{ij}(\bw(s)) ds\Big| \asto 0,\mbox{ for each $t\geq 0$, $1\leq i\leq j \leq d$,.}  \label{eq:step1temp1}
\end{align}
Indeed, by \eqref{eq:ptosol}, for any subsequence $\gamma'$, there is a subsequence $\gamma''$ such that $\sup_{0\leq s\leq t}\|\bw_{\lfloor s/\gamma''\rfloor}-\bw(s)\|_2\asto 0$. Along $\gamma''$, \eqref{eq:step1temp1} is proved by Lemma 1(a) of \cite{BKS93} using the continuity of $\Sigma(\cdot)$ in Assumption \ref{as:L}. Hence, Eq. (7.1.19) on page 340 of \cite{EK86} is shown. 

By the unique existence of the solution $\bv(t)$ of \eqref{eq:weak0v} and $\bw(t)$ for $t\geq 0$, $\bw(t)$ is continuous for $t\geq 0$ by \eqref{eq:weak0w}. We conclude that 
\begin{align}
	\forall T>0, \exists K_T>0 \mbox{ such that }\sup_{0\leq t\leq T}\|\bw(t)\|_2\leq K_T. 
	\label{eq:step1tmp2}
\end{align}

For any $T>0$, $i,j=1,...,d$ and $\delta$, recall that $\|\VV_\gamma(t)\|_2\leq K$ for all $t\leq T$ and $\gamma$ from the beginning of this step, so $\WW_\gamma(k\gamma)=\frac{\bw_k-\bw(k\gamma)}{\sqrt{\gamma}}\leq \bar K_T$ for all $k\leq\lfloor T/\gamma\rfloor$ and $\gamma>0$ by \eqref{eq:weak0} and the bound for $\bw(t)$ in \eqref{eq:step1tmp2},
\begin{align}
	\lim_{\gamma\to 0}\gamma\E\Big[\sup_{0\leq t\leq T}\Sigma_{ij}(\bw_{\lfloor t/\gamma\rfloor-1})\Big]&\leq \lim_{\gamma\to 0}\gamma\E\Big[\sup_{\|\bw\|_2\leq K_T+\sqrt{\gamma}\bar K_T}\Sigma_{ij}(\bw)\Big]\notag\\
	&\leq 4 \lim_{\gamma\to 0}\gamma\E\Big[\sup_{\|\bw\|_2\leq K_T+\sqrt{\gamma}\bar K_T}\big\|\nabla f\big(\bw,Z_n\big)\big\|_2^2\Big]=0,\label{eq:step1eq3}
\end{align}
where in the last equality of apply \eqref{eq:mom2bdd} in Assumption \ref{as:L}. Therefore, Eq. (7.1.16) on page 340 of \cite{EK86} is proved. 

The other condition in Eq. (7.1.17) on page 340 of \cite{EK86} is that the second moment of the maximum jump converges to 0:
\begin{align}
	\lim_{\gamma\to 0}\E\Big[\sup_{0\leq t\leq T}\big\|\MM_\gamma(t)-\MM_\gamma(t_-)\big\|_2^2\Big]=\lim_{\gamma\to 0}\gamma\E\Big[\sup_{0\leq t\leq T}\big\|\nabla f_{\lfloor t/\gamma\rfloor}(\bw_{\lfloor t/\gamma\rfloor-1})-G(\bw_{\lfloor t/\gamma\rfloor-1})\big\|_2^2\Big]=0.\label{eq:step1eq4}
\end{align}
This can be verified by similar arguments for proving \eqref{eq:step1eq3}, and the details are omitted. This also implies that any possible limit of $\MM_\gamma$ is a.s. continuous on $[0,T]$ for every $T>0$ by Theorem 13.4 on p.142 of \cite{B99}.

\labitem{Step 2}{ws:R} \textbf{$R_\gamma(\cdot\wedge\tilde\tau_K^\gamma)$ defined in \eqref{eq:defR} is relatively compact in $D([0,\infty))^d$, \eqref{eq:appRg} holds and all possible limits of $R_\gamma(\cdot\wedge \tilde\tau_\gamma^K)$ are continuous a.s.}

From the mean value theorem Theorem 4.2 on page 341 of \cite{L93}, 
	\begin{align}
		&\gamma^{-1/2}\Big(\gamma\sum_{k=0}^{\lfloor t\wedge \tilde\tau_\gamma^K/\gamma\rfloor-1} G(\bw_k)-\int_0^{t\wedge \tilde\tau_\gamma^K} G(\bw(s))ds \Big)\notag\\
		&= \gamma^{-1/2}\int_0^{t\wedge \tilde\tau_\gamma^K} \big\{G(\bw_{\lfloor s/\gamma\rfloor})-G(\bw(s))\big\}ds\notag\\
		&= \gamma^{-1/2}\int_0^{t\wedge \tilde\tau_\gamma^K} \big\{G(\bw(s)+\sqrt{\gamma}\WW_\gamma(s))-G(\bw(s))\big\}ds\notag\\
		&= \int_0^{t\wedge \tilde\tau_\gamma^K} \int_0^1 \nabla G\big(\bw(s)+u_s \sqrt{\gamma} \WW_\gamma(s)\big)du_s \cdot \WW_\gamma(s)ds\notag\\
		&= R_\gamma(t\wedge \tilde\tau_\gamma^K) + \underbrace{\int_0^{t\wedge \tilde\tau_\gamma^K} \Big\{\int_0^1 \nabla G\big(\bw(s)+u_s \sqrt{\gamma} \WW_\gamma(s)\big)du_s-\nabla G(\bw(s))\Big\} \cdot \WW_\gamma(s)ds}_{:=\hat R_\gamma(t\wedge\tilde\tau_\gamma^K)}.\label{eq:Rgexpr}
	\end{align}
	Now we show $\sup_{t\in[0,\infty)}\|\hat R_\gamma(t\wedge\tilde\tau_\gamma^K)\|_2=o(1)$ a.s. Note that $\|\WW_\gamma(s)\|_2\leq \bar K_t$ for all $s\leq t\wedge\tilde\tau_\gamma^K$ by \eqref{eq:weak0}. Moreover, $\nabla G(\bw(\cdot))$ is continuous on $[0,t]$, so $\sup_{\bw\in\Wc_t}\|\nabla G(\bw)\|_2\leq K_t$ for some constant $K_t>0$, where the compact set $\Wc_t \supset \{\bw(s):0\leq s\leq t\}+\sqrt{\bar\gamma} B_{\bar K_t}(0)$, where the positive value $\bar\gamma>\gamma$ for all $\gamma$, and $B_r(0)$ is a ball centered at the origin with radius $r>0$. Therefore, by the dominated convergence, for any $t>0$, almost surely,
	\begin{align*}
		\|\hat R_\gamma(t\wedge\tilde\tau_\gamma^K)\|_2 \leq K_t \int_0^{t\wedge \tilde\tau_\gamma^K} \int_0^1 \big\|\nabla G\big(\bw(s)+u_s \sqrt{\gamma} \WW_\gamma(s)\big)-\nabla G(\bw(s))\big\|_2du_s ds \to 0, \mbox{ }\gamma\to 0.
	\end{align*}
	
It is left to show that $R_\gamma(\cdot\wedge\tilde\tau_K^\gamma)$ is relatively compact. By exactly the same argument as in the last paragraph, we verify that
\begin{align*}
	\limsup_{\gamma\to 0}\sup_{0\leq t\leq T}\|R_\gamma(t\wedge\tilde\tau_K^\gamma)\|_2<\infty, \mbox{ for any $T>0$ a.s.},
\end{align*}
which is Eq. (27) on page 976 of \cite{BKS93}, and we can also verify that for any $T$,
\begin{align}
	\sup_{s\leq\delta}\|R_\gamma((t+s)\wedge \tau_K^\gamma)-R_\gamma(t\wedge \tau_K^\gamma)\|_2 \leq K_T \delta, \mbox{ a.s. for any $t<T$ and $\delta>0$.}\label{eq:step2temp1}
\end{align}
Hence, by similar argument as that after Eq. (30) on page 976 of \cite{BKS93}, Eq. (28) on page 976 of \cite{BKS93} is verified. Thus, the relative compactness follows from Lemma 3 of \cite{BKS93}.

By similar argument for verifying \eqref{eq:step2temp1}, we can show that the second moment of the maximum jump
$$
\lim_{\gamma\to 0}\E\Big[\sup_{0\leq t\leq T}\big\|R_\gamma(t\wedge\tilde\tau_K^\gamma)-R_\gamma(t_-\wedge\tilde\tau_K^\gamma)\big\|_2^2\Big]=0.
$$
Therefore, all possible limits of $R_\gamma(\cdot\wedge \tilde\tau_\gamma^K)$ are continuous a.s. on $[0,T]$ for every $T>0$ by Theorem 13.4 of \cite{B99}.

\labitem{Step 3}{ws:cmt} \textbf{$\WW_\gamma\weakto\WW:=\nabla\tP(\cdot,\VV(\cdot))$ in $D([0,\infty))^d$, for any sequence $\gamma$ such that $\VV_\gamma\weakto\VV$ in $D([0,\infty))^d$ where $\VV\in C([0,\infty))^d$.}

By Lemma \ref{lem:fint}, it is enough to show the restricted $\br_T \WW_\gamma(t)\weakto \br_T \WW(t)$ in $(D[0,T))^d$ for any $T>0$, where $\br_t:D([0,\infty))^d\to D([0,T])^d$ is given by $\br_t \bX \mapsto (r_t X_1,...,r_t X_d)$, and $r_t X_j$ is the restriction of $X_j\in D[0,\infty)$ on $[0,t]$.

Lemma \ref{lem:exprW} and \eqref{eq:weakw} suggests that for each $t>0$,
\begin{align}
	\WW_\gamma(t)&=\arg\min_{\bu\in\R^d}\{\tP_\gamma(t,\bu)-\bu^\top\VV_\gamma(t)\}=\frac{\bw_{\lfloor t/\gamma\rfloor}-\bw(t)}{\sqrt{\gamma}},\label{eq:step4tmp1}\\
	\WW(t)&=\arg\min_{\bu\in\R^d}\{\tP(t,\bu)-\bu^\top\VV(t)\}.
\end{align}
We will apply Lemma \ref{lem:kato1} to show $\br_T \WW_\gamma(t)\weakto \br_T \WW(t)$ in $(D[0,T))^d$ for any $T>0$. To verify condition (a) in Lemma \ref{lem:kato1}, we note that $t\mapsto\tP_\gamma(t,\bu)-\bu^\top\VV_\gamma(t)$ is in $D[0,T]$ for each $\bu$ since $\VV_\gamma(t)\in D([0,T])^d$ and $\tP_\gamma(t,\bu)\in D[0,T]$ by the hypothesis in this Theorem. Furthermore, $\bu\mapsto\tP_\gamma(t,\bu)-\bu^\top\VV_\gamma(t)$ is $\beta$-e.s.c. for each $t$ by Lemma \ref{lem:escreg}(i) and Lemma \ref{lem:eucchar}. On the other hand, $t\mapsto\tP(t,\bu)-\bu^\top\VV(t)$ is continuous for each $\bu$ since $\VV$ is an a.s. continuous process, and $\tP(\cdot,\bu)$ is continuous on $[0,\infty)$ for each $\bu$ by Condition \ref{as:tPlim1}, and $\bu\mapsto\tP(t,\bu)-\bu^\top\VV(t)$ is $\beta$-e.s.c. for each $t$ by Lemma \ref{lem:escreg}(iii) and Lemma \ref{lem:eucchar}. Thus, condition (a) in Lemma \ref{lem:kato1} holds. 

For the condition (b) in Lemma \ref{lem:kato1}, we note that $\bu\mapsto\tP(t,\bu)-\bu^\top\VV(t)$ is $\beta$-e.s.c. with respect to $t$ by Lemma \ref{lem:escreg}(iii), so Lemma \ref{lem:uc}(b) implies the minimizer is unique for each $t\in[0,T]$. 

For the condition (c) in Lemma \ref{lem:kato1}, from \eqref{eq:step4tmp1} it is clear that $\WW_\gamma(t)\in D([0,\infty))^d$; Lemma \ref{lem:conticonj} suggests that $\nabla\tPs$ is continuous on $[0,T]\times\R^d$ for any $T>0$, where the conditions of Lemma \ref{lem:conticonj} are verified by \ref{as:tPlim1}, \ref{as:tPlim2} and Lemma \ref{lem:escreg}(iii). Hence, $\WW(t)=\nabla\tPs(t,\cdot)\circ \VV(t)$ is almost surely continuous on $[0,T]$ because $\VV(t)$ is almost surely continuous.

To verify the finite dimensional convergence \eqref{eq:kato1}, take arbitrary points $\{\bu_l\}_{l\leq L}\subset\R^d$ where $L\in\N$, and any bounded, nonnegative Lipschitz functions $f_1,...,f_L:D[0,T]\to\R$. By $\VV_\gamma\weakto\VV$ in $D([0,T])^d$ and the uniform convergence $\sup_{t\in[0,T]}|\tilde\Psi_\gamma(t,\bu)-\tilde\Psi(t,\bu)|$ for every $\bu$, we have 
\begin{align*}
	\int \prod_{l=1}^L f_l\big(\tP_\gamma(\cdot,\bu_l)-\bu_l^\top\VV_\gamma\big) dP_{\VV_\gamma} \to \int \prod_{l=1}^L f_l\big(\tP(\cdot,\bu_l)-\bu_l^\top\VV\big) dP_{\VV}
\end{align*} 
by the boundedness of $f_1,...,f_L$ and the definition of weak convergence of each $\tP_\gamma(\cdot,\bu_l)-\bu_l^\top\VV_\gamma$. Hence, the desired joint convergence \eqref{eq:kato1} follows by Corollary 1.4.5 on page 31 of \cite{VW96}. %\skc{\bf To be done}

The proof is complete by invoking Lemma \ref{lem:kato1}.

\labitem{Step 4}{ws:sol} \textbf{There exists a weak solution $\VV(t)$ of \eqref{eq:weakv} which is pathwise unique.}
% \bigskip
% \noindent\textbf{Step 6: There exists a weak solution of \eqref{eq:weakv} which is pathwise unique.}

We will show below that there exist a weak solution of \eqref{eq:weakv} which is pathwise unique (see the definitions on, e.g. page 300-301 of \cite{KS98}). This yield the existence of a strong solution in $C([0,\infty))^d$ on a rich enough probability space by Corollary 5.3.23 on page 310 of \cite{KS98}. Since the solution is continuous on $[0,\infty)$, this suggests that $\VV(t)$ does not explode in finite time almost surely and $\tilde\tau^K\to\infty$ a.s. as $K\to\R^d$.

To show the existence of the weak solution, it is enough to verify the Lipschitz condition Eq. (3.35) and linear growth condition Eq. (3.34) on page 300 of \cite{EK86} for each $T>0$. Since $\nabla G(\cdot)$ is continuous from Assumption \ref{as:L}, for any $T>0$, by \eqref{eq:step1tmp2}
	\begin{align}
		\sup_{0\leq t\leq T}\|\nabla G(\bw(t))\|_2 \leq \sup_{\|\bw\|\leq K_T}\|\nabla G(\bw)\|_2<C_T,\label{eq:bddG}
	\end{align}
for some $C_T>0$. By Lemma \ref{lem:escreg}(iii) and Lemma \ref{lem:uc}(a), $\nabla G(\bw(t)) \cdot \nabla\tPs(t,\cdot)$ is uniformly Lipschitz continuous in $t$. Hence, the Lipschitz condition Eq. (3.35) on page 300 of \cite{EK86} holds. 

Next, recall that $\nabla\tPs(t,\cdot)$ is uniformly Lipschitz continuous in $t$ by Lemma \ref{lem:escreg}(iii) and Lemma \ref{lem:uc}(a), for any $0\leq t\leq T$,
\begin{align}
	\|\nabla\tPs(t,\bv)\|_2 \leq \beta^{-1}\|\bv\|_2+\|\nabla\tPs(t,0)\|_2\leq \|\bv\|_2+C_T,\label{eq:wbdd0}
\end{align}
where $C_T>0$ satisfies $\max_{t\in[0,T]}\|\nabla\tPs(t,0)\|_2\leq C_T$ since $\nabla\tPs(t,0)=\arg\,\min_{\bw\in\R^d}\tP(t,\bw)$ is continuous on $[0,T]$ for any $T>0$ by Lemma \ref{lem:conticonj}. 

On the other hand, by H\"older's inequality and \eqref{eq:step1tmp2},
\begin{align*}
\sup_{0\leq t\leq T}\|\Sigma(\bw(t))\|_2 &\leq \sup_{\|\bw\|_2\leq K_T} \|\Sigma(\bw)\|_2 \\
&\leq  \E\Big[\sup_{\bw:\|\bw\|_2\leq K_T}\big\|\nabla f\big(\bw,Z_n\big)\big\|_2^2\Big] + 3\sup_{\|\bw\|_2\leq K_T} \|G(\bw)\|_2^2\\
&<C_T',	
\end{align*}
for some $C_T'>0$ by \eqref{eq:mom2bdd} in Assumption \ref{as:L} and the continuity of $G$ on $\R^d$ in Assumption \ref{as:M}. Hence, the linear growth condition Eq. (3.34) on page 300 of \cite{EK86} holds. By Theorem 5.3.11 on page 300 of \cite{EK86}, the weak solution of \eqref{eq:weakv} exists.

The pathwise uniqueness follows straightforwardly from Theorem 5.3.7 on page 297 of \cite{EK86} and the Lipschitz property of $\nabla G(\bw(t)) \cdot \nabla\tPs(t,\cdot)$ shown above. Hence, the proof for this step is complete.
%\hfill$\qed$

\labitem{Step 5}{ws:stopping} \textbf{$\tilde\tau_\gamma^K\to\infty$ in probability as $\gamma\to 0$, $K\to \infty$.}

Recall from \eqref{eq:wctrunc} that $\VV_\gamma(\cdot\wedge\tilde\tau_\gamma^K)\rightsquigarrow \VV(\cdot\wedge\tilde\tau^K)$ in $D([0,\infty))^d$ for all but countably many $K$ (pointwisely for each of them), if $T<\tilde\tau_\gamma^K$, we have $\VV_\gamma \rightsquigarrow \VV$ in $D([0,T])^d$ (with the sigma field defined with metric $\rho_{d,\circ}^T$ in Section \ref{sec:argpr}), and by the continuous mapping theorem we also have
\begin{align}
\sup_{0\leq t\leq T}\|\VV_\gamma(t)\|_2 \rightsquigarrow \sup_{0\leq t\leq T}\|\VV(t)\|_2 \quad\mbox{as }\gamma\to 0,\label{eq1:step5}
\end{align}
where $\sup_{0\leq t\leq T}\|\VV(t)\|_2$ is a continuous random variable (r.v.) because $\VV(t)$ is a Gaussian process with a.s. continuous sample path. 

Therefore, for any $T>0$, for all but countably many $K$ that $\VV_\gamma(\cdot\wedge\tilde\tau_\gamma^K)\rightsquigarrow \VV(\cdot\wedge\tilde\tau^K)$ in $D([0,\infty))^d$,
$$
P(\tilde\tau_\gamma^K>T)=P\big(\sup_{0 \leq t\leq T}\|\VV_\gamma(t)\|_2<K\big) = P\big(\sup_{0 \leq t\leq T}\|\VV(t)\|_2<K\big) + \Rc(K)
$$
where $\Rc(K) = P(\sup_{0\leq t\leq T}\|\VV_\gamma(t)\|_2<K)-P(\sup_{0\leq t\leq T}\|\VV_\gamma(t)\|_2<K)$ satisfies $\sup_{K\in\mathbb R_+}|\Rc(K)|=o(1)$ as $\gamma\to 0$ because \eqref{eq1:step5} implies {uniform convergence of CDFs} of $\sup_{0\leq t\leq T}\|\VV_\gamma(t)\|_2$ and $\sup_{0\leq t\leq T}\|\VV_\gamma(t)\|_2$ since the cdf of $\sup_{0\leq t\leq T}\|\VV_\gamma(t)\|_2$ is continuous. %(https://math.stackexchange.com/questions/1670030/convergence-in-law-implies-uniform-convergence-of-cdfs).

As $\gamma\to 0$ and $K\to\infty$ along an increasing countable sequence, since $\sup_{K\in\mathbb R_+}\Rc(K)=o(1)$, we obtain that $P(\tilde\tau_\gamma^K>T)\to 1$ (since $P(\sup_{0 \leq t\leq T}\|\VV(t)\|_2<K)\to 1$ as $K\to\infty$ along an increasing countable sequence) for any $T>0$. This verifies that $\tilde\tau_\gamma^K\stackrel{p}{\to}\infty$ as $r$ and $n$ $\to\infty$.
\end{itemize}

\subsection{Proof for \eqref{eq:avg} in Corollary \ref{th:avg}}\label{sec:pfavg}

Using the expression in \eqref{eq:barwT},
\begin{align*}
		\bar\bw_{\lfloor T/\gamma\rfloor}-T^{-1}\int_0^T\bw(s)ds
		 &= \frac{1}{\gamma\lfloor T/\gamma\rfloor}\int_0^T \bw_{\lfloor s/\gamma\rfloor} ds - \frac{1}{T}\int_0^T\bw_{\lfloor s/\gamma\rfloor} ds\\ 
		 &\quad+ \frac{1}{T}\int_0^T\bw_{\lfloor s/\gamma\rfloor} ds - \frac{1}{T}\int_0^T \bw(s)ds.
\end{align*}
By continuous mapping theorem (Proposition 7.27 of \cite{K08}),
\begin{align*}
	\gamma^{-1/2}\Big(\frac{1}{T}\int_0^T\bw_{\lfloor s/\gamma\rfloor} ds - \frac{1}{T}\int_0^T \bw(s)ds\Big) = \frac{1}{T}\int_0^T \frac{\bw_{\lfloor s/\gamma\rfloor}-\bw(s)}{\sqrt{\gamma}}ds\weakto \frac{1}{T}\int_0^T \WW(s) ds.
\end{align*}
It is left to show that
\begin{align}
	\frac{1}{\gamma\lfloor T/\gamma\rfloor}\int_0^T \bw_{\lfloor s/\gamma\rfloor} ds - \frac{1}{T}\int_0^T\bw_{\lfloor s/\gamma\rfloor} ds = \Big(\frac{1}{\gamma\lfloor T/\gamma\rfloor}-\frac{1}{T}\Big) \Big(\int_0^T \bw_{\lfloor s/\gamma\rfloor} ds\Big) =o_p(\sqrt{\gamma}).\label{eq:pfavg1}
\end{align}
To show \eqref{eq:pfavg1}, we will show that $\gamma^{-1/2}\big(\frac{1}{\gamma\lfloor T/\gamma\rfloor}-\frac{1}{T}\big)=o(1)$ and $\int_0^T \bw_{\lfloor s/\gamma\rfloor} ds=O_p(1)$. To see the former, note that 
\begin{align*}
	\gamma^{-1/2}\Big|\frac{1}{\gamma\lfloor T/\gamma\rfloor}-\frac{1}{T}\Big| = \frac{\gamma^{-1/2}}{T\lfloor T/\gamma\rfloor}\big|T/\gamma-\lfloor T/\gamma\rfloor\big| \leq \frac{\gamma^{-1/2}}{T(T/\gamma-1)} = \frac{\gamma^{1/2}}{T(T-\gamma)} = o(1).
\end{align*}

To see the latter, since $\tau^K=\inf\{t:\|\bv(t)\|_2\geq K\}\to\infty$ in probability as $K\to\infty$ since $\bv(t)$ exists for all $t>0$, we can drop $K$ in $\bw_\gamma^K$ in Theorem \ref{th:at}(b) and obtain that for any $T>0$,
\begin{align}
	\sup_{0\leq t < T}\|\bw_\gamma(t)-\bw(t)\|_2\to 0 \quad\mbox{in probability}. \label{eq:pfavg2}
\end{align}
Observe that by \eqref{eq:pfavg2},
\begin{align}
	\Big|\int_0^T \bw_{\lfloor s/\gamma\rfloor} ds\Big| \leq \int_0^T \sup_{0\leq t < T}\|\bw_\gamma(t)-\bw(t)\|_2 ds + \Big|\int_0^T \bw(s) ds\Big| = o_p(1) + \Big|\int_0^T \bw(s) ds\Big|.
\end{align}
By Theorem \ref{lem:conticonj}, $\bw(\cdot)=\nabla\Psi^*(\cdot,\bv(\cdot))$ continuous since $\bv(\cdot)$ is continuous. Hence, $\big|\int_0^T \bw(s)ds\big|=O(1)$, and the proof is complete.

\bigskip
\begin{proof}[Proof of Remark \ref{rem:excess}]
	Since $\frac{1}{T}\int_0^T \WW(s) ds=O_p(1/\sqrt{T})$, Corollary \ref{th:avg} suggests that %one can derive the convergence rate of $\bar\bw_{\lfloor T/\gamma\rfloor}$ in terms of the risk in excess to $\inf_{\bw} \E_Z[f(\bw;Z)]$ by using Corollary \ref{th:avg}. Indeed,
		\begin{align}
			\bar\bw_{\lfloor T/\gamma\rfloor}-T^{-1}\int_0^T\bw(s)ds = O_p\bigg(\sqrt{\frac{\gamma}{T}}\bigg) \label{eq:avgerr}
		\end{align}
		Since $f_0$ is globally Lipschitz, by \eqref{eq:avgerr},
		\begin{align}
		f_0(\bar\bw_{\lfloor T/\gamma\rfloor})-f_0\bigg(\frac{1}{T}\int_0^T\bw(s)ds\bigg)=O_p\bigg(\sqrt{\frac{\gamma}{T}}\bigg).\label{eq:excess1}	
		\end{align}
		On the other hand, proofs for Eq. (7) from \cite{RB12} give 
		\begin{align}
		f_0\bigg(\frac{1}{T}\int_0^T\bw(s)ds\bigg)-\inf_{\bw} f_0(\bw)\leq \frac{\mbox{Breg}_\Psi(\bw(0),\bw^*)}{2T}, \label{eq:excess2}
		\end{align}
		where $\mbox{Breg}_\Psi(\bx,\by)=\Psi(\bx)-\Psi(\by)-\langle\nabla\Psi(\by),\bx-\by\rangle$ is the Bregman divergence induced by $\Psi$. Combining \eqref{eq:excess1} and \eqref{eq:excess2}, we obtain a bound for the excess risk
		\begin{align*}
			f_0(\bar\bw_{\lfloor T/\gamma\rfloor}) - \inf_{\bw} f_0(\bw) = O_p\bigg(\frac{\mbox{Breg}_\Psi(\bw_0,\bw^*) \vee \sqrt{\gamma}}{\sqrt{T}}\bigg).
		\end{align*}
\end{proof}

\subsection{Proofs for Section \ref{sec:rdal1}}\label{sec:pfl1gen}

\begin{proof}[Proof of Corollary \ref{cor:l1at}]
	$F(\bw)=\frac{1}{2}\|\bw\|_2^2$ is strongly convex with factor 1, and $\Pc(\bw)=\|\bw\|_1$ is convex. The $\ell_2$ and $\ell_1$ norms are continuous. Hence, \ref{as:R} holds. The desired result follows by Theorem \ref{th:at}(c). 
	\end{proof}

\bigskip
\begin{proof}[Proof of Theorem \ref{th:biasrda}]
	By hypothesis, the data $Z_n=(X_n,Y_n)$ follows a linear regression model in \eqref{eq:modrdal1}. The least square loss is $f(\bw;Z_n)=(Y_n-X_n \bw)^2/2$, with gradient $\nabla f(\bw;Z_n)=-X_n(Y_n-X_n \bw)$ and $G(\bw)=H(\bw-\bw^*)$.
	
	Note that \ref{as:S} and \ref{as:M} are satisfied under this model. Setting tuning function $g(n,\gamma)=c_0 n \gamma$ translates \eqref{eq:rda} to the framework of \eqref{eq:grda_gen}. \ref{as:Plim} is satisfied by the observation that
	\begin{align*}
		\lim_{\gamma\to 0}\sup_{t\in[0,T]}\big|c_0 \lfloor t/\gamma\rfloor \gamma -c_0 t\big| = 0, \quad \mbox{for any $T>0$}.
	\end{align*}
	So $g^\dagger(t)=c_0 t$. Corollary \ref{cor:l1at} implies that the mean trajectory
	\begin{align}
		\frac{d\bv}{dt} = -H \big(\nabla\Lc^*(t,\bv)-\bw^*\big), \quad \bv(0)=\bw_0, \label{eq:dv}
	\end{align}
	where $\nabla\Lc^*(t,\bv)=\sgn(\bv)\cdot\big(|\bv|-c_0 t\big)_+$, and $\bw(t)=\nabla\Lc^*(t,\bv(t))$. Because $H=\mbox{diag}(\sigma_1^2,\sigma_2^2,...,\sigma_d^2)$, \eqref{eq:dv} can de-couple, and each $v_j(t)$ in $\bv(t)=(v_1(t),v_2(t),...,v_d(t))$ satisfies
	\begin{align}
		\frac{d v_j}{dt} = -\sigma_j^2 \big(\nabla\Lc^*(t,v_j(t)) -w_j^*\big), \quad v_j(0)=w_{0,j}, \label{eq:dvj}
	\end{align}
	and $w_j(t)=\nabla\Lc^*(t,v_j(t))$.
	
	\begin{itemize}[itemindent=45pt,leftmargin=0pt]
	\labitem{Step 1}{brda1} \textbf{Solution of \eqref{eq:dv} exists on $[0,\infty)$}.
	Letting 
		\begin{align}
			\theta(t,\bv) := -\big(\nabla\Lc^*(t,\bv(t))-\bw^*\big).
		\end{align}
		Observe that for any $\bv,\bv'\in\R$, Lemma \ref{lem:uc} implies
		\begin{align*}
			\sup_{t\in[0,\infty)}\big|\nabla\Lc^*(t,\bv)-\nabla\Lc^*(t,\bv')\big| \leq |\bv-\bv'|,
		\end{align*}
		so $\theta(t,\bv)$ is globally Lipschitz uniformly in $t$. By Corollary 2.6 of \cite{T12}, there exists a global solution of \eqref{eq:dv}.

	\labitem{Step 2}{brda2} \textbf{Proof of the main statement}.
	
	We focus on $j\in\{k:w_k^\infty\neq 0\}$. For sufficiently large $T_0$ such that $\sgn(w_j(t))=\sgn(w_j^\infty)$ for all $t>T_0$, in order to maintain the equality $w_j(t)=\nabla\Lc^*(t,v_j(t))$ for $t>T_0$, the solution $v_j(t)$ (which exists by \ref{brda1}) must satisfy
	\begin{align}
		v_j(t) = w_j(t) + \sgn(w_j^\infty)c_0 t\quad\mbox{for $t>T_0$}.\label{eq:vsol}
	\end{align}
	By a change of variable $w_j(t)=v_j(t)-\sgn(w_j^\infty)c_0 t$, and \eqref{eq:dvj},
		\begin{align}
			\frac{dw_j}{dt} = -\sigma_j^2\big(w_j-w_j^*+\sgn(w_j^\infty) c_0/\sigma_j^2\big), \quad \bw(0)=\bw_0.\label{eq:dwj}
		\end{align}
	The solution of \eqref{eq:dwj} exists by \ref{brda1}, and is unique for $t>T_0$ by Problem 2.5 and Theorem 2.2 (Picard-Lindl\"of) of \cite{T12} and \citep[p.19]{H69}, because the function of $w_j$ on the right hand side of \eqref{eq:dwj} is continuously differentiable with respect to $w_j$. Furthermore, the unique solution is in the form
		\begin{align}
		w_j(t)=e^{-\sigma_j^2 t}\bw_0 + (1-e^{- \sigma_j^2 t})(w_j^*-\sgn(w_j^\infty) c_0/\sigma_j^2), \quad t > T_0. \label{eq:wjsol}
		\end{align}
		Hence,
		\begin{align}
			|w_j^\infty-w_j^*| = \lim_{t\to\infty} \big|e^{-\sigma_j^2 t}(\bw_0 - w_j^*) - (1-e^{- \sigma_j^2 t}) \sgn(w_j^\infty) c_0/\sigma_j^2 \big| = c_0/\sigma_j^2.
		\end{align}
	\end{itemize} 
	
	% We claim that the solution of \eqref{eq:dv} uniquely exists on $[0,\infty)$, and $\bv(t)>0$ for all $t\in [0,\infty)$. This claim will be shown at the end of the proof. It follows by this claim and $\bw(t)=\nabla\Lc^*(t,\bv(t))$ that $\bw(t)=\bv(t) - c_0 t$, so $\bv(t)$ can be written as
% 	\begin{align}
% 		\bv(t) = \bw(t) + c_0 t\label{eq:vsol}
% 	\end{align}
% 	Plugging this into \eqref{eq:dv} and using $\bw(t)=\bv(t) - c_0 t$ again, elementary calculus yields
% 	\begin{align}
% 		\frac{d\bw}{dt} = -\big(\bw-\bw^*+c_0\big), \quad \bw(0)=\bw_0.\label{eq:dw}
% 	\end{align}
% 	The ODE \eqref{eq:dw} is linear, and the solution is
% 	\begin{align}
% 	\bw(t)=e^{-t}\bw_0 + (1-e^{-t})(\bw^*-c_0). \label{eq:wsol}
% 	\end{align}
% 	It is easily seen that
% 	$$
% 	\lim_{t\to\infty}\big|\bw(t)-\bw^*\big|=c_0.
% 	$$
% 	This completes the proof. If $\bw^*<0$, we let $\bw(0)<0$. The proof can be done by claiming $\bv(t)<0$ for all $t\in[0,\infty)$, replacing \eqref{eq:vsol} by $\bw(t)-c_0t$ and following similar argument above.
%
% 	\bigskip
%  Therefore, the solution $\bv(t) = \bw(t) + c_0 t$ with $\bw(t)$ in \eqref{eq:wsol} is that unique solution, and $\bv(t)>0$ for all $t\in[0,\infty)$ because $\bw(t)>0$ for all $t\in[0,\infty)$. This proves the claim.

\end{proof} 

\begin{rem}\label{rem:wsta}
	 The difficulty for showing the convergence of $\bw(t)$ results from the non-smoothness of the $\ell_1$ penalty. Recall that $\bw(t)=\nabla\Lc^*(t,\bv(t))$ in \eqref{eq:weak0wl1} is not a differentiable transformation of $\bv(t)$ under the $\ell_1$ penalty; see \eqref{eq:sop}. Thus, the ODE for $\bw(t)$ does not exist. Moreover, the stability of $\bv(t)$ is unclear, because a well-behaved Lyapunov function required in the standard stability analysis of ODE, e.g. Chapter 4.5 of \cite{K02}, is hard to find, due to the non-smoothness of the $\ell_1$ penalty. 
\end{rem}

\bigskip
\begin{proof}[Proof of Lemma \ref{lem:lbrel1time}]
	Using \eqref{eq:lamcond}, Lemma \ref{lem:l1dd} suggests that \eqref{eq:tPlim1l1} holds 
	% \begin{align}
% 		\tilde\Lc(t,\bu)= \frac{1}{2}\|\bu\|_2^2 + \underbrace{g^\ddagger(t) \Big(\sum_{j=1}^d u_j\sgn(w_j(t))\IF\{w_j(t)\neq 0\}+\sum_{j=1}^d|u_j|\IF\{w_j(t)=0\}\Big)}_{(*)},\mbox{ for }t\in\Tc,\label{eq:rdal1tP1}
% 	\end{align}
	if and only if the remainder term 
	\begin{align}
		\sup_{t\in\Tc}\gamma^{-1}\bigg|g(\itg,\gamma) \sum_{j:w_j(t)\neq 0} R_j^t(\sqrt{\gamma})\bigg| \to 0, \mbox{ as }\gamma \to 0,\label{eq:rdal1rem}
	\end{align}
	where
	\begin{align}
	\big|R_j^t(\sqrt{\gamma})\big|= 2\cdot\IF\{0< |w_j(t)|\leq \sqrt{\gamma}|u_j|\} \big|\sqrt{\gamma}|u_j|-|w_j(t)|\big|, \quad j\in\{k=1,...,d:w_k(t)\neq 0\}.\label{eq:estiR}
	\end{align}
	Since $\sup_{t\in\Tc}|g(\itg,\gamma)|=O(\gamma^{1/2})$ by \eqref{eq:lamcond}, under the estimation \eqref{eq:estiR}, then \eqref{eq:rdal1rem} holds if and only if pointwise for any $u_j\neq 0$,
	$$
	\IF\{0< |w_j(t)|\leq \sqrt{\gamma}|u_j|\}\to 0 \mbox{ as }\gamma\to 0, \quad \mbox{ for all }t\in\Tc.
	$$
	This is equivalent to that for any $u_j\neq 0$ with $j\in\{k:\sup_{t\in \Tc}|w_k(t)|\neq 0\}$, %there exists $\gamma$ independent of $j$ such that %$\IF\{0< |w_j(t)|\leq \sqrt{\gamma}|u_j|\}\to 0$, i.e. there is $\gamma>0$ such that for any $\bu=(u_1,...,u_d)\in\R^d$, $u_j\neq 0$ for $j\in\{k:w_k^*\neq 0\}$, we have
	\begin{align}
				\min_{j\in\{k:\ \sup_{t\in\Tc}|w_k(t)|\neq 0\}}\frac{\min_{t\in\Tc}|w_j(t)|}{|u_j|}>0. \label{eq:pathrestrict} %\frac{\underline C_T}{\max_{j=1,...,d}|u_j|}<
	\end{align}
	However, \eqref{eq:pathrestrict} is equivalent to the sign stability defined in Definition \eqref{def:sist}.
\end{proof}

\bigskip
\begin{proof}[Proof of Theorem \ref{th:l1dd}]
	For (a), it is clear that \ref{as:R} holds by $F(\bw)=\frac{1}{2}\|\bw\|_2^2$ and $\Pc(\bw)=\|\bw\|_1$. By the hypothesis that $g(\lfloor\cdot/\gamma\rfloor,\gamma)\in D([0,\infty))$ satisfies \eqref{eq:lamcond} with $\Tc=[0,T]$ for any $T>0$, $\sup_{t\in[0,T]}\big|g(\itg,\gamma)\big|=o(\gamma^{1/2})$ for any $T>0$, so condition \ref{as:Plim} holds with $g^\dagger(t)=0$ for all $t$. This implies that $\Lc(t,\bw)=\frac{1}{2}\|\bw\|_2^2$ and $\nabla\Lc^*(t,\bv)=\bv$ for any $t>0$ and $\bv\in\R^d$. Using Corollary \ref{cor:l1at}, $\bv_\gamma(t)\pto\bv(t)$ in uniform metric on $[0,\infty)$ by Theorem \ref{th:at}(b) and \eqref{eq:weak0vl1} in Corollary \ref{cor:l1at}, where $\bv(t)$ is the unique solution of \eqref{eq:bvdd}  and it follows that $\bw(t)=\nabla\Lc^*(t,\bv(t))=\bv(t)$ for any $t>0$ by \eqref{eq:weak0wl1}.

\bigskip	
	For (b), from the sign stability, % for any fixed $\bu=(u_1,u_2,...,u_d)\in\R^d$,
% 	\begin{align}
% 		0 < \min_{j\in\{k:\ \sup_{t\in [0,\infty)}|w_k(t)|\neq 0\}}\frac{\min_{t\in[0,\infty)}|w_j(t)|}{|u_j|}.
% 	\end{align}
% 	That is, the condition \eqref{eq:pathrestrict} holds on $[0,\infty)$. By the discussion before \eqref{eq:pathrestrict}, this is equivalent to condition \eqref{eq:rdal1rem},
% 	\begin{align}
% 		\sup_{t\in [0,\infty)}\gamma^{-1}\bigg|g(\itg,\gamma) \sum_{j:w_j(t)\neq 0} R_j^t(\sqrt{\gamma})\bigg| \to 0, \mbox{ as }\gamma \to 0.
% 	\end{align}
% 	Together with $\lim_{\gamma\to 0}\sup_{t\in[0,T]}\big|g(\itg,\gamma)/\sqrt{\gamma}-g^\ddagger(t)\big|\to 0$,
Lemma \ref{lem:lbrel1time} implies $\tilde\Lc$ is of the form as \eqref{eq:rdal1tP}. For $t>T_0$, similar to \eqref{eq:recurv},
		\begin{align}
			\VV_\gamma(t) = \VV_\gamma(T_0)-\MM_\gamma(t) &- \gamma^{-1/2}\Big(\gamma\sum_{k=0}^{\lfloor t/\gamma\rfloor-1} G(\bw_k)-\underbrace{\int_{T_0}^t G(\bw(s))ds}_{\mbox{\scriptsize from   $\bv(t)$}}\Big).\label{eq:recurvloc}
		\end{align}
	Following similar argument as the proof of Theorem \ref{th:dd} and \ref{ws:M}-\ref{ws:stopping}, the process convergence on $D(\Tc)^d$ can be shown for $\MM_\gamma(t)$ and $\gamma^{-1/2}\big(\gamma\sum_{k=0}^{\lfloor t/\gamma\rfloor-1} G(\bw_k)-\int_{T_0}^t G(\bw(s))ds\big)$ in \eqref{eq:recurvloc}. The details are omitted. The result under the global sign stability follows immediately from Theorem \ref{th:dd}.
\end{proof}

\begin{lemma}\label{lem:l1dd}
	For any $\bw=(w_1,...,w_d)$ and $\bu=(u_1,...,u_d)$ in $\R^d$,
\begin{align*}
	\hspace{-1cm}\tilde\Lc_\gamma(t,\bu)&= \frac{1}{2}\|\bu\|_2^2 + (g(\itg,\gamma)/\sqrt{\gamma}) \Big(\sum_{j=1}^d u_j\sgn(w_j(t))\IF\{w_j(t)\neq 0\} + \sum_{j=1}^d|u_j|\IF\{w_j(t)=0\}\Big)\\
	&\quad + \gamma^{-1}g(\itg,\gamma) \sum_{j:w_j(t)\neq 0} R_j^t(\sqrt{\gamma}).
\end{align*}	
where
$$
\big|R_j^t(\sqrt{\gamma})\big|= 2\cdot\IF\{0< |w_j(t)|\leq \sqrt{\gamma}|u_j|\}\big|\sqrt{\gamma} |u_j|-|w_j(t)|\big|, \quad j\in\{k=1,...,d:w_k(t)\neq 0\}.
$$
\end{lemma}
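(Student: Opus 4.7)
The plan is to prove Lemma~\ref{lem:l1dd} by direct algebraic expansion of the definition \eqref{eq:locbregl1}; there is no deep technique involved, only careful bookkeeping of the $\ell_1$ piece. The only conceptual input is that in the setting where Lemma~\ref{lem:l1dd} is applied (inside the proof of Lemma~\ref{lem:lbrel1time}), condition \eqref{eq:lamcond} gives $g(\lfloor t/\gamma\rfloor,\gamma)=O(\sqrt{\gamma})$, so $g^\dagger(t)\equiv 0$ in \ref{as:Plim}. Consequently $\nabla\Lc^*(t,\bv)=\bv$ and $\bw(t)=\bv(t)$, as recorded in Theorem~\ref{th:l1dd}(a). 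This identification kills the cross term $\sqrt{\gamma}\langle\bw(t)-\bv(t),\bu\rangle$ that would otherwise appear after subtracting $\langle\sqrt{\gamma}\bu,\bv(t)\rangle$.

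First I would separate the quadratic and the $\ell_1$ contributions: writing $\Lc_\gamma(t,\bw)=\tfrac12\|\bw\|_2^2+g(\lfloor t/\gamma\rfloor,\gamma)\|\bw\|_1$ and using $\bv(t)=\bw(t)$, a direct expansion yields
\begin{equation*}
\gamma\,\tilde\Lc_\gamma(t,\bu) \;=\; \tfrac{\gamma}{2}\|\bu\|_2^2 \;+\; g(\lfloor t/\gamma\rfloor,\gamma)\sum_{j=1}^d\bigl(|w_j(t)+\sqrt{\gamma}u_j|-|w_j(t)|\bigr),
\end{equation*}
where the linear term in $\bu$ coming from $\tfrac12\|\bw(t)+\sqrt{\gamma}\bu\|_2^2$ exactly cancels $\langle\sqrt{\gamma}\bu,\bv(t)\rangle$.

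Next I would split the sum according to whether $w_j(t)=0$ or $w_j(t)\neq 0$. For indices with $w_j(t)=0$ the contribution is simply $\sqrt{\gamma}|u_j|$, producing the $|u_j|\IF\{w_j(t)=0\}$ term after dividing by $\sqrt{\gamma}$. For indices with $w_j(t)\neq 0$ I would perform a case split on the sign of $w_j(t)+\sqrt{\gamma}u_j$: if $\sqrt{\gamma}|u_j|\leq |w_j(t)|$ the sign is preserved and $|w_j(t)+\sqrt{\gamma}u_j|-|w_j(t)|=\sqrt{\gamma}u_j\sgn(w_j(t))$ exactly, contributing the $u_j\sgn(w_j(t))$ term; if $\sqrt{\gamma}|u_j|>|w_j(t)|>0$ the sign flips and a short calculation shows
\begin{equation*}
|w_j(t)+\sqrt{\gamma}u_j|-|w_j(t)|-\sqrt{\gamma}u_j\sgn(w_j(t)) \;=\; R_j^t(\sqrt{\gamma}),
\end{equation*}
with $|R_j^t(\sqrt{\gamma})|=2(\sqrt{\gamma}|u_j|-|w_j(t)|)=2\big|\sqrt{\gamma}|u_j|-|w_j(t)|\big|$. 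Combining the two subcases into the single indicator $\IF\{0<|w_j(t)|\leq \sqrt{\gamma}|u_j|\}$ gives the remainder bound in the statement. Collecting the quadratic, the linear (scaled by $g/\sqrt{\gamma}$), and the remainder (scaled by $g/\gamma$) pieces produces the asserted decomposition.

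No step here is an obstacle in the usual sense; the only subtlety is the case split for $w_j(t)\neq 0$, and verifying that the remainder formula is symmetric in the sign of $w_j(t)$ and correctly captures both the $w_j(t)>0,\ u_j<0$ and $w_j(t)<0,\ u_j>0$ scenarios. Once that is checked, the identity follows termwise, pointwise in $t$ and $\gamma$, without any limit argument.
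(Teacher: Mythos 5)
Your argument is correct and termwise, but it takes a genuinely different (and arguably more self-contained) route from the paper's. The paper expands the $\ell_1$ increment via Knight's identity, writing the remainder as $R_j^t = 2\int_0^{-\sqrt{\gamma}u_j}\big(\IF\{w_j\le s\}-\IF\{w_j\le 0\}\big)\,ds$ and then evaluating the integral by cases on $\sgn(w_j)$; you instead do a direct case split on whether $\sgn\big(w_j(t)+\sqrt{\gamma}u_j\big)$ matches $\sgn(w_j(t))$. Both routes produce the same three-piece decomposition. Your preliminary observation --- that under \eqref{eq:lamcond} one has $g^\dagger\equiv 0$, hence $\bw(t)=\bv(t)$, so the quadratic cross term cancels the $\langle\sqrt{\gamma}\bu,\bv(t)\rangle$ subtraction --- is a useful clarification: the paper's own proof silently replaces $\bv(t)$ by $\bw(t)$ in the generic computation, and your remark explains why that substitution is licensed in the regime where the lemma is invoked.

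One point worth stating more carefully (and which the paper itself glosses over): when $w_j(t)\neq 0$, the remainder $R_j^t$ is nonzero only when $\sqrt{\gamma}|u_j|>|w_j(t)|$ \emph{and} $u_j$ and $w_j(t)$ have opposite signs; if they share a sign, the sign of $w_j(t)+\sqrt{\gamma}u_j$ is trivially preserved and $R_j^t=0$ even though $\sqrt{\gamma}|u_j|$ may exceed $|w_j(t)|$. Your phrase ``if $\sqrt{\gamma}|u_j|>|w_j(t)|>0$ the sign flips'' is therefore slightly too broad, though your closing remark shows you are aware that only the opposite-sign scenarios are active. As a consequence, the displayed $|R_j^t(\sqrt{\gamma})|=2\,\IF\{0<|w_j(t)|\le\sqrt{\gamma}|u_j|\}\,\big|\sqrt{\gamma}|u_j|-|w_j(t)|\big|$ is in fact an \emph{upper bound} rather than an equality (take $w_j(t)=1$, $u_j=10$, $\gamma=1$: then $R_j^t=0$ but the right side is $18$). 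This matches the paper's own presentation --- it commits the same overstatement when passing from the two-indicator expression $\IF\{0<w_j\le-\sqrt{\gamma}u_j\}+\IF\{0>w_j>-\sqrt{\gamma}u_j\}$ to the single indicator $\IF\{0<|w_j|\le\sqrt{\gamma}|u_j|\}$ --- and it is harmless downstream since Lemma~\ref{lem:lbrel1time} only needs this as a dominating bound that vanishes under sign stability, together with the observation that for suitably chosen $u_j$ it is tight. But if you present this proof you should state the remainder formula as an inequality or add the opposite-sign qualifier to the indicator, rather than claim equality.
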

\begin{proof}[Proof of Lemma \ref{lem:l1dd}] 
	% Suppose $\lim_{\gamma\to 0} \lambda/\sqrt{\gamma}=g^\ddagger$ for some constant $g^\ddagger>0$, we will show in Section \ref{sec:pfdisdy} that
% 	\begin{align}
% 		\tP_\lambda(t,\bu)= \frac{1}{2}\|\bu\|_2^2 + g^\ddagger \Big(\sum_{j=1}^d \bu_j\sgn(\bw_j(t))\IF\{\bw_j(t)\neq 0\} + \sum_{j=1}^d|\bu_j|\IF\{\bw_j(t)=0\}\Big).\label{eq:tpl1}
% 	\end{align}
		Fix arbitrary $\bw=(w_1,...,w_d)$ and $\bu=(u_1,...,u_d)$. Using Knight's identity (in the proof of Theorem 1 of \cite{K98}),
		\begin{align*}
			\|\bw+\sqrt{\gamma}\bu\|_1-\|\bw\|_1 
			= \sqrt{\gamma}\Big(\sum_{j=1}^d u_j\sgn(w_j)\IF\{w_j\neq 0\} + \sum_{j=1}^d|u_j|\IF\{w_j=0\}\Big)+ \sum_{j:w_j\neq 0} R_j^t,
		\end{align*}
		where 
		\begin{align*}
			R_j^t = 2 \int_0^{-\sqrt{\gamma}u_j} (\IF\{w_j\leq s\}-\IF\{w_j\leq 0\})ds.
		\end{align*}
		Therefore,
		\begin{align*}
		%&\tP_\gamma(t,\bu)\\
		%&=
		&\gamma^{-1}\big(\Psi_\gamma(t,\bw+\sqrt{\gamma}\bu)-\Psi_\gamma(t,\bw) -\langle \sqrt{\gamma}\bu,\bw\rangle\big)\\
		&=\gamma^{-1}\bigg(\frac{1}{2}\|\bw+\sqrt{\gamma}\bu\|_2^2-\frac{1}{2}\|\bw\|_2^2-\sqrt{\gamma}\bu^\top\bw+g(\itg,\gamma) \|\bw+\sqrt{\gamma}\bu\|_1-g(\itg,\gamma) \|\bw\|_1\bigg)\\
		&= \frac{1}{2}\|\bu\|_2^2 + (g(\itg,\gamma)/\sqrt{\gamma}) \Big(\sum_{j=1}^d u_j\sgn(w_j)\IF\{w_j\neq 0\} + \sum_{j=1}^d|u_j|\IF\{w_j=0\}\Big) \\
		&\quad\quad\quad+ \gamma^{-1}g(\itg,\gamma) \sum_{j:w(t)\neq 0} R_j^t(\sqrt{\gamma}).
		\end{align*}
		%In order to make the second term above finite as $\gamma\to 0$, then necessarily $\lim_{\gamma\to 0}\lambda/\sqrt{\gamma}<\infty$. 
		
		Notice that
		\begin{align*}
			&\int_0^{-\sqrt{\gamma}u_j} (\IF\{w_j\leq s\}-\IF\{w_j\leq 0\})ds \\
			&= \begin{cases}
				\IF\{w_j\leq -\sqrt{\gamma}u_j\} \big(-\sqrt{\gamma}u_j-w_j\big) &w_j>0\\
				\IF\{w_j<-\sqrt{\gamma}u_j\}(-\sqrt{\gamma}u_j)+\IF\{w_j>-\sqrt{\gamma}u_j\}w_j + \sqrt{\gamma}u_j&w_j<0.\\
				\quad= \IF\{w_j>-\sqrt{\gamma}u_j\}(w_j+\sqrt{\gamma}u_j)&
			\end{cases}
		\end{align*}	
		Combine the above to get
		\begin{align*}
			&\bigg|\int_0^{-\sqrt{\gamma}u_j} (\IF\{w_j\leq s\}-\IF\{w_j\leq 0\})ds\bigg|\\
			&= \big(\IF\{0<w_j\leq -\sqrt{\gamma}u_j\}+\IF\{0>w_j>-\sqrt{\gamma}u_j\}\big)|\sqrt{\gamma}u_j+w_j|\\
			%&\leq \big(\IF\{0<w_j\leq -\sqrt{\gamma}u_j\}+\IF\{0>w_j>-\sqrt{\gamma}u_j\}\big) \sqrt{\gamma}|u_j|\\
			&= \IF\{0< |w_j|\leq \sqrt{\gamma}|u_j|\}\big||\sqrt{\gamma}u_j|-|w_j|\big|.
			%&\leq 2 \sqrt{\gamma} \IF\{0< |w_j|\leq \sqrt{\gamma}|u_j|\}|u_j|.
		\end{align*}
		 This proof is finished by replacing $w_j$ by $w_j(t)$.
\end{proof}

\section{Proofs for Section \ref{sec:lrl1}}\label{sec:pfslr}

\begin{proof}[Proof of Corollary \ref{cor:lrrdaat}]
	The asymptotic trajectory \eqref{eq:weakatrda1} follows by Theorem \ref{th:l1dd}. Moreover, \eqref{eq:solatrda1} stems from elementary results of linear ODE (e.g. Section 7.5 of \cite{BD05}), and that is the unique solution of \eqref{eq:weakatrda1} for all $t\in[0,\infty)$. The conclusion in (b) directly results from (a)-(c) of Theorem \ref{th:l1dd}.
\end{proof}

\bigskip
\begin{proof}[Proof of Theorem \ref{th:lrsu}]
	%To proceed further, we will make a few conjectures that simplify our analysis. We will justify these conjectures at a high level. First, we observe that the covariance function $\Sigma(\bw)\to\sigma_{\varepsilon}^2 H$ as $t\to\infty$ because $\bw(t)\to\bw^*$ as $t\to\infty$ by Corollary \ref{cor:lrrdaat}(a). 
	
Since $\lfloor t/\gamma\rfloor \gamma \to t$ as $\gamma\to 0$, the tuning function \eqref{eq:lrlamlong} implies $\lim_{\gamma\to 0}\sup_{0<t<T}|g(\itg,\gamma)-g^\dagger(t)|=0$ where $g^\dagger(t)=0$ for all $t$, and $\lim_{\gamma\to 0}\sup_{0<t<T}|g(\itg,\gamma)/\sqrt{\gamma}-g^\ddagger(t)|=0$ for any $T>0$, with
	\begin{align}\label{eq:lrlamlong1}
		g^\ddagger(t)=
		\left\{\begin{array}{ll}
			0,&\ t \leq t_0;\\
			c (t-t_0)^\mu, &\ t > t_0,
		\end{array}\right.
	\end{align}
so $g^\ddagger(t)$ is continuous on $[0,\infty)$, and $g(\lfloor\cdot/\gamma\rfloor,\gamma)\in D([0,\infty))$ satisfies \eqref{eq:lamcond} with $\Tc=[0,T]$ for any $T>0$. Assumptions \ref{as:S}, \ref{as:M} and \ref{as:L} are clear from the model \eqref{eq:modrdal1}. Corollary \ref{cor:lrrdaat}(a) suggests that as $\gamma\to 0$, the mean dynamics $\bw(t)$ is \eqref{eq:lintra}. 

\bigskip
For the distributional dynamics, we will use Theorem \ref{th:dd}. The only condition left to be verified is \ref{as:tPlim} with $g^\ddagger$ defined in \eqref{eq:lrlamlong1}. Recall $\tilde\Lc_\gamma(t,\bu)$ in \eqref{eq:locbregl1} and $\tilde\Lc(t,\bu)$ in \eqref{eq:rdal1tP}. Lemma \ref{lem:lbrel1time} verifies Condition \ref{as:tPlim0} for $[t_0,T]$ for any $T>t_0$, because $\bw(t)$ is sign stable on $[t_0,T]$. $\tilde\Lc(t,\bu)$ satisfies Condition \ref{as:tPlim1} because it is continuous in both $t$ (by the continuity of $g^\ddagger(t)$) and $\bu$. Lastly, for any $T>0$ and any fixed $\bu_0\in\R^d$, $\sup_{0\leq t\leq T}\|\tilde\Lc(t,\bu_0)\|_2 \leq \|\bu_0\|_2 + \sup_{0\leq t\leq T}|g^\ddagger(t)|$. Because $g^\ddagger(t)$ is continuous, \ref{as:tPlim2} holds. 

Applying Theorem \ref{th:dd} gives $\frac{\bv_{\lfloor t/\gamma\rfloor}-\bv(t)}{\sqrt{\gamma}}=\VV_\gamma(t)\weakto \VV(t)$ on $D([0,\infty))^d$ where $\VV(t)$ satisfies the SDE
	\begin{align}
		\VV(t) &= -\int_0^t H \cdot \nabla \tilde\Lc^*(s,\VV(s)) ds + \MM(t),\quad t\geq 0, \label{eq:conV}
	\end{align}
	where $\tilde\Lc^*_j(t,\bV)=\bV$ is the identity map for $t\leq t_0$, while for $t>t_0$,
	\begin{align}\label{eq:rdasoftthreshold_again}
		\nabla \tilde\Lc^*_j(t,\bV) = \begin{cases}
			V_j-g^\ddagger(t), &\mbox{ for }j:w_j^*>0;\\			
			V_j+g^\ddagger(t), &\mbox{ for }j:w_j^*<0;\\			
			\sgn(V_j)\big[|V_j|-g^\ddagger(t)\big]_+. &\mbox{ for }j:w_j^*=0,\\
		\end{cases} 
	\end{align}
	for $j=1,...,d$, and $g^\ddagger$ is of the form in \eqref{eq:lrlamlong1}. The reason $\nabla \tilde\Lc^*_j(t,\bV)$ depends on $w_j^*$ rather than $w_j(t)$ like \eqref{eq:rdasoftthreshold} is that $\sgn(w_j(t))=\sgn(w_j^*)$ for $t>t_0$; see \eqref{eq:lintra}. $H$ is a diagonal matrix from Condition \ref{as:D1}; the covariance kernel of $\MM(t)$ is of the form
		\begin{align*}
			\Sigma(\bw)=\E\big[(XX^\top-H)(\bw-\bw^*)(\bw-\bw^*)^\top(XX^\top-H)\big] + \sigma_\varepsilon^2 H.
	\end{align*}
	from \eqref{eq:covrdadd}. 
	
	\bigskip
	Since $\MM(t)$ has independent Gaussian increments, by the Ito isometry, $\VV(t)$ in \eqref{eq:conV} can equivalently be written as
	\begin{align}
		\VV(t) &= -\int_0^t H \cdot \nabla \tilde\Lc^*(s,\VV(s)) ds + \int_0^t \Sigma^{1/2}(\bw(s))d\BB(s), \quad t\geq 0, \label{eq:tempV} %\notag \\
		%&= -\int_0^t H \cdot \nabla \tilde\Lc^*(s,\VV(s)) ds+ \sigma_{\varepsilon} H^{1/2} \BB(t) + \int_0^t \big\{\Sigma(\bw(s))^{1/2}-\sigma_{\varepsilon} H^{1/2}\big\} d\BB(s). 
	\end{align}
	where $\BB(t)=(B_1(t),...,B_d(t))^\top$ is the standard $d$-Brownian motion with $\Cov(B_i(t),B_j(t))=0$.
Observe that $H$ is diagonal and the operator $\nabla \tilde\Lc^*$ defined in \eqref{eq:rdasoftthreshold_again} applies coordinatewise, and the diffusion term does not depend on $\VV(t)$, so the process $\VV(t)$ can be represented coordinatewise by 
\begin{align}
	V_j(t) = -\int_0^t \sigma_j^2 \nabla \tilde\Lc_j^*(s,V_j(s)) ds + \int_0^t \Sigma_{j\cdot}^{1/2}(\bw(s))^\top d\BB(s),\ t\geq 0, \ j=1,2,...,d,\label{eq:tempV1_0}
\end{align}
where $\Sigma(\bw(s))_{j\cdot}^{1/2}$ denotes the $j$th row of the matrix $\Sigma(\bw(s))^{1/2}$. 

For $t<t_0$, $g^\ddagger=0$ by \eqref{eq:lrlamlong1} and $\nabla \tilde\Lc_j^*(s,V_j(s))=V_j(s)$, so the process \eqref{eq:tempV1_0} is essentially a Ornstein-Uhlenbeck process
\begin{align}
	V_j(t) = - \sigma_j^2\int_0^t V_j(s) ds + \int_0^t \Sigma_{j\cdot}^{1/2}(\bw(s))^\top d\BB(s), \quad t\leq t_0, \ j=1,2,...,d,\label{eq:tempV1_sgd}
\end{align} 
which has a closed-form solution by Lemma \ref{lem:solV}. Therefore, the process in \eqref{eq:tempV1_0} for $t>t_0$ can be rewritten 
\begin{align}
	V_j(t) = V_j(t_0)-\int_{t_0}^t \sigma_j^2 \nabla \tilde\Lc_j^*(s,V_j(s)) ds + \int_{t_0}^t \Sigma_{j\cdot}^{1/2}(\bw(s))^\top d\BB(s), \quad t\geq t_0, \ j=1,2,...,d,\label{eq:tempV1}
\end{align}
where 
\begin{align}
	V_j(t_0)=e^{-\sigma_j^2 t_0}\int_0^{t_0} e^{\sigma_j^2 s}\ \Sigma_{j\cdot}^{1/2}(\bw(s))^\top d\BB(s) \label{eq:V0}
\end{align} 
is the solution at $t=t_0$ of \eqref{eq:tempV1_sgd} by Lemma \ref{lem:solV}. Note that 
\begin{align}
	\var\big\{V_j(t_0)\big\} \leq (1-e^{-2\sigma_j^2 t_0}) (C\|\bw^*\|_2^2 + \sigma_\varepsilon^2\|H\|_2)<\infty,\label{eq:finsecm}
\end{align}
by Ito isometry and Condition \ref{as:D1}.

\bigskip
In the following, statements in \ref{lrsu1} and \ref{lrsu2} will be proved in several steps.
 
\begin{itemize}[itemindent=45pt,leftmargin=0pt]
\labitem{Step 1}{lrsta1} \textbf{Proof of \ref{lrsu1}}.

Using the explicit form of $\nabla \tilde\Lc_j^*$ defined in \eqref{eq:rdasoftthreshold_again} for $j\in\{k:w_k^*=0\}$, as $t>t_0$, $W_j(t)=\sgn(V_j(t))\big[|V_j(t)|-c(t-t_0)^\mu\big]_+$ is a soft-thresholding operator on $V_j$ with threshold $c(t-t_0)^\mu$. By \eqref{eq:tempV1}, $\forall j\in\{k:w_k^*=0\}$, $V_j(t)$ is characterized by
	\begin{align}
		dV_j(t) = -\sigma_j^2 \sgn(V_j(t))\big[|V_j(t)|-c(t-t_0)^\mu\big]_+ dt + \Sigma_{j\cdot}^{1/2}(\bw(s))^\top d\BB(t),\ t>t_0,\label{eq:vprime}
	\end{align}
with initialization $V_j(t_0)$ defined in \eqref{eq:V0}. 

\bigskip
We claim that for $t>t_0$,
\begin{align}
	W_j(t)=\sgn(V_j(t))\big[|V_j(t)|-c(t-t_0)^\mu\big]_+\pto 0,\ \mbox{ as $t\to\infty$}. \label{eq:Wlim}
\end{align}
Note that the mean dynamics $w_j(t)=0$ for all $t$ by \eqref{eq:lintra} when both $w_{0,j}=0$ and $w_j^*=0$, and this with \eqref{eq:Wlim} yield that $|W_j(t)|=|w_{\lfloor t/\gamma \rfloor,j}/\sqrt{\gamma}|=o_p(1)$ as $\gamma\to 0$ and $t\to\infty$. Hence, $|w_{\lfloor t/\gamma \rfloor,j}|=o_p(\sqrt{\gamma})$ when $\gamma\to 0$ and $t\to\infty$ as desired in this step.

	\bigskip
	It is now left to show \eqref{eq:Wlim}. To simplify expressions, for $t>t_0$, define the event
	\begin{align}
		E_1(t):= \bigg\{\sup_{t_0 \leq \tau\leq t}\bigg|V_j(t_0)+\int_{t_0}^\tau \Sigma_{j\cdot}^{1/2}(\bw(s))^\top d\BB(s)\bigg|\leq c(t-t_0)^\mu\bigg\}.\label{eq:evE1}
	\end{align}
	Take $u>0$,
	\begin{align}
		P\big(|W_j(t)|>u\big)&=P\big(|V_j(t)-c (t-t_0)^\mu|>u\big)\notag\\
		&\leq P\big(\big\{|V_j(t)-c (t-t_0)^\mu|>u\big\}\cap E_1(t)\big)+P\big(E_1(t)^c\big),\label{eq:step1dec}
	\end{align}
	where $c>0$ and $\mu>1/2$ are defined as in the hypothesis of this Theorem, and $E_1(t)^c$ is the complement of $E_1(t)$ defined in \eqref{eq:evE1}. It will be shown that the first probability of \eqref{eq:step1dec} is zero, while the second probability converges to 0 as $t\to\infty$.
	
	\bigskip
	We now prove that the first probability in \eqref{eq:step1dec} is 0. We first show that that under $E_1(t)$, 
	\begin{align}
		V_j(\tau)=V_j(t_0)+\int_{t_0}^\tau \Sigma_{j\cdot}^{1/2}(\bw(s))^\top d\BB(s), \quad t_0\leq\tau\leq t,\label{eq:solV_E1}
	\end{align}
	  is the unique weak solution to \eqref{eq:vprime} before time $t$. To see this, by the restriction on $E_1(t)$, $\big[|V_j(t)|-c(t-t_0)^\mu\big]_+=0$, so the drift term in \eqref{eq:vprime} is zero, and then $dV_j(t)=\Sigma_{j\cdot}^{1/2}(\bw(s))^\top d\BB(t)$ for $t\geq t_0$ with $V_j(t_0)$ in \eqref{eq:V0}. Hence, \eqref{eq:solV_E1} holds by uniqueness proven in Theorem \ref{th:dd}. Plugging the solution \ref{eq:solV_E1} in the first probability of \eqref{eq:step1dec},
	 \begin{align}
	 	&P\big(\big\{|V_j(t)-c (t-t_0)^\mu|>u\big\}\cap E_1(t)\big)\notag\\
		&=P\bigg(\bigg\{\bigg|V_j(t_0)+\int_{t_0}^t \Sigma_{j\cdot}^{1/2}(\bw(s))^\top d\BB(s)-c (t-t_0)^\mu\bigg|>u\bigg\}\bigcap E_1(t)\bigg)\notag\\
		&=0,\quad \forall t\geq t_0 \mbox{ and }u>0. \label{eq:pbd1}
	 \end{align} 
	
	\bigskip
	Next, we bound the probability $P\big(E_1(t)^c\big)$ in \eqref{eq:step1dec}. Probability bound on maximal deviation of Gaussian process in Theorem D.4 on page 20 of \cite{P96} will be applied. Some preliminary estimation will be made below. Observe that for any $t_1,t_2$ such that $t_0\leq t_1 < t_2 \leq t$, Ito isometry implies
	\begin{align}
		&\E\bigg[\bigg(\int_{t_0}^{t_1}\Sigma_{j\cdot}^{1/2}(\bw(s))^\top d\BB(s)-\int_{t_0}^{t_2}\Sigma_{j\cdot}^{1/2}(\bw(s))^\top d\BB(s)\bigg)^2\bigg]\notag\\
		&=\E\bigg[\bigg(\int_{t_1}^{t_2}\Sigma_{j\cdot}^{1/2}(\bw(s))^\top d\BB(s)\bigg)^2\bigg] \notag\\
		&= \int_{t_1}^{t_2} \big\|\Sigma_{j\cdot}^{1/2}(\bw(s))\big\|_2^2 ds \notag\\
		&= \Big(\sup_{t_1\leq s\leq t_2} \big\|\Sigma_{j\cdot}^{1/2}(\bw(s))\big\|_2^2\Big) |t_2-t_1|\notag\\
		&\leq \big(C \|\bw^*\|_2^2+\sigma_\varepsilon^2 \|H\|_2\big)\ |t_2-t_1|,\label{eq:covfunbd}
	\end{align}
where %the H\"older inequality is applied in the third inequality, and 
the last inequality follows from $\|\bw(s)\|_2 \leq \|\bw^*\|_2$ for all $s\geq 0$ using the explicit mean trajectory \eqref{eq:lintra} and $\bw_0=0$, and the constant $C$ depends on the almost sure bound of the covariates $X$, which exists by Condition \ref{as:D1}. 

In addition, by similar argument as in \eqref{eq:covfunbd}, Ito isometry yields
\begin{align}
	\sup_{t_0\leq \tau\leq t}\E\bigg[\bigg(\int_{t_0}^\tau\Sigma_{j\cdot}^{1/2}(\bw(s))^\top d\BB(s)\bigg)^2\bigg]&=\sup_{t_0\leq \tau\leq t}\E\bigg[\int_{t_0}^\tau \big\|\Sigma_{j\cdot}^{1/2}(\bw(s))\big\|_2^2 ds\bigg]\notag\\
	&\leq \big(C \|\bw^*\|_2^2+\sigma_\varepsilon^2 \|H\|_2\big)(t-t_0). \label{eq:sigfunbd}
\end{align}

Using the bounds in \eqref{eq:covfunbd} and \eqref{eq:sigfunbd}, Theorem D.4 on page 20 of \cite{P96} and elementary probability bound give
\begin{align}
	&P\big(E_1(t)^c\big) \notag\\
	&\leq P\bigg(\big|V_j(t_0)\big|+\sup_{t_0 \leq \tau\leq t}\bigg|\int_{t_0}^\tau \Sigma_{j\cdot}^{1/2}(\bw(s))^\top d\BB(s)\bigg|>c(t-t_0)^\mu\bigg)\notag\\
	&\leq 2 \bigg(1-\Phi\Big(\frac{c(t-t_0)^\mu}{2\sqrt{\var\{V_j(t_0)\}}}\Big)\bigg)+ 2 c\ C_0\ t (t-t_0)^\mu\ \frac{1}{2}\exp\bigg\{-\frac{c^2 (t-t_0)^{2\mu}}{8\big(C \|\bw^*\|_2^2+\sigma_\varepsilon^2 \|H\|_2\big)\ (t-t_0)}\bigg\}\notag\\ 
	&\to 0, \quad \mbox{if }\mu>1/2,\mbox{ and } c>0,%\geq \sqrt{\|\bw^*\|_2^2+\sigma_\varepsilon^2 \|H\|_2} 
	\label{eq:pbd2}
\end{align}
where $C_0$ depends on $\bw^*,H,\sigma_\varepsilon$ and the almost sure bound on $X$, and note that the Gaussian tail probability satisfies $1-\Phi(u)\leq \frac{1}{2}e^{-u^2/2}$ for any $u>0$, where $\Phi(\cdot)$ is the cdf of $\Nc(0,1)$. 

The proof \eqref{eq:Wlim} is complete by plugging \eqref{eq:pbd1} and \eqref{eq:pbd2} in \eqref{eq:step1dec}.

\labitem{Step 2}{lrsta2} \textbf{Proof of \eqref{eq:limw} in \ref{lrsu2}}.

By \eqref{eq:tempV1}, for $t>t_0$,
	\begin{align}
		dV_j(t) = -\sigma_j^2 \big(V_j(t)-c \sgn(w_j^*)(t-t_0)^\mu\big) dt + \Sigma_{j\cdot}^{1/2}(\bw(s))^\top d\BB(t),\quad \forall j\in\{k:w_k^*\neq 0\},\label{eq:vprime_act}
	\end{align}
	with initialization $V_j(t_0)$ defined in \eqref{eq:V0}, and
	\begin{align}
		W_j(t) = V_j(t)-c \sgn(w_j^*)(t-t_0)^\mu.
	\end{align}
An application of Ito's lemma shows that $W_j(t)$ satisfies the SDE 
	\begin{align}
		d W_j(t) = \big(-\sigma_j^2 W_j(t)-c\sgn(w_j^*) \mu (t-t_0)^{\mu-1}\big) dt + \Sigma_{j\cdot}^{1/2}(\bw(s))^\top d\BB(t), \quad j\not\in\{k:w_k^*=0\}. \label{eq:wprime0}
	\end{align}
	with initialization $W_j(t_0)=V_j(t_0)$. Lemma \ref{lem:solV} gives the solution 
	\begin{align}
		W_j(t) =  e^{-\sigma_j^2(t-t_0)}W_j(t_0)+h_j(t) + U_j(t)
	\end{align}
	where
	\begin{align}
		h_j(t) &= - c\sgn(w_j^*) \mu\ e^{-\sigma_j^2 t}\int_{t_0}^t (s-t_0)^{\mu-1} e^{\sigma_j^2 s} ds,\label{eq:ht_pr}\\ %\sgn(w_j^*) c \mu (-\sigma_j^2)^{-\mu} e^{-\sigma_j^2 t} \frac{(-\sigma_j^2 t)^\mu}{\sigma_j^2 t}\sum_{k=0}^\infty \frac{(\sigma_j^2 t)^{k+1}}{(k+1)!} \Big(\frac{k+1}{k+\mu}\Big)\\
		U_j(t) &= e^{-\sigma_j^2 t}\int_{t_0}^t e^{\sigma_j^2 s}\  \Sigma_{j\cdot}^{1/2}(\bw(s))^\top d\BB(s).
	\end{align}
	This finishes the proof of this step.

	\labitem{Step 3}{lrsta3} \textbf{The estimation of $h_j(t)$ in \eqref{eq:hest} of \ref{lrsu3}}.
Recall $h_j(t)$ from \eqref{eq:ht_pr}, 
	\begin{align*}
		h_j(t) &= - \mu c\ \sgn(w_j^*)\ e^{-\sigma_j^2 t}\ \int_{t_0}^t (s-t_0)^{\mu-1} e^{\sigma_j^2 s} ds.
	\end{align*}
	Integration by substituting $s=\sigma_j^{-2}u+t_0$ gives
	\begin{align}
		h_j(t) &= - \mu c\ \sgn(w_j^*)\ \sigma_j^{-2\mu}\ e^{-\sigma_j^2 (t-t_0)} \int_{0}^{\sigma_j^2 (t-t_0)} u^{\mu-1} e^u\ du. \label{eq:rech}
	\end{align}
	Repeated integration by parts shows that % 
	\begin{align}
		e^{-\sigma_j^2 (t-t_0)} \int_{0}^{\sigma_j^2 (t-t_0)} u^{\mu-1} e^u\ du
		&= \sum_{k=0}^\infty \frac{(\sigma_j^2 (t-t_0))^{\mu+k}}{\mu^{\overline{k+1}}}(-1)^k,\label{eq:exprh0}
	\end{align}
where $\mu^{\overline{k+1}}=\mu(\mu+1)(\mu+2)...(\mu+k)$ is the rising factorial. In addition, for any $t>t_0$,
\begin{align}
	\sum_{k=0}^\infty \frac{(\sigma_j^2 (t-t_0))^{\mu+k}}{\mu^{\overline{k+1}}}(-1)^k =  (-1)(\sigma_j^2 (t-t_0))^{\mu-1}\sum_{k=0}^\infty \frac{(-\sigma_j^2 (t-t_0))^{k+1}}{(k+1)!} \frac{(k+1)!}{\mu^{\overline{k+1}}}. \label{eq:ingam}
\end{align}
Hence, combining \eqref{eq:exprh0} and \eqref{eq:ingam} with \eqref{eq:rech}, and let $C_j := \mu \sigma_j^{-2} c > 0$,
\begin{align}
	h_j(t) &= C_j \sgn(w_j^*) (t-t_0)^{\mu-1} \sum_{k=0}^\infty \frac{(-\sigma_j^2 (t-t_0))^{k+1}}{(k+1)!} \frac{(k+1)!}{\mu^{\overline{k+1}}}%-( t_0/t)^{\mu-1}e^{-\sigma_j^2 t}\sum_{k=0}^\infty \frac{(-\sigma_j^2 t_0)^{k+1}}{(k+1)!} \frac{(k+1)!}{\mu^{\overline{k+1}}}\bigg)
	.\label{eq:hest1}
\end{align}

Now we separate discuss different cases of $\mu$. If $\mu \geq 1$, for any $k\geq 0$,
\begin{align}
	\mu^{-(k+1)}\leq \frac{(k+1)!}{\mu^{\overline{k+1}}} = \underbrace{\frac{1}{\mu}\cdot \frac{2}{\mu+1}\cdot \frac{3}{\mu+2}\cdot  \cdots\cdot \frac{k+1}{\mu+k}}_{\mbox{\scriptsize $k+1$ terms are less than or equal to 1}}\leq 1.\label{eq:bdfactorial}
\end{align}
Observing that by the Taylor series for exponential function, \eqref{eq:bdfactorial} implies
\begin{align}
	e^{-\sigma_j^2 (t-t_0)}-1 \leq \sum_{k=0}^\infty \frac{(-\sigma_j^2 (t-t_0))^{k+1}}{(k+1)!}\frac{(k+1)!}{\mu^{\overline{k+1}}} \leq e^{-\sigma_j^2 (t-t_0)/\mu}-1, \quad \forall t\geq t_0. \label{eq:expexpr}
\end{align}
Note that if $\mu=1$, the inequalities \eqref{eq:bdfactorial} and \eqref{eq:expexpr} hold with equality. Applying \eqref{eq:expexpr} in \eqref{eq:hest1} obtains
\begin{align}
	C_j (t-t_0)^{\mu-1}\ (e^{-\sigma_j^2 (t-t_0)}-1) \leq \sgn(w_j^*) h_j(t) \leq C_j (t-t_0)^{\mu-1}\ (e^{-\sigma_j^2 (t-t_0)/\mu}-1), \label{eq:hest2}
\end{align}
% where $0\leq h_j^-(t)\leq h_j^+(t)$ for $\mu \geq 1$ and
% \begin{align}\label{eq:defh}
% 	\begin{split}
% 		h_j^-(t)&:=e^{-\sigma_j^2 t}+(t_0/t)^{\mu-1}e^{-\sigma_j^2 t}\big(1-e^{-\sigma_j^2 t_0/\mu}\big) = O\big(\{(t_0/t)^{\mu-1}\vee 1\}e^{-\sigma_j^2 (1\wedge \mu)t}\big),\\
% 		h_j^+(t)&:=e^{-\sigma_j^2 t/\mu}+(t_0/t)^{\mu-1}e^{-\sigma_j^2 t}\big(1-e^{-\sigma_j^2 t_0}\big) = O\big(\{(t_0/t)^{\mu-1}\vee 1\}e^{-\sigma_j^2 (1\wedge \mu)t}\big).
% 	\end{split}
% \end{align}
% for any $\mu>0$.

\bigskip
If $0<\mu<1$, the inequality \eqref{eq:bdfactorial} reverses and therefore inequality \eqref{eq:expexpr} also reverses. Hence, \eqref{eq:hest1} gives
\begin{align}
	C_j (t-t_0)^{\mu-1}\ (e^{-\sigma_j^2 (t-t_0)/\mu}-1) \leq \sgn(w_j^*) h_j(t) \leq C_j (t-t_0)^{\mu-1}\ (e^{-\sigma_j^2 (t-t_0)}-1). \label{eq:hest3}
\end{align}

\bigskip
Finally, combining \eqref{eq:hest2} and \eqref{eq:hest3}, we obtain that for any $j\in\{k:w_k^*\neq 0\}$ and $\mu>0$,
\begin{align}
	&C_j (t-t_0)^{\mu-1}\ (e^{-\sigma_j^2 (t-t_0)\max\{1,\mu^{-1}\}}-1)\\
	&\leq\sgn(w_j^*) h_j(t) \leq C_j (t-t_0)^{\mu-1}\ (e^{-\sigma_j^2 (t-t_0)\min\{1,\mu^{-1}\}}-1).\notag
\end{align}
From the last display, one can estimate $\sgn(w_j^*) h_j(t)$ by
\begin{align}
	\big|\sgn(w_j^*) h_j(t) - (-1) C_j (t-t_0)^{\mu-1}\big| \leq C_j (t-t_0)^{\mu-1} e^{-\sigma_j^2 (t-t_0)\min\{1,\mu^{-1}\}}=:\tilde h_j(t). \label{eq:deftilh}
\end{align}
% By the estimation of $h_j^-(t),h_j^+(t)$ in \eqref{eq:defh} for any $\mu>0$,
% $$
% \tilde h_j(t)=O\big(\{t_0\vee t^{\mu-1} \}e^{-\sigma_j^2 (1\wedge \mu)t}\big)=o(1), \quad \mbox{as $t\to\infty$},
% $$
% 
By the hypothesis of this theorem that $t_0<\infty$ is fixed, so $\tilde h_j(t)=o(1)$ as $t\to\infty$. This completes the proof.

\labitem{Step 4}{lrsta4} \textbf{$U_j(t)\weakto\Nc(0,\sigma_\varepsilon^2/2)$ as $t\to\infty$ for any $j=1,...,d$ in \ref{lrsu3}}.

To prove this step, it is enough to show that $\Var\big\{U_j(t)\big\} \to \sigma_\varepsilon^2/2$ as $t\to\infty$. To simplify notations, let 
\begin{align}
	H_j(t):=\int_{t_0}^t e^{\sigma_j^2 s}\  \Sigma_{j\cdot}^{1/2}(\bw(s))^\top d\BB(s),
\end{align}
so $U_j(t)=e^{-\sigma_j^2 t} H_j(t)$. 

Observe that 
\begin{align*}
	\hat H_j(t):= e^{-2\sigma_j^2 t}\int_{t_0}^t e^{2\sigma_j^2 s} \sigma_\varepsilon^2 \sigma_j^2 ds &= \frac{\sigma_\varepsilon^2}{2}\big(1-e^{-2\sigma_j^2(t-t_0)}\big) \to \frac{\sigma_\varepsilon^2}{2}, \quad \mbox{ as }t\to\infty.
\end{align*}
Therefore, it is left to show that the difference $\big|\Var\big\{U_j(t)\big\}-\hat H_j(t)\big|$ converges to 0 as $t\to\infty$.

For $t \geq u>t_0$, the covariance function of $U_j(t)$ in \eqref{eq:Uj} can be computed
\begin{align}
	\Cov\big\{U_j(t),U_j(u)\big\} 
	&= e^{-\sigma_j^2(t+u)}\E\big[\E\big[\big(H_j(t)-H_j(u)+H_j(u)\big)H_j(u)\big]\big|\Fc_u\big]\notag\\
	&= e^{-\sigma_j^2(t+u)}\E\big[H_j(u)^2\big] \notag\\
	&= e^{-\sigma_j^2(t+u)}\int_{t_0}^u e^{2\sigma_j^2 s}\big\|\Sigma_{j\cdot}^{1/2}(\bw(s))\big\|_2^2ds.\label{eq:covU}
	 %\frac{\sigma_\varepsilon^2}{2}\big(e^{-\sigma_j^2|t-s|}-e^{-\sigma_j^2(t+s)}\big).
\end{align}
By virtue of the triangular inequality, 
\begin{align}
	\big|\|A\|^2-\|B\|^2\big| &= \big|(\|A\|+\|B\|)(\|A\|-\|B\|)\big| \leq (2\|B\|+\|A-B\|)\|A-B\|\notag\\ 
	&\leq (2\|B\|+1)\|A-B\|,\label{eq:trineq}
\end{align}
for any matrices $A,B$ and matrix norm $\|\cdot\|$ with $\|A-B\|\leq 1$. From \eqref{eq:covU},
\begin{align}
	&\bigg|\Cov\big\{U_j(t),U_j(u)\big\}-e^{-\sigma_j^2(t+u)}\int_{t_0}^u e^{2\sigma_j^2 s} \sigma_\varepsilon^2 \sigma_j^2 ds\bigg| \notag\\
	&\leq (2\sigma_\varepsilon \sigma_j+1) e^{-\sigma_j^2(t+u)}\int_{t_0}^u e^{2\sigma_j^2 s}\big\|\Sigma_{j\cdot}^{1/2}(\bw(s))-\sigma_\varepsilon \sigma_j\big\|_2ds\notag\\
	&\leq (2\sigma_\varepsilon \sigma_j+1) e^{-\sigma_j^2(t+u)}\int_{t_0}^u e^{2\sigma_j^2 s}\big\|\Sigma^{1/2}(\bw(s))-\sigma_\varepsilon H^{1/2}\big\|_2 ds \label{eq:tmplr4}
\end{align}
given that $\big\|\Sigma^{1/2}(\bw(s))-\sigma_\varepsilon H^{1/2}\big\|_2\leq 1$ for the spectral norm $\|\cdot\|_2$, where the first inequality follows by \eqref{eq:trineq} and the last inequality follows by H\"older's inequality.

We claim that for some $T_1>0$ (explicitly stated in \eqref{eq:T1}), the spectral norm of the two matrix square roots difference is bounded as
\begin{align}
	\big\|\Sigma^{1/2}(\bw(s))-\sigma_\varepsilon H^{1/2}\big\|_2 = O\big(e^{-(\min_j \sigma_j^2) s}\|\bw^*\|_\infty\big), \ \forall s>T_1,\label{eq:matroot}
\end{align}
where the right hand size $\leq 1$ for some $T_1>0$. This claim will be proved at the end of this step. 

Since $\sup_{0<s<T_1}\big\|\Sigma^{1/2}(\bw(s))\big\|_2$ is uniformly bounded by Condition \ref{as:D1} for any fixed $T_1$, we can suppose that $t_0>T_1$ without loss of generality. Otherwise, the integral $\int_{t_0}^u$ in \eqref{eq:tmplr4} can be separated as $\int_{t_0}^u = \int_{t_0}^{T_1}+\int_{T_1}^u$ (we are interested in large $t,u$), and the part $e^{-\sigma_j^2(t+u)} \int_{t_0}^{T_1}$ converges to 0 as $t,u\to\infty$. 

If \eqref{eq:matroot} is true, then \eqref{eq:tmplr4} can be bounded by 
\begin{align}
	&(2\sigma_\varepsilon \sigma_j+1) e^{-\sigma_j^2(t+u)}\int_{t_0}^u e^{(2\sigma_j^2-\min_j \sigma_j^2) s}ds \notag\\
	&=(2\sigma_\varepsilon \sigma_j+1)(2\sigma_j^2-\min_j \sigma_j^2)^{-1} \|\bw^*\|_\infty e^{-\sigma_j^2(t+u)} \big(e^{(2\sigma_j^2-\min_j \sigma_j^2) u}-e^{(2\sigma_j^2-\min_j \sigma_j^2)  t_0}\big)\label{eq:templrsta4}
\end{align}
Letting $u=t$, the last display implies
\begin{align*}
	&\big|\Var\big\{U_j(t)\big\}-\hat H_j(t)\big|\\
	&\leq (2\sigma_\varepsilon \sigma_j+1)(2\sigma_j^2-\min_j \sigma_j^2)^{-1} \|\bw^*\|_\infty \big(e^{-(\min_j \sigma_j^2) t}-e^{-2\sigma_j^2(t-t_0)-(\min_j \sigma_j^2) t_0}\big) =o(1),
\end{align*}
as $t\to 0$, which concludes the proof.

\bigskip
It is left to show the claim \eqref{eq:matroot}. Note first that from the form of $\bw(t)$ in \eqref{eq:lintra}, $\|\bw(t)-\bw^*\|_2=O(e^{-(\min_j \sigma_j^2) t}\|\bw^*\|_\infty)$ for any $t>0$ and the bounded $X$ in Condition \ref{as:D1}, so
	\begin{align}
		\big\|\Sigma(\bw(t))-\sigma_{\varepsilon}^2 H\big\|_2 = O\big(e^{-(\min_j \sigma_j^2) t}\|\bw^*\|_\infty\big), \ \forall t>0. \label{eq:sig2bound}
	\end{align}
The minimal eigenvalue of $\sigma_{\varepsilon}^2 H$ is $\sigma_{\varepsilon}^2 \min_j \sigma_j^2$, and the lower bound for the eigenvalue of $\Sigma(\bw(t))$ is $\sigma_{\varepsilon}^2 \min_j \sigma_j^2-e^{-(\min_j \sigma_j^2) t}\|\bw^*\|_\infty$, which is strictly positive for all $t>T_1$ where 
\begin{align}
	T_1 = -C\big(\min_j\sigma_j^2\big)^{-1}\log\bigg\{\frac{(\sigma_\varepsilon^2 \min_j\sigma_j^2) \wedge 1}{\|\bw^*\|_\infty}\wedge 1\bigg\},\label{eq:T1}
\end{align}
where $C$ depends on the almost sure bound for $X$ in Condition \ref{as:D1}. Applying Lemma 2.2 of \cite{S92} yields that for $t>T_1$,
\begin{align*}
	\big\|\Sigma^{1/2}(\bw(t))-\sigma_{\varepsilon} H^{1/2}\big\|_2 \leq \sigma_{\varepsilon}^{-1} (\min_j \sigma_j)^{-1} \big\|\Sigma(\bw(t))-\sigma_{\varepsilon}^2 H\big\|_2 = O\big(e^{-(\min_j \sigma_j^2) t}\|\bw^*\|_\infty\big).
\end{align*}
Note that the choice of $T_1$ in \eqref{eq:T1} also ensures that the right hand side of \eqref{eq:matroot} is less than 1 for all $s>T_1$. This verifies the claim \eqref{eq:matroot}.

\labitem{Step 5}{lrsta5} \textbf{$\lim_{t\to \infty}\Cov\big(U_i(t),U_j(t)\big)=0$ for all $i\neq j$ where $i,j\in\{k:w_k^*\neq 0\}$ in \ref{lrsu3}}.

Observe that the matrix valued function $\Sigma^{1/2}(\bw(s))$ is symmetric and positively definite, the inner product of $i$th row and $j$th column $\Sigma_{i\cdot}^{1/2}(\bw(s))^\top \Sigma_{j\cdot}^{1/2}(\bw(s))=\Sigma_{ij}(\bw(s))$. Hence,
\begin{align*}
	\Cov\big\{U_i(t),U_j(t)\big\} &= e^{-2\sigma_j^2t}\int_{t_0}^t e^{2\sigma_j^2 s}\big\{\Sigma_{i\cdot}^{1/2}(\bw(s))^\top \Sigma_{j\cdot}^{1/2}(\bw(s))\big\}ds=
	e^{-2\sigma_j^2t}\int_{t_0}^t e^{2\sigma_j^2 s} \Sigma_{ij}(\bw(s)) ds.
\end{align*}
Therefore, the bound \eqref{eq:sig2bound} and similar argument as \eqref{eq:templrsta4} yield that $\Cov\big\{U_i(t),U_j(t)\big\}=o(1)$ as $t\to\infty$.

\end{itemize}
\end{proof}

\begin{rem}[Comparison with SGD]\label{rem:comp_sgd}
	\cite{BMP90} (p.335) and \cite{BKS93} prove that for $w_{n,j}^{SGD}$ generated from \eqref{eq:sgd}, 
	\begin{align}
		\frac{w_{n,j}^{SGD}-w_j^*}{\sqrt{\gamma}} \weakto \Nc(0,\sigma_\epsilon^2/2), \quad j=1,...,d, \label{eq:sgdstat}
	\end{align}
	 as $\gamma\to 0$ and $n\to\infty$. For inactive coefficients $j\in\{k:w_k^*=0\}$, \eqref{eq:sgdstat} implies $|w_{n,j}^{SGD}|=O_p(\sqrt{\gamma})$. By contrast, Theorem \ref{th:lrsu}(a) shows that $|w_{n,j}|=o_p(\sqrt{\gamma})$ for inactive coefficients, where $w_{n,j}$ is generated by \eqref{eq:grda_gen}. This is due to the $\ell_1$ penalization. For the active coefficients $j\in\{k:w_k^*\neq 0\}$, if $h_j(t)\to 0$, Theorem \ref{th:lrsu}(b) and (c) show that $W_{\gamma,j}(t)$ has the same stationary distribution as \eqref{eq:sgd} in \eqref{eq:sgdstat}. This can be understood as the ``oracle property.''
\end{rem}

\section{Supplementary materials for Section \ref{sec:pca}}\label{sec:pfspca}

\subsection{A literature review of SPCA}\label{sec:respca}
SPCA is powerful data analytic tool that is used extensively in scientific applications; see the introduction of \cite{WBS16} for an overview. Existing algorithms of sparse PCA includes \citep{JTU03,ZHT06,DEJL07,DBG08,SH08,JL09,WTH09,JNRS10}. Some recent interest is on the minimax lower bound and adaptive algorithms under high dimensional settings ($d\gg n$) \citep{BJNP13,CMW13,M13,VL13,WBS16,SSM13}, or semidefinite relaxation \citep{VCLR13} for the incurred NP-hardness of the optimization problems. More recently, \cite{GWS18} proposes an interesting random projection method. \cite{ABEKM18} proposes a projection based method.

\bigskip
For online sparse PCA, \cite{MBPS10} propose an online successive orthogonal projection algorithm to solve the SPCA of \cite{ZHT06}. \cite{YX15} propose row truncation with iterative deflation, which improves the algorithm of \cite{MBPS10} in terms of computational efficiency. \cite{WL16} estimate the first principal component with $\ell_1$ proximal operator, but they obtain interesting asymptotic results for the joint distribution between the estimator and the true component as $d\to\infty$.

\subsection{Proof for Corollary \ref{cor:ospca}}\label{sec:pfospca}

We will verify the conditions of Corollary \ref{cor:l1at} and then Part (a) follows. Condition \ref{as:S} is verified by the hypothesis that $X_n$ are i.i.d. Condition \ref{as:M} follows by the fact that $X_n$ is bounded almost surely. Lastly, $g^\dagger(t)$ in Corollary \ref{cor:l1at} is $0$ for all $t$, by the hypothesis that $g(\lfloor\cdot/\gamma\rfloor,\gamma)\in D([0,\infty))$ satisfies \eqref{eq:lamcond} with $\Tc=[0,T]$ for any $T>0$..

Next, the existence of the uniform attraction region of \eqref{eq:atospca} follows directly by Lemma 4 on page 76 of \cite{OK85}, in which the condition A5 is verified by our hypothesis that the $k$th largest eigenvalues of $\Cc$ are positive with unit multiplicity. To show the global existence of the unique solution of \eqref{eq:atospca}, note that the right hand side of the ODE \eqref{eq:atospca} is differentiable with
\begin{align}
	\begin{split}\label{eq:hesospca}
		%\nabla G_1(\bu_1,...,\bu_k)=\nabla G_1(\bu_1) &= \Cc - \big(\Id_d \bu_1^\top \Cc \bu_1 + 2 \bu_1 \bu_1^\top \Cc\big),\\
		\nabla_j G_j(\bu_1,...,\bu_k) = \frac{\partial G_j}{\partial \bu_j}(\bu_1,...,\bu_j) &= \Cc - \Big(\Id_d \bu_j^\top \Cc \bu_j + 2 \sum_{i=1}^j \bu_i \bu_i^\top \Cc\Big),\quad j=1,...,k,\\
		\nabla_l G_j(\bu_1,...,\bu_k) = \frac{\partial G_j}{\partial \bu_l} (\bu_1,...,\bu_j) &= -2 \big(\Id_d \bu_l^\top \Cc \bu_j + \bu_l \bu_j^\top \Cc\big),\quad l=1,...,j-1,\\
	\end{split}
\end{align}
and the derivatives are continuous on $\R^{dk}$. Therefore, by Problem 2.5 and Theorem 2.2 (Picard-Lindl\"of) of \cite{T12}, there exists unique local solution of \eqref{eq:atospca}. The local uniqueness implies the global uniqueness \citep[p.19]{H69}. The global existence follows by the fact that $\UU(t)^\top\UU(t)$ is finite and nonsingular for all $t>0$ by Theorem 2.1 of \cite{P95} and the discussion that follows, as the initial $\UU(0)=\UU_0$ is finite.

\bigskip
For Part (b), we need to verify the conditions of Theorem \ref{th:l1dd}. \ref{as:L} follows from the fact that $X_n$s are bounded almost surely. The solution of the ODE \eqref{eq:atospca} uniquely exists for all $t\geq 0$ by the discussion above. Hence, conditions of Theorem \ref{th:l1dd} are verified. Note that $\nabla G(\UU)$ in \eqref{eq:hesspca} follows from \eqref{eq:hesospca}, and $\Sigma(\UU)$ in \eqref{eq:covpca} follows from \eqref{eq:sig}.

\subsection{ODE trajectories of OSPCA}\label{sec:ode_ospca}
\begin{figure}[!h]
	\centering
    \includegraphics[scale=0.15]{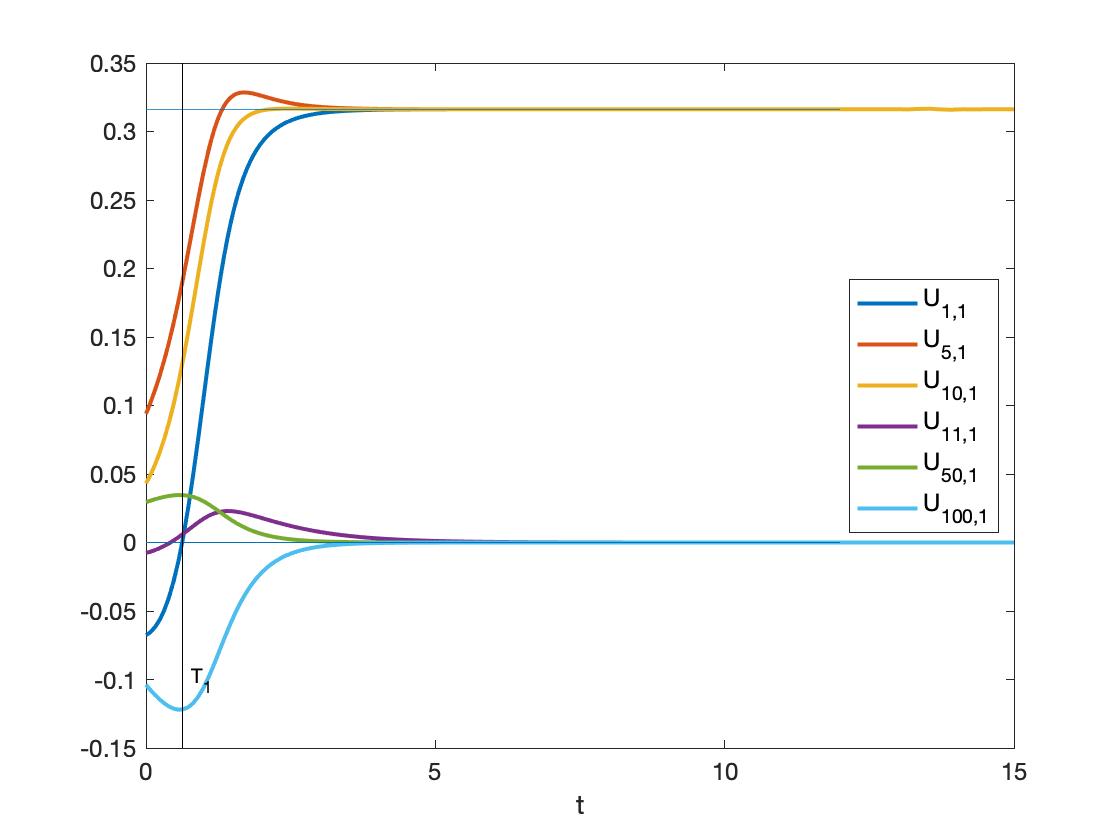}
	 \includegraphics[scale=0.15]{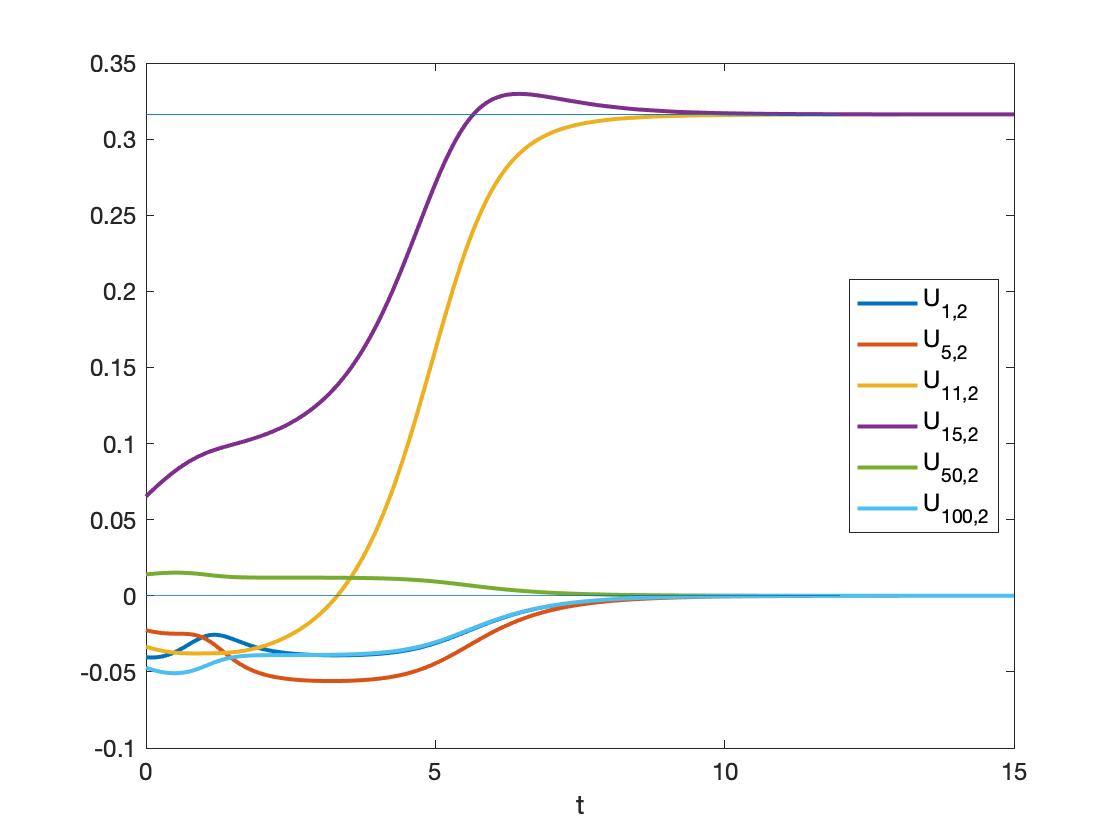}
   \caption{$\UU_{\cdot 1}(t)$ (left panel) and $\UU_{\cdot 2}(t)$ (right panel) in \eqref{eq:atospca}. $\Cc = 2*\UU_{\cdot 1}^*\UU_{\cdot 1}^{*\top}+\UU_{\cdot 2}^*\UU_{\cdot 2}^{*\top}+\Id_d$, where $d=100$ and the only two principal components are $\UU_{\cdot 1}^*=10^{-1/2}(1_{10},0_{90})$ and $\UU_{\cdot 2}^*=10^{-1/2}(0_{10},1_{10},0_{80})$. $\UU_{\cdot 1}(t)$ is sign stable on $(0,T_1)$ and $(T_1,\infty)$. %$\UU_{\cdot 2}(t)$ converges slower than $\UU_{\cdot 1}(t)$ generally.
	}\label{fig:pca_ode}
\end{figure}

\section{Auxiliary Results}\label{sec:aux}
\subsection{Uniform Boundedness of Sequences of Time-Varying Equi-Strongly Convex Functions}

In the next results, we consider a sequence of functions with a limit satisfying the conditions in Lemma \ref{lem:conticonj}. We will show that such function sequence is uniformly bounded on compact sets.

Recall that $D[0,\infty)$ be the space of functions that are right continuous with left limit.

\begin{lemma}[Uniformly boundedness]\label{lem:cptpres}
	Let $\Phi:[0,\infty)\times\R^d\to\R$ be a function satisfying the conditions of Lemma \ref{lem:conticonj}. Let $\Phi_{\gamma}:[0,\infty)\times\R^d\to\R$ be such that $\Phi_{\gamma}(t,\cdot)$ is $\beta$-e.s.c. with respect to $t$ and $\gamma$ and l.s.c., and $\Phi_{\gamma}(\cdot,\bu)\in D[0,\infty)$ for any fixed $\bu$. In addition, if for any $T$,
	\begin{align}
		\sup_{0\leq t\leq T}\big|\Phi_\gamma(t,\bu)-\Phi(t,\bu)\big|\to 0 \mbox{ pointwise in $\bu$}, \label{eq:unifPhiconv}
	\end{align}
	as $\gamma\to 0$ along a countable sequence. Then, for any constant $C$,
	\begin{align*}
		&\sup_{\gamma}\sup_{0\leq t\leq T}\sup_{\|\bu\|_2\leq C}\|\nabla\Phi_\gamma^*(t,\bu)\|_2 \leq C_T'
	\end{align*}
	where $\nabla\Phi_\gamma^*(t,\bv):=\arg\,\min_{\bw\in\R^d}\big\{\Phi_\gamma(t,\bw)-\bw^\top\bv\big\}$, and $C_T'>0$ is a constant depending on $T$ and $C$ (and $\beta,q$).
\end{lemma}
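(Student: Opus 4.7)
The strategy is to reduce the claim to a uniform subgradient bound for $\Phi_\gamma(t,\cdot)$ at a single fixed reference point $\bu_0 \in \R^d$, and then control that subgradient using the $\beta$-strong convexity inequality together with the pointwise (in $\bu$) uniform convergence \eqref{eq:unifPhiconv}. I will first derive a Lipschitz-type estimate that expresses $\|\nabla\Phi_\gamma^*(t,\bv) - \bu_0\|_2$ in terms of the norm of a subgradient of $\Phi_\gamma(t,\cdot)$ at $\bu_0$, then show that this subgradient norm is uniformly bounded by the values of $\Phi_\gamma$ at only finitely many test points, and finally bound those finitely many values.

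Step 1 (reduction to a subgradient bound). Fix $\bu_0 \in \R^d$. Since $\Phi_\gamma(t,\cdot)$ is proper convex l.s.c.\ with full domain, pick any $\bv_0(t,\gamma) \in \partial_\bu \Phi_\gamma(t,\bu_0)$. Writing $\bw^* := \nabla\Phi_\gamma^*(t,\bv)$, optimality of $\bw^*$ gives $\Phi_\gamma(t,\bu_0) - \bu_0^\top \bv \geq \Phi_\gamma(t,\bw^*) - (\bw^*)^\top \bv$, while the subgradient inequality with constant $\beta$ gives
\begin{align*}
\Phi_\gamma(t,\bw^*) \geq \Phi_\gamma(t,\bu_0) + \bv_0(t,\gamma)^\top(\bw^* - \bu_0) + (\beta/2)\|\bw^* - \bu_0\|_2^2.
\end{align*}
Adding these yields $(\beta/2)\|\bw^* - \bu_0\|_2^2 \leq (\bv - \bv_0(t,\gamma))^\top(\bw^* - \bu_0)$, and Cauchy--Schwarz gives
\begin{align*}
\|\nabla\Phi_\gamma^*(t,\bv) - \bu_0\|_2 \leq (2/\beta)\,\|\bv - \bv_0(t,\gamma)\|_2 \leq (2/\beta)\bigl(C + \|\bv_0(t,\gamma)\|_2\bigr),
\end{align*}
for $\|\bv\|_2 \leq C$. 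Thus it suffices to bound $\sup_\gamma \sup_{t \in [0,T]} \|\bv_0(t,\gamma)\|_2$ uniformly.

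Step 2 (bounding the subgradient by finitely many function values). Apply the strong-convexity inequality with $\bu = \bu_0 \pm \be_j$, where $\be_j$ is the $j$th standard basis vector:
\begin{align*}
\pm (\bv_0(t,\gamma))_j \leq \Phi_\gamma(t, \bu_0 \pm \be_j) - \Phi_\gamma(t, \bu_0) - \beta/2.
\end{align*}
Combining across signs and coordinates gives the estimate $\|\bv_0(t,\gamma)\|_2 \leq \sqrt{d}\max_{j,\pm} |\Phi_\gamma(t,\bu_0 \pm \be_j) - \Phi_\gamma(t,\bu_0)|$, reducing the problem to a uniform bound for $\Phi_\gamma$ on the finite set $\{\bu_0\} \cup \{\bu_0 \pm \be_j : j=1,\ldots,d\}$.

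Step 3 (uniform bound on the finite test set). Split the countable sequence of $\gamma$'s into small and large. By \eqref{eq:unifPhiconv} applied at each of the $2d+1$ test points $\bu$ and continuity of $\Phi(\cdot,\bu)$ on $[0,T]$, one can choose $\gamma_0 > 0$ in the sequence so that $\sup_{0 \leq t \leq T}|\Phi_\gamma(t,\bu)| \leq \sup_{0 \leq t \leq T}|\Phi(t,\bu)| + 1 < \infty$ simultaneously for every $\gamma \leq \gamma_0$ and every test point (since there are only finitely many of the latter). For the at most finitely many $\gamma > \gamma_0$ in the sequence, the hypothesis $\Phi_\gamma(\cdot, \bu) \in D[0,\infty)$ gives $\sup_{0 \leq t \leq T}|\Phi_\gamma(t,\bu)| < \infty$ because c\`adl\`ag functions are bounded on compact intervals. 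Taking the maximum over these finitely many exceptional $\gamma$'s and the finite test set combines into a single bound $M$ with $\sup_\gamma \sup_{t \in [0,T]}\|\bv_0(t,\gamma)\|_2 \leq M$. Substituting into Step 1 gives $C_T' := \|\bu_0\|_2 + (2/\beta)(C + M)$, completing the proof.

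The main obstacle is that \eqref{eq:unifPhiconv} is only pointwise in $\bu$, so one cannot directly invoke a uniform coercivity statement analogous to Lemma~\ref{lem:coer} for $\Phi_\gamma$. The trick in Step 2, collapsing the subgradient norm to a maximum over only finitely many values of $\Phi_\gamma$, is what allows pointwise uniform convergence plus continuity of the limit to deliver a bound that is uniform in both $t \in [0,T]$ and $\gamma$.
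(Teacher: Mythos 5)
Your proof is correct and takes a genuinely different route from the paper's. The paper's argument decomposes $\|\nabla\Phi_\gamma^*(t,\bu)\|_2$ via the triangle inequality around the reference $\nabla\Phi_\gamma^*(t,0)$ and $\nabla\Phi^*(t,0)$, uses the $\beta^{-1}$-Lipschitz property of the conjugate gradient for the first piece, and spends most of its effort on proving $\sup_{t\in[0,T]}\|\nabla\Phi_\gamma^*(t,0)-\nabla\Phi^*(t,0)\|_2\to 0$ by a contradiction argument that invokes the uniform coercivity of $\Phi$ and, critically, Lemma~3 of \cite{K09} to upgrade pointwise convergence of the convex functions $\Phi_\gamma(t,\cdot)\to\Phi(t,\cdot)$ to uniform convergence on compact subsets of $\R^d$. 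Your argument bypasses both the contradiction and the Kato machinery: you convert the problem to a uniform bound on a subgradient $\bv_0(t,\gamma)\in\partial_\bu\Phi_\gamma(t,\bu_0)$ using the first-order strong-convexity growth inequality combined with the optimality of $\nabla\Phi_\gamma^*(t,\bv)$; then, rather than needing locally uniform convergence over a continuum of $\bu$'s, you bound the subgradient coordinatewise by evaluating the same growth inequality at the $2d$ points $\bu_0\pm\mathbf e_j$, which collapses everything to controlling $\Phi_\gamma$ at finitely many fixed points. At finitely many points, the pointwise (in $\bu$) uniform (in $t$) convergence \eqref{eq:unifPhiconv} is directly applicable, and you only need the c\`adl\`ag boundedness of $\Phi_\gamma(\cdot,\bu)$ for the finitely many large $\gamma$ in the tail. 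The payoff of your approach is that it is self-contained and more elementary, avoiding the appeal to the convex-analytic fact that pointwise convergence of convex functions implies local uniform convergence; the cost, relative to the paper's proof, is that the paper's version also establishes the stronger intermediate conclusion $\nabla\Phi_\gamma^*(t,0)\to\nabla\Phi^*(t,0)$ uniformly in $t\in[0,T]$, which you neither need nor obtain. Both proofs are valid; yours is arguably cleaner for the stated conclusion.
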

\begin{proof}[Proof of Lemma \ref{lem:cptpres}]
	%if $\|\VV_\gamma(t)\|_2\leq\Kc$ for all $t$ and $\gamma$, applying Lemma \ref{lem:exprW} we get
	Observe that
	\begin{align}
		%&\sup_\gamma\sup_{\|\bu\|\leq C}\big\|\WW_\gamma(k\gamma)\big\|_2\notag\\ %\frac{\bw_k-\bw(k\gamma)}{\sqrt{\gamma}}
		&\sup_\gamma\sup_{0\leq t\leq T}\sup_{\|\bu\|_2\leq C}\big\|\nabla\Phi^*_\gamma(t,\bu)\big\|_2\notag\\
		&\leq \sup_\gamma\sup_{0\leq t\leq T}\sup_{\|\bu\|_2\leq C}\big\{\big\|\nabla\Phi^*_\gamma(t,\bu)-\nabla\Phi^*_\gamma(t,0)\big\|_2+ \|\nabla\Phi^*_\gamma(t,0)-\nabla\Phi^*(t,0)\|_2+\|\nabla\Phi^*(t,0)\|_2\big\} \label{eq:decom}\\
		&\leq C\beta^{-1} + \sup_\gamma\sup_{t\in[0,T]}\|\nabla\Phi^*_\gamma(t,0)-\nabla\Phi^*(t,0)\|_2+C_t'\notag%\\ %\mbox{ for all $\|\bu\|\leq C$}
		%&\leq C\beta^{-1} + C_t + C_t'=:\bar K_t,\notag %\label{eq:weak0}
	\end{align}
	where the first term in \eqref{eq:decom} is bounded by using Lemma \ref{lem:uc}(a); the third term in \eqref{eq:decom} can be bounded by some $C_t'>0$, which follows from the continuity of $\nabla\Phi^*$ from Lemma \ref{lem:conticonj} since $\Phi$ satisfies the assumptions there.
	
	\bigskip
	Next, we will bound the second term in \eqref{eq:decom}. We will show that $\sup_{t\in[0,T]}\|\nabla\Phi^*_\gamma(t,0)-\nabla\Phi^*(t,0)\|_2$ converges to 0 as $\gamma\to 0$, and therefore this is a bounded sequence with the bound depending on $t$. Note that 
	\begin{align*}
		\tilde\bu_\gamma(t)&=\nabla\Phi^*_\gamma(t,0)=\arg\min_{\bu\in\R^d}\Phi_\gamma(t,\bu)\\
		\tilde\bu(t)&=\nabla\Phi^*(t,0)=\arg\min_{\bu\in\R^d}\Phi(t,\bu)
	\end{align*}
	by Proposition 11.3 on page 476 of \cite{RW09}. 

	Suppose to the contrary that 
	\begin{align}
			\exists\epsilon>0, \mbox{ such that $\sup_{t\in[0,T]}\|\tilde\bu_\gamma(t)-\tilde\bu(t)\|_2>\epsilon$ for infinitely many $\gamma$.}\label{eq:convmin0}
	\end{align}
	From now on, we will fix such an $\epsilon>0$. Define
	\begin{align}
		\tilde\eta_\epsilon := \inf_{t\in[0,T]}\inf_{\bu\in\Sc^{d-1}}\big\{\Phi(t,\tilde\bu(t)+\epsilon\bu)-\Phi(t,\tilde\bu(t))\big\},\label{eq:etaeps}
	\end{align}
	where $\Sc^{d-1}\subset\R^d$ is the unit sphere. We claim that $\tilde\eta_\epsilon>0$. By Lemma \ref{lem:conticonj}, $\tilde\bu(t)$ is continuous. Moreover, $(t,\bu)\mapsto\Phi(t,\bu)$ is continuous on $[0,T]\times\{\bu:\|\bu\|_2\leq C\}$ by Lemma \ref{lem:rock107}. Thus, $(t,\bu)\mapsto \Phi(t,\tilde\bu(t)+\epsilon\bu)-\Phi(t,\tilde\bu(t))$ is continuous. It follows that $\tilde\eta_\epsilon>0$, because $[0,T]\times\Sc^{d-1}$ is compact and $\tilde\bu(t)$ is the unique minimizer of $\Phi(t,\cdot)$ (since $\Phi(t,\cdot)$ is strongly convex for any $t$ and Lemma \ref{lem:uc}(b)).
	
	Notice that for {any} $t\in [0,T]$, $\bx\in\Sc^{d-1}$, from the definition of $\tilde\eta_\epsilon$ in \eqref{eq:etaeps}, we have
	\begin{align}
		0<\tilde\eta_\epsilon
		&\leq \Phi(t,\tilde\bu(t)+\epsilon\bx)-\Phi(t,\tilde\bu(t)) \notag\\
		&= \Phi(t,\tilde\bu(t)+\epsilon\bx)-\Phi_\gamma(t,\tilde\bu(t)+\epsilon\bx)+\Phi_\gamma(t,\tilde\bu(t))-\Phi(t,\tilde\bu(t))+\Phi_\gamma(t,\tilde\bu(t)+\epsilon\bx)-\Phi_\gamma(t,\tilde\bu(t))\notag\\
		&\leq 2\Delta_{\gamma,\epsilon}+\Phi_\gamma(t,\tilde\bu(t)+\epsilon\bx)-\Phi_\gamma(t,\tilde\bu(t)), \mbox{ for all $\gamma>0$,}\label{eq:convmin1}
	\end{align}
	where 
	\begin{align*}
		\Delta_{\gamma,\epsilon} := \sup_{t\in[0,T]}\sup_{\bx:\|\bx-\tilde\bu(t)\|_2\leq\epsilon}\big|\Phi_\gamma(t,\bx)-\Phi(t,\bx)\big|
	\end{align*}
	From \eqref{eq:convmin1}, we deduce that
	\begin{align}
		0 <\tilde\eta_\epsilon \leq 2\limsup_\gamma \Delta_{\gamma,\epsilon}+\liminf_\gamma\inf_{t\in [0,T]}\inf_{\bx\in\Sc^{d-1}} \big\{\Phi_\gamma(t,\tilde\bu(t)+\epsilon\bx)-\Phi_\gamma(t,\tilde\bu(t))\big\}. \label{eq:convmin2}
	\end{align}
	
	We will show at the end of the proof that 
	\begin{align}
		\Delta_{\gamma,\epsilon}\to 0 \mbox{ as $\gamma\to 0$.}\label{eq:deltato0}%for any $\epsilon>0$ 
	\end{align}
	 
	On the other hand, by Lemma \ref{lem:conticonj}, both $\tilde\bu(t)$ and $\tilde\bu_\gamma(t)$ are continuous. Thus, there exists a sequence $t_\gamma$ such that $\sup_{t\in[0,T]}\|\tilde\bu_\gamma(t)-\tilde\bu(t)\|_2=\|\tilde\bu_\gamma(t_\gamma)-\tilde\bu(t_\gamma)\|_2>\epsilon$ for infinitely many $\gamma$ by \eqref{eq:convmin0}. Because the minimizer of $\Phi_\gamma(t_\gamma,\cdot)$ is unique as it is $\beta$-e.s.c. for any $t_\gamma$, there exists a sequence $\{\bx_\gamma\}_\gamma\subset\Sc^{d-1}$, such that
	\begin{align}
		\Phi_\gamma(t,\tilde\bu(t_\gamma)+\epsilon\bx_\gamma)-\Phi_\gamma(t_\gamma,\tilde\bu(t_\gamma))\leq 0 \mbox{ for infinitely many $\gamma$.}\label{eq:convmin2_5}
	\end{align}
However, this together with \eqref{eq:deltato0} and \eqref{eq:convmin2} imply that 
	\begin{align}
0 <\tilde\eta_\epsilon \leq \liminf_\gamma\inf_{t\in [0,T]}\inf_{\bx\in\Sc^{d-1}} \big\{\Phi_\gamma(t,\tilde\bu(t)+\epsilon\bx)-\Phi_\gamma(t,\tilde\bu(t))\big\} \leq 0
	\end{align}
	which forms a contradiction. Thus, \eqref{eq:convmin0} is false and the proof is complete.
	
	\bigskip
	It is now left to show \eqref{eq:deltato0}. Since $\tilde\bu(t)$ is continuous by Lemma \ref{lem:conticonj}, there exists $M_T>0$ such that $\sup_{t\in[0,T]}\|\tilde\bu(t)\|_2\leq M_T$. Define the compact set $\Kc_{T,\epsilon}:=\{\bx:\|\bx\|_2\leq M_T+\epsilon\}$. Since $\{\bx:\|\bx-\tilde\bu(t)\|_2\leq\epsilon\}\subset\Kc_{T,\epsilon}$ for all $t\in[0,T]$, we have 
	\begin{align}
		\Delta_{\gamma,\epsilon} \leq \sup_{t\in[0,T]}\sup_{\bx\in\Kc_{T,\epsilon}}\big|\Phi_\gamma(t,\bx)-\Phi(t,\bx)\big|.\label{eq:convmin3}
	\end{align}
	 We will apply Lemma 3 on page 1827 of \cite{K09} to prove this claim. From the conditions of this Lemma, both $\{\Phi_\gamma(t,\cdot)\}_\gamma$ and $\Phi(t,\cdot)$ are convex for each $t\in[0,T]$, and both $\{\Phi_\gamma(\cdot,\bu)\}_\gamma$ and $\Phi(\cdot,\bu)$ are bounded since elements in $D[0,T]$ are bounded by Eq. (12.5) on page 122 of \cite{B99}. By \eqref{eq:unifPhiconv}, $\sup_{t\in[0,T]}\big|\Phi_\gamma(t,\bx)-\Phi(t,\bx)\big|\to 0$ for every $\bx\in\R^d$. Hence, since $\Kc_{T,\epsilon}$ is compact, by Lemma 3 on page 1827 of \cite{K09} and \eqref{eq:convmin3},
	\begin{align*}
			0\leq \Delta_{\gamma,\epsilon} \leq \sup_{t\in[0,T]}\sup_{\bx\in\Kc_{T,\epsilon}}\big|\Phi_\gamma(t,\bx)-\Phi(t,\bx)\big| \to 0,
	\end{align*}
	and by squeezing the proof of the claim is finished.
\end{proof}

\subsection{Continuous Mapping Theorem for Argmin Processes}\label{sec:argpr}

In this section, we will state and prove a continuous mapping theorem which is an adaptation of Theorem 1 of \cite{K09} but with slightly weaker conditions. For completeness, we state all the notations in \cite{K09}.

Let $(\Omega,\Fc,P)$ be a probability space. $P^*$ and $\E^*$ be the outer probability and expectation (see \cite{VW96} for more details). Suppose $f_n:\R^d\times [0,T]\times \Omega\to\R$ $(n\in \N)$ and $f_\infty:\R^d\times [0,T]\times \Omega\to\R$, and $f_n(\bx,t,\cdot)$ and $f_\infty(\bx,t,\cdot)$ are measurable with respect to $\Fc$ for each $(\bx,t)\in \R^d\times [0,T]$. For {each} $(t,\omega)$, define
\begin{align}
	\bx_n(t,\omega)\in\arg\,\min_{\bx\in\R^d} f_n(\bx,t,\omega),\quad \bx_\infty(t,\omega)\in\arg\,\min_{\bx\in\R^d} f_\infty(\bx,t,\omega).\label{eq:min_xn}
\end{align}
We assume that each argmin set is nonempty. In the main result Lemma \ref{lem:kato1} of this section, we will assume that $\bx_\infty$ is unique, but $\bx_n$ may not be unique. We omit the argument $\omega$ if there is no confusion. 

Take an arbitrary probability space $(\tilde\Omega,\tilde\Fc,\tilde P)$ different from $(\Omega,\Fc,P)$. A measurable map $\phi:\tilde\Omega\to\Omega$ is called {perfect} if 
\begin{align}
	\E^*[H] = \tilde\E^*[H\circ \phi]
\end{align}
for every bounded function $H$ on $\Omega$. 

% Let $X_n$ be a sequence of random element on an arbitrary metric space with metric $d$. We call $X_n\to X$ {almost uniformly} if for each $\epsilon>0$, there exists a measurable set $A$ with $P(A)\geq 1-\epsilon$ and $d(X_n,X)\to 0$ uniformly on $A$.

We will assume that $f_n(\bx,\cdot)$ to be {c\'adl\'ag} (right continuous with left limit) for each $\bx$, and we view $\bx_n$ as random element in $D([0,T])^d$. We metrize $D([0,T])^d$ with metric $\rho_{d,\circ}^{T}$, defined by $\rho_{d,\circ}^{T}(\bx,\by):=\sum_{j=1}^d \rho_\circ^T(x_j,y_j)$ where 
\begin{align}
		\rho_\circ^T(x,y)=\inf_{\nu\in\Vc_T}\Big\{\sup_{0\leq t<s\leq T}\Big|\log\frac{\nu(s)-\nu(t)}{s-t}\Big|\vee \sup_{0\leq t\leq T}\big\|x(t)-y(\nu(t))\big\|_2\Big\}
\end{align}
where $\Vc_T$ is a class of nondecreasing functions on mapping $[0,T]$ onto itself. $\rho_\circ^T$ is one of the common metrics that topologizes $D([0,T])$ with the $J_1$ topology. This metric makes $D([0,T])$ a separable and complete space (see e.g. page 125-129 of \cite{B99}).  

Now, we state and prove the main theorem in this section. 
\begin{lemma}\label{lem:kato1}
	Suppose 
	\begin{enumerate}
		\item[(a)] $f_n(\bx,t)$ ($n\in\N$) and $f_\infty(\bx,t)$ are convex in $\bx$ for each $t$, and $f_n(\bx,t)$ is {c\'adl\'ag} in $t$ for each $\bx$, while $f_\infty(\bx,t)$ is continuous in $t$ for each $\bx$;
		\item[(b)] $\bx_\infty(t)$ is the unique minimum point of $f_\infty(\cdot,t)$ for each $t\in[0,T]$;
		\item[(c)] $\bx_n(\cdot)$ are random elements in $D([0,T])^d$ under product Skorohod metric  ($n\in\N$) and $\bx_\infty(\cdot)\in (C[0,T])^d$. 
	\end{enumerate}
	Then $\bx_\infty(\cdot)$ is a random element of $(C[0,T])^d$, and if for each $k\in\N$,
	\begin{align}
		\big(f_n(\by_1,\cdot),f_n(\by_2,\cdot),...,f_n(\by_k,\cdot)\big)\weakto \big(f_\infty(\by_1,\cdot),f_\infty(\by_2,\cdot),...,f_\infty(\by_k,\cdot)\big) \quad \mbox{in }(D[0,T])^k,\label{eq:kato1}
	\end{align}
	where $\{\by_1,\by_2,...\}$ is a countable {dense} subset of $\R^d$, we have
	\begin{align}
		\bx_n(\cdot)\weakto\bx_\infty(\cdot) \quad \mbox{in }D([0,T])^d.\label{eq:wmini}
	\end{align}
\end{lemma}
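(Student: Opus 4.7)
\textbf{Proof proposal for Lemma \ref{lem:kato1}.} The plan is to combine (i) a Skorohod-type representation that upgrades the finite-dimensional weak convergence in \eqref{eq:kato1} to an almost sure statement on some enlarged probability space, with (ii) the classical convex-analytic fact that pointwise convergence of a sequence of convex functions on a dense set, together with finiteness of the limit, forces locally uniform convergence on compact subsets of $\R^d$, and finally (iii) a convex-analytic stability result for argmins, which propagates uniform-on-compacts convergence of $f_n(\cdot,t)$ to convergence of $\bx_n(t)$ to $\bx_\infty(t)$ at every $t$.

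First I would establish tightness of $\{\bx_n(\cdot)\}$ in $D([0,T])^d$. For this I would fix a compact set $K\subset\R^d$ containing an open ball around $\bx_\infty([0,T])$ (which is compact since $\bx_\infty\in C([0,T])^d$ by assumption (c)) and use the finite-dim convergence \eqref{eq:kato1} on a finite set of points chosen in the corners of a simplex surrounding $K$: by convexity of $f_n(\cdot,t)$, control of $f_n$ at these vertices uniformly in $t$ yields control of the minimizer location, producing a bound of the form $P^*(\sup_{t\in[0,T]}\|\bx_n(t)\|_2\geq M)\to 0$ as $M\to\infty$. For the modulus of continuity, I would exploit that $f_n(\by_k,\cdot)\weakto f_\infty(\by_k,\cdot)$ in $D[0,T]$ for each $k$ and that $f_\infty(\by_k,\cdot)$ is continuous, so the limits have no jumps; this forces the jumps of $\bx_n(\cdot)$ to vanish asymptotically (any persistent jump in $\bx_n$ would, by convexity, produce a persistent jump of some $f_n(\by_k,\cdot)$ contradicting the weak limit being continuous). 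Combined with the uniform $L^\infty$ bound, tightness in the Skorohod $J_1$ topology follows from Theorem 13.2 of \cite{B99}.

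Once tightness is in hand, I would pass to an arbitrary weakly convergent subsequence $\bx_{n'}\weakto\bx^*$ in $D([0,T])^d$, jointly with the vector $(f_{n'}(\by_k,\cdot))_{k\in\N}$. On this joint space, Skorohod's representation theorem provides versions with almost-sure convergence along a perfect measurable map. On the a.s.-convergent event, for each fixed $t\in[0,T]$ (away from an at-most-countable exceptional set of discontinuities of $\bx^*$), I would invoke the classical result (Theorem 10.8 of \cite{RW09} or Theorem 7.17 of \cite{RW09}): a sequence of finite convex functions $f_n(\cdot,t)$ converging pointwise on the dense set $\{\by_k\}$ to the finite convex function $f_\infty(\cdot,t)$ converges uniformly on every compact subset of $\R^d$. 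Uniform-on-compacts convergence of convex functions together with uniqueness of the minimizer of $f_\infty(\cdot,t)$ implies $\bx_{n'}(t)\to \bx_\infty(t)$ (this is the epi-convergence stability of argmins, e.g. Theorem 7.33 of \cite{RW09}). Since this identifies $\bx^*(t)=\bx_\infty(t)$ at all but countably many $t$, and both $\bx^*$ and $\bx_\infty$ are càdlàg with $\bx_\infty$ continuous, right-continuity forces $\bx^*=\bx_\infty$ on all of $[0,T]$. Every weakly convergent subsequence therefore has the same deterministic-in-distribution limit $\bx_\infty$, proving \eqref{eq:wmini}.

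The main obstacle I anticipate is handling the fact that the $\bx_n(t)$ are not required to be unique minimizers; tightness arguments that appeal directly to continuity of $\arg\min$ break down without uniqueness at the prelimit level. The resolution is that uniqueness is only needed at the limit $f_\infty$ (assumption (b)), and convex analysis lets us bound the \emph{set} of minimizers of $f_n(\cdot,t)$ uniformly in $t$ in terms of function values at finitely many anchor points, so any measurable selection $\bx_n(t)$ inherits the required tightness estimates. A secondary technical point is the measurability of $\bx_n(\cdot)$ as a random element of $D([0,T])^d$, which is imposed by assumption (c) and so does not need to be constructed, and the verification that the at-most-countable exceptional set for $\bx^*$ can indeed be avoided in the pointwise identification step, which is handled by right-continuity and the continuity of $\bx_\infty$.
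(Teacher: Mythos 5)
Your plan (Skorohod representation, locally uniform convergence of convex functions on a dense set, argmin stability) has the same ingredients as the paper's proof, but you have reorganized it as tightness of $\bx_n$ first, then limit identification along subsequences, whereas the paper bypasses a separate tightness step by proving directly that $\sup_{t\le T}\|\tilde\bx_n(t)-\tilde\bx_\infty(t)\|_2\to 0$ in probability on the a.s.\ representation space (uniform convergence being stronger than $J_1$ convergence to a continuous limit). This reorganization exposes a genuine gap in your modulus-of-continuity argument.

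Specifically, the claim that ``any persistent jump in $\bx_n$ would, by convexity, produce a persistent jump of some $f_n(\by_k,\cdot)$'' is not correct without uniqueness of the argmin at the prelimit level, which you explicitly do not assume. A convex function with a flat minimizer region can be perturbed by an arbitrarily small amount at every $\by_k$ while the (selected) argmin jumps across the entire flat region; so vanishing jumps of $f_n(\by_k,\cdot)$ do not by themselves force vanishing jumps of $\bx_n(\cdot)$. What does control the oscillation of $\bx_n$ is exactly the argmin-stability argument against the \emph{unique} limit minimizer $\bx_\infty(t)$: locally uniform convergence $f_n(\cdot,t)\to f_\infty(\cdot,t)$ plus strict minimality of $\bx_\infty(t)$ (which holds uniformly in $t$ by compactness and continuity of $f_\infty$ and $\bx_\infty$) forces $\bx_n(t)$ into a small neighborhood of $\bx_\infty(t)$, and then continuity of $\bx_\infty$ delivers the modulus bound. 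But once you have run that argument you have already shown $\sup_t\|\bx_n(t)-\bx_\infty(t)\|_2\to 0$ in probability, so ``tightness'' is not a genuinely independent preliminary step here. The paper's proof recognizes this and structures the argument accordingly: apply the a.s.\ representation (with perfect maps, to handle outer probabilities) at the level of the finite-dimensional function marginals $(f_n(\by_k,\cdot))_k$, strengthen the a.s.\ convergence to uniform convergence on compacts of $\R^d\times[0,T]$ using the equi-Lipschitz property of convex functions (the paper's Lemma \ref{lem:lip}, i.e.\ Theorem~10.6 of \cite{R70} --- not \cite{RW09}), and then invoke the convexity argument of Theorem~1 of \cite{K09} to conclude uniform convergence of the argmins. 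If you want to keep your tightness-first structure you must replace the ``jumps of $f_n$'' heuristic with the argmin-stability estimate against $\bx_\infty$, at which point you might as well state the uniform convergence directly.

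Two smaller points. First, your reference for ``pointwise convergence on a dense set implies locally uniform convergence of convex functions'' should be Theorem~10.8 of Rockafellar's \emph{Convex Analysis} (\cite{R70}), not \cite{RW09}; the paper obtains the same conclusion via an equi-Lipschitz lemma plus a finite $\epsilon$-net argument. Second, Theorem~13.2 of \cite{B99} controls the modulus $w'$ appropriate for a possibly discontinuous limit; since the limit $\bx_\infty$ is continuous you should verify the $C$-tightness criterion (controlling the ordinary modulus $w$), though this is a routine adjustment once the stability estimate is in place.
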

\begin{proof}[Proof of Lemma \ref{lem:kato1}]
 By (a), $f_\infty(\cdot,t)$ is convex for each $t$ and $f_\infty(\bx,\cdot)$ is continous for each $\bx$, by Theorem 10.7 on page 89 of \cite{R70} the function $f_\infty$ is jointly continuous on $\R^d\times[0,T]$. Together with its measurability with respect to $\Fc$ at each $(\bx,t)\in\R^d\times[0,T]$, we therefore conclude that $f_\infty$ is an random element in $C(\R^d\times[0,T])$. Since $\bx_\infty(t)$ is the unique minimizer of $f_\infty(\cdot,t)$ for each $t$ and $\bx_\infty(t)$, Corollary 1 on page 1531 of \cite{N92} gives that $\omega\mapsto \bx_\infty(t,\omega)$ is measurable with respect to $\Fc$ for an arbitrary fixed $t\in[0,T]$. Together with assumption (c), it yields that $\bx_\infty(t)$ is a random element in $(C[0,T])^d$ by the discussion on page 84 of \cite{B99}.
 
Since $\big(f_\infty(\bx_1,\cdot),f_\infty(\bx_2,\cdot),...\big)\in (C[0,T])^\infty$ is separable under the product uniform metric [defined by setting $d_i$ as uniform metric in the either product metric on page 32 of \cite{VW96}, and see the discussion on the same page just above Theorem 1.4.8], by Theorem 1.4.8 on the same page of \cite{VW96}, \eqref{eq:kato1} is equivalent to 
\begin{align}
	\big(f_n(\bx_1,\cdot),f_n(\bx_2,\cdot),...\big)\weakto \big(f_\infty(\bx_1,\cdot),f_\infty(\bx_2,\cdot),...\big) \quad \mbox{in }(D[0,T])^\infty.\label{eq:kato1temp}
\end{align}
From the discussion on page 32 of \cite{VW96}, $(D[0,T])^\infty$ is a metric space with the metric on page 32 of \cite{VW96} and $\rho_\circ^T$ as its base (we do not specify this metric because this will not be used elsewhere), and the limit $\big(f_\infty(\bx_1,\cdot),f_\infty(\bx_2,\cdot),...\big)$ lies in $(C[0,T])^\infty$ by (a). Again from the separability of $(C[0,T])^\infty$ under the product uniform metric, applying Theorem 1.10.4 and Addendum 1.10.5 on page 59 of \cite{VW96} to \eqref{eq:kato1temp}, there exists a probability space $(\tilde\Omega,\tilde\Fc,\tilde P)$, perfect maps $\phi_n:\tilde\Omega\to\Omega$ and $\phi_\infty:\tilde\Omega\to\Omega$ such that
\begin{align}
	\sup_{t\in[0,T]}\big|f_n(\by_i,t,\phi_n(\tilde\omega))-f_\infty(\by_i,t,\phi_\infty(\tilde\omega))\big|\to 0 \quad\mbox{ a.s. in outer measure $\tilde P^*$}, \ \forall i. \label{eq:auconv}
\end{align}
(Recall the definition of such convergence in Definition 1.9.1 on page 52 of \cite{VW96}, and note that by Lemma 1.9.2(iii) on page 53 of \cite{VW96}, almost uniform convergence is equivalent to the almost sure convergence in outer measure $\tilde P^*$ since $f_n$ is a countable sequence). To simplify notations, in the following we denote 
$$
\tilde f_n(\by_i,t)=f_n(\by_i,t,\phi_n(\tilde\omega)) \mbox{ and }\tilde f_\infty(\by_i,t)=f_\infty(\by_i,t,\phi_\infty(\tilde\omega)).
$$

In the end of the proof, it will be shown that we can strengthen the convergence in \eqref{eq:auconv} to 
\begin{align}
	\sup_{t\in[0,T]}\sup_{\by\in K}\big|\tilde f_n(\by,t)-\tilde f_\infty(\by,t)\big|\to 0\quad  \mbox{a.s. in }\tilde P^* \mbox{ for any compact $K\subset\R^d$.}  \label{eq:kato2}
\end{align}

Define $\tilde\bx_n(\cdot,\tilde\omega)=\bx_n(\cdot,\phi_n(\tilde\omega))$ and $\tilde\bx_\infty(\cdot,\tilde\omega)=\bx_\infty(\cdot,\phi_\infty(\tilde\omega))$. Straightforwardly,
\begin{align}
	\tilde\bx_n(t)\in\arg\,\min_{\bx\in\R^d} \tilde f_n(\bx,t), \quad \tilde\bx_\infty(t)=\arg\,\min_{\bx\in\R^d} \tilde f_\infty(\bx,t) \label{eq:tilx}
\end{align}
for each $(t,\tilde\omega)$, this follows from the construction of $\bx_n$ and $\bx_\infty$ under every $t$ and $\omega$ as in \eqref{eq:min_xn}. Moreover, $\tilde\bx_\infty$ is a random element in $(C[0,T])^d$ measurable with respect to $\tilde\Fc$. This follows by the fact that $\phi_\infty:\tilde\Omega\to\Omega$ is measurable with respect to $\tilde\Fc$ and that $\bx_\infty$ is a random element in $(C[0,T])^d$ as shown in the first paragraph of this proof.

To show the weak convergence \eqref{eq:wmini}, we will show the stronger $\rho_{d,\circ}^T(\tilde\bx_n,\tilde\bx_\infty)\to 0$ in probability. % namely, for every $\delta>0$,
% \begin{align}
% 	\lim_{n\to\infty} \tilde P\big(\rho_d^T(\tilde\bx_n,\tilde\bx_\infty)>\delta\big) = 0.\label{eq:skoro}
% \end{align}
In fact, in order to show $\rho_{d,\circ}^T(\tilde\bx_n,\tilde\bx_\infty)\to 0$ in probability, it is enough to show that $\sup_{t\in[0,T]}\|\tilde\bx_n(t)-\tilde\bx_\infty(t)\|_2\to 0$ in probability. To see this, note that $\rho_\circ^T(x,x')\leq \sup_{t\in[0,T]}|x(t)-x'(t)|$ for arbitrary $x,x'\in D[0,T]$ by the discussion on page 150 in the 1st edition of \cite{B99}, and for arbitrary $\bx,\bx'\in D([0,T])^d$, $\sup_{t\in[0,T]}|x_j(t)-x_j'(t)|\leq \sup_{t\in[0,T]}\|\bx(t)-\bx'(t)\|_2$ for every $j=1,...,d$. Hence, $\rho_{d,\circ}^T(\tilde\bx_n,\tilde\bx_\infty)\to 0$ in probability if $\sup_{t\in[0,T]}\|\tilde\bx_n(t)-\tilde\bx_\infty(t)\|_2\to 0$ in probability. 

Therefore, it is left to show that $\sup_{t\in[0,T]}\|\tilde\bx_n(t)-\tilde\bx_\infty(t)\|_2\to 0$ in probability; namely, for every $\delta>0$,
\begin{align}
	\lim_{n\to\infty} \tilde P^*\Big(\sup_{t\in[0,T]}\|\tilde\bx_n(t)-\tilde\bx_\infty(t)\|_2>\delta\Big) = 0.\label{eq:uniformmetric}
\end{align}

The proof of \eqref{eq:uniformmetric} follows exactly the same convexity arguments as the proof for Theorem 1 of \cite{K09}, by using \eqref{eq:kato2}, the fact that $\tilde\bx_\infty(\cdot)$ is a random element of $(C[0,T])^d$ [see the discussion below \eqref{eq:tilx}], and replacing all displays with $\tilde P(\cdot)$ by the outer probability $\tilde P^*(\cdot)$. %Since the Theorem is the key tool for proving the two main Theorem \ref{th:at} and \ref{th:dd}, we provide the complete proof.

\bigskip
It is left to show \eqref{eq:kato2}. The proof is similar to that of Lemma 3 on page 1827 of \cite{K09}. For each $\by_i$ in the dense set $\{\by_1,\by_2,...,\}\subset\R^d$, by \eqref{eq:auconv}, $\sup_{t\in[0,T]}|\tilde f_\infty(\by_i,t)|<\infty$ following from the continuity of $\tilde f_\infty(\by_i,\cdot)$, and $\sup_{t\in[0,T]}|\tilde f_\infty(\by_i,t)|<\infty$ for each $n$ from Eq. (12.5) on page 122 of \cite{B99}, we have $\sup_{n\in\N,t\in[0,T]}|\tilde f_n(\by_i,t)|<\infty$ for all $\by_i$ in the dense set $\{\by_1,\by_2,...\}\in\R^d$. This enables us to apply Lemma \ref{lem:lip} on $\{\tilde f_n(\cdot,t):t\in[0,T],\  n\geq 1\}$ to obtain that for some $L_1>0$ (depending on $K$),
\begin{align}
	|\tilde f_n(\bx,t)-\tilde f_n(\by,t)| \leq L_1\|\bx-\by\|_2, \quad \forall \bx,\by\in K,\ t\in[0,T],\  n\in\N. \label{eq:equiL1}
\end{align}
Applying Lemma \ref{lem:lip} on $\{\tilde f_\infty(\cdot,t):t\in[0,T]\}$ yields that for some $L_2>0$ (depending on $K$),
\begin{align}
	|\tilde f_\infty(\bx,t)-\tilde f_\infty(\by,t)| \leq L_2\|\bx-\by\|_2, \quad \forall \bx,\by\in K,\ t\in[0,T]. \label{eq:equiL2}
\end{align}
Define $L_0=\max\{L_1,L_2\}$. Take $\epsilon>0$. Because $\{\by_1,\by_2,...\}\in\R^d$ is dense in $\R^d$, there exists a finite set $K_\epsilon \subset \{\by_1,\by_2,...\}\cap K$ such that each point in $K$ is within $\epsilon/(3L_0)$ in $\ell_2$ distance to at least a point in $K_\epsilon$.

The cardinality $|K_\epsilon|<\infty$ depends on $K$ and $\epsilon$. By \eqref{eq:auconv}, there exists $n_\epsilon\in\N$ depending only on $\epsilon$ and $K$ such that
\begin{align}
	|\tilde f_n(\by,t)-\tilde f_\infty(\by,t)|<\epsilon/3,\quad \forall \by\in K_\epsilon,\ t\in[0,T] \mbox{ and }n\geq n_\epsilon. \label{eq:equiL3}
\end{align} 
For any $\bx\in K$, let $\by$ be the point in $K_\epsilon$ such that $\|\bx-\by\|_2\leq \epsilon/(3L_0)$. Then, for every $n\geq n_\epsilon$ and every $t\in[0,T]$, combining \eqref{eq:equiL1}, \eqref{eq:equiL2} and \eqref{eq:equiL3} to get
\begin{align*}
	|\tilde f_n(\bx,t)-\tilde f_\infty(\bx,t)|&\leq |\tilde f_n(\bx,t)-\tilde f_n(\by,t)|+|\tilde f_n(\by,t)-\tilde f_\infty(\by,t)|+|\tilde f_\infty(\by,t)-\tilde f_\infty(\bx,t)| \\
	&\leq \epsilon, \quad \forall t\in[0,T],\ \bx\in K \mbox{ and }n\geq n_\epsilon.
\end{align*}
Hence, \eqref{eq:kato2} is proved.

\end{proof}

\bigskip
Lemma \ref{lem:kato1} shows the weak convergence of process on a compact set $[0,T]$ of time. The next lemma suggests that to prove weak convergence in $D([0,\infty))^d$, it is enough to prove the weak convergence in $(D[0,T))^d$ for any $T>0$. This is a generalization of Theorem 16.7 on page 174 of \cite{B99} to multivariate processes. Denote $r_t X$ the restriction of $X\in D[0,\infty)$ on $[0,t]$.

\begin{lemma}\label{lem:fint}
	Let $P_n$ and $P$ be probability measures on $(D([0,\infty))^d,\Dc_\infty^d)$, where $\Dc_\infty^d$ is the product Borel $\sigma$-field in $D[0,\infty)$ under the topology induced by $\rho_d^\infty$ defined in \eqref{eq:metric}. Then $P_n\weakto P$ if and only if $P_n \br_t^{-1}\weakto P \br_t^{-1}$ for any $t\in T_{P,d}$, where $\br_t=(r_t,...,r_t)\in\R^d$ and $\br_t^{-1}=(r_t^{-1},...,r_t^{-1})$ and $T_{P,d}$ is a collection of $t$ such that $P\{\bX=(X_1,...,X_d): X_j \mbox{ is discontinuous at $t$ for each $j$}\}=0$.
\end{lemma}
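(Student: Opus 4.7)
The plan is to extend Billingsley's Theorem 16.7 from $D[0,\infty)$ to the product space $(D([0,\infty))^d, \rho_d^\infty)$, exploiting the fact that both the metric $\rho_d^\infty=\sum_{j=1}^d \rho^\infty$ and the restriction operator $\br_t=(r_t,\ldots,r_t)$ have a product structure.

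For the necessity ("only if") direction, I would invoke the continuous mapping theorem. The map $\br_t:(D([0,\infty))^d,\rho_d^\infty)\to (D([0,t])^d,\rho_{d,\circ}^t)$ is Borel measurable since each coordinate $r_t:D[0,\infty)\to D[0,t]$ is measurable. In the univariate case (Theorem 16.6 of \cite{B99}), $r_t$ is continuous at $X$ precisely when $X$ is continuous at $t$, so $\br_t$ is continuous at $\bX=(X_1,\ldots,X_d)$ iff every $X_j$ is continuous at $t$. The defining property of $T_{P,d}$ then ensures that the set of discontinuities of $\br_t$ has $P$-measure zero, and CMT delivers $P_n\br_t^{-1}\weakto P\br_t^{-1}$.

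For the sufficiency ("if") direction, I would proceed in two steps. Step one is to prove tightness of $\{P_n\}$ on $D([0,\infty))^d$. By the hypothesis and Prohorov's theorem, for each $t\in T_{P,d}$ the sequence $\{P_n\br_t^{-1}\}$ is tight in $D([0,t])^d$; by the product structure, tightness of each coordinate marginal follows. Following Billingsley's diagonal/exhaustion argument in the proof of his Theorem 16.8, I would choose an increasing sequence $t_k\to\infty$ in $T_{P,d}$, extract for each $k$ a compact set $C_k^\epsilon\subset D([0,t_k])^d$ with $\inf_n P_n\br_{t_k}^{-1}(C_k^\epsilon)\geq 1-\epsilon 2^{-k}$, and define the intersection-style set $K_\epsilon:=\{\bx\in D([0,\infty))^d: \br_{t_k}\bx\in C_k^\epsilon \text{ for all }k\}$. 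The definition of the base metric in $\rho_d^\infty$ (integration against $e^{-u}$ plus the time-change modulus) makes $K_\epsilon$ relatively compact by a coordinatewise adaptation of Theorem 16.4 of \cite{B99}, and $\inf_n P_n(K_\epsilon)\geq 1-\epsilon$. Step two identifies the limit: any subsequential weak limit $P^\dagger$ of $\{P_n\}$ satisfies $P^\dagger\br_t^{-1}=P\br_t^{-1}$ for $t\in T_{P,d}\cap T_{P^\dagger,d}$ by the necessity direction already established. Since each of $P,P^\dagger$ charges at most countably many exceptional times, $T_{P,d}\cap T_{P^\dagger,d}$ is dense in $[0,\infty)$. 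Because $\Dc_\infty^d$ is generated by $\{\br_t:t\geq 0\}$, a monotone class argument (passing from a dense set of $t$ to all $t$, using right-continuity of paths) forces $P^\dagger=P$, completing the proof.

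The main obstacle will be the tightness transfer in step one: while the univariate argument in Billingsley is delicate because the Skorohod modulus of continuity is not uniformly well behaved, the saving grace here is that $\rho_d^\infty=\sum_j\rho^\infty$ splits additively across coordinates, so both the compact set construction and the modulus estimates reduce to $d$ parallel invocations of the univariate argument rather than to a genuinely new multivariate tightness criterion. The rest is routine bookkeeping.
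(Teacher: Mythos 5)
Your ``only if'' direction matches the paper's argument exactly: $\br_t$ is Borel measurable because each coordinate $r_t$ is, it is continuous precisely at those $\bX$ all of whose components are continuous at $t$, the definition of $T_{P,d}$ makes this a $P$-null set of discontinuities, and the continuous mapping theorem finishes. For the ``if'' direction, however, you take a genuinely different route. The paper projects to each coordinate, observes that $P_n\br_t^{-1}\pi_j^{-1}\weakto P\br_t^{-1}\pi_j^{-1}$ for each $j$ and $t\in T_{P,d}$ is exactly the hypothesis of the \emph{univariate} Theorem 16.7 of \cite{B99}, concludes $P_n\pi_j^{-1}\weakto P\pi_j^{-1}$, pulls asymptotic tightness of the $d$-vector from the coordinate marginals via Theorem 1.4.3 of \cite{VW96}, and then proves finite-dimensional convergence by a short trick: for any $t_1,\ldots,t_k$ pick $\bar t\in T_{P,d}$ with $\bar t\geq\max_j t_j$ (possible because $T_{P,d}^c$ is countable), so the restriction $\br_{\bar t}$ already carries the joint law at $t_1,\ldots,t_k$. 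Your proposal instead redoes Billingsley's compact-set diagonalization to establish tightness from scratch, and then identifies the subsequential limit $P^\dagger$ via a monotone-class argument over the dense set $T_{P,d}\cap T_{P^\dagger,d}$. Both routes are sound; the paper's is more economical since it inherits the tightness mechanism wholesale from the already-proved univariate version and the $\bar t$ trick replaces the monotone-class step with a single application of the hypothesis. One small simplification available within your plan: once each coordinate marginal $P_n\pi_j^{-1}$ is tight on $D[0,\infty)$, a product of coordinate compacts is compact in the product Skorohod topology and already gives joint tightness, so the full diagonalization is not actually needed.
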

\begin{proof}[Proof of Lemma \ref{lem:fint}]
	In the following, we will extensively use the results in Section 16 of \cite{B99}. The results there are based on the product metric with base $d_\infty^\circ(x,y)=\sum_{m=1}^\infty 2^{-m}(1\wedge \rho_\circ^m(x^m,y^m))$ defined on page 168 of \cite{B99}, where $x^m$ and $y^m$ are the restrictions of $x,y\in D[0,\infty)$ on $[0,m]$. However, the product metric with base $d_\infty^\circ$ also generates $\Dc_\infty^d$, because their base metric $d_\infty^\circ(x,y)\to 0$ if and only if $\rho^\infty(x,y)\to 0$ (this is a well-known fact in point-set topology, here, $\rho^\infty$ is defined in \eqref{eq:metric}), which follows by Theorem 16.1 on page 168 of \cite{B99} and Proposition 3.5.3 (a)$\Leftrightarrow$(b) on page 119 of \cite{EK86}.
	
	By the discussion on page 174 of \cite{B99}, $r_t$ is measurable, so elementary measure theoretic argument shows that $\br_t$ is measurable in $(D([0,\infty))^d,\Dc_\infty^d)$. Moreover, by the discussion in the proof of Theorem 16.7 on page 174 of \cite{B99}, $r_t$ is continuous on the set of $X_j$ that is continuous at $t$. Therefore, $\br_t$ is continuous on the set of $\bX$ that all $X_j$, $j=1,...,d$ is continuous at $t$. If $P_n\weakto P$ and $t\in T_{P,d}$, then by the definition of $T_{P,d}$, $\br_t$ is continuous on a set with $P$-measure 1, hence $P_n \br_t^{-1}\weakto P \br_t^{-1}$ by the continuous mapping theorem. 
	
	Suppose that $P_n \br_t^{-1}\weakto P \br_t^{-1}$ for any $t\in T_{P,d}$. By continuous mapping theorem and the continuity of the projection mapping $\pi_j:\R^d\to\R$, $P_n \br_t^{-1} \pi_j^{-1} \weakto P \br_t^{-1} \pi_j^{-1}$ for each $j=1,...,d$ and any $t\in T_{P,d}$. By Theorem 16.7 of \cite{B99}, this implies $P_n \pi_j \weakto P \pi_j$. By Theorem 1.4.3 of \cite{VW96}, this implies that $\{P_n\}$ is asymptotically tight, and is therefore relatively compact by the Prohorov's theorem. Now it is left to verify the finite dimensional convergence. Take arbitrary $t_1,...t_k\in [0,\infty)$. Since $T_{P,d}^c$ is at most countable by Lemma 3.5.1 of \cite{EK86} and $d<\infty$, there exist $\bar t\in T_{P,d}$ such that $\bar t\geq\max_{j\leq k} t_j$. By the hypothesis $P_n \br_{\bar t}^{-1}\weakto P \br_{\bar t}^{-1}$, we have $P_n \pi_{t_1...t_k}^{-1} = P_n \br_{\bar t}^{-1} \pi_{t_1...t_k}^{-1} \weakto P \br_{\bar t}^{-1} \pi_{t_1...t_k}^{-1}= P \pi_{t_1...t_k}^{-1}$.
\end{proof}

\subsection{Subdifferential and Uniform Convexity}
In this section, we collect concepts of convex analysis which are relevant to the main theoretical development. Take $\bar\R=\R\cup\{\infty,-\infty\}$ be the set of extended real number. We assume that $\Psi:\R^d\to\bar\R$ is proper, which means that there exists at least one $\bw\in\R^d$ such that $\Psi(\bw)<\infty$, and $\Psi(\bw)>-\infty$ for all $\bw\in\R^d$. Let $\mbox{dom}\Psi:=\{\bw\in\R^d:\Psi(\bw)<\infty\}$. On a metric space, $\Psi:\R^d\to\R$ is called {lower semicontinuous} if $\liminf_{\bw\to \bw_0}\Psi(\bw)\geq \Psi(\bw_0)$ for all $\bw_0\in \mbox{dom}\Psi$. Let $\bar\R$ be the set of extended real numbers.

\begin{lemma}[Sum rule, Moreau-Rockafellar theorem]\label{lem:sumsdif}
 Let $f_j:\R^d\to\bar\R$ for $j=1,2$ be convex functions. If there exists $\bx_0\in\mbox{dom}(f_1)\cap \mbox{dom}(f_2)$ such that either $f_1$ or $f_2$ is continuous at $\bx_0$, then 
 \begin{align*}
 	\partial(f_1+f_2)(\bx)=\partial f_1(\bx)+\partial f_2(\bx),\quad \forall \bx\in\mbox{dom}(f_1)\cap \mbox{dom}(f_2).
 \end{align*}
\end{lemma}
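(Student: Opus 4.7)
The statement is the classical Moreau-Rockafellar sum rule for convex subdifferentials, so the plan is primarily to give a clean proof with appropriate citations, since the result itself is textbook material (e.g., Theorem 23.8 of Rockafellar, \emph{Convex Analysis}, or Theorem 2.8.3 of Z\u{a}linescu). My proof will separately verify the two inclusions $\partial f_1(\bx)+\partial f_2(\bx)\subseteq \partial(f_1+f_2)(\bx)$ and $\partial(f_1+f_2)(\bx)\subseteq \partial f_1(\bx)+\partial f_2(\bx)$.

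The easy direction $\partial f_1(\bx)+\partial f_2(\bx)\subseteq \partial(f_1+f_2)(\bx)$ follows immediately from the definition of subgradient: if $\bv_j\in\partial f_j(\bx)$ so that $f_j(\by)\geq f_j(\bx)+\langle \bv_j,\by-\bx\rangle$ for $j=1,2$ and all $\by\in\R^d$, then adding the two inequalities gives $(f_1+f_2)(\by)\geq (f_1+f_2)(\bx)+\langle \bv_1+\bv_2,\by-\bx\rangle$, i.e.\ $\bv_1+\bv_2\in\partial(f_1+f_2)(\bx)$. This step needs no continuity hypothesis.

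The reverse inclusion is the substantive part and is where the continuity hypothesis enters. The standard approach I would follow is a Hahn--Banach (convex separation) argument: given $\bv\in\partial(f_1+f_2)(\bx)$, consider in $\R^d\times\R$ the convex sets
\begin{align*}
C_1 &= \{(\by,r):\ r\geq f_1(\by)-f_1(\bx)-\langle \bv,\by-\bx\rangle\},\\
C_2 &= \{(\by,r):\ r\leq -f_2(\by)+f_2(\bx)\}.
\end{align*}
The subgradient inequality for $f_1+f_2$ implies $\mathrm{int}(C_1)\cap C_2=\emptyset$, and the continuity of (say) $f_1$ at $\bx_0\in\dom f_1\cap\dom f_2$ guarantees that $\mathrm{int}(C_1)\neq\emptyset$. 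Applying a proper separation theorem yields a nonzero $(\bu,\alpha)$ separating $C_1$ and $C_2$; the continuity assumption forces $\alpha>0$, so after normalization one obtains $\bv_1:=\bv-\bu/\alpha\in\partial f_1(\bx)$ and $\bv_2:=\bu/\alpha\in\partial f_2(\bx)$ with $\bv=\bv_1+\bv_2$.

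The main obstacle is the separation step, and specifically ensuring $\alpha\neq 0$---this is precisely what the continuity of $f_1$ at some $\bx_0\in\dom f_1\cap\dom f_2$ buys us (it prevents the vertical separating hyperplane that would degenerate the decomposition). In practice, since this is a well-established classical result, I would write a one-line proof in the paper citing Theorem 23.8 of \cite{R70} (in the finite-dimensional setting, no topological conditions beyond the stated continuity are needed), or Theorem 2.8.3 of \cite{Z02}, and omit the Hahn--Banach derivation entirely.
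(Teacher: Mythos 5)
The paper gives no proof of this lemma at all; it simply cites Corollary 2.45 of \cite{MN14}, which is exactly the recommendation you arrive at (modulo citing \cite{R70} Theorem 23.8 or \cite{Z02} Theorem 2.8.3 instead). Your proof sketch of the two inclusions and the Hahn--Banach separation argument for the hard direction is the standard, correct derivation, and you correctly identify the continuity hypothesis as what forces the separating hyperplane to be non-vertical.
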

See Corollary 2.45 of \cite{MN14} for a proof of Lemma \ref{lem:sumsdif}.

\begin{lemma}[Chain rule]\label{lem:chsdif}
	 Let $f:\R^d\to\bar\R$ be a convex function, $A:\R^p\to\R^d$ be a linear mapping and $\bb\in\R^d$. Let $g:\R^d\to\bar\R$ be given by $g(\bx)=f(A\bx+\bb)$. If $\bx_0\in\R^p$ satisfies $A\bx_0+\bb\in\mbox{dom}(f)$, and that $f$ is continuous at $A\bx_0+\bb$ or $A$ is surjective, then
	 \begin{align*}
	 	\partial g(\bx_0) = A^\top (\partial f(A\bx_0+\bb))
	 \end{align*}
\end{lemma}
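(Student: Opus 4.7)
The plan is to prove the two inclusions $\partial g(\bx_0) \supseteq A^\top\partial f(A\bx_0+\bb)$ and $\partial g(\bx_0) \subseteq A^\top\partial f(A\bx_0+\bb)$ separately, noting that only the second requires the qualification hypothesis (continuity of $f$ at $A\bx_0+\bb$, or surjectivity of $A$).

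First I would dispatch the easy inclusion. Given any $\by \in \partial f(A\bx_0+\bb)$, the subgradient inequality yields $f(\bu) \geq f(A\bx_0+\bb) + \langle \by, \bu - (A\bx_0+\bb)\rangle$ for every $\bu \in \R^d$. Specializing to $\bu = A\bx + \bb$ for $\bx \in \R^p$ gives $g(\bx) \geq g(\bx_0) + \langle \by, A(\bx - \bx_0)\rangle = g(\bx_0) + \langle A^\top \by, \bx - \bx_0\rangle$, so $A^\top \by \in \partial g(\bx_0)$. This step uses only convexity and no qualification.

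For the hard direction, fix $\bz \in \partial g(\bx_0)$ and set up a separation argument in $\R^d \times \R$. Introduce
\begin{align*}
C_1 &:= \{(\bu - A\bx_0 - \bb,\ \alpha - f(A\bx_0+\bb)) : f(\bu) < \alpha\}, \\
C_2 &:= \{(A\bv,\ \langle \bz, \bv\rangle) : \bv \in \R^p\}.
\end{align*}
Both are convex, and $C_1 \cap C_2 = \emptyset$: if $(A\bv, \langle \bz,\bv\rangle) \in C_1$, then $g(\bx_0 + \bv) - g(\bx_0) < \langle \bz, \bv\rangle$, contradicting $\bz \in \partial g(\bx_0)$. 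Invoking a proper separation theorem (e.g., Theorem 11.3 of Rockafellar 1970 under the continuity case, or its relative-interior version under the surjectivity case), I would obtain a nonzero $(\by, -\beta)$ with $\langle \by, \bu - A\bx_0 - \bb\rangle - \beta(\alpha - f(A\bx_0+\bb)) \leq 0$ on $C_1$ and the reverse inequality on $C_2$. Since $C_2$ is a linear subspace, the inequality over $C_2$ forces the equation $A^\top \by = \beta \bz$ identically in $\bv$. Letting $\alpha \downarrow f(\bu)$ on $C_1$ yields $\langle \by, \bu - A\bx_0 - \bb\rangle \leq \beta\bigl(f(\bu) - f(A\bx_0+\bb)\bigr)$ for all $\bu \in \dom(f)$. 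Excluding the degenerate case $\beta = 0$ (which would force $\by = 0$ by the qualification hypothesis and contradict $(\by,-\beta) \neq 0$), a rescaling $\by \leftarrow \by/\beta$ gives the subgradient inequality $\by \in \partial f(A\bx_0+\bb)$ together with $A^\top \by = \bz$, establishing $\bz \in A^\top\partial f(A\bx_0+\bb)$.

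The main obstacle is ruling out the degenerate multiplier $\beta = 0$, and this is precisely where the qualification assumption enters. Under continuity of $f$ at $A\bx_0+\bb$, $C_1$ has nonempty interior in $\R^d \times \R$, so a separating hyperplane with $\beta = 0$ would give $\langle \by, \bu - A\bx_0 - \bb\rangle \leq 0$ on a neighborhood of the origin, forcing $\by = 0$. Under surjectivity of $A$, the projection of $C_2$ onto the first $\R^d$ factor is all of $\R^d$; combined with $\beta = 0$ this again forces $\by = 0$ via the separation inequality over $C_2$. In either case $(\by, -\beta) \neq 0$ yields the contradiction, so $\beta > 0$ and the argument closes. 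Handling these two qualification regimes uniformly inside the separation framework is the technical crux; the rest is bookkeeping.
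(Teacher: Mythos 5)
Your proof is correct, and it is genuinely more than what the paper does: the paper proves nothing here and simply cites Corollary 2.52 of \cite{MN14}, so your separation argument is a self-contained substitute for that reference. The easy inclusion is handled exactly as in the standard treatment, and the hard inclusion via separating the translated strict epigraph $C_1$ from the graph-subspace $C_2$ is the classical Moreau--Rockafellar-style proof; your verification that $C_1\cap C_2=\emptyset$ is precisely the subgradient inequality for $g$, the vanishing of the separating functional on the subspace $C_2$ correctly yields $A^\top\by=\beta\bz$, and your two-case elimination of the degenerate multiplier (interior of $C_1$ under continuity; injectivity of $A^\top$ under surjectivity of $A$) is exactly where the qualification hypothesis must enter. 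Two small points you should make explicit when writing this up: first, before excluding $\beta=0$ you need $\beta\geq 0$, which follows by fixing $\bu\in\dom(f)$ and letting $\alpha\to+\infty$ in the inequality over $C_1$; second, to invoke proper separation (Theorem 11.3 of Rockafellar) you should note that both sets are nonempty and that disjointness of $C_1$ and $C_2$ already implies disjointness of their relative interiors. Neither is a gap in the idea, only in the bookkeeping.
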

See Corollary 2.52 of \cite{MN14} for a proof of Lemma \ref{lem:chsdif}.

\begin{lemma}[Continuity of convex functions]\label{lem:conti}	Every proper convex function $f$ on a finite-dimensional separated topological linear space $X$ is continuous on the interior of dom$f$.
\end{lemma}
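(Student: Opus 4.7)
The plan is to establish continuity via local boundedness, exploiting finite dimensionality. First, for any $\bx_0 \in \operatorname{int}(\operatorname{dom} f)$, I would construct a $d$-simplex $S = \operatorname{conv}\{\bv_0, \ldots, \bv_d\}$ with affinely independent vertices $\bv_i \in \operatorname{dom} f$ whose interior contains $\bx_0$. This step relies crucially on the finite dimensionality of $X$: in $\R^d$, the interior of the convex hull of $d+1$ affinely independent points is open and nonempty, and one can pick such vertices inside $\operatorname{dom} f$ because $\bx_0$ is an interior point. By convexity of $f$, for any $\bx = \sum_{i=0}^d \lambda_i \bv_i$ with $\lambda_i \geq 0$, $\sum_i \lambda_i = 1$, we have $f(\bx) \leq \sum_i \lambda_i f(\bv_i) \leq \max_i f(\bv_i) =: M < \infty$, establishing local boundedness above on some open ball $B(\bx_0, r) \subset S$.

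Second, I would derive a local lower bound via a symmetrization trick. For any $\bx \in B(\bx_0, r)$, the reflected point $2\bx_0 - \bx$ also lies in $B(\bx_0, r)$, and writing $\bx_0 = \tfrac{1}{2}\bx + \tfrac{1}{2}(2\bx_0 - \bx)$, convexity yields $f(\bx_0) \leq \tfrac{1}{2} f(\bx) + \tfrac{1}{2} f(2\bx_0 - \bx)$. Hence $f(\bx) \geq 2 f(\bx_0) - M$, so $f$ is locally bounded on $B(\bx_0, r)$.

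Third, I would upgrade local boundedness to local Lipschitz continuity (hence continuity). For $\bx, \by \in B(\bx_0, r/2)$ with $\bx \neq \by$, set $\alpha = \|\by - \bx\|_2 / r \in (0,1)$ and extend along the segment by defining $\bz = \by + (r / \|\by - \bx\|_2)(\by - \bx) \in B(\bx_0, r)$, so that $\by = (1-\alpha) \bx + \alpha \bz$. Convexity gives $f(\by) - f(\bx) \leq \alpha (f(\bz) - f(\bx)) \leq (2M - 2 f(\bx_0))/r \cdot \|\by - \bx\|_2$, and swapping the roles of $\bx, \by$ yields the reverse bound. Thus $f$ is locally Lipschitz at $\bx_0$, and since $\bx_0$ was arbitrary, $f$ is continuous on $\operatorname{int}(\operatorname{dom} f)$.

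The main subtlety is the simplex construction in the first step, which is where the finite-dimensional hypothesis enters; the remaining two steps are purely convex-analytic manipulations. Since this is a textbook result (see, e.g., Theorem 10.1 of Rockafellar, \emph{Convex Analysis}, 1970, or Corollary 2.3 of \cite{Z02}), the proof can alternatively be dispatched by citation.
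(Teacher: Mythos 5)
The paper does not actually prove this lemma --- it simply cites Proposition 2.17 of \cite{BP12} and notes that $\R^d$ is a finite-dimensional separated topological linear space. You instead give a self-contained proof via the standard three-step argument: bound $f$ above on a small ball by wrapping $\bx_0$ in a simplex with vertices in $\operatorname{dom} f$ and using Jensen; bound it below by symmetrizing through $\bx_0$; upgrade local boundedness to local Lipschitz by extending along segments. This is a perfectly valid alternative, and your own closing remark that the result can be dispatched by citation is exactly what the paper does. One thing you tacitly use is that every finite-dimensional separated topological linear space is linearly homeomorphic to $\R^d$ (so that simplices and Euclidean balls make sense); this is standard, but it is the point where the ``separated'' hypothesis is actually consumed.

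There is a minor arithmetic slip in your third step. With $\bx,\by\in B(\bx_0,r/2)$ and $\bz=\by+(r/\|\by-\bx\|_2)(\by-\bx)$, one only gets $\|\bz-\bx_0\|_2<3r/2$, so $\bz$ need not lie in $B(\bx_0,r)$; moreover the scalar making $\by=(1-\alpha)\bx+\alpha\bz$ is $\alpha=\|\by-\bx\|_2/(r+\|\by-\bx\|_2)$, not $\|\by-\bx\|_2/r$. Both issues are cosmetic: establish the two-sided bound $|f|\leq M'$ on $B(\bx_0,2r)$ in steps one and two, then for $\bx,\by\in B(\bx_0,r)$ the extended point lies in $B(\bx_0,2r)$ and the correct $\alpha$ still gives $|f(\by)-f(\bx)|\leq (2M'/r)\|\by-\bx\|_2$. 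The structure of the argument is sound and this does not affect the conclusion.
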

	Lemma \ref{lem:conti} is exactly Proposition 2.17 of \cite{BP12}. We note that the Euclidean space $\R^d$ is clearly a finite-dimensional separated topological linear space.

\bigskip
We recall a useful theorem from \cite{R70} that provides sufficient condition on joint continuity.
\begin{lemma}[Theorem 10.7 of \cite{R70}]\label{lem:rock107}
	Let $U\subset \R^d$ be an open set and $\Tc\subset\R$ be a an open or closed set. Let $f:\Tc\times U\to\R$ be such that $f(t,\cdot)$ is convex for each $t\in\Tc$ and $f(\cdot,\bu)$ is continuous on $\Tc$ for any $\bu\in U$. Then $f$ is jointly continuous on $\Tc\times U$.
\end{lemma}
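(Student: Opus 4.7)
The plan is to reduce joint continuity to two classical facts: (i) a convex function on $U\subset\R^d$ that is locally bounded above is locally Lipschitz, and (ii) one can obtain a local upper bound on $f(t,\cdot)$ that is uniform in $t$ by exploiting convexity together with pointwise continuity in $t$ at finitely many well-chosen points. Once such a locally uniform Lipschitz bound is in place, the standard triangle-inequality split
\begin{align*}
|f(t_n,\bu_n)-f(t_0,\bu_0)|\leq |f(t_n,\bu_n)-f(t_n,\bu_0)|+|f(t_n,\bu_0)-f(t_0,\bu_0)|
\end{align*}
closes the argument, since the first summand is $O(\|\bu_n-\bu_0\|_2)$ and the second vanishes by the assumed continuity of $f(\cdot,\bu_0)$.

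Concretely, fix $(t_0,\bu_0)\in\Tc\times U$. First I would pick $d+1$ affinely independent points $\bv_0,\dots,\bv_d\in U$ whose convex hull $S$ is a closed simplex with $\bu_0$ in its relative interior, and choose a small ball $B_r(\bu_0)\subset S$. By hypothesis, each function $f(\cdot,\bv_i)$ is continuous on $\Tc$, hence bounded on some compact neighborhood $I_0\subset\Tc$ of $t_0$; set $M:=\max_{i}\sup_{t\in I_0}f(t,\bv_i)<\infty$. For any $\bu\in S$, writing $\bu=\sum_i\lambda_i\bv_i$ with $\lambda_i\geq 0$, $\sum_i\lambda_i=1$, convexity in $\bu$ gives $f(t,\bu)\leq M$ for all $t\in I_0$. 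This is the uniform-in-$t$ upper bound I need.

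Next I would convert the upper bound into a uniform Lipschitz estimate on a shrunken ball $B_{r/2}(\bu_0)$, following the classical derivation used to prove that a convex function bounded above on an open set is locally Lipschitz (e.g. Theorem 10.4 in \cite{R70}): pick $\bu,\bu'\in B_{r/2}(\bu_0)$, extend the segment from $\bu$ through $\bu'$ to a point $\bu''$ on $\partial B_r(\bu_0)$, and apply the convexity inequality twice (once to bound $f(t,\bu')$ above using $\bu,\bu''$ and once to bound it below by symmetry using a reflected point). Since the upper bound $M$ and a corresponding lower bound (obtained by reflecting through $\bu_0$ and using $f(t,\bu_0)\geq 2f(t,\bu_0)-f(t,\cdot)$-type manipulations) are uniform in $t\in I_0$, the resulting Lipschitz constant $L=L(r,M,f(t_0,\bu_0))$ can be chosen independent of $t\in I_0'$ for a possibly smaller neighborhood $I_0'$ of $t_0$, using once more continuity of $f(\cdot,\bu_0)$ to control the reference value.

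With the locally uniform Lipschitz estimate $|f(t,\bu)-f(t,\bu')|\leq L\|\bu-\bu'\|_2$ valid on $I_0'\times B_{r/2}(\bu_0)$, the triangle-inequality split displayed above gives joint continuity at $(t_0,\bu_0)$, and since $(t_0,\bu_0)$ was arbitrary, on all of $\Tc\times U$. The main obstacle is the uniform upper bound step: it relies on $U$ being open so that an affinely independent simplex around $\bu_0$ fits inside $U$, and on the continuity of $f(\cdot,\bv_i)$ at the finitely many vertices to transfer pointwise control into a neighborhood-wide control. Everything else is a routine application of the well-known ``bounded above $\Rightarrow$ locally Lipschitz'' lemma for convex functions, applied with constants tracked uniformly in $t$.
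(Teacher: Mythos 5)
Your proposal is correct and essentially reproduces the classical argument behind Rockafellar's Theorem~10.7, which the paper simply cites without reproving: local, uniform-in-$t$ boundedness of $f(t,\cdot)$ via convexity and pointwise continuity of $f(\cdot,\bv_i)$ at finitely many simplex vertices, a uniform local Lipschitz bound via the bounded-above $\Rightarrow$ locally-Lipschitz lemma for convex functions, and the standard two-term triangle split. One nitpick: the inline inequality in your reflection step is garbled as written (``$f(t,\bu_0)\geq 2f(t,\bu_0)-f(t,\cdot)$'' is vacuous); what you want is $f(t,\bu)\geq 2f(t,\bu_0)-f(t,2\bu_0-\bu)\geq 2\inf_{t\in I_0}f(t,\bu_0)-M$, which is uniform in $t$ by the assumed continuity of $f(\cdot,\bu_0)$ on the compact $I_0$. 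With that fixed the argument is complete, and it matches the proof Rockafellar gives (his route phrases the uniform Lipschitz step through an equi-Lipschitzian-families lemma, his Theorem~10.6, but the content is identical).
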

	
	\begin{defin}\label{def:equ}
		(Page 87 of \cite{R70}) Let $\{h_n: n\in\N\}$, where $h_n:\R^d\to\bar\R=\R\cup\{-\infty,\infty\}$, be a collection of functions.
		% \begin{enumerate}
		% 	\item (Eq. (3.6) on p.396 of \cite{SW86}.) $\{h_n: n\in\N\}$ is {equi-lower semicontinuous} (e-l.s.c.) on $E\subset\R^d$ if each $h_n$ is lower semicontinuous, and for any $\bw\in E$ and sufficiently small $\epsilon>0$, there exists a neighborhood $V(\bw)$ of $\bw$ such that
		% 	\begin{align}
		% 		\inf_{\bw'\in V(\bw)} h_n(\bw')\geq \min\{\epsilon,h_n(\bw)-\epsilon\},\ \forall n\in\N.\label{eq:elsc}
		% 	\end{align}
		%	\item 
			$\{h_n: n\in\N\}$ is {equi-Lipschitzian} relative to $E\subset\R^d$ if there exists a real $L\geq 0$ such that
			\begin{align}
				|h_n(\bw')-h_n(\bw)|\leq L\|\bw'-\bw\|_2,\ \forall \bw',\bw\in E,\ n\in\N. \label{eq:elip}
			\end{align}
	\end{defin}

	\begin{lemma}[Theorem 10.6 on page 88 of \cite{R70}]\label{lem:lip}
		Let $E\subset\R^d$ be a relatively open set (see page 44 of \cite{R70}), and $\{h_\gamma:\R^d\to\bar\R|\gamma\in\Gamma\}$ be an arbitrary collection of convex functions finite on $E$, and in addition,
	\begin{enumerate}
		\item[(a)] there exists a subset $E'\subset E$ such that the convex hull of $\bar E'\supset E$ and $\sup_{\gamma\in\Gamma}h_\gamma(\bw)<\infty$ for every $\bw\in E'$, where $\bar E'$ is the closure of $E$;
		\item[(b)] there exists at least one $\bw\in E$ such that $\inf_{\gamma\in\Gamma} h_\gamma(\bw)>-\infty$.
	\end{enumerate}
	 Then $\{h_\gamma:\gamma\in\Gamma\}$ is uniformly bounded on and equi-Lipschitzian relative to any closed and bounded set $F\subset E$.
	\end{lemma}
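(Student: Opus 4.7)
My plan is to prove the lemma in three stages: (i) a uniform upper bound on compact subsets of $E$; (ii) a uniform lower bound on compact subsets of $E$; (iii) the equi-Lipschitz estimate, which will follow from (i) and (ii) via a standard three-point convexity argument on a slight enlargement of $F$.

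For stage (i), let $F \subset E$ be compact. Since $\operatorname{conv}(\bar E') \supset E \supset F$ by condition (a), a covering argument based on Carath\'eodory's theorem produces a finite set $\{\bz_1,\dots,\bz_N\} \subset E'$ whose convex hull contains $F$ (pushing slightly into the interior of $E$ to avoid any boundary points of $\bar E' \setminus E'$). For $\bw \in F$ with representation $\bw = \sum_i \lambda_i \bz_i$, $\lambda_i \geq 0$, $\sum \lambda_i = 1$, convexity of each $h_\gamma$ gives
\begin{equation*}
    h_\gamma(\bw) \leq \sum_i \lambda_i h_\gamma(\bz_i) \leq \max_i \sup_{\gamma\in\Gamma} h_\gamma(\bz_i) =: M_+ < \infty,
\end{equation*}
where the finiteness of the right-hand side is condition (a). For stage (ii), pick $\bw_0 \in E$ with $m := \inf_\gamma h_\gamma(\bw_0) > -\infty$ as in condition (b). Since $E$ is relatively open and $F$ compact, I can select $\alpha \in (0,1)$ small enough that $\bw' := (\bw_0 - \alpha \bw)/(1-\alpha) \in F'$ for all $\bw \in F$, where $F' \subset E$ is some fixed compact set. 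Applying (i) to $F'$ yields an upper bound $M_+'$ on all $h_\gamma(\bw')$, and from $\bw_0 = \alpha \bw + (1-\alpha)\bw'$ together with convexity,
\begin{equation*}
    m \leq h_\gamma(\bw_0) \leq \alpha h_\gamma(\bw) + (1-\alpha) M_+',
\end{equation*}
which rearranges to $h_\gamma(\bw) \geq M_- := \alpha^{-1}(m - (1-\alpha) M_+')$ uniformly in $\gamma$ and $\bw \in F$.

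For stage (iii), pick $r > 0$ so that $F_r := \{\bw: \operatorname{dist}(\bw,F) \leq r\}$ is compactly contained in $E$, and apply (i)--(ii) to $F_r$ to obtain constants $M_- \leq h_\gamma \leq M_+$ throughout $F_r$ for all $\gamma \in \Gamma$. Given distinct $\bw,\bw' \in F$, set $\bu := \bw + r(\bw - \bw')/\|\bw - \bw'\|_2 \in F_r$; then $\bw = \alpha \bu + (1-\alpha)\bw'$ with $\alpha = \|\bw - \bw'\|_2/(\|\bw - \bw'\|_2 + r)$, and convexity yields
\begin{equation*}
    h_\gamma(\bw) - h_\gamma(\bw') \leq \alpha\bigl(h_\gamma(\bu) - h_\gamma(\bw')\bigr) \leq \frac{M_+ - M_-}{r}\|\bw - \bw'\|_2.
\end{equation*}
Swapping the roles of $\bw$ and $\bw'$ gives $|h_\gamma(\bw) - h_\gamma(\bw')| \leq L \|\bw-\bw'\|_2$ with $L := (M_+ - M_-)/r$, independent of $\gamma$, which is exactly the equi-Lipschitz property.

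The main obstacle is the uniformity in stage (i): producing a \emph{finite} subset of $E'$ whose convex hull covers the compact $F$, uniformly in $\bw \in F$, rather than merely representing each point of $F$ individually via Carath\'eodory. This is a standard but delicate compactness argument, and it also requires a minor reduction to rule out boundary points of $\bar E' \setminus E'$ where condition (a) gives no control. Once stage (i) is secured, stages (ii) and (iii) are routine convexity manipulations and the entire proof goes through.
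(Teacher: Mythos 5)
The paper does not prove this lemma at all: it is imported verbatim from Rockafellar's monograph (Theorem 10.6 on page 88 of \cite{R70}, itself resting on his Theorem 10.4 for the uniform boundedness step), so there is no in-paper argument to compare against. Your three-stage proof is the standard textbook argument: a Carath\'eodory-plus-compactness covering for the uniform upper bound, the ``pull back through $\bw_0$'' convexity trick for the uniform lower bound, and the enlargement-by-$r$ three-point estimate for the equi-Lipschitz constant. All three displayed estimates are algebraically correct; in particular the representation $\bw = \alpha\bu + (1-\alpha)\bw'$ with $\alpha = \|\bw-\bw'\|_2/(\|\bw-\bw'\|_2+r)$ checks out, and the final constant $L = (M_+ - M_-)/r$ is the one Rockafellar obtains.

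The single point requiring care is the one you flag yourself in stage (i): exhibiting one \emph{finite} subset of $E'$ (not merely $\bar E'$) whose convex hull covers the compact set $F$. This can be closed as follows. Because $E' \subset E \subset \operatorname{conv}(\bar E')$, one has $\operatorname{aff}(E') = \operatorname{aff}(E)$, so $E$ relatively open and $E \subset \operatorname{conv}(\bar E')$ force $F \subset E \subset \operatorname{relint}\big(\operatorname{conv}(\bar E')\big)$. For each $\bw\in F$ one then finds finitely many points of $\bar E'$ whose convex hull contains a relatively open neighborhood of $\bw$; since each such point lies in $\bar E'$, it may be perturbed to a nearby point of $E'$ at the cost of shrinking the neighborhood slightly (the convex hull of a finite point set is continuous in the Hausdorff metric), and compactness of $F$ plus taking the union of the resulting finite point sets finishes. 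This is exactly the content of Rockafellar's Theorem 10.4, so the gap you identified is real but routine, and your overall proof is correct.
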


\subsection{Stochastic Calculus}\label{sec:sc}

The goal of this section is to state a generalized version of Lemma 4.2 of \cite{DFK10} concerning the solution of Ito processes involving multivariate Brownian motions. For completeness, the proof is also given here. Denote $\BB(t)$ the $d$-dimensional standard Brownian motion.

\begin{lemma}\label{lem:solV}
	Let $\alpha\in\R$ be a constant. For $t_0\geq 0$, let $\zsig(t):[t_0,\infty)\to\R^d$ be a vector valued function and $L(t):[t_0,\infty)\to\R$ be a time dependent mean reversion level. Suppose $V(t_0)$ is a real-valued random variable with finite second moment. Then the solution of the univariate stochastic differential equation
	\begin{align}
		d V(t) = (L(t)-\alpha V(t)) dt + \zsig(t)^\top d\BB(t),\quad t> t_0,\label{eq:SDEv}
	\end{align}
	is
	\begin{align}
		V(t) = e^{-\alpha (t-t_0)}V(t_0) + h(t) + Z(t),
	\end{align}
	where  
	\begin{align}
		h(t) &= e^{-\alpha t}\int_{t_0}^t e^{\alpha s}L(s)ds,\\
		Z(t) &= e^{-\alpha t}\int_{t_0}^t e^{\alpha s}\zsig(s)^\top d\BB(s).
	\end{align}
\end{lemma}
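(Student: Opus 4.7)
The plan is to solve the linear SDE \eqref{eq:SDEv} by the standard integrating-factor method, which in the stochastic setting amounts to applying Ito's formula to the process $Y(t) := e^{\alpha t} V(t)$. Since the integrating factor $e^{\alpha t}$ is a deterministic, smooth function of $t$ only, the cross-variation term in Ito's formula vanishes and the computation is essentially classical. I expect no real obstacle; the argument is a verification rather than a discovery.

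First I would verify that the candidate stochastic integral is well-defined, namely that $Z(t)$ is a square-integrable Ito integral on $[t_0,t]$ for every finite $t$. This requires $\int_{t_0}^t e^{2\alpha s}\|\zsig(s)\|_2^2\, ds < \infty$, which is implicit in treating \eqref{eq:SDEv} as a bona fide Ito SDE with respect to the $d$-dimensional Brownian motion $\BB(t)$; I would note this at the outset. Next, I would apply Ito's formula to $f(t,v) = e^{\alpha t} v$, whose partial derivatives are $\partial_t f = \alpha e^{\alpha t} v$, $\partial_v f = e^{\alpha t}$, and $\partial_v^2 f = 0$, to obtain
\begin{align*}
d\bigl(e^{\alpha t} V(t)\bigr) &= \alpha e^{\alpha t} V(t)\, dt + e^{\alpha t}\, dV(t) \\
&= \alpha e^{\alpha t} V(t)\, dt + e^{\alpha t}\bigl(L(t) - \alpha V(t)\bigr) dt + e^{\alpha t} \zsig(t)^\top d\BB(t) \\
&= e^{\alpha t} L(t)\, dt + e^{\alpha t} \zsig(t)^\top d\BB(t).
\end{align*}

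Then I would integrate both sides from $t_0$ to $t$ to get $e^{\alpha t} V(t) - e^{\alpha t_0} V(t_0) = \int_{t_0}^t e^{\alpha s} L(s)\, ds + \int_{t_0}^t e^{\alpha s} \zsig(s)^\top d\BB(s)$, and finally multiply through by $e^{-\alpha t}$ to arrive at the claimed representation $V(t) = e^{-\alpha(t-t_0)} V(t_0) + h(t) + Z(t)$. As a sanity check, substituting $t = t_0$ gives $V(t_0) = V(t_0) + 0 + 0$, confirming consistency with the initial condition, and the finite-second-moment hypothesis on $V(t_0)$ together with the Ito isometry applied to $Z(t)$ ensures the solution is in $L^2$ on every bounded interval, so uniqueness follows from the standard Lipschitz/linear-growth theory (e.g.\ Theorem 5.3.7 of \cite{EK86}), though uniqueness is not explicitly claimed in the statement and so need not be belabored.
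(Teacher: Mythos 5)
Your proof is correct and uses essentially the same integrating-factor argument as the paper; the only cosmetic difference is that you apply Ito's formula to $e^{\alpha t}V(t)$ whereas the paper uses $e^{\alpha(t-t_0)}V(t)$, which differ only by the constant $e^{-\alpha t_0}$ and yield the same computation.
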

\begin{proof}[Proof of Lemma \ref{lem:solV}]
	Let $H(t,v)=e^{\alpha (t-t_0)}v$. Applying multivariate Ito's lemma to $Y(t)=H(t,V(t))$ gives
	\begin{align*}
		dY(t) = \alpha e^{\alpha (t-t_0)} V(t) dt + e^{\alpha (t-t_0)} dV(t),
	\end{align*}
	and $Y(t_0)=V(t_0)$. Plugging \ref{eq:SDEv} in the last display yields
	\begin{align*}
		dY(t) = e^{\alpha (t-t_0)} \big(L(t) dt + \zsig(t)^\top d\BB(t)\big).
	\end{align*}
	Integrating from time $t_0$ to $t$ and multiplying by $e^{-\alpha (t-t_0)}$ finishes the proof of this Lemma.
\end{proof}

\section{Additional simulation results}\label{sec:addsim}

This section presents additional simulation analysis. The settings are the same as Section \ref{sec:sim}.

\subsection{Online sparse linear regression}\label{sec:addsimlr}

This section presents additional simulation analysis on the sign stable coefficients and the effect of step size. The simulation setup in this section is the same as the linear regression model in Section \ref{sec:simlr}. 

We check the coefficients that are sign stable on $(0,20]$ in Figure \ref{fig:LRpath}. For illustration, we pick inactive coefficient $j=9$ and active coefficient $j=10$ with $w_{10}^*=-0.4659$. Figure \ref{fig:LRpath} presents the empirical trajectories $w_{n,j}$, mean dynamics and confidence band of algorithms \eqref{eq:sgd} and \eqref{eq:grdal1}. For the left column panels of Figure \ref{fig:LRpath} corresponding to the inactive coefficient $j=9$, the trajectories are less dense than $j=10$ as $t$ increases for  \eqref{eq:grdal1}, whereas the trajectories of \eqref{eq:sgd} remain dense. For small $t$ (particularly between $t=0$ and $t=3$), the TACB of both \eqref{eq:sgd} and \eqref{eq:grdal1} demonstrate larger dispersion, due to large covariance kernel \eqref{eq:covrdadd} resulted from large $\|\bw(t)-\bw^*\|_2$ when $t$ is small. Asymmetry of \eqref{eq:TACB} of \eqref{eq:grdal1} around $\bw(t)$ is partly due to the bias induced by the drift in the SDE \eqref{eq:rdaV}, which has opposite sign to the mean trajectory $w_9(t)>0$, so the bias shifts the theoretical confidence band downwards. See the right panel of Figure \ref{fig:cover_lr} and its discussion for more about the bias.	

\begin{figure}[!h] 		
	\centering	
	\includegraphics[scale=0.22]{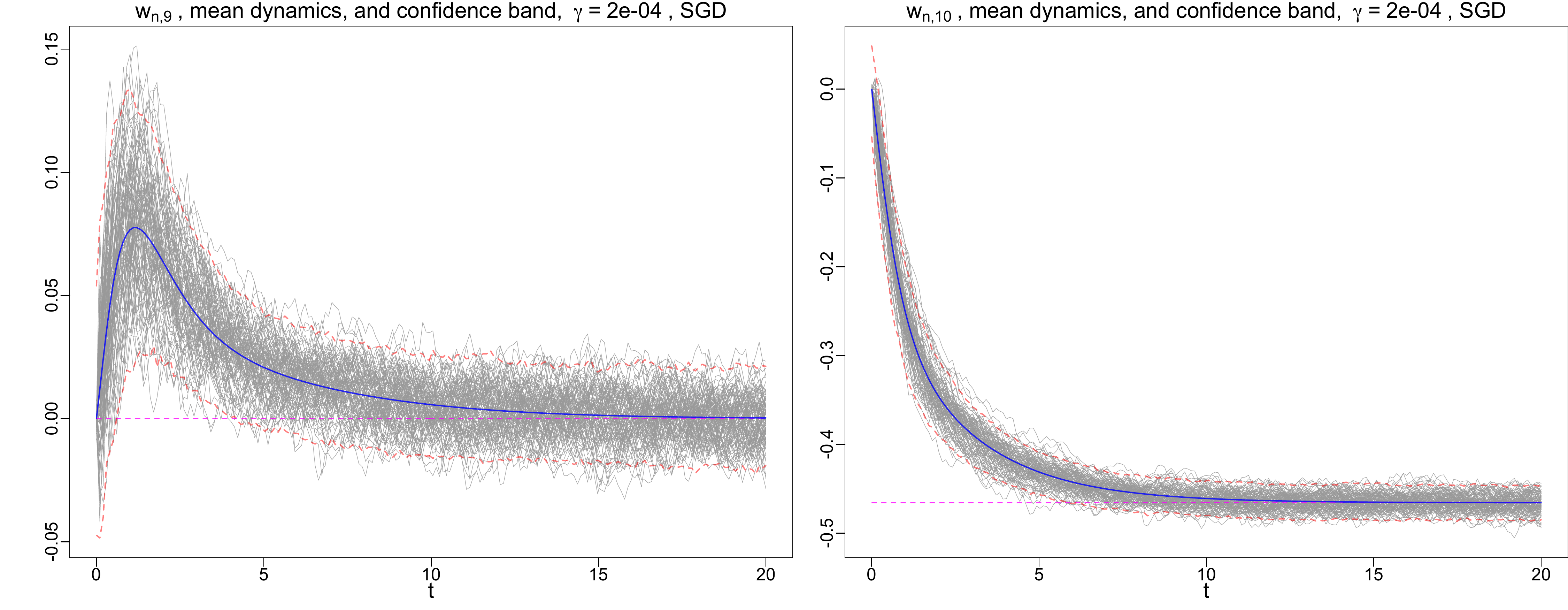}\\	 
\includegraphics[scale=0.22]{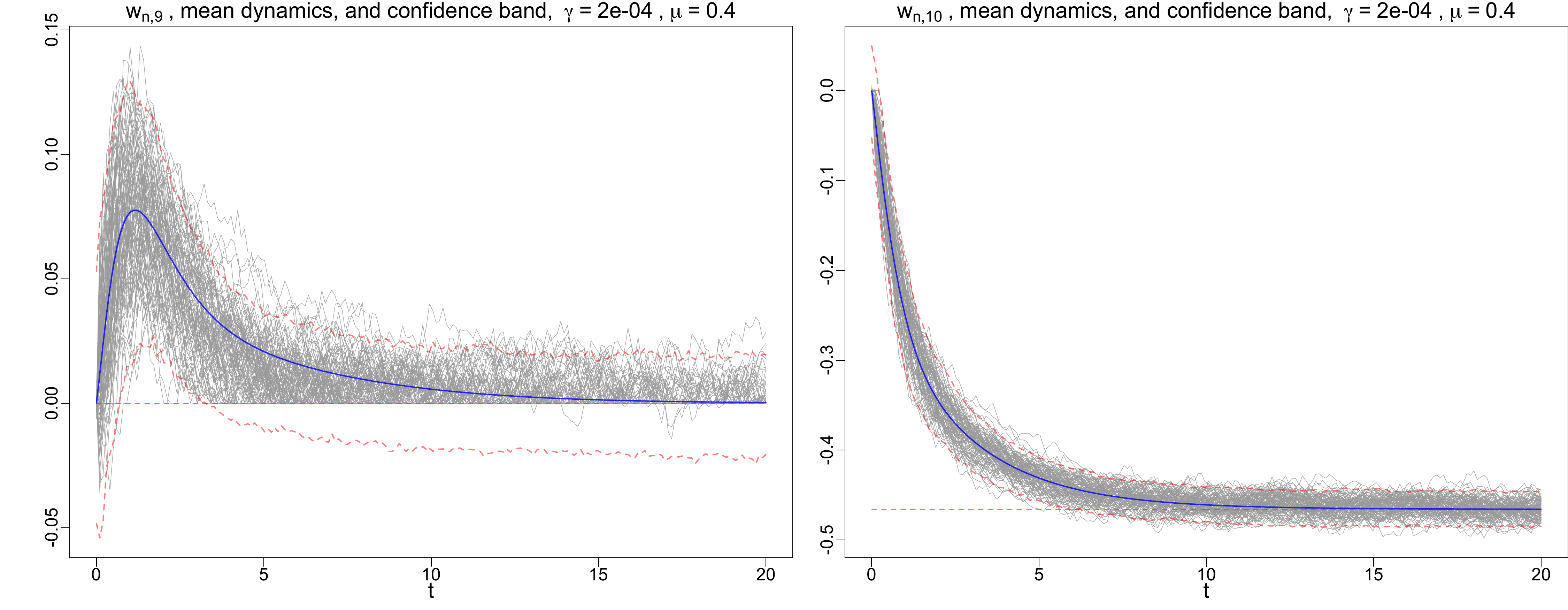}\\ 
\includegraphics[scale=0.22]{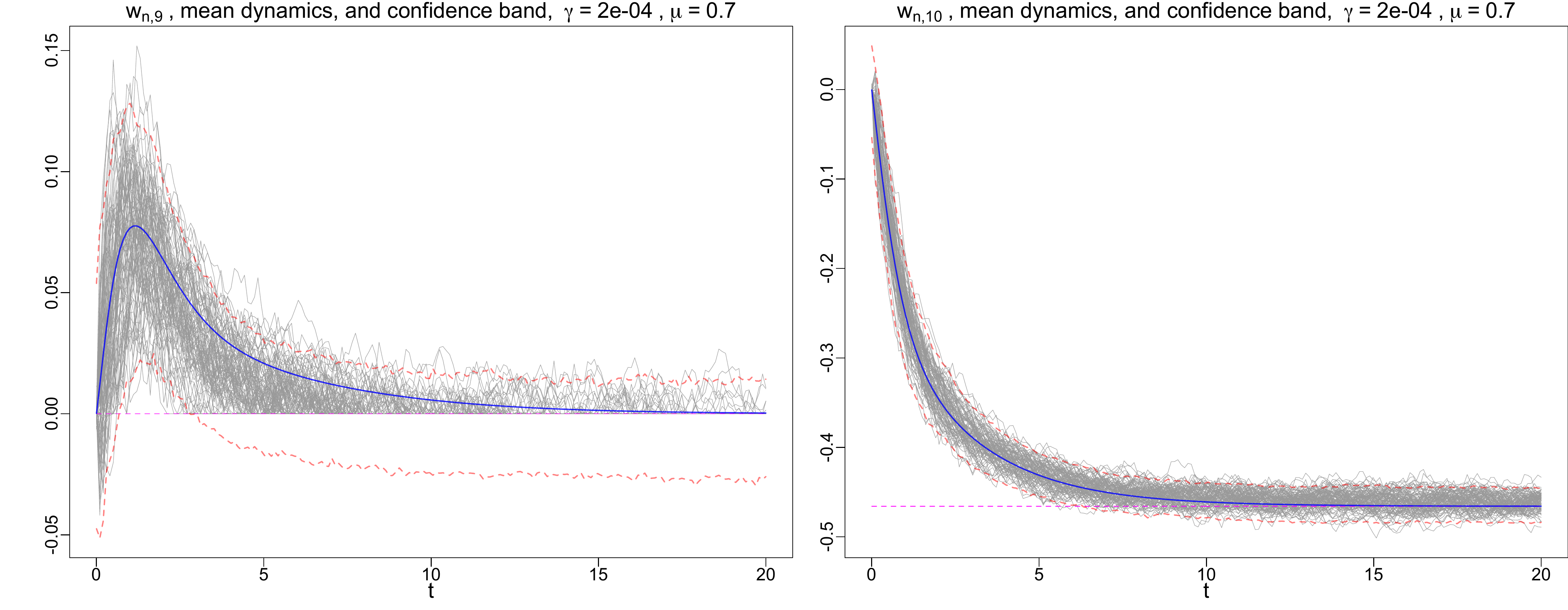}\\ 
\caption{Sign stable coefficients in $(0,20]$: 100 empirical trajectories $w_{n,j}$ (gray curves), $j=9,10$ mean dynamics (blue curve, \eqref{eq:solatrda1}) and confidence band (area bounded between two red dashed curves) for $w_9^*=0$ and $w_{10}^*=-0.4659$ (magenta dashed lines), under algorithms \eqref{eq:sgd} and \eqref{eq:grdal1} with $g(n,\gamma)=\gamma^{1/2+\mu}n^{\mu}$ with $\mu=0.4$ and 0.7, initiated at $\bw_0=0$. The number of steps is $N=20/\gamma$.}\label{fig:LRpath}
\vspace{-0.3cm}
\end{figure}

Figure \ref{fig:LRpath_mu15} presents the results for a relatively large $\mu=1.5$, with various step sizes. The three panels on the left column of Figure \ref{fig:LRpath_mu15} show that the trajectories tend to 0 as $t$ increases, and the speed is faster if the step size is greater. The severe bias of TACB and empirical trajectories indicates the inaccuracy of $\bw_n$ for $\bw^*$. When $\gamma$ is sufficiently small, the coverage of confidence band improves, as can be seen from the three panels on the right column of Figure \ref{fig:LRpath_mu15}. The results confirm the conclusion of $\mu>1$ case in Table \ref{tab:mu_bias} in Section \ref{sec:lrl1}.

	\begin{figure}[!h] 
		\centering
 \includegraphics[scale=0.22]{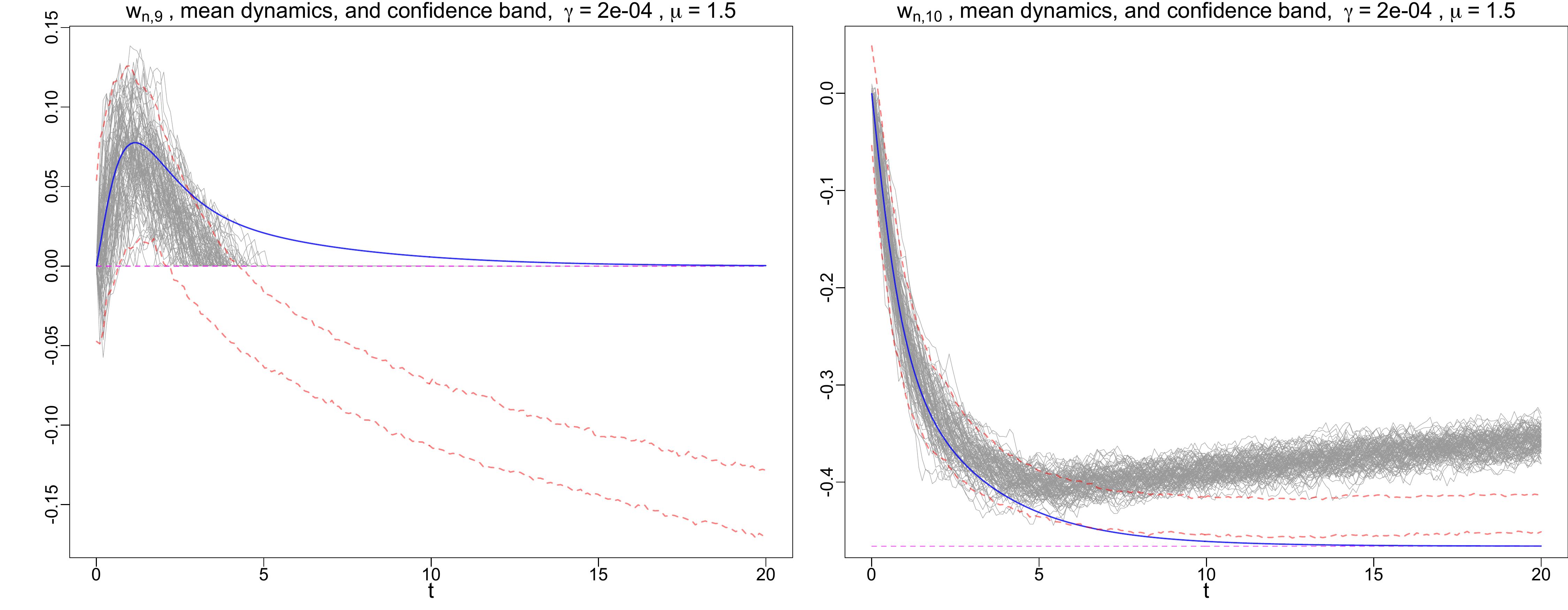}\\
 \includegraphics[scale=0.22]{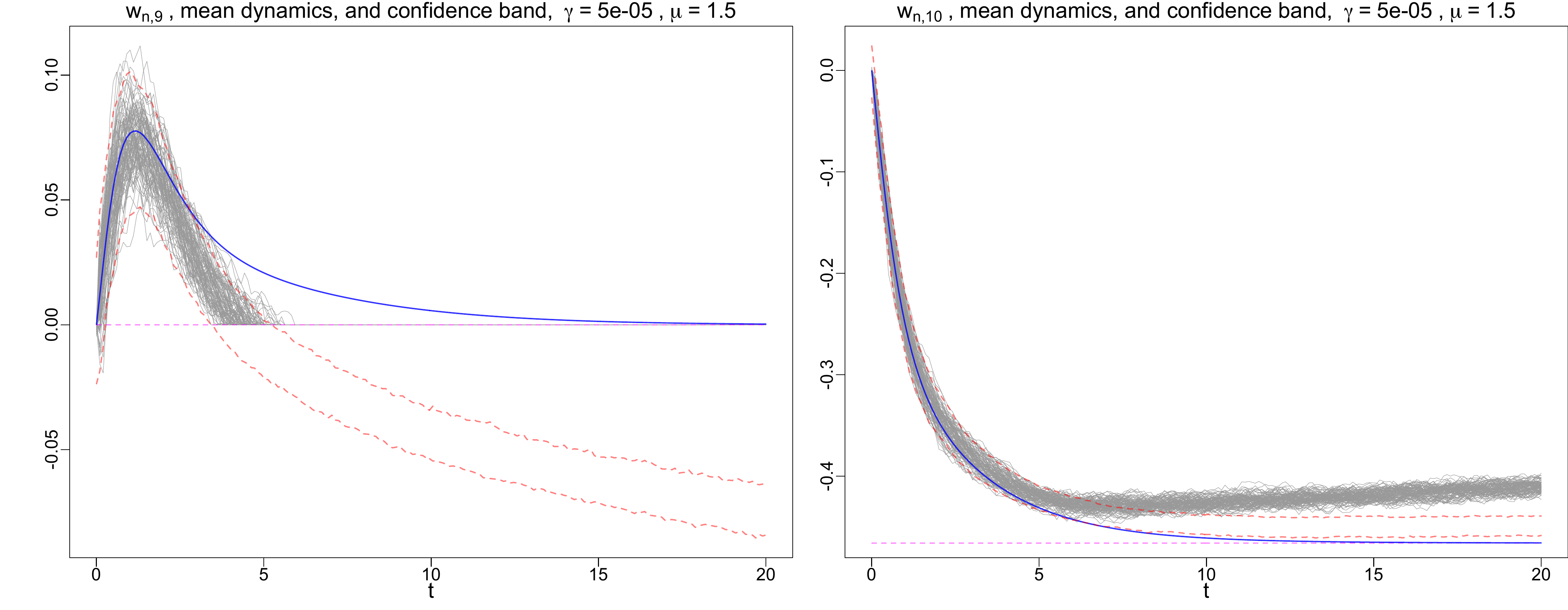}\\
 \includegraphics[scale=0.22]{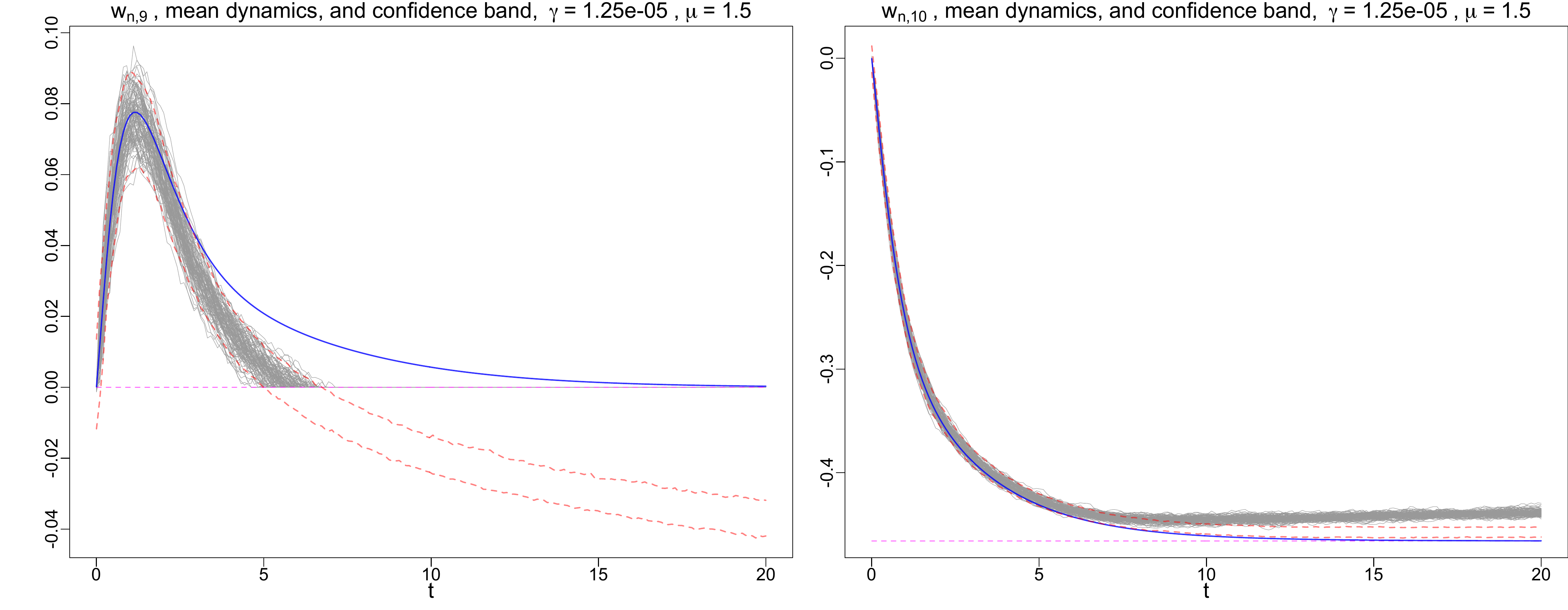}\\
 \caption{{Large $\mu$ regime}: 100 empirical trajectories $w_{n,j}$ (gray curves), $j=9,10$ mean dynamics (blue curve, \eqref{eq:solatrda1}) and confidence band (area bounded between two red dashed curves) for $w_9^*=0$ and $w_{10}^*=-0.4659$ (magenta dashed lines), under algorithms \eqref{eq:grdal1} with $g(n,\gamma)=\gamma^{1/2+\mu}n^{\mu}$ and $\mu=1.5$, initiated at $\bw_0=0$. The number of steps is $N=20/\gamma$. As step size decreases, the trajectories are getting closer to the mean dynamics.}\label{fig:LRpath_mu15}
\vspace{-0.3cm}
\end{figure}

Figure \ref{fig:LRcover15_varyg} shows that the averaged coverage probabilities over active coefficients improve when step size $\gamma$ decreases for $\mu=1.5$ in \eqref{eq:grdal1}, which supports the validity of our asymptotic theory.

	\begin{figure}[!h] 
		\centering
 \includegraphics[scale=0.5]{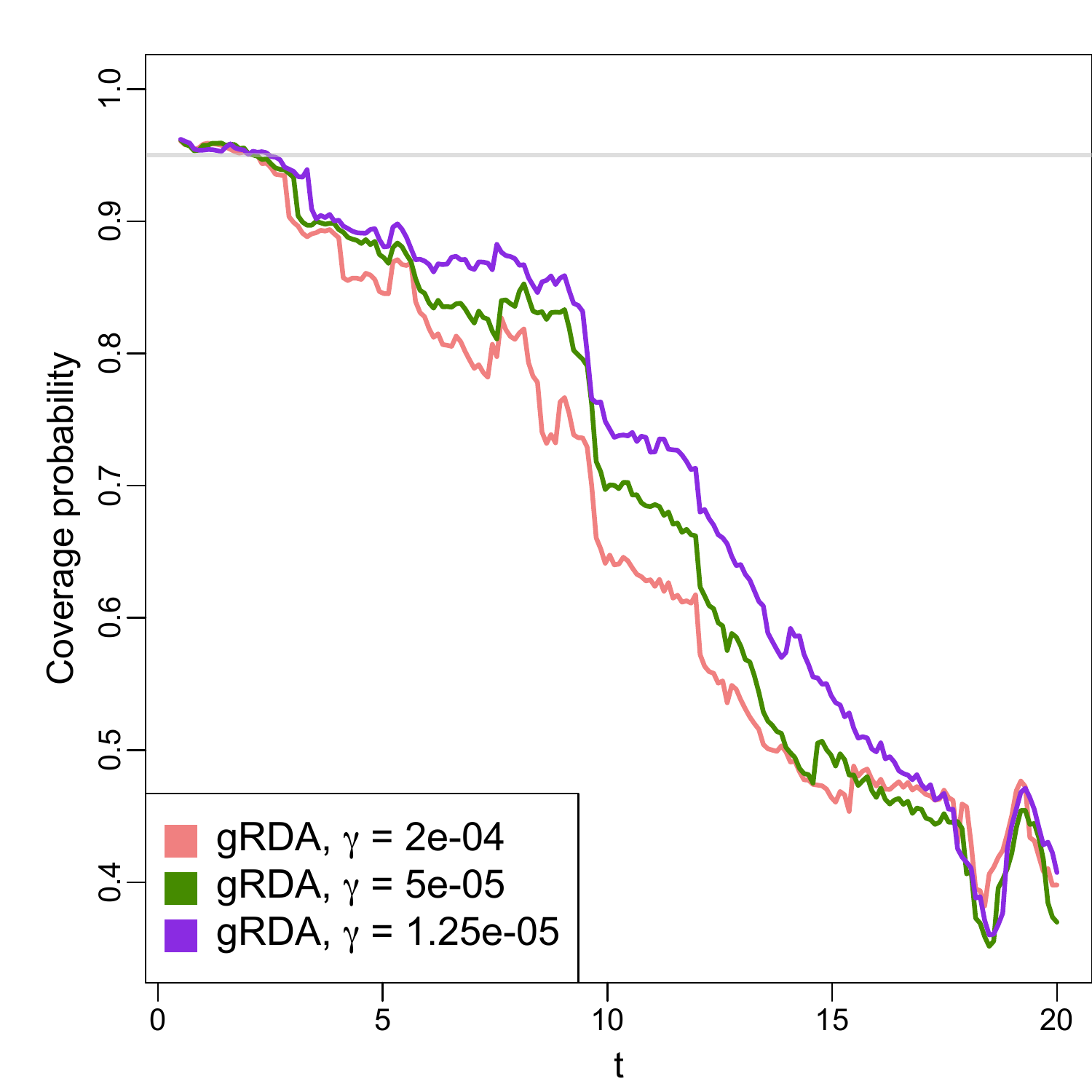}
 \caption{Averaged coverage probability of \eqref{eq:TACB} for empirical trajectories of \eqref{eq:grdal1} with $\mu=1.5$ over active coefficients, under the linear regression model as Section \ref{sec:lrl1}, based on 1000 simulation repetitions. The coverage probabilities improve as step size $\gamma$ decreases, which provides the asymptotic validity of \eqref{eq:TACB}.}\label{fig:LRcover15_varyg}
\vspace{-0.3cm}
\end{figure}

\subsection{Online sparse PCA}\label{sec:addsimpca}

This section presents the simulation analysis for the second principal component of the OSPCA model in Section \ref{sec:simpca}, and the simulation analysis on the effect of step size for the first principal component. The simulation setup is the same as the sparse PCA model in Section \ref{sec:simpca}. 

Figure \ref{fig:ospca2} presents results for the {second} principal component, by focusing on an inactive coordinate $j=1$ and an active coordinate $j=11$, with $\UU_{1,2}^*=0$ and $\UU_{11,2}^*=10^{-1/2}$. For the left column panels of Figure \ref{fig:ospca2}, the trajectories are increasingly sparse when large $\mu$ is adopted in \eqref{eq:ospca}. For the right column panels of Figure \ref{fig:ospca2}, a jump in the TACB of OSPCA also appears around the point where the mean trajectory hits the origin, but the TACB of OPCA is everywhere smooth. In addition, we observe an interesting pattern in the empirical trajectories that instead of always converging to $\UU_{11,2}^*=10^{-1/2}$, a significant number of empirical trajectories converges to the alternative stationary point $-10^{-1/2}$. Note that because the sign of principal component is indeterminable, $-10^{-1/2}$ is another valid stationary point for $\UU_{11,2}$. Some empirical trajectories are {heavily perturbed} by the large variance around $t=5$ from the mean dynamics (solid blue curve) and fall into the attraction region of $-10^{-1/2}$, and then converge along the reverse mean dynamics (dotted blue curve). Interestingly, such kind of perturbation has not occurred for the first principal component in all the 1000 simulation repetitions in Section \ref{sec:simpca}. The origin is a saddle point, and some trajectories are struggling before escaping. 

The large variance around $t=5$ in the right column panels of Figure \ref{fig:ospca2} can be explained by the eigenvalues of the Hessian-like matrix $-\nabla G(\UU(t))$ in \eqref{eq:hesspca}. If the eigenvalue of $-\nabla G(\UU(t))$ is positive\footnote{Note that the eigenvalues of $-\nabla G(\UU(t))$ are real, because the eigenvalues of $-\nabla G(\UU(t))$ are the union of eigenvalues of the symmetric and real sub-matrices on the main diagonal of $-\nabla G(\UU(t))$.}, then the SDE in \eqref{eq:rdaV} diverges with time; when the $-\nabla G(\UU(t))$ is negative, then the SDE converges to a stationary distribution. Figure \ref{fig:hess} shows that the top four eigenvalues of $-\nabla G(\UU(t))$ are non-negative for small $t$. This provides large variance for the empirical trajectories to escape the saddle point, but also causes the bifurcation in Figure \ref{fig:ospca2}. The eigenvalues converge to -1 and remain there after $t=5$, and this fact stabilizes the empirical trajectories.

Lastly, for the first principal component, Figure \ref{fig:OSPCAcover17_varyg} shows that the averaged coverage probabilities over active coefficients improve when step size $\gamma$ decreases for $\mu=1.7$ in \eqref{eq:grdal1}, which supports the validity of our asymptotic theory.

	\begin{figure}[!h] 
		\centering
	 \includegraphics[scale=0.22]{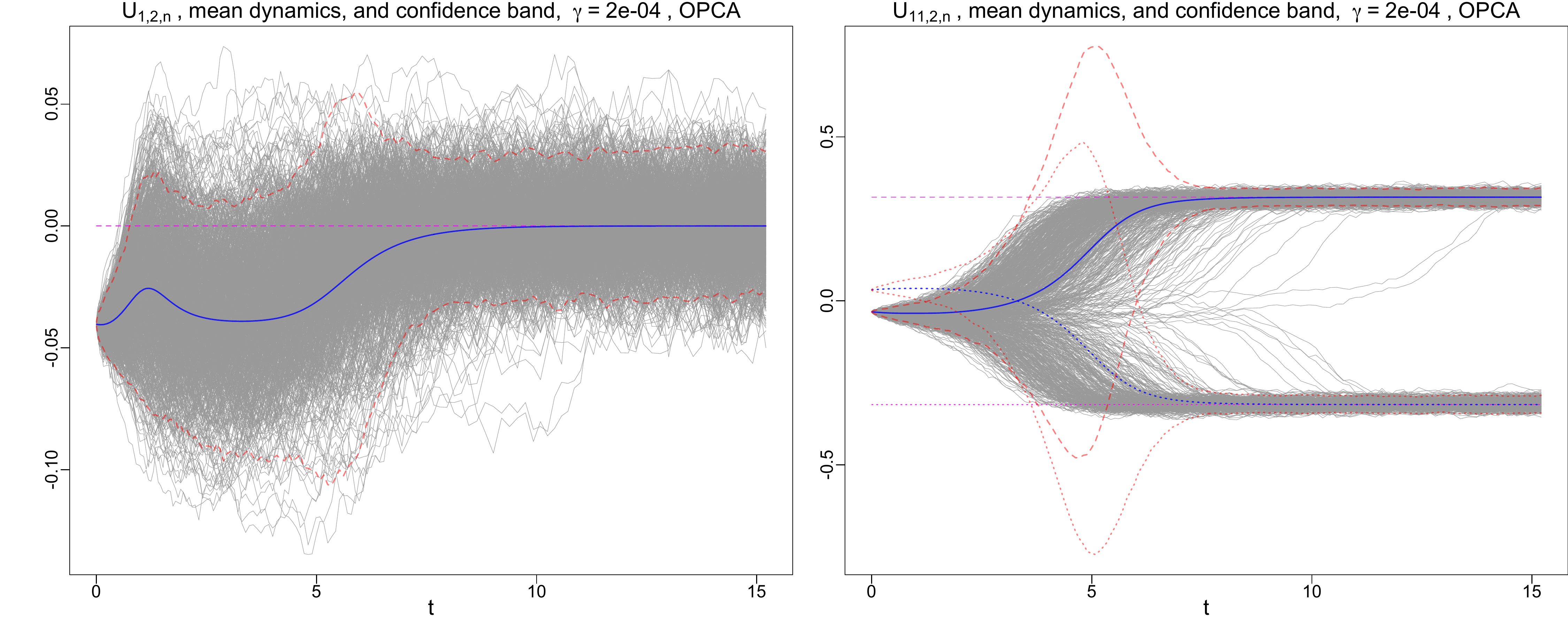}\\
	 \includegraphics[scale=0.22]{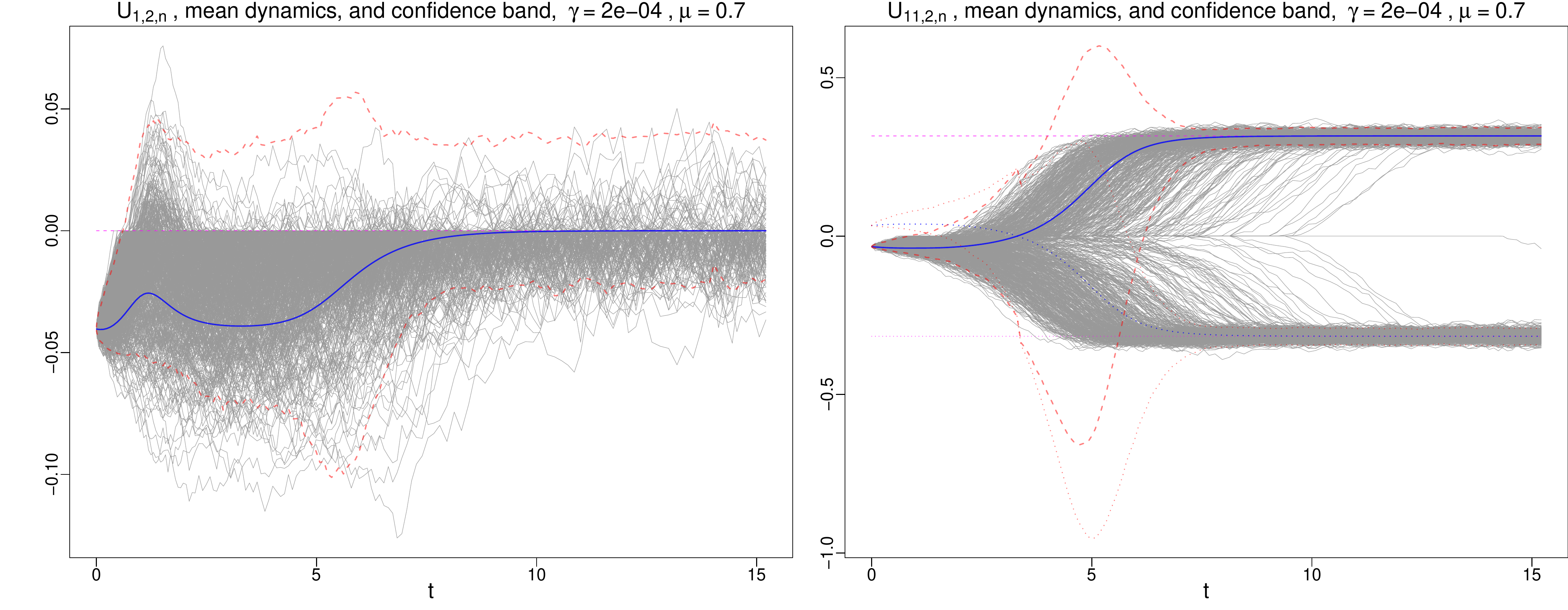}\\
	 \caption{{Second} PC: 1000 empirical trajectories (gray curves), mean dynamics (blue curve) and confidence band (area bounded between two red dashed curves) for $\UU_{1,2}^*=0$ and $\UU_{11,2}^*=10^{-1/2}$. Empirical trajectories are computed from algorithms \eqref{eq:opca} and \eqref{eq:ospca} with random initiation on the unit sphere. The number of steps is $N=15/\gamma$ with $\gamma=2\times 10^{-4}$. For the right column panels, the dotted blue curve is $-w_{11,2}(t)$, and the dotted red curves are the confidence band boundaries centered at the dotted blue curve.}\label{fig:ospca2}
 \vspace{-0.3cm}
 \end{figure}

\begin{figure}[!h]
	\centering
	 \includegraphics[scale=0.7]{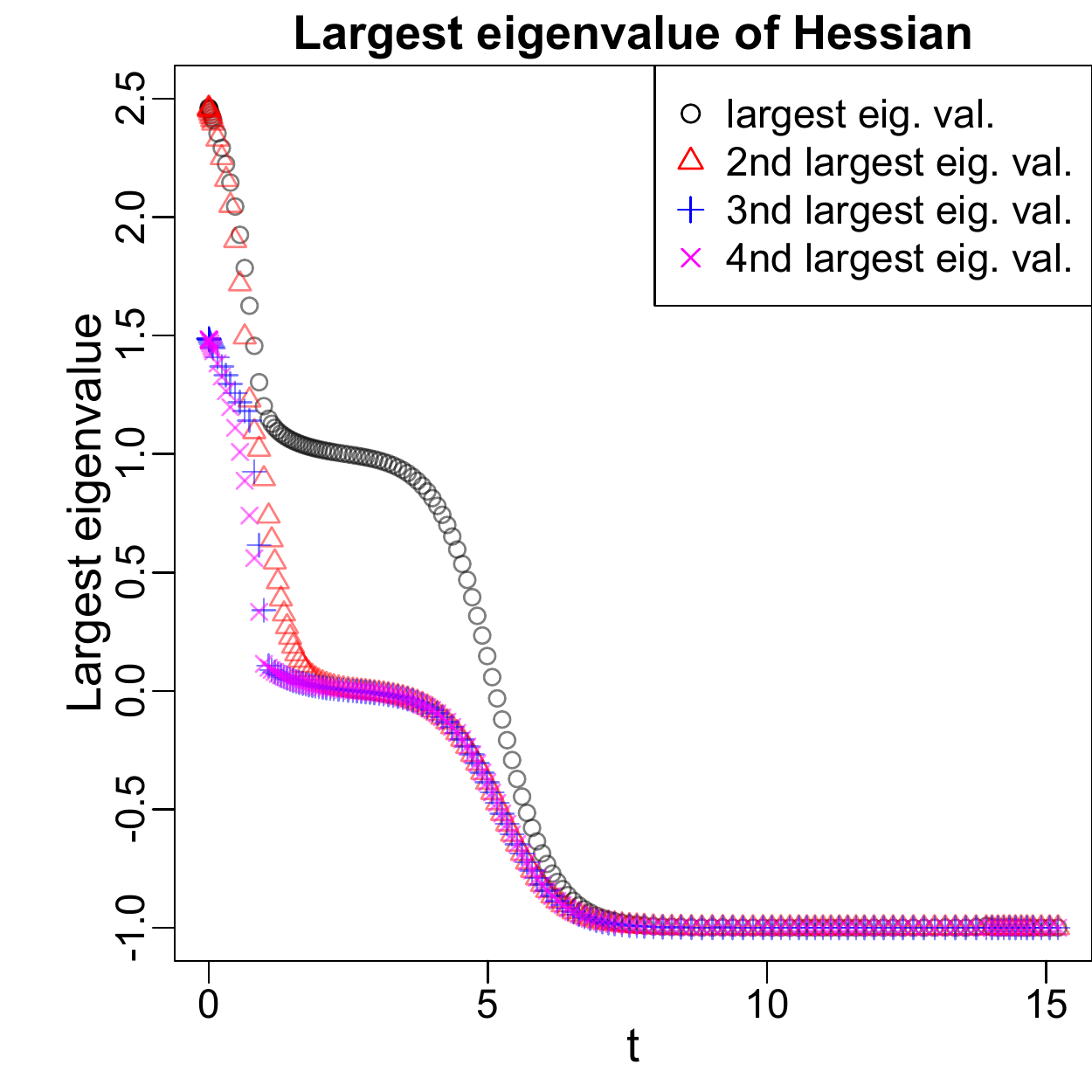}
	 \caption{Four largest eigenvalues of the Hessian matrix, which vary during training. For small $t$, the eigenvalues are positive and the width of the confidence bands in Figure \ref{fig:ospca1} and \ref{fig:ospca2} widen. For large $t$, the eigenvalues are negative, and the width of the confidence bands stabilize to a fixed level.}\label{fig:hess}
\end{figure}

	\begin{figure}[!h] 
		\centering
 \includegraphics[scale=0.5]{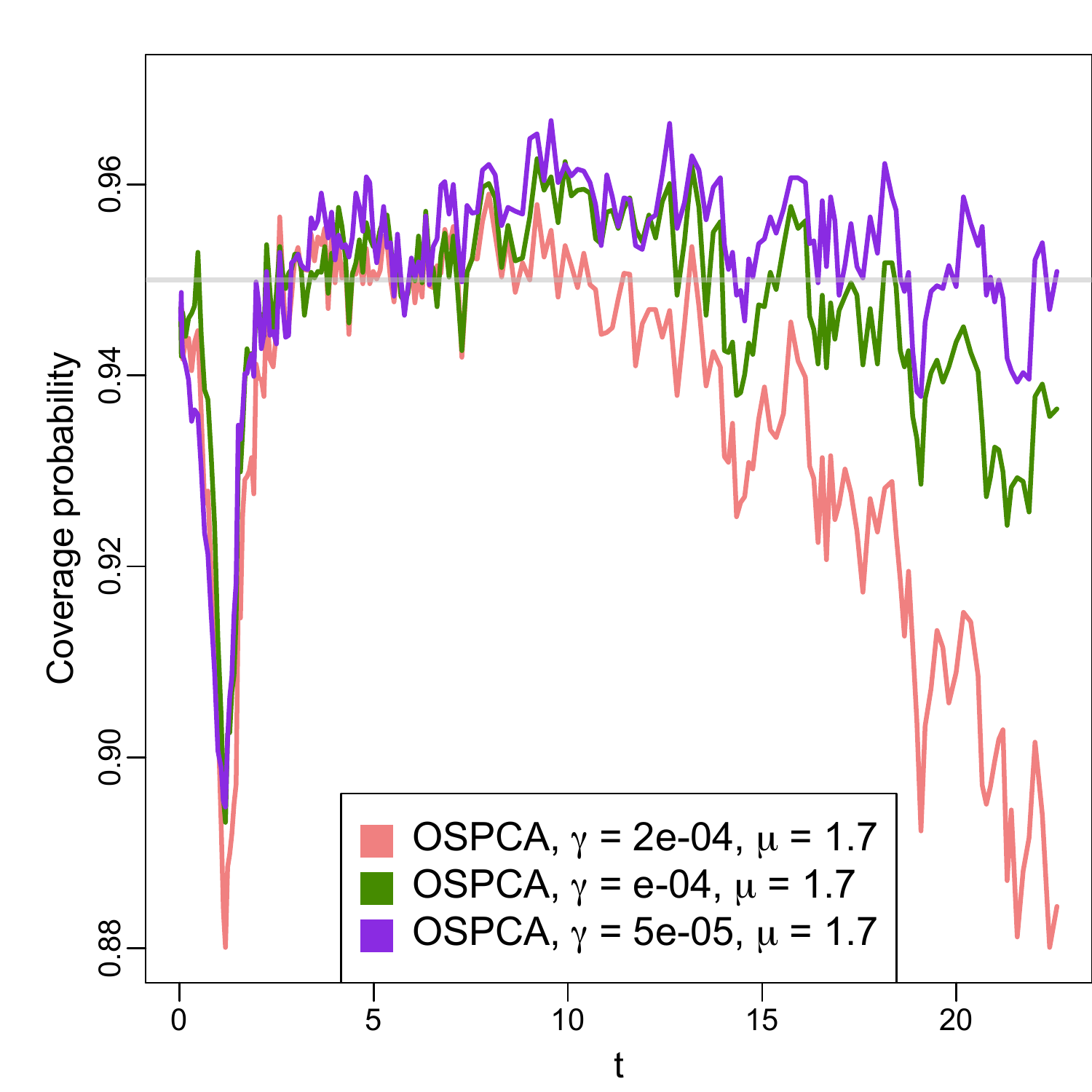}
 \caption{Averaged coverage probability of \eqref{eq:TACB} for empirical trajectories of \eqref{eq:grdal1} with $\mu=1.7$ over active coefficients of the first principal component, under the sparse PCA model as Section \ref{sec:simpca}, based on 1000 simulation repetitions. The coverage probabilities improve as step size $\gamma$ decreases, which provides the asymptotic validity of \eqref{eq:TACB}.}\label{fig:OSPCAcover17_varyg}
\vspace{-0.3cm}
\end{figure}

\bibliography{spglm.bib}
\end{document}